%% file: main.tex
\documentclass[11pt]{article}
\usepackage[utf8]{inputenc}
\usepackage[T1]{fontenc}
\usepackage[letterpaper,margin=1in]{geometry}
\usepackage[round,authoryear]{natbib}
\usepackage{enumitem}
\usepackage[hyphens]{url}
\usepackage{graphicx}
\usepackage{caption}
\usepackage{subcaption}
\usepackage{amsmath,amssymb,amsthm}
\usepackage{booktabs}
\usepackage{microtype}
\usepackage[hidelinks]{hyperref}
\input{glyphtounicode}
\newtheorem{theorem}{Theorem}
\newtheorem{proposition}{Proposition}
\newtheorem{lemma}{Lemma}
\newtheorem{corollary}{Corollary}
\newtheorem{definition}{Definition}
\newtheorem{remark}{Remark}

\newcommand{\E}{\mathbb{E}}
\newcommand{\supp}{\operatorname{supp}}

\title{The Cost of Conservation: Coordination-Memory Laws \\for Exact-Support Generation}
\author{
  Zhen Zhang \quad Amr Alanwar\\[0.5em]
  {\small School of Computation, Information and Technology}\\
  {\small Technical University of Munich, Germany}\\
  {\footnotesize\texttt{\{zhenzhang.zhang, alanwar\}@tum.de}}
}
\date{}

\begin{document}
\maketitle

\begin{abstract}
Many AI systems make decisions locally, even when every realized output must
obey an additive conservation law, such as selecting exactly a fixed number of
items. This constraint can be statistically invisible: small subsets of a
balanced fixed-budget output look increasingly independent, yet
communication-free coordinate-parallel generation needs exponentially many
pre-shared plans, while a sequential exact sampler needs only logarithmic
memory.
We study product measures conditioned on additive conservation laws in the
intermediate regime where one plan is selected before a fixed-order pass,
every plan is a bounded-state stochastic executor whose support is entirely
legal, and the mixture of plan laws approximates the target distribution in
total variation.  Our main result identifies the optimal asymptotic selector
rate, up to constant factors, with the killed spectral profile of the
conservation-difference walk.  The resulting coordination cost decreases as
an inverse power of live-state width, with an exponent determined by intrinsic
conservation rank rather than alphabet size; the law extends to noncentral
budgets and heterogeneous local scores.  The converse is driven by a
state-versus-resource-sum obstruction, while a rate-matching construction
compiles discrepancy control into exact-support finite-state plans.
Complementary results characterize block-parallel plan complexity and the
benefit of programmable output order.  Together, these results show precisely
how online memory substitutes for front-loaded coordination in exact-support
generation.
\end{abstract}

\section{Introduction}

Selecting exactly a prescribed number of candidates is straightforward when a
central sampler can update a remaining-budget counter. The same distribution is
harder to realize when decisions must be emitted once, in a fixed order, with
little persistent memory, and every execution must obey the quota. Under a
balanced budget, small subsets can look almost independent, and the full
constraint can be specified by a simple global score. Yet statistical and
specification simplicity need not imply execution simplicity. We study how
coordination committed before generation trades against online memory.

Feasibility, sampling the intended law, and realizing it under a restricted
architecture are different tasks; we study the third.  Before a fixed-order
pass a selector chooses a time-dependent finite-state plan from a shared
library; fresh private randomness is free, but anything affecting later
decisions must survive in the priced state, every plan has entirely legal
support, and the mixture must approximate the conditioned product law in total
variation.  A global joint sampler and a one-plan remaining-budget program are
valid high-state endpoints; the cost appears when online state is bounded and
repair is unavailable.

Figure~\ref{fig:coordination-memory-overview} separates these two resources
and previews the rank-controlled exchange law.
\input{sections/overview_figure.tex}

Our sharpest consequence concerns evaluation.  We construct two targets on one
alphabet, with the same one-token law, whose \(k\)-coordinate marginals stay
close for \(k\ll m/\sqrt r\) yet whose execution-memory requirements differ by
\(\Theta(\log r)\); \(m/\sqrt r\) is the sharp-order detectability threshold.
Matching a system's local output statistics therefore does not certify that it
can be run at comparable cost.  What produces this gap is an exchange rate: at
fixed conservation rank \(r\), cutting the asymptotic plan-selection rate by a
factor \(c\) costs \((r/2)\log_2 c\) additional persistent-state bits.

A direct AI instance is conditional-Poisson stochastic beam search (CPSBS),
which samples exactly \(k\) continuations with probability proportional to
their model weights \citep{meister2021conditional}.  Its global dynamic program
is our high-state endpoint; we price front-loaded coordination when decisions
are instead fixed-order, irrevocable, and memory-limited.  One plan is a known
bounded-width sampler \citep{kamp2011smallspace,chattopadhyay2022space}; our
object is the minimum number of truthful plans at fixed component width \(S\).
Flattening \(K\) plans into width \(KS\) erases this decomposition.

Exact support drives both bounds: prefixes with different consumed resources
cannot share a live state, so comparing two executors gives a killed
resource-difference walk whose escape profile caps one plan's target mass, and
the construction reverses that geometry.  Growing rank adds a partition-entropy
price; secondary results treat block-parallel execution and output order.

To summarize, our contributions are threefold:
\begin{itemize}
    \item We formulate exact-support generation as a coordination--memory
    problem, separating a preselected plan from persistent state during a
    fixed-order pass.
    \item We establish the law by matching a moving-layer spectral converse
    with an executable codeword-wise posterior construction, identifying the
    selector rate with the killed spectral profile up to constant factors and
    covering heterogeneous local scores and noncentral budgets.
    \item We show that fragmentation changes execution memory while staying
    invisible to local checks, and characterize block-parallel execution and
    programmable order.
\end{itemize}

The result is interface-specific, and Table~\ref{tab:scope} states its scope:
execution cost depends on where coordination may be stored and used, not on
constraint or local-statistical simplicity.
\input{sections/coordination_spine.tex}

\section{Finite-scale audits}\label{sec:exp}

\paragraph{Real-score fixed-budget curation validation.}
We exactly evaluate 200 stochastic 9-of-18 Gibbs slates built from 122,270
finite GPT-4 UltraFeedback scores \citep{cui2023ultrafeedback}; this is
target-side validation, not a natural workload.  At $\beta=1$
(Figure~\ref{fig:theory-evidence-real}), TV $0.05$ needs a median 5,938
coordinate-parallel plans (10--90 percentile: 2,939--11,477); blocks of size
$2,3,6$ need $454,87,7$, and the remaining-budget baseline one plan with at
most 4 persistent bits.  Figure~\ref{fig:theory-evidence-spectral} verifies the
rank exponents; exact enumeration checks both compilers and all identities.

\section{Related work and conclusion}\label{sec:related}

\paragraph{Sampling interfaces.}
One plan is a width-\(S\) oblivious read-once sampler
\citep{kamp2011smallspace,chattopadhyay2022space}; flattening \(K\) plans to
width \(KS\) hides our fixed-width minimum, and exact support specializes
\citet{goldreich2010huge}'s truthfulness.
CPSBS instead uses a global dynamic program for fixed-size product-of-weights
laws \citep{meister2021conditional,pervez2023scalable}; we price irrevocable
memory-limited realization and the minimal continuous quota message.
FlashSampling and reservoirs avoid full gather for different laws
\citep{ruiz2026flashsampling,jayaram2019weighted}.

\paragraph{Adjacent resource theories.}
Locality \citep{viola2012complexity,filmus2023sampling,kane2024locality,
kane2025symmetric,horacsek2026local} differs from persistent state.
Constrained coding prices one encoder graph \citep{chien1970,marcusRoth1991};
resolvability, synthesis, common information, automata, and distribution
matching price other resources \citep{hanVerdu1993,yagi2017,cuff2013synthesis,
yuTan2020exact,wyner1975common,kumar2014exact,balle2015weighted,schulte2016ccdm};
fixed trellises price another object \citep{kiefferLiao2010,henMerhav2005}.
Our increment couples a plan-mass spectral converse to exact posterior
executors; comparison and symmetrization are borrowed tools
\citep{goel2006spectral,coulhonGrigoryan1998,ahlswedeDueck1982}.
\paragraph{Conclusion.}
Online state replaces front-loaded coordination at a rank-controlled rate, at a
cost fragmentation hides.

\clearpage
\input{appendix}

\bibliographystyle{plainnat}
\bibliography{references}

\end{document}

%% file: sections/overview_figure.tex
\begin{figure*}[t]
\centering
\includegraphics[width=0.99\textwidth]{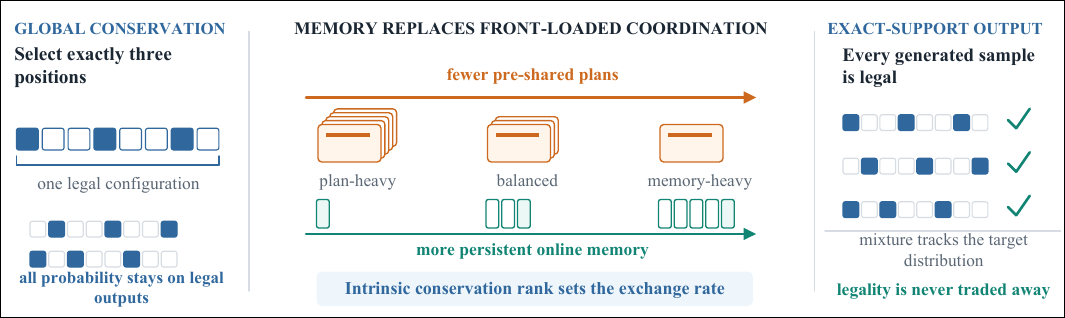}
\caption{Visual overview of the coordination--memory exchange in exact-support
generation. A global conservation rule restricts all probability to legal
configurations. The same target can be implemented with many pre-shared plans
and little persistent state, or with fewer plans and more online memory. Our
theory quantifies this exchange and shows that intrinsic conservation rank
controls its rate, while legality is maintained throughout.}
\label{fig:coordination-memory-overview}
\end{figure*}

%% file: sections/coordination_spine.tex
\section{Coordination-memory laws}\label{sec:coordrate}

\paragraph{Execution model.}
We now formalize the fixed-order interface behind the example in the
introduction.  A discrete plan \(Z\) is chosen before generation and selects
state sets \(Q_0,\ldots,Q_m\) and stochastic kernels
\(W_{t,Z}(x_t,q_t\mid q_{t-1})\).  Fresh private randomness is free.  Beyond
the front-loaded plan identity, all information that survives to the next step
must pass through \(q_t\).  Every
component has exact legal support.  We charge
\[
K=|\operatorname{supp}Z|,
\qquad
S=\max_t|Q_t|,
\]
or \(H(Z)\) instead of \(\log_2K\) for average selector cost.  Unreachable
states and states that cannot reach a legal endpoint are removed before
measuring \(S\).  We call each value of \(Z\), together with its induced
component law, a \emph{plan}; the set of plans is the \emph{library}.  The time
index and transition table are read-only, so \(\log_2K\) or \(H(Z)\) prices the
prepass program choice, not private seed length; a compact seed generating or
addressing the library is a provisioning mechanism, charged through its
executable description below.  Different plans may reuse state labels.

\paragraph{Operational interpretation.}
Equivalently, the plan is nonuniform advice selecting a compiled randomized
streaming transducer, and \(q_t\) is its runtime workspace.  An exact
fixed-order data plane must therefore spend front-loaded control or persistent
state; Table~\ref{tab:scope} records which relaxations remove that choice
altogether.
Each result below carries the appendix section proving it in full.

\paragraph{Relation and executable provisioning.}
One plan is exactly a width-\(S\) KRVZ sampler; \(K\) plans flatten to width
\(KS\), which hides the minimum \(K\) at prescribed component width.  The
selected transition table is read-only configuration: the theorem prices its
selector rate and runtime state, not its stored length or numerical precision.
This currency is not hidden for balanced fixed-alphabet targets.  For any
finite channel \(W_z\), input law \(Q_Z\), and output
\(Q_X=\sum_zQ_Z(z)W_z\), set
\(\delta_u=Q_{ZX}\{\log_2(W_Z(X)/Q_X(X))>u\}\).  If
\(Z_1,\ldots,Z_K\) are pairwise independent with marginal \(Q_Z\), then
\[
 \mathbb E\,d_{\rm TV}\!\left(K^{-1}\sum_iW_{Z_i},Q_X\right)
 \leq \delta_u+\tfrac12\sqrt{2^u/K}.                 \tag{P}
\]
Pairwise independence cancels cross-codeword covariances after truncation.  For
the balanced fixed-\(q\) compiler, let \(\theta,\pi_{\min}>0\) be its least
positive transition and initial masses; every supported plan has mass at least
\(\underline\pi_m=\pi_{\min}(\theta/q)^m\).  For any subexponential
\(\eta_m\downarrow0\), largest-remainder inverse-CDF quantization on a prime
\(p\ge\max\{K,2/(\eta_m\underline\pi_m)\}=2^{O_{q,S}(m)}\) gives
\(Q_Z(z)=(1\pm\eta_m)P_Z(z)\).  Thus TV and information density change by
\(o(1)\), so (P) preserves the selector exponent.  The affine family
\(Z_i=f_p(ai+b)\), \((a,b)\in\mathbb F_p^2\), is pairwise independent; dynamic
programming over discrepancy and the remaining histogram makes interval
decoding random-access from \(O_{q,S}(m)\) bits, not an exponential table.
The heterogeneous result remains a selector-rate existence theorem: we do not
claim the same compact descriptor for arbitrary position-specific tables.

\input{sections/plan_memory_frontier_main.tex}

\subsection{Secondary boundary: block-parallel execution}\label{sec:block-profile}

For a partition $\Pi=\{B_1,\ldots,B_g\}$ of a product base law, let $T_j$ be
the resource total in block $B_j$, set
$P_k=\rho(\cdot\mid\sum_jT_j=k)$, and let
$\nu_k=\mathcal L_{P_k}(T_1,\ldots,T_g)$.  Write
$K_{\Pi,\varepsilon}$ and $\mathsf{CE}_{\Pi,\varepsilon}$ for the minimum
count and selector entropy of block-product mixtures within TV $\varepsilon$,
and let $s_\mu(\eta)$ be the smallest support carrying mass $1-\eta$.

\begin{proposition}[Block-profile coordination sandwich]\label{thm:robustcc}
For $0\le\varepsilon<1/5$, the minimum number of block-product plans obeys
\[
 s_{\nu_k}(5\varepsilon)\le K_{\Pi,\varepsilon}(P_k)
 \le s_{\nu_k}(\varepsilon).
\]
Proved in App.~C.
\end{proposition}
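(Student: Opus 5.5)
The plan is to pass everything through the block-total profile map $\tau(x)=(T_1(x),\ldots,T_g(x))$. The structural fact that drives both bounds is a decomposition of $P_k$. Since $\rho$ is a product law and the conditioning event $\{\sum_j T_j=k\}$ depends on $x$ only through $\tau$, we get
\[
 P_k=\sum_{t:\,\sum_j t_j=k}\nu_k(t)\bigotimes_{j=1}^g\rho_{B_j}(\cdot\mid T_j=t_j).
\]
So, conditionally on its profile, $P_k$ is exactly block-product. I would verify this first by factoring $\rho(x)\mathbf 1\{\tau(x)=t\}$ over the blocks.

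\emph{Upper bound.} Let $A$ be a smallest set of profiles with $\nu_k(A)\ge 1-\varepsilon$, so $|A|=s_{\nu_k}(\varepsilon)$. Take the mixture over $t\in A$ with weights $\nu_k(t)/\nu_k(A)$ of the block-product plans $\bigotimes_j\rho_{B_j}(\cdot\mid T_j=t_j)$.
\begin{itemize}
 \item Each plan has legal support, because every profile in $A$ sums to $k$.
 \item Renormalizing a measure after discarding mass $1-\nu_k(A)\le\varepsilon$ moves it by exactly that amount in TV, so the mixture is within $\varepsilon$ of $P_k$.
\end{itemize}
This gives $K_{\Pi,\varepsilon}(P_k)\le s_{\nu_k}(\varepsilon)$.

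\emph{Lower bound, exact-support case.} Let $Q=\sum_{i\le K}w_iQ_i$ with each $Q_i$ block-product and $d_{\rm TV}(Q,P_k)\le\varepsilon$.
\begin{itemize}
 \item If each $Q_i$ is legal, then under $Q_i$ the totals $T_1,\ldots,T_g$ are independent and their sum is a.s.\ $k$.
 \item Independent finitely supported variables with an a.s.\ constant sum are each a.s.\ constant: condition on all but one of them, or note that the variances add to zero. So $Q_i$ carries a single profile $t^{(i)}$.
 \item By data processing under $\tau$, $d_{\rm TV}(\tau_\#Q,\nu_k)\le\varepsilon$. Since $\tau_\#Q$ is supported on at most $K$ profiles, $\nu_k$ gives that set mass at least $1-\varepsilon$.
\end{itemize}
Hence $K\ge s_{\nu_k}(\varepsilon)\ge s_{\nu_k}(5\varepsilon)$.

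\emph{Main obstacle: the robust constant.} The factor $5$ in the statement suggests the block-product plans in this section need not be exactly legal. The step above then fails: a product law can put substantial mass on the hyperplane while spreading it over many profiles. For example, two blocks with $T_j$ uniform on $\{0,1\}$ give $\Pr(T_1+T_2=1)=1/2$, spread over two profiles.

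To handle this I would prove a localization lemma for product plans. Write $a_i=Q_i(\sum_j T_j=k)$. The first step would be a second-moment argument:
\begin{itemize}
 \item If $\sum_j T_j$ is not nearly constant under $Q_i$, then $a_i$ is small. A single non-degenerate coordinate $T_j$, independent of the rest, caps $a_i$ by $\max_s Q_i(T_j=s)$.
 \item Iterating over the coordinates, the illegal mass $\sum_i w_i(1-a_i)\le\varepsilon$ would force most weight onto plans whose legal part is concentrated on one profile, up to a loss.
\end{itemize}
I would then charge the errors:
\begin{itemize}
 \item $\varepsilon$ for the approximation error of $Q$ itself;
 \item $\varepsilon$ for the illegal mass, since $P_k$ is legal;
 \item a constant multiple of $\varepsilon$ for non-dominant profiles inside each plan.
\end{itemize}
The aim is to bound the $\nu_k$-mass outside the $K$ dominant profiles by $5\varepsilon$. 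The bookkeeping that makes these constants total exactly $5$, together with the hypothesis $\varepsilon<1/5$ ensuring a dominant profile exists, is where I expect the real work to lie.
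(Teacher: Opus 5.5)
Your upper bound and your exact-support lower bound are correct. Both agree with the paper; the upper bound is its construction $P_k(\cdot\mid R\in A)$ verbatim. The gap is the robust lower bound, which carries all the content of the factor $5$, and the route you sketch would not close it.

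\textbf{Per-block reasoning is not dimension-free.} It gives what you observe: $\alpha_j:=\max_sQ_i(T_j=s)\ge1-\delta_i$ for every block, with $\delta_i=1-a_i$. Iterating over coordinates then gives a joint modal profile of mass only $\prod_j\alpha_j\ge1-g\delta_i$. This is vacuous once $g\delta_i\ge1$, and singleton blocks ($g=m$) are the coordinate-parallel case the section is about. A second-moment argument does not help either: a large atom of $\sum_jT_j$ at $k$ is compatible with arbitrarily large variance.

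\textbf{The missing idea is a dimension-free anti-concentration bound.} You need $\lambda:=\sum_j(1-\alpha_j)\le C\delta_i$ uniformly in $g$. The paper gets this from the sharp Kolmogorov--Rogozin inequality $\max_s\Pr(\sum_jT_j=s)\le F(\lambda)=e^{-2\lambda}\{I_0(2\lambda)+I_1(2\lambda)\}$, where $F$ is decreasing. For $\delta_i\le1/4$ this forces $\lambda<1/2$, since $F(1/2)<3/4$. It then gives $\delta_i\ge1-F(\lambda)\ge\lambda e^{-2\lambda}\ge\lambda/e$. Hence $\prod_j\alpha_j\ge1-\lambda\ge1-e\delta_i>\delta_i$. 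So the joint modal profile cannot be illegal, because it would be an off-slice atom heavier than the entire off-slice mass. Conditioning $Q_i$ on that profile yields a legal single-profile block-product law within $e\delta_i$ in TV.

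If you prefer something elementary, symmetrize ($T_j-T_j'$) and use that a sum of $n\ge1$ independent symmetric nonzero terms vanishes with probability at most $1/2$. This also bounds $\lambda$ uniformly in $g$, but with cruder constants, so it gives the sandwich with a worse constant than $5$.

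\textbf{The bookkeeping also needs a different mechanism.} Your expectation that $\varepsilon<1/5$ ensures a dominant profile is misplaced. The mixture may put small weight on components that are almost entirely illegal, so no global hypothesis localizes every component. Instead, split components by a per-component threshold:
\begin{itemize}
\item Round each component with $\delta_i\le1/4$ as above, at cost $e\delta_i\le4\delta_i$.
\item Replace each component with $\delta_i>1/4$ by the legal block-product law of an arbitrary profile in $\supp\nu_k$, at cost at most $1<4\delta_i$.
\end{itemize}
Since $\sum_iw_i\delta_i=Q(E_k^c)\le\varepsilon$, convexity of TV gives $d_{\rm TV}(Q,\bar Q)\le4\varepsilon$, hence $d_{\rm TV}(\bar Q,P_k)\le5\varepsilon$. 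Your profile-pushforward step, applied to $\bar Q$ (which carries at most $K$ profiles), then yields $K\ge s_{\nu_k}(5\varepsilon)$. The hypothesis $\varepsilon<1/5$ only ensures $5\varepsilon<1$; it plays no role in localizing components.
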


The constant is explicit: a concentration bound on independent integer sums puts
a component missing $E_k$ with probability $\delta\le1/4$ within $e\delta$ TV of
a legal single-profile block product, and heavier components are replaced at
cost $1<4\delta$; averaging gives $5\varepsilon$.

\begin{corollary}[Where block-parallel randomness must live]\label{thm:commonentropy}
Exact generation has minimum selector entropy
$\mathsf{CE}_{\Pi}(P_k)=H(\nu_k)$.  For an
\(\varepsilon\)-accurate representation, let
\(M=|\supp\nu_k|\ge2\) and
\(\bar\delta=\min\{5\varepsilon,1-1/M\}\).  Then
\[
\begin{aligned}
 |\supp Z|&\ge s_{\nu_k}(5\varepsilon),\\
 H(Z)&\ge H(\nu_k)-h_2(\bar\delta)
 -\bar\delta\log_2(M-1),
\end{aligned}
\]
where \(h_2\) is binary entropy.  Proved in App.~C.
\end{corollary}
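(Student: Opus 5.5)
The corollary has three claims, and I would prove each from Proposition~\ref{thm:robustcc} and the structure behind it. Every block-product plan has exact legal support. So within each plan the block totals $(T_1,\ldots,T_g)$ satisfy $\sum_jT_j=k$ almost surely. I will first argue that, under an \emph{exact} representation, each plan's profile $(T_1,\ldots,T_g)$ is deterministic. Each block is generated independently within a plan, so the $T_j$ are independent. An independent vector whose coordinates sum to a constant almost surely must have every coordinate almost surely constant, because $\operatorname{Var}(\sum_j T_j)=\sum_j\operatorname{Var}(T_j)=0$. Hence the plan identity $Z$ determines the profile $T=f(Z)$.

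\textbf{Exact case.} From $T=f(Z)$ and $T\sim\nu_k$ I get $H(Z)\ge H(f(Z))=H(\nu_k)$. For achievability, take $Z=T\sim\nu_k$ and, conditionally on the profile $t$, generate block $B_j$ from $\rho|_{B_j}(\cdot\mid T_j=t_j)$ independently across blocks. The base law is a product, and conditioning on the total decomposes through the profile, so this mixture is exactly $P_k$. Therefore $\mathsf{CE}_\Pi(P_k)=H(\nu_k)$.

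\textbf{Approximate case.} The support bound $|\supp Z|\ge s_{\nu_k}(5\varepsilon)$ is the left inequality of Proposition~\ref{thm:robustcc}. For the entropy bound I would reuse the repair step from the proof of that proposition, as summarized after its statement. Any $\varepsilon$-accurate mixture can be replaced by a mixture of legal single-profile block products, with the same selector (or a deterministic relabeling of it), that is within $5\varepsilon$ in the averaged sense. This gives a deterministic profile map $\hat T=g(Z)$ with $\Pr[\hat T\ne T]\le5\varepsilon$ under a coupling in which $T\sim\nu_k$ is the true profile. The coupling comes from the maximal coupling of the two profile laws: the profile law of the repaired mixture is within $5\varepsilon$ of $\nu_k$ in TV.

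Fano's inequality then gives $H(T\mid\hat T)\le h_2(\delta)+\delta\log_2(M-1)$ with $\delta=\Pr[\hat T\ne T]$. Since $H(Z)\ge H(\hat T)\ge H(T)-H(T\mid\hat T)$, I obtain the bound. Two technical points remain. First, if the repaired profile falls outside $\supp\nu_k$, I project it to the nearest support point, which keeps the alphabet size at $M$. Second, the right-hand side $h_2(\delta)+\delta\log_2(M-1)$ is increasing in $\delta$ only on $[0,1-1/M]$. So replacing $\delta$ by $\bar\delta=\min\{5\varepsilon,1-1/M\}$ is valid: if $5\varepsilon\ge1-1/M$, the Fano term is at most its maximum $\log_2M$, which corresponds to $\bar\delta=1-1/M$.

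The main obstacle is turning a TV-closeness statement into the coupled error probability $\Pr[g(Z)\ne T]$ that Fano needs. My plan is to work with the joint law of $(Z,X)$: let $T$ be the true profile of the output $X$ drawn from the plan's component, and let $g(Z)$ be the dominant profile of that plan. Then $\Pr[g(Z)\ne T]$ equals the average off-profile mass, which the proposition's averaging argument bounds by $5\varepsilon$. The remaining gap is that the output law is only $\varepsilon$-close to $P_k$, so the marginal of $T$ is not exactly $\nu_k$. I would handle this either by the data-processing bound $d_{\rm TV}(\mathcal L(T),\nu_k)\le\varepsilon$ absorbed into the same $5\varepsilon$ budget, or by running Fano against the repaired law. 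I expect the constant bookkeeping in this step to need the most care.
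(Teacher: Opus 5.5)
Your proposal is correct and follows the paper's proof essentially step for step: the variance argument making the profile a deterministic function of $Z$ in the exact case, the profile-first construction for equality, and, in the approximate case, the rounding map into $\supp\nu_k$ whose profile law is within $5\varepsilon$ of $\nu_k$, followed by a maximal coupling with $R\sim\nu_k$, Fano's inequality, and the monotonicity of $h_2(\delta)+\delta\log_2(M-1)$ up to $\delta=1-1/M$. The ``main obstacle'' you flag is already closed by that maximal coupling, so the joint construction with $X$ is unnecessary: $H(f(Z))$ depends only on the law of $f(Z)$. One small correction: block-product components in this section may put mass off $E_k$ (which is why rounding is needed), so in the exact case you should derive legality of each positive-weight component from $P_k(E_k^c)=0$ rather than assume it.
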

Exact support makes the independent block totals deterministic given \(Z\).
For approximation, put
\(\alpha_j=\max_r\Pr(R_j=r)\) and \(\lambda=\sum_j(1-\alpha_j)\).
The sharp Kolmogorov--Rogozin inequality \citep{juskevicius2024sharp} gives
\(\Pr\{\sum_jR_j=k\}\le F(\lambda)\), where
\(F(\lambda)=e^{-2\lambda}\{I_0(2\lambda)+I_1(2\lambda)\}\) and \(I_\nu\) is
the modified Bessel function.  For \(\delta\le1/4\),
\(1-F(\lambda)\ge\lambda/e\).  Hence \(\lambda\le e\delta\), the joint mode
has mass at least \(1-e\delta\), and it must be legal; larger \(\delta\) costs
at most \(4\delta\) by replacement.  Averaging gives the \(5\varepsilon\)
rounding; maximal coupling and Fano's inequality give the display.

\begin{corollary}[Locally product, globally expensive]\label{thm:localglobal}
For $0\le\varepsilon<1/5$, the balanced slice $P_m$, and every $h$-coordinate set $T$,
\[
 d_{\rm TV}((P_m)_T,\operatorname{Bern}(1/2)^{\otimes h})
 \le h(h-1)/(2m),
\]
yet coordinate-parallel execution requires
$|\supp Z|\ge(1-5\varepsilon)\binom{m}{m/2}$ and
$H(Z)\ge(1-5\varepsilon)m-O(\log m)$.
\end{corollary}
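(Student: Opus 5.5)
The corollary has two parts: a local closeness bound, and the coordination lower bounds. Both come from specializing the block-profile results to the balanced slice. Here the base law is $\rho=\operatorname{Bern}(1/2)^{\otimes m}$, the resource is the number of ones, and $k=m/2$. The target is $P_m$, the uniform law on weight-$m/2$ strings.

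\emph{Local part.} For a fixed $h$-set $T$, the marginal $(P_m)_T$ is exchangeable. So it suffices to compare sampling $h$ coordinates without replacement from an urn with $m/2$ ones and $m/2$ zeros against sampling with replacement, which gives $\operatorname{Bern}(1/2)^{\otimes h}$. I will use the standard coupling (Diaconis--Freedman style). Draw $h$ indices i.i.d. uniform on $[m]$ from a uniformly random balanced string, and read off the bits. Conditional on the indices being distinct, the bits have law $(P_m)_T$. Unconditionally, each bit is a fresh uniform draw from an urn with exactly half ones, so the bits are i.i.d. $\operatorname{Bern}(1/2)$. Hence the TV distance is at most $\Pr\{\text{collision}\}\le\binom h2/m=h(h-1)/(2m)$. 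I expect this step to be routine.

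\emph{Global part.} Coordinate-parallel execution is the partition $\Pi$ into $g=m$ singletons. The block totals $T_j$ are then the coordinates themselves, so $\nu_k$ is exactly $P_m$, which is uniform on $\binom{m}{m/2}$ points. By Proposition~\ref{thm:robustcc},
\[
K\ge s_{\nu_k}(5\varepsilon).
\]
For a uniform law on $N$ points, the smallest set carrying mass $1-5\varepsilon$ has size $\lceil(1-5\varepsilon)N\rceil$, which gives the count bound. For entropy, I apply Corollary~\ref{thm:commonentropy} with $M=\binom{m}{m/2}$ and $H(\nu_k)=\log_2 M$:
\[
H(Z)\ge(1-\bar\delta)\log_2M-h_2(\bar\delta)\ge(1-5\varepsilon)\log_2M-1.
\]
Substituting $\log_2\binom{m}{m/2}=m-\tfrac12\log_2 m-O(1)$ yields the stated $(1-5\varepsilon)m-O(\log m)$.

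The only real obstacle is bookkeeping. I need to check that the singleton-block version of Proposition~\ref{thm:robustcc} matches the paper's ``coordinate-parallel'' interface. That interface allows no persistent state: every plan must be a product law over coordinates with legal support, so each plan is a point mass on a legal string. I also need to handle $m$ odd, by assuming $m$ even for the balanced slice as the statement implicitly does. I also need to confirm that $\bar\delta\le 5\varepsilon$ in all cases, which holds by its definition as a minimum.
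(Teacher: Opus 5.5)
Your proposal is correct and follows the paper's proof: the with/without-replacement coupling up to the first repeated index gives the $h(h-1)/(2m)$ local bound. Specializing Proposition~\ref{thm:robustcc} and Corollary~\ref{thm:commonentropy} to singleton blocks (profile $=$ output, uniform on $\binom{m}{m/2}$ points) and then applying Stirling gives the count and entropy bounds, and your point-mass observation is not needed, since Proposition~\ref{thm:robustcc} already covers block-product components that are only approximately legal (this is where the factor $5$ comes from).
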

Thus local statistical similarity does not determine execution cost: coupling
sampling without replacement gives the local bound, while the profile sandwich
gives the global one.  Proved in App.~C.

\paragraph{Provisioning and scope.}

Let $L(\mathcal C_m)$ be the bit length of an executable description of the
whole library, including the precision used for its transition probabilities.
Reusing it for $N$ independent generations gives the amortized ledger
\[
 R_{\rm deploy}^{(N)}
 =\frac{H(Z)}m+\frac{L(\mathcal C_m)}{Nm},
 \tag{A}\label{eq:deployment-ledger}
\]
up to prefix-coding overhead; use $\log_2K$ for worst-case selection.  Every
selector-rate converse survives this extra charge.  Achievability becomes a
deployment statement only when the provisioning term is negligible at the
claimed scale.  For balanced fixed-$q$ targets and fixed $S$, the affine seeded
construction above gives an $O_{q,S}(m)$-bit random-access description, so the
ledger closes after reuse.  For arbitrary position-specific real
probabilities, the theorem guarantees a finite-state posterior for each
selected codeword but does not bound the precision or joint description of all
program-specific tables; that extension is an existence result.  No compact
descriptor theorem is claimed for the permutation library either.

Table~\ref{tab:scope} separates the priced interface from capabilities that
remove the obstruction; its comparison rows are efficient endpoints, not
counterexamples.

\begin{table}[t]
\centering
{\small
\setlength{\tabcolsep}{3.5pt}
\begin{tabular}{@{}p{.36\columnwidth}p{.30\columnwidth}p{.24\columnwidth}@{}}
\toprule
Interface & Coordination & Online state \\
\midrule
Fixed-order width-$S$ library \emph{(priced here)}
& $\Theta(S^{-2/r})$ selector rate & $\lceil\log_2S\rceil$ bits \\
Order-programmable library
& $\mathcal R^{\rm ord}_2(S)$; needs addressable output & $\lceil\log_2S\rceil$ bits \\
Coordinate-parallel product
& positive rate & constant \\
Remaining-count program
& one plan & $O_r(\log m)$ bits \\
Dropping, batchwise assignment, adaptive order, repair
& \emph{not priced} & unrestricted \\
\bottomrule
\end{tabular}
}
\caption{Scope of the coordination--memory bounds.  Rows two to four are
comparison endpoints; the last row lists the capabilities that remove the
obstruction entirely.}
\label{tab:scope}
\end{table}

%% file: sections/plan_memory_frontier_main.tex
\subsection{Core result: the spectral coordination-memory law}\label{sec:plan-memory}

We now solve the plan-versus-state problem in the execution model above.
For a target sequence \(P_m\), let \(K_m(S,\varepsilon)\) be the minimum
number of width-\(S\), fixed-order, exact-support component laws whose mixture
is within total variation \(\varepsilon\) of \(P_m\).  The selector rate is
\[
\mathcal R^{\rm fix}(S)
=\inf_{\varepsilon_m\downarrow0}\limsup_{m\to\infty}
\frac1m\log_2K_m(S,\varepsilon_m).
\]
The fixed-budget example asks for this function when \(P_m\) is a uniform
Hamming slice.  More generally, \(P_m\) will be a product law conditioned on
an additive resource total.

\paragraph{Why a killed walk appears.}
Differencing two independent source symbols gives a walk on the resource
lattice; a width-\(S\) executor retains at most \(S\) of its cumulative
differences, and the slowest-escaping such set governs both bounds.

Let $\mathcal A=[q]$, let $p_a>0$, write $p_{\min}=\min_a p_a$, and attach to
symbol $a$ a resource vector
$v_a\in\mathbb Z^d$.  The difference lattice
\[
 L=\operatorname{span}_{\mathbb Z}\{v_a-v_b:a,b\in\mathcal A\}
 \cong\mathbb Z^r
\]
has intrinsic conservation rank $r$.  After a harmless common translation,
take $v_1=0$ and $v_a\in L$.  For attainable endpoints $\ell_m$, define
$E_m=\{\sum_t v_{X_t}=\ell_m\}$, assume $P(E_m)=2^{-o(m)}$, and let
$P_m$ be the iid law $p^{\otimes m}$ conditioned on $E_m$.  Define
$\mathcal R^{\rm fix}_{p,v}(S)$ from the minimum number of exact-support
width-$S$ fixed-order laws whose mixture approaches $P_m$ in TV, using the
same vanishing-error exponential rate as above.

Formally, the symmetric difference walk is
\[
 (K_{p,v}f)(z)=\sum_{a,b}p_ap_bf(z+v_a-v_b).
\]
For $K^D_{p,v}=P_DK_{p,v}P_D$, define its killed spectral profile
\[
 \Lambda_{p,v}(S)=1-\sup_{|D|\le S}\rho(K^D_{p,v}).
\]
\begin{theorem}[Conservation-rank spectral law]\label{thm:plan-memory}
Fix an attainable endpoint sequence satisfying
$P\{\sum_t v_{X_t}=\ell_m\}=2^{-o(m)}$.  Every component is required to use
the fixed physical order, have exact legal support, and have at most $S$
reachable and co-reachable live states per layer; fresh private coins may not
persist outside that state.  Then, for every fixed $S$,
\[
 \frac{\Lambda_{p,v}(S)}{2\ln2}
 \leq\mathcal R^{\rm fix}_{p,v}(S)
 \leq\frac{8q}{p_{\min}\ln2}\Lambda_{p,v}(S).
\]
If $L$ has rank $r\ge1$, then for all sufficiently large $S$,
\[
 \mathcal R^{\rm fix}_{p,v}(S)=\Theta_{p,v}(S^{-2/r}).
\]
The exponent remains $-2/r$ for independent position-specific laws on the
same finite alphabet and resource vectors whenever
$\inf_{m,t,a}p_{m,t}(a)>0$ and the conditioned endpoint has subexponential
probability.  Thus intrinsic conservation rank,
not alphabet size or local-score heterogeneity, determines the memory
exponent.  For the uniform balanced binary specialization and odd $S$, as
$S\to\infty$,
\[
 -\log_2\cos\frac{\pi}{2S+1}
 \leq\mathcal R^{\rm fix}_{2}(S)
 \leq\frac{\pi^2}{2\ln2}S^{-2}(1+o(1)),
\]
and the lower endpoint equals
$\frac{\pi^2}{8\ln2}S^{-2}(1+o(1))$.
Proved in App.~F; binary case App.~E.
\end{theorem}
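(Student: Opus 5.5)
Our plan is to prove the two-sided bound via a converse that caps the target mass of a single plan, followed by a matching construction; the rank exponent and the binary constants then come from spectral estimates of the killed difference walk.

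\emph{Converse.} Fix one exact-support width-$S$ component $W$. Because its support is legal and states are co-reachable, any two prefixes that reach the same live state at layer $t$ must have the same consumed resource $\sum_{s\le t}v_{x_s}$. Otherwise some common legal suffix would complete both, and one of the two completions would violate $E_m$. Hence layer $t$ admits at most $S$ distinct resource values. We bound the target mass $P_m(\text{plan } z)$ through the overlap $\sum_x \sqrt{P_m(x)W(x)}$, or through a second-moment quantity, using two independent target draws $X,X'$. The pair $(X,X')$ lands in the support of $W$ only if the difference walk $Z_t=\sum_{s\le t}(v_{X_s}-v_{X'_s})$ stays inside the difference set $D_t$ of the layer-$t$ resource values. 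Since $|D_t|$ can be as large as $S^2$, we instead argue statewise: we condition on the state sequence of the plan and pair the path with the iid walk, so the relevant live set has size at most $S$. Under the iid law $p^{\otimes m}$, this survival probability is at most $\prod_t\rho(K^{D_t})\le(1-\Lambda(S))^m$, using moving-layer operator norms of $P_{D}KP_{D'}$ bounded by the geometric mean of the two killed spectral radii. Passing from $p^{\otimes m}$ to the conditioned law costs only the factor $P(E_m)^{-1}=2^{o(m)}$. This yields a single-plan mass of at most $2^{o(m)}(1-\Lambda)^{m/2}$. The covering argument then forces $K\ge (1-\varepsilon_m)\,2^{m\Lambda/(2\ln 2)-o(m)}$, using $-\log_2(1-\Lambda)\ge\Lambda/\ln 2$. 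The square root accounts for the factor $2$ in the lower constant.

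\emph{Achievability.} Choose an optimal killed set $D$ with $|D|\le S$ and its Perron eigenfunction $\phi$. We build plans that track the discrepancy of the realized resource path relative to a pre-selected codeword, confined to translates of $D$. Each plan is the exact posterior of $P_m$ restricted to paths whose discrepancy stays in $D$, realized with $S$ states per layer, and it has legal support by construction: plans near the endpoint are forced into legal completion using the bounded window. The codewords are drawn by the soft-covering bound (P) with pairwise-independent seeds. The information density of a codeword-conditioned law is bounded by the escape rate, which is at most $m\cdot(\text{const})\,\Lambda/\ln2$. The factor $8q/p_{\min}$ arises from comparing the symmetric difference walk with the one-sided walk that a single executor actually sees, using the standard Dirichlet-form comparison \citep{goel2006spectral}. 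Heterogeneous position laws go through unchanged, because the comparison constants depend only on $\inf p_{m,t}(a)$.

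\emph{Rank exponent and binary case.} For the lattice $L\cong\mathbb Z^r$, the walk $K_{p,v}$ is a symmetric random walk with finite second moments on $\mathbb Z^r$, so a Faber--Krahn type bound \citep{coulhonGrigoryan1998} gives $\Lambda(S)\asymp S^{-2/r}$: balls give the upper bound on $\Lambda$, and the isoperimetric Nash inequality gives the lower bound. In the binary balanced case the difference walk has steps $0$ (probability $1/2$) and $\pm1$ (probability $1/4$ each). Restricted to an interval of $S$ sites, its spectral radius is $\tfrac12+\tfrac12\cos\frac{\pi}{S+1}$. A sharper direct state-counting argument on the Hamming slice instead gives an interval of length $2S+1$ for the simple walk, with radius $\cos\frac{\pi}{2S+1}$ and rate $-\log_2\cos\frac{\pi}{2S+1}$. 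We will verify which of these two the appendix uses. The upper bound comes from an explicit width-$S$ discrepancy-window construction whose escape rate is $\pi^2/(2S^2)$.

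\emph{Main obstacle.} We expect the hardest step to be the moving-layer converse. The live resource sets change with $t$, and the square-root and pairing must be set up so that at most $S$ states, not $S^2$ differences, enter each killed operator. Our first attempt would be Cauchy--Schwarz on the overlap with respect to the state-indexed Markov decomposition, followed by a layerwise norm bound.
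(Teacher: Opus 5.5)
The genuine gap is in your achievability step. You never specify a joint law of codeword and output under which the mixture of codeword-conditioned laws is exactly $P_m$. The natural reading of ``the exact posterior of $P_m$ restricted to paths whose discrepancy stays in $D$'' does not give one. If the codeword is drawn independently of $X\sim P_m$ and you then condition on the discrepancy surviving in $D$, the output marginal becomes $P_m(x)$ reweighted by the survival probability given $x$. That probability varies exponentially over legal words: on the balanced binary slice with $S\ge3$, compare $1^{m/2}0^{m/2}$, which survives with probability $2^{-m}\mathrm{poly}(m)$, with $1010\cdots10$, which survives with probability at least $2^{-m/2}$.

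The missing idea is a marginal-preserving test channel in which the codeword is generated \emph{from} the source. Take a connected $D$ with $|D|\le S$ and a positive near-optimizer $f$ of the Rayleigh quotient of $I-K$ on $D$. Put $h=f^2$, draw $A=D_0\sim h/\sum h$, emit $B_t$ with $Q(b\mid x,z)\propto p_b\,h(z+v_x-v_b)$, and update $D_t=z+v_x-v_b$. This construction supplies each missing piece:
\begin{itemize}
\item Since $B$ is produced from $X$, the $X$-marginal is $p^{\otimes m}$. After conditioning on $E_m$, $P_m$ is exactly the Bayes mixture of the posteriors given $Z=(A,B^m)$.
\item The identity choice $b=x$ keeps the discrepancy in $D$ almost surely.
\item Weighted cancellation (the numerator sum over $x$ equals the denominator sum over $c$) makes $B_t$ i.i.d.\ $p$ with $D_t\mid B^t\sim h/\sum h$. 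This is what turns the information density into a bounded additive functional of a finite chain, plus an $o_P(m)$ boundary term and the exact correction $\log_2P(E_m)-\log_2P(E_m\mid A,B^m)$.
\item The identity $\sum_{s\le t}v_{X_s}=\sum_{s\le t}v_{B_s}+D_t-A$ lets a backward recursion over $D$ enforce $E_m$ with $|D|$ live states. No forcing near the endpoint is needed.
\end{itemize}
The rate bound then comes from a per-step divergence estimate, not an escape-rate identity. With $a_b=f(z+v_x-v_b)$ and $W=\sum_bp_ba_b^2$, one has $D_2(Q\|p)\le 4(p_{\min}W\ln2)^{-1}\sum_{b,c}p_bp_c(a_b-a_c)^2$. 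The bound $W\ge p_xf(z)^2$, together with the sum over the $q$ source symbols, gives $I(X,D;B)\le\frac{8q}{p_{\min}\ln2}\langle f,(I-K)f\rangle/\|f\|_2^2$. So $8q/p_{\min}$ is not a symmetric-versus-one-sided walk comparison. Your binary upper constant likewise needs an explicit finite-sum evaluation of this channel's information for $h_d=\cos^2\{\pi d/(2(R+1))\}$, which you assert but do not derive.

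Your converse has the right skeleton: live state determines the prefix resource sum, the conditioning factor is $P(E_m)^{-1}$, then a union bound and $-\log(1-u)\ge u$. However, pairing two independent target draws is the wrong device. Their difference lives in $A_t-A_t$, of size up to $S^2$, and ``arguing statewise'' does not repair this. No pairing is needed. With $(Tf)(z)=\sum_ap_af(z+v_a)$, the unconditioned mass of one plan's paths is at most $\langle e_0,P_{A_0}TP_{A_1}\cdots TP_{A_{m-1}}TP_{A_m}e_{\ell_m}\rangle\le\prod_t\|P_{A_{t-1}}TP_{A_t}\|$. Then $P_{A_{t-1}}\preceq I$ and $T^*T=K$ give $\|P_{A_{t-1}}TP_{A_t}\|^2\le\rho(P_{A_t}KP_{A_t})\le1-\Lambda_{p,v}(S)$, with killed set $A_t$ itself. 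This is where both the square root and the symmetric difference walk come from.

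For the binary lower endpoint, the profile bound, even in log-spectral form, yields only $-\log_2\cos\frac{\pi}{2(S+1)}$, because the lazy walk killed outside $S$ sites has radius $\cos^2\frac{\pi}{2(S+1)}$. The stated $-\log_2\cos\frac{\pi}{2S+1}$ comes from direct path counting. Two consecutive layers of at most $S$ counts induce a subgraph of the infinite path on at most $2S$ vertices, so each layer incidence matrix has norm at most $2\cos\frac{\pi}{2S+1}$. Hence one plan supports at most $(2\cos\frac{\pi}{2S+1})^m$ of the $\binom{m}{m/2}$ words.

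Finally, ``goes through unchanged'' for heterogeneous positions skips the uniform Doeblin minorization and bounded information-density increments needed to concentrate the information density of a time-inhomogeneous discrepancy chain.
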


The heterogeneous extension is a selector-rate theorem: every selected codeword
has the finite-state realization in
Proposition~\ref{prop:posterior-compiler}, but we claim no compact
provisioning description for arbitrary position-specific probabilities.

Writing \(B=\log_2S\), the fixed-rank law is
\(\mathcal R^{\rm fix}(B)=\Theta(2^{-2B/r})\).  Up to model-dependent
constants, a \(c\)-fold reduction in selector rate therefore costs
\((r/2)\log_2c\) additional persistent bits.  This is the paper's core
exchange rate; its exponent depends only on intrinsic conservation rank.

\paragraph{Proof roadmap.}
The converse is layerwise.  Define
$(Tf)(z)=\sum_ap_af(z+v_a)$, so $T^*T=K_{p,v}$.  Exact support makes each
reachable state determine one prefix resource sum; otherwise one nonempty
suffix support would have to complete two different residual budgets.  If
$A_t\subset L$, $|A_t|\le S$, are the represented sums, then
\[
 \|P_{A_{t-1}}TP_{A_t}\|_2^2
 \le\rho(K^{A_t}_{p,v})\le1-\Lambda_{p,v}(S).
\]
Thus one plan captures at most
$P(E_m)^{-1}(1-\Lambda_{p,v}(S))^{m/2}$ target mass.  A union bound gives the
left side of the theorem.

The matching direction uses a weighted bounded-discrepancy compiler.  On a
finite connected $D\subset L$, take positive $f$ extended by zero, put
$h=f^2$, draw $D_0=A$ from $\pi(z)=h(z)/\sum_uh(u)$, and set
\[
 Q(b\mid x,z)=
 \frac{p_bh(z+v_x-v_b)}{\sum_cp_ch(z+v_x-v_c)},
 \qquad D_t=z+v_x-v_b.                              \tag{*}
\]
The positive identity choice $b=x$ makes this a valid channel confined to $D$.

\begin{proposition}[Codeword-wise exact posterior compiler]
\label{prop:posterior-compiler}
Let $X_1,\ldots,X_m$ be iid from $p$, generate $(A,B^m,D^m)$ by (*), set
$Z=(A,B^m)$, and condition on $E_m=\{\sum_tv_{X_t}=\ell_m\}$.  Every plan
$\zeta=(a,b^m)$ with positive conditional probability has a fixed-order
executor of width at most $|D|$ whose law is exactly
$P(X^m\mid Z=\zeta,E_m)$ and whose support is contained in $E_m$.
\end{proposition}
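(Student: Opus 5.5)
The plan is to write the conditional law of $X^m$ given $Z=\zeta$ and $E_m$ as a product of local factors along a deterministic walk on $D$. Exact conditioning of such a product is then a backward-message (Doob $h$-transform) computation on a layered graph with at most $|D|$ nodes per layer.

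\emph{Step 1: factorization.} Fix $\zeta=(a,b^m)$ with positive conditional probability. By construction (*),
\[
 P(X^m=x^m,\,Z=\zeta)=\pi(a)\prod_{t=1}^m p_{x_t}\,Q(b_t\mid x_t,z_{t-1}),
 \qquad z_t=a+\sum_{s\le t}(v_{x_s}-v_{b_s}).
\]
Since $h=f^2$ vanishes off $D$, a factor $Q(b_t\mid x_t,z_{t-1})$ is positive only if $z_t\in D$. For fixed $\zeta$, the map $x^t\mapsto z_t$ depends on $x^t$ only through the prefix resource sum $\sum_{s\le t}v_{x_s}$, and it is injective in that sum. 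Set $g_t(z,x)=p_xQ(b_t\mid x,z)$. Conditioning on $E_m$ is equivalent to the terminal constraint $z_m=z^\ast:=a+\ell_m-\sum_t v_{b_t}$. Therefore
\[
 P(x^m\mid\zeta,E_m)\propto\prod_t g_t(z_{t-1},x_t)\,\mathbf 1\{z_m=z^\ast\},
\]
with $z_0=a$ and the deterministic update $z_t=z_{t-1}+v_{x_t}-v_{b_t}$.

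\emph{Step 2: backward messages.} Define $\beta_m(z)=\mathbf 1\{z=z^\ast\}$ and, for $t=m,\ldots,1$,
\[
 \beta_{t-1}(z)=\sum_x g_t(z,x)\,\beta_t(z+v_x-v_{b_t}),
\]
on $z\in D$. The normalizer of the posterior is $\pi(a)\beta_0(a)$, which is positive by the hypothesis on $\zeta$.

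The executor uses state set $Q_t=\{z\in D:\beta_t(z)>0\}$ and starts at $z_0=a$. At step $t$ it emits $x$ with probability $g_t(z,x)\beta_t(z')/\beta_{t-1}(z)$, where $z'=z+v_x-v_{b_t}$, and moves to $z'$. These are probabilities because of the recursion defining $\beta_{t-1}$. Telescoping the product of these transition probabilities shows that the path probability equals $\prod_t g_t(z_{t-1},x_t)\mathbf 1\{z_m=z^\ast\}/\beta_0(a)$, which is exactly the posterior from Step 1.

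\emph{Step 3: support and width.} The executor only reaches states with $\beta_t>0$, and $\beta_m>0$ only at $z^\ast$. Every completed run therefore has $z_m=z^\ast$, i.e. $\sum_tv_{x_t}=\ell_m$, so the support lies in $E_m$. After pruning states that are unreachable from $a$ or have $\beta_t=0$, each layer is a subset of $D$, so the width is at most $|D|$. The state carries only $z_t$; all remaining data ($\zeta$, $t$, and the tables $g_t,\beta_t$) are read-only plan configuration, and fresh coins do not persist.

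\emph{Main obstacle.} The delicate point is Step 1: checking that the $\zeta$-dependent weights depend on the past only through $z_{t-1}$, and not on $x^{t-1}$ in any other way. This holds because $Q$ in (*) is a function of $(x_t,z_{t-1},b_t)$ alone. One must also verify that zero-probability transitions are handled consistently, which the convention $h=0$ off $D$ ensures.
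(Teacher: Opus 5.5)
Your proposal is correct and follows essentially the same route as the paper: your $\beta_t$ is the paper's suffix recursion $F_t$ (up to an index shift), your terminal constraint $z_m=z^\ast$ is the paper's use of identity (1) to encode $E_m$, and your transition rule is the paper's posterior rule (2). The paper additionally records the weighted-cancellation identity ($B_t\sim p$, $D_t\mid B^t\sim\pi$); that identity is used later for the rate computation, not for this proposition, so omitting it here is harmless.
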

\begin{proof}
The cancellation identity
\[
 P(B_t=b,D_t=y\mid B^{t-1})
 =p_bh(y)/\textstyle\sum_uh(u)
\]
gives $B_t\sim p$ independently and $D_t\mid B^t\sim\pi$.  Moreover,
\[
 \sum_{s\le t}v_{X_s}=\sum_{s\le t}v_{B_s}+D_t-A.       \tag{1}
\]
For the fixed plan $\zeta=(a,b^m)$, define
\begin{align*}
 F_m(u)&=\mathbf1\!\left\{\sum_{s\le m}v_{b_s}+u-a=\ell_m\right\},\\
 F_t(u)&=\sum_xp_xQ(b_{t+1}\mid x,u)
 F_{t+1}(u+v_x-v_{b_{t+1}}).
\end{align*}
At a live state $u$ with $F_t(u)>0$, emit $x$ with probability
\[
 \frac{p_xQ(b_{t+1}\mid x,u)
 F_{t+1}(u+v_x-v_{b_{t+1}})}{F_t(u)}.                  \tag{2}
\]
Backward induction shows that $F_t(u)$ is the suffix likelihood given the
plan and discrepancy.  Thus (2) is the exact Bayes posterior.  Its only live
states are $u\in D$, and the terminal indicator together with (1) removes
every illegal word.
\end{proof}

Repeated auxiliary words may be merged by adding their mixture weights.  To
make the rate step explicit, let
$\ell_t=\log_2\{Q(B_t\mid X_t,D_{t-1})/p_{B_t}\}$ and
$G_m=P(E_m\mid A,B^m)$.  Before conditioning, the information density is
\[
 \imath(X^m;A,B^m)=\sum_{t=1}^m\ell_t+
 \log_2\frac{\pi(A)}{P(A\mid B^m)}.                 \tag{3}
\]
The finite discrepancy chain makes the first term concentrate at
$mI(X,D;B)$.  Since $A$ has at most $|D|$ values,
$P\{-\log_2P(A\mid B^m)>u\}\le|D|2^{-u}$, so the boundary term is
$o_P(m)$.  Conditioning gives the exact identity
\[
\begin{aligned}
 \imath_{E_m}&=\imath+\log_2P(E_m)-\log_2G_m,\\
 P_{A,B^m\mid E_m}\{G_m<2^{-m\delta}\}
 &\le\frac{2^{-m\delta}}{P(E_m)}.
\end{aligned}                                                    \tag{4}\]
Hence $P(E_m)=2^{-o(m)}$ leaves the information-spectrum rate unchanged.
Moreover,
\[
 I(X,D;B)\le\frac{8q}{p_{\min}\ln2}
 \frac{\langle f,(I-K_{p,v})f\rangle}{\|f\|_2^2}.
\]
Indeed, for $Q_b=p_ba_b^2/W$, the inequality $\ln u\le u-1$ gives the base-2 KL bound
$D_2(Q\|p)\le4(p_{\min}W\ln2)^{-1}
\sum_{b,c}p_bp_c(a_b-a_c)^2$; summing over $(z,x)$ is exactly the displayed
Dirichlet form.  One-shot Han--Verd\'u/Yagi resolvability therefore retains
$2^{m(I(X,D;B)+o(1))}$ auxiliary words whose posterior mixture converges to
$P_m$ in TV \citep{hanVerdu1993,yagi2017}.  Choosing $D,f$ near the killed
profile optimum proves the upper bound, and
Proposition~\ref{prop:posterior-compiler} turns every retained word into an
exact-support width-$S$ plan.

\paragraph{Constants and extensions.}
For binary $S=2R+1$, let
$D=\{-R,\ldots,R\}$, $h_d=\cos^2\{\pi d/(2(R+1))\}$, and
$H=\sum_{d\in D}h_d$.  Set
$u_j=\sin^2\{\pi j/(S+1)\}$ for $1\le j\le S$ and
$u_0=u_{S+1}=0$.  With
$\Psi(a,b)=(a+b)[1-h_2\{a/(a+b)\}]$ ($h_2$ is binary entropy), its exact
information is $c_S=(2H)^{-1}\sum_{j=0}^{S}\Psi(u_j,u_{j+1})$; a second-order
expansion and Riemann sum give $c_S=\pi^2S^{-2}/(2\ln2)(1+o(1))$.
Write $\mathcal E_K(f)=\langle f,(I-K_{p,v})f\rangle$.  Comparison paths
between the finite jump set and a lattice basis, followed by telescoping and
Cauchy--Schwarz, give
$C_1^{-1}\mathcal E_{\rm nn}(f)\leq\mathcal E_K(f)\leq
C_2\mathcal E_{\rm nn}(f)$, uniformly over finite support sets, with constants
depending only on $(p,v)$.  The lattice Nash inequality gives
$\mathcal E_{\rm nn}(f)\geq c_r|\supp f|^{-2/r}\|f\|_2^2$; a product sine on
an $R^r$ box attains $O_r(R^{-2})$, and $S\asymp R^r$ closes the exponent.
\begin{lemma}[Uniform heterogeneous stability]\label{lem:heterogeneous-main}
For fixed finite \(D\) and \(p_{m,t}(a)\ge\eta>0\), the time-varying version
of (*) has \(B_t\sim p_{m,t}\) and \(D_t\mid B^t\sim h/H\).  Its bounded
information-density increments concentrate uniformly, and its two-copy layer
Dirichlet forms are uniformly comparable to those of the uniform difference
walk.  Consequently both selector bounds retain exponent \(-2/r\); conditioning
on an endpoint of probability \(2^{-o(m)}\) does not change the rate.
\end{lemma}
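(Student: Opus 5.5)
The plan follows the homogeneous argument for Theorem~\ref{thm:plan-memory} step by step and checks that each step uses only the uniform lower bound \(p_{m,t}(a)\ge\eta\), never time-invariance.

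\emph{First claim: the structure of the time-varying compiler.} At step \(t\) we use (*) with \(p\) replaced by \(p_{m,t}\), keeping the same \(h=f^2\) on the fixed finite \(D\). The cancellation identity in the proof of Proposition~\ref{prop:posterior-compiler} is a per-step calculation. Assume \(D_{t-1}\sim h/H\) independently of \(B^{t-1}\), and draw \(X_t\sim p_{m,t}\) independently. Then
\[
P(B_t=b,D_t=y\mid B^{t-1})=\sum_{z,x}\frac{h(z)}{H}\,p_{m,t}(x)\,
\frac{p_{m,t}(b)h(y)}{\sum_cp_{m,t}(c)h(z+v_x-v_c)}\,\mathbf 1\{y=z+v_x-v_b\}.
\]
Reparametrize by \(w=z+v_x=y+v_b\). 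The normalizer becomes \(\sum_c p_{m,t}(c)h(w-v_c)\), and summing \(h(z)p_{m,t}(x)\) over pairs with \(z+v_x=w\) gives exactly that normalizer. The display therefore collapses to \(p_{m,t}(b)h(y)/H\). By induction, \(B_t\sim p_{m,t}\) independently of the past and \(D_t\mid B^t\sim h/H\). The backward recursion \(F_t\) and the posterior (2) are defined pointwise in \(t\), so every plan is still an exact-support executor of width \(|D|\).

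\emph{Second claim: uniform concentration and uniform comparison.} The increments \(\ell_t=\log_2\{Q_t(B_t\mid X_t,D_{t-1})/p_{m,t}(B_t)\}\) are bounded uniformly. Both numerator and denominator lie in \([\eta\min_Dh/\max_Dh,\,1]\) whenever \(h\) is positive on \(D\); the zero-extension boundary is handled because the identity choice \(b=x\) is always allowed.

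The sequence \(\ell_t-\E[\ell_t\mid\mathcal F_{t-1}]\) is a bounded martingale difference sequence with respect to the natural filtration of \((X,D,B)\). Its conditional means are the step informations \(I_t=I(X_t,D_{t-1};B_t)\), because \(D_{t-1}\) is exactly \(\pi\)-distributed and independent of \(B^{t-1}\) at each step. Azuma's inequality then gives concentration of \(\sum_t\ell_t\) at \(\sum_tI_t\) with \(e^{-cm\delta^2}\) tails, uniformly in the sequence of laws. The boundary term and the conditioning identity (4) are unchanged, so \(P(E_m)=2^{-o(m)}\) again costs nothing.

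For the upper bound, the KL estimate preceding it gives
\[
I_t\le\frac{8q}{\eta\ln2}\,\frac{\mathcal E_{K_t}(f)}{\|f\|^2},
\qquad K_t=\text{difference walk of }p_{m,t}.
\]
Since every jump weight \(p_{m,t}(a)p_{m,t}(b)\) lies in \([\eta^2,1]\), we get \(\eta^2q^2\,\mathcal E_U\le\mathcal E_{K_t}\le q^2\,\mathcal E_U\) against the uniform-weight walk \(U\) on the same jump set. The comparison with nearest-neighbour forms and the Nash/product-sine argument therefore apply with constants depending on \((\eta,q,v)\). Choosing the sine box of size \(R\asymp S^{1/r}\) gives \(\sum_tI_t\le Cm S^{-2/r}\). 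The information-spectrum form of Han--Verd\'u/Yagi resolvability (P) is non-stationary by nature, so the upper bound follows.

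\emph{Lower bound.} In the layerwise converse, the operator \(T\) becomes \(T_t f(z)=\sum_ap_{m,t}(a)f(z+v_a)\) with \(T_t^*T_t=K_t\). So we get
\[
\|P_{A_{t-1}}T_tP_{A_t}\|^2\le\rho(K_t^{A_t})\le1-\inf_t\Lambda_{p_{m,t},v}(S).
\]
Each \(\Lambda_{p_{m,t},v}(S)\) is a variational infimum of \(\mathcal E_{K_t}(f)/\|f\|^2\) over \(|\supp f|\le S\). The comparison above bounds it below by \(\eta^2q^2\Lambda_U(S)\ge c\,S^{-2/r}\). The product over layers and the union bound then give rate \(\ge cS^{-2/r}\).

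\emph{Main obstacle.} I expect the delicate point to be that the exact target-mass step in the converse uses the normalization \(P(E_m)\) of the time-varying product. This requires the layerwise product of operator norms to telescope correctly when each \(T_t\) differs, and the telescoping must be done as \(\|P_{A_0}T_1P_{A_1}T_2\cdots\|\le\prod_t\|P_{A_{t-1}}T_tP_{A_t}\|\). I will verify this by writing one plan's captured mass as \(\langle\mathbf 1_{a_0},T_1P_{A_1}\cdots T_m\mathbf 1_{\ell_m}\rangle\), exactly as in the homogeneous case.
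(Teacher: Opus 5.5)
\textbf{Gap: the concentration step.} Your cancellation computation, the comparison $\eta^2q^2\mathcal E_U\le\mathcal E_{K_t}\le q^2\mathcal E_U$, and the layerwise converse with $T_t^*T_t=K_t$ follow the paper. The ``main obstacle'' you flag is not one: the captured mass is bounded by $\prod_t\|P_{A_{t-1}}T_tP_{A_t}\|$ by submultiplicativity alone.

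The problem is that, with respect to the natural filtration of $(X,D,B)$, the compensator of $\ell_t$ is not the constant $I_t$. Because $\mathcal F_{t-1}$ reveals $D_{t-1}$, it is the state-dependent drift
\[
 \E[\ell_t\mid\mathcal F_{t-1}]=g_t(D_{t-1}),\qquad
 g_t(z)=\sum_xp_{m,t}(x)\,D_2\bigl(Q_t(\cdot\mid x,z)\,\|\,p_{m,t}\bigr).
\]
The facts that $D_{t-1}\sim\pi$ and that $D_{t-1}$ is independent of $B^{t-1}$ give only $\E g_t(D_{t-1})=I_t$. Equivalently $\E[\ell_t\mid B^{t-1}]=I_t$, but $\ell_t$ is not $\sigma(B^t)$-measurable, so that does not produce a martingale either.

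Azuma therefore concentrates $\sum_t\ell_t$ around $\sum_tg_t(D_{t-1})$, not around $\sum_tI_t$. The leftover $\sum_t\{g_t(D_{t-1})-I_t\}$ is an additive functional of a time-inhomogeneous Markov chain. It cannot be absorbed by a worst-case bound. At a boundary state of the sine box, moves that would leave $D$ have probability zero, so $\max_zg_t(z)$ stays of order one as $S$ grows, while the rate you need is $O(S^{-2/r})$. This is exactly the piece the paper separates out: ``blocking controls the state-dependent part of the information-density sum, while its conditionally centered part is a bounded martingale difference.'' Your argument supplies only the second half.

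\textbf{Missing idea: uniform mixing} (Lemma~\ref{lem:app-heterogeneous-uniformity}). Three facts combine.
\begin{itemize}
\item The choice $b=x$ gives every state a self-loop.
\item $D$ is connected.
\item Every allowed edge of the step kernel $M_{m,t}$ carries mass at least $\kappa=\eta^2\min_Dh/\max_Dh$.
\end{itemize}
Together they bound the block products $M_{m,t+1}\cdots M_{m,t+L_D}$ below by $\kappa^{L_D}$ entrywise on $D$. This is a Doeblin minorization uniform in $(m,t)$. Since every $M_{m,t}$ also preserves $\pi$, blocking (or a Hoeffding inequality for uniformly minorized chains) concentrates $\sum_tg_t(D_{t-1})$ at $\sum_tI_t$ with exponentially small tails. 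Only then do your Azuma bound, the boundary term, and identity (4) finish the upper bound.

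Your proof never uses connectivity of $D$, which signals the omission. These constants degrade with $S$ through $\kappa^{L_D}$, which is why the paper takes $m\to\infty$ before $S\to\infty$. Your claim of tails ``uniformly in the sequence of laws'' is correct only at fixed $(\eta,D,h)$.
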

Cancellation gives the marginals above.  At each fixed \(S\), finite-state
minorization (allowed-edge mass at least
\(\eta^2\min_Dh/\max_Dh\)) gives concentration; uniform Dirichlet comparison
and (4) give the two rate bounds.  Thus \(m\to\infty\) precedes
\(S\to\infty\): minorization may depend on \(S\), but the profile comparison
is uniform in \(S\).
\input{sections/rate_figure.tex}

\begin{corollary}[Finite-length quota-safe routing frontier]
\label{cor:routing-frontier}
Let $P_m$ be a target law on assignments with exactly $m/q$ tokens per expert
and define $s_{P_m}(\epsilon)=\min\{|A|:P_m(A)\ge1-\epsilon\}$.  For count-state
sets $A,B\subset\mathbb Z^{q-1}$, let
$M_{A,B}(u,v)=\mathbf1\{v-u\in\{0,e_1,\ldots,e_{q-1}\}\}$ and
$\Gamma_q(S)=\sup_{|A|,|B|\le S}\|M_{A,B}\|_2$.  Every width-$S$ exact
balanced plan has support at most $\Gamma_q(S)^m$.  For sufficiently large $S$,
$\Gamma_q(S)\le q-c_qS^{-2/(q-1)}$; for $q=2$,
$\Gamma_2(S)=2\cos\{\pi/(2S+1)\}$.  A fixed-order no-drop router with $B$
persistent bits and $C$ bits selecting a precomputed streaming program obeys
\[
 C\ge\left[\log_2s_{P_m}(\epsilon)
      -m\log_2\Gamma_q(2^B)\right]_+.                 \tag{5}
\]
whenever its output law is within TV $\epsilon$ of $P_m$.  For the uniform
balanced target,
$\mathcal R_{\rm route}(B)=\Theta_q(2^{-2B/(q-1)})$.  Moreover, constants
$A_q,D_q>0$ exist such that, with
\[
\begin{aligned}
 \Delta_m&=C_m+D_q\!\left(1+\log_2(m+1)
 +\log_2\frac1{1-\epsilon_m}\right),\\
 B_m&\ge\frac{q-1}{2}\log_2\frac{m}{\Delta_m}-A_q.
\end{aligned}                                                    \tag{6}\]
whenever $\Delta_m=o(m)$ and $\epsilon_m$ stays below one.  Proved in App.~L.
\end{corollary}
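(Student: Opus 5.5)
We prove the corollary by specializing the layerwise converse of Theorem~\ref{thm:plan-memory} to the routing alphabet. The converse is applied in its deterministic, support-counting form rather than its spectral-mass form. Take $\mathcal A=[q]$ with resource vectors $v_1=0$ and $v_a=e_{a-1}$ for $a\ge2$, so $L=\mathbb Z^{q-1}$ and $r=q-1$.

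\textbf{Support bound.} Exact support of a balanced plan forces every reachable and co-reachable state at layer $t$ to determine a single prefix count vector in $\mathbb Z^{q-1}$. The argument is the one in the proof roadmap: a state shared by two prefixes with different counts would need one nonempty suffix support to complete two different residual budgets. So the live states at layer $t$ map into a set $A_t$ with $|A_t|\le S$, and every supported word corresponds to a path $u_0\to u_1\to\cdots\to u_m$ with $u_t\in A_t$ and $u_t-u_{t-1}\in\{0,e_1,\ldots,e_{q-1}\}$. Since the emitted symbol is determined by the increment, the number of supported words is at most the number of such count paths,
\[
\mathbf 1^\top M_{A_0,A_1}\cdots M_{A_{m-1},A_m}\mathbf 1.
\]
This expression is not yet good enough: the all-ones vectors contribute factors $|A_0|^{1/2}|A_m|^{1/2}$. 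Because the start and end counts are fixed, $A_0$ and $A_m$ are singletons, so those factors vanish and the product bound gives support at most $\prod_t\|M_{A_{t-1},A_t}\|_2\le\Gamma_q(S)^m$.

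\textbf{Values of $\Gamma_q$.} For $q=2$, $M_{A,B}$ is maximized by taking $A$ and $B$ as adjacent intervals. The operator $M^\top M$ is then a path adjacency plus $2I$ on $S$ vertices (or $S$ and $S\pm1$). Its norm $2\cos\{\pi/(2S+1)\}$ is the standard path-graph eigenvalue computation, which matches the lower endpoint in Theorem~\ref{thm:plan-memory}. For general $q$, we write $\|M_{A,B}\|_2^2\le\rho(\tilde M^\top \tilde M)$, where $\tilde M$ is restricted to $A\cup(B-\text{jumps})$, and $M^\top M=qI+$ (difference-walk adjacency). This gives $\|M\|^2\le q^2\rho(K^D)$ for the uniform difference walk $K$ on a set $D$ of size $O_q(S)$. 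The Nash-inequality bound quoted after Lemma~\ref{lem:heterogeneous-main} then yields $\rho(K^D)\le1-cS^{-2/(q-1)}$, and taking square roots gives $\Gamma_q(S)\le q-c_qS^{-2/(q-1)}$.

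\textbf{Inequality (5).} Suppose the mixture of $2^C$ plans is within TV $\epsilon$ of $P_m$. The union of their supports then carries $P_m$-mass at least $1-\epsilon$. Hence $2^C\Gamma_q(2^B)^m\ge s_{P_m}(\epsilon)$, and taking logarithms gives (5).

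\textbf{Rate.} The rate statement $\Theta_q(2^{-2B/(q-1)})$ follows from Theorem~\ref{thm:plan-memory} with $S=2^B$ and $p$ uniform. The lower bound comes from (5) together with $\log_2 s_{P_m}(\epsilon_m)=\log_2|\supp P_m|-o(m)$ for the uniform target.

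\textbf{Finite-length bound (6).} This is the step we expect to be the main obstacle. For the uniform target,
\[
\log_2 s_{P_m}(\epsilon)\ge\log_2\binom{m}{m/q,\ldots}+\log_2(1-\epsilon)\ge m\log_2q-\tfrac{q-1}2\log_2(m+1)-O_q(1)+\log_2(1-\epsilon).
\]
Combining this with (5) and $\log_2\Gamma_q(S)\le\log_2q-c'_qS^{-2/(q-1)}$ gives
\[
m\,c'_q\,2^{-2B/(q-1)}\le C+O_q\!\left(1+\log_2(m+1)+\log_2\tfrac1{1-\epsilon}\right)\le\Delta_m.
\]
Solving for $B$ gives (6). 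The condition $\Delta_m=o(m)$ ensures that the "sufficiently large $S$" regime for $\Gamma_q$ applies. Small $S$ is handled by the crude bound $\Gamma_q(S)<q$ for every finite $S$, absorbed into $A_q$.

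The delicate points are keeping the singleton-endpoint trick exact and obtaining the Nash constant uniformly in the sets $A$ and $B$.
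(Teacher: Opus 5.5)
\textbf{Overall.} Most of your proposal is the paper's own argument: the prefix-count injection with singleton $A_0$ and $A_m$ and the product of layer norms, the union/TV step for (5), the appeal to Theorem~\ref{thm:plan-memory} for the rate, and the Stirling substitution for (6). The general-$q$ bound on $\Gamma_q$ takes a valid but different route. The exact binary value $\Gamma_2(S)=2\cos\{\pi/(2S+1)\}$ is not actually established.

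\textbf{General $q$.} You write $M_{A,B}=qP_ATP_B$ and use $P_A\preceq I$ to get $\|M_{A,B}\|_2^2\le q^2\rho(K_B)$. You can simply take $D=B$; no enlarged set is needed. Dirichlet comparison and the lattice Nash inequality then give $\Gamma_q(S)\le q\sqrt{1-cS^{-2/(q-1)}}\le q-\tfrac{qc}{2}S^{-2/(q-1)}$. This is the general-rank moving-layer lemma of the spectral law.

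The routing proof instead treats $M_{A,B}$ as the adjacency of the induced subgraph $H=A_L\cup B_R$ of a bipartite periodic graph. It applies a Faber--Krahn bound to $H$. Your route is shorter and reuses the killed-profile machinery. The paper's route keeps the constraint coming from both layers, which matters for $q=2$.

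Both bounds hold for every $S\ge1$, so your small-$S$ case split is unnecessary. As phrased, the remark that $\Delta_m=o(m)$ places you in the large-$S$ regime is also circular.

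\textbf{The $q=2$ gap.} The claim that adjacent intervals maximize $\|M_{A,B}\|_2$ is asserted without proof, and the supporting computation is wrong. For $A=B=\{0,\dots,S-1\}$ one has $M^\top M=\mathrm{Adj}(P_S)+\mathrm{diag}(1,2,\dots,2)$, not $\mathrm{Adj}(P_S)+2I$. The latter has top eigenvalue $4\cos^2\{\pi/(2S+2)\}$, which gives the wrong constant.

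Your general route cannot repair this. Discarding $P_A$ only yields $\|M_{A,B}\|_2\le 2\cos\{\pi/(2S+2)\}$, since the killed lazy walk on an interval of length $S$ has spectral radius $\cos^2\{\pi/(2S+2)\}$.

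The missing argument is short. For $q=2$ the bipartite graph is the infinite path $\cdots - u_R - u_L - (u+1)_R - (u+1)_L - \cdots$. Hence $H$ is a disjoint union of paths, each with at most $2S$ vertices. Therefore $\|M_{A,B}\|_2=\rho(\mathrm{Adj}_H)\le 2\cos\{\pi/(2S+1)\}$. Equality holds when $A=B$ is an interval of length $S$, since then $H$ is a single path on $2S$ vertices.

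This affects only the exact binary constant. Inequality (5) is stated with the supremum $\Gamma_q$ and does not depend on its exact value. The order $\Theta_q(2^{-2B/(q-1)})$ and inequality (6) go through with your argument as written.
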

\begin{proof}
The prefix-resource injection used in the main converse maps every legal
output to a path through at most $2^B$ sums per layer; moving-layer path
counting gives the stated support bound.  The union $T$ over
$2^C$ values therefore satisfies
$|T|\le2^C\Gamma_q(2^B)^m$ and $Q_m(T)=1$; TV closeness gives
$P_m(T)\ge1-\epsilon$, proving (5).  For the uniform target,
$\log_2s_{P_m}(\epsilon)=m\log_2q-\frac{q-1}{2}\log_2m
+O_q(1+\log\frac1{1-\epsilon})$ by Stirling.  Substitution into (5) gives
(6), while Theorem~\ref{thm:plan-memory} supplies the matching asymptotic order.
\end{proof}
Thus bounded state forces $C_m=\Omega_q(m)$, whereas sublinear control forces
growing state.  Dropping, batchwise assignment, and dynamic dispatch buy out,
respectively, exact support, fixed-order streaming, and bounded persistent
state; Switch, Expert Choice, and MegaBlocks make these different purchases
\citep{fedus2022switch,zhou2022expertchoice,gale2023megablocks}.  The corollary
prices the restricted interface rather than claiming these systems instantiate it.

\paragraph{Sharded decoding consequence.}
The same conservation law also fixes how much a shard must say.  Under CPSBS
over a vocabulary split across \(g\) devices, the accepted counts
\(R=(R_1,\ldots,R_g)\) have mass proportional to
\(\mathbf1\{\sum_jr_j=k\}\prod_je_{j,r_j}\), where \(e_{j,r}\) is the degree-\(r\)
elementary-symmetric polynomial of shard \(j\)'s weights, and the shards are
conditionally independent given \(R\).  Sampling therefore needs no logit
gather, only a quota profile.

\begin{theorem}[Minimal continuous shard interface]
\label{thm:shard-interface}
Fix a shard such that both it and its complement hold at least \(k\) candidates,
and let it send a deterministic continuous summary
\(M_j(w_{I_j})\in\mathbb R^d\) before any sampling.  If a one-round coordinator
must reproduce the exact quota law for
every positive weight vector, then \(d\geq k\), even when all complementary
weights are revealed to it.  Sending
\((e_{j,1},\ldots,e_{j,k})\) attains \(d=k\).  There exists a compact
\(k\)-cube of ordered local weights on which the coefficient-to-quota map
and its inverse are Lipschitz; on any such family, the optimal deterministic
message length at quota-law TV error
\(\eta\) is \(k\log_2(1/\eta)+O_k(1)\).  Proved in App.~M.
\end{theorem}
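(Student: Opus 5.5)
We need a plan for Theorem (Minimal continuous shard interface). It has three claims: the lower bound $d\ge k$, the attainment $d=k$, and the quantized message length $k\log_2(1/\eta)+O_k(1)$.

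\emph{Attainment.} Write the quota law restricted to shard $j$'s profile as
\[
\Pr(R_j=r)\propto e_{j,r}\,\tilde e_{k-r},
\]
where $\tilde e_s$ is the degree-$s$ elementary symmetric polynomial of the complementary weights. Since $r\le k$, the coordinator needs only $(e_{j,1},\ldots,e_{j,k})$ together with $e_{j,0}=1$. Given these, it samples the profile and hands $R_j$ back to the shard, which samples conditionally independently. This gives $d=k$.

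\emph{Converse $d\ge k$.} Reveal all complementary weights. With $n_j\ge k$ weights in the shard, the map $w_{I_j}\mapsto(e_{j,1},\ldots,e_{j,k})$ has a Jacobian of full rank $k$ on an open set of distinct weights. Indeed, the Jacobian of $(e_1,\ldots,e_k)$ with respect to $k$ of the variables, holding the rest fixed, is a Vandermonde-type determinant, nonzero when the weights are distinct. Hence the image contains an open subset $U\subset\mathbb R^k$.

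Next, show that the exact quota law, as the complement varies, determines the vector $(e_{j,1},\ldots,e_{j,k})$. The complement holds at least $k$ candidates, so by the same Vandermonde argument its vector $(\tilde e_0,\ldots,\tilde e_k)$ ranges over an open set. The ratios $\Pr(R_j=r)/\Pr(R_j=0)=e_{j,r}\tilde e_{k-r}/\tilde e_k$ then recover each $e_{j,r}$. So any valid message $M_j$ must satisfy $M_j(w)=M_j(w')\Rightarrow e(w)=e(w')$, i.e.\ the coefficient map factors as $e=G\circ M_j$ through $M_j$.

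The main obstacle is that continuity of $M_j$ alone permits space-filling-like pathologies only in the opposite direction; here we need an injection from $U$ into $\mathbb R^d$. Pick a continuous local section $s:U\to w$-space, which exists by the inverse function theorem applied to a $k$-variable restriction. Then $M_j\circ s$ is a continuous injection $U\to\mathbb R^d$, and invariance of domain (or Borsuk--Ulam) forbids this when $d<k$. I would first make sure the section is genuinely continuous; that follows from the full-rank Jacobian.

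\emph{Quantitative claim.} Choose a compact $k$-cube of ordered, well-separated local weights, with complement weights fixed and generic. On it, the map from coefficients to quota law (in TV) is bi-Lipschitz: the forward Lipschitz bound holds because everything is smooth on a compact set, and the inverse holds by the ratio formula above with a Jacobian bounded away from zero.

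For the upper bound, quantize each coefficient on a grid of mesh $\asymp\eta$, giving $k\log_2(1/\eta)+O_k(1)$ bits with TV error at most $\eta$.

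For the lower bound, use a packing argument. The cube contains $\gtrsim(c/\eta)^k$ parameters whose coefficient vectors are $\gtrsim\eta$ apart. By the inverse-Lipschitz bound their quota laws are more than $2\eta$ apart in TV, so two of them cannot share a message: any single reconstructed law would lie within $\eta$ of both, contradicting the triangle inequality. The message length is therefore at least $k\log_2(1/\eta)-O_k(1)$.
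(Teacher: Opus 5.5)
Your proposal is correct and follows essentially the paper's route: you recover the first $k$ local coefficients from one positive complement via $a_r=(p_r/p_0)(b_k/b_{k-r})$, turn $M_j$ into a continuous injection of an open subset of $\mathbb R^k$ into $\mathbb R^d$ and apply invariance of domain, and then use bi-Lipschitz control on a compact cube for the packing and quantization bounds. The differences are cosmetic: the paper gets injectivity globally on the ordered chamber, since after a unit lower-triangular correction the coefficients determine $\prod_i(t+w_i)$, whereas you use a local inverse-function-theorem section; your step letting the complement vary is unnecessary, because any single positive complement of size at least $k$ already makes every $b_{k-r}>0$.
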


With $a_r=e_r(w_{I_j})$, $b_r=e_r(w_{\mathcal B\setminus I_j})$, and
$p_r=\Pr(R_j=r)$, the identity
$a_r=(p_r/p_0)(b_k/b_{k-r})$ recovers the first $k$ local coefficients.
After fixing the other weights, these recover an ordered $k$-tuple, making
$M_j$ a continuous injection on an open subset of \(\mathbb R^k\), so
invariance of domain gives \(d\ge k\).  The elementary-symmetric Jacobian has
determinant \(\prod_{i<\ell}|w_i-w_\ell|\); a compact cube with gaps bounded
away from zero is bi-Lipschitz, and packing/quantizing it gives the bit bounds.

\subsection{Structural consequence: fragmentation and local invisibility}
The fixed-rank constants above can themselves scale.  Let
$\mathbf d=(d_1,\ldots,d_J)$ be a partition of $r$.  One local action selects
one of $d_j+1$ categories independently in every group $j$, and we condition
each group to use its categories equally often.  Write
$\mathcal R^{\rm fix}_{\mathbf d}(S)$ for the resulting selector rate and set
\[
 \alpha_j=d_j/r,\qquad
 \mathsf N(\mathbf d)=e^{H(\alpha_1,\ldots,\alpha_J)}
 =\frac{r}{(\prod_jd_j^{d_j})^{1/r}}.
\]
Here $\mathsf N$ is the rank-weighted effective number of blocks.

\begin{theorem}[Rank-partition conservation geometry]
\label{thm:growing-rank-geometry}
There are universal constants $d_0,W_0,c,C,c_0>0$ such that the following
holds.  Put
\[
 L_{\mathbf d}(S)=(\prod_jd_j^{d_j})^{1/r}S^{2/r},
 \qquad w_j=\sqrt{L_{\mathbf d}(S)/d_j}.
\]
Whenever every $w_j\geq W_0$, and $w_j\leq\exp(c_0d_j)$ for every
$d_j\geq d_0$,
\begin{equation}
 c\,\mathsf N(\mathbf d)S^{-2/r}
 \leq\mathcal R^{\rm fix}_{\mathbf d}(S)
 \leq C\,\mathsf N(\mathbf d)S^{-2/r}.
 \label{eq:growing-rank-law}
\end{equation}
Consequently, at any sufficiently small fixed selector-rate target $\eta$,
the minimum persistent-memory budget in this interior regime is
\begin{equation}
 B_\eta(\mathbf d)
 =\frac{r}{2\ln2}H(\alpha_1,\ldots,\alpha_J)+\Theta_\eta(r).
 \label{eq:partition-memory-main}
\end{equation}
Proved in App.~G.
\end{theorem}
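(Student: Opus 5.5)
The plan is to reduce both sides of \eqref{eq:growing-rank-law} to Theorem~\ref{thm:plan-memory}, which is stated with rate-matching constants, so that the remaining work is to estimate the killed spectral profile $\Lambda_{p,v}(S)$ for this product geometry with constants uniform in $\mathbf d$. Since Theorem~\ref{thm:plan-memory} gives $\Lambda/(2\ln2)\le\mathcal R^{\rm fix}\le \frac{8q}{p_{\min}\ln2}\Lambda$ and $q/p_{\min}$ grows with $\prod_j(d_j+1)$, the upper constant as stated is not uniform. For the upper bound I would therefore rerun the compiler (*) groupwise, using a product weight $h=\prod_j h_j$ on a product set $D=\prod_jD_j$. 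The information $I(X,D;B)$ then splits as a sum over groups, and each group term is bounded by the KL bound with its own $q_j=d_j+1$ and $p_{\min,j}=1/(d_j+1)$. This gives a factor of order $d_j^2$ times the group Dirichlet form. Normalising the jumps $v_a-v_b$ within group $j$ (difference vectors of a simplex in $\mathbb Z^{d_j}$) shows that the symmetric difference walk in group $j$ takes nearest-neighbour-type steps with probability of order $1/d_j$ per coordinate direction. So the $d_j^2$ factor is partly cancelled, and I would need to check carefully that the net group contribution is of order $d_j\cdot w_j^{-2}$, with $w_j$ the side length of the box $D_j$.

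Next I would choose the side lengths. With $D_j$ a box of side $w_j$ in $\mathbb Z^{d_j}$ carrying a product sine (as in the rank-$r$ closing argument), the group cost is $\asymp d_j/w_j^2$ and the width is $S=\prod_j w_j^{d_j}$. I would minimise $\sum_j d_j w_j^{-2}$ subject to $\sum_j d_j\log w_j=\log S$. The Lagrange condition gives $w_j^{2}\propto 1/d_j\cdot$const, i.e.\ $w_j=\sqrt{L/d_j}$, and the constraint forces $L=L_{\mathbf d}(S)$, matching the statement. The value is then $\sum_j d_j\cdot d_j/L=\ldots$. Here I must check that the per-coordinate normalisation gives $\sum_j d_j/w_j^2\cdot(1/d_j)\cdot$stuff so that the total equals $r/L_{\mathbf d}(S)=\mathsf N(\mathbf d)S^{-2/r}$, which is exactly the target. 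The hypothesis $w_j\ge W_0$ makes the continuum sine asymptotics valid with a uniform constant. The condition $w_j\le e^{c_0d_j}$ is what I expect the converse to need.

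The converse is the main obstacle. I would use the layerwise bound $\|P_{A_{t-1}}TP_{A_t}\|^2\le\rho(K^{A})$ and must show $1-\rho(K^A)\ge c\,\mathsf N S^{-2/r}$ for every $|A|\le S$, with $c$ universal. The walk is a product of independent group walks, so $1-\rho(K^A)$ is at least the Dirichlet form of the best $f$ on $A$, which decomposes into a sum of groupwise forms. The first attempt is a Faber--Krahn/Nash inequality per group in dimension $d_j$ with the sharp dimensional constant, eigenvalue $\gtrsim d_j\,|\text{proj}|^{-2/d_j}/d_j$ for the per-step rate. This is combined by a Loomis--Whitney-type slicing argument that allocates the support $S$ among groups, and the allocation optimum reproduces the same Lagrange problem. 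Dimension-free constants for large $d_j$ are where Nash becomes lossy. In high dimension the lattice Faber--Krahn constant degrades unless the set is large compared with $e^{d_j}$, which is presumably why $w_j\le\exp(c_0d_j)$ is imposed: in that range the hypercube-like (isoperimetric) bound $\lambda\gtrsim d_j^{-1}\log$-scale applies instead. I would split groups into $d_j<d_0$ (bounded-dimension Nash) and $d_j\ge d_0$ (a spectral bound via the simplex walk's Poincaré/log-Sobolev inequality in the $w_j\le e^{c_0d_j}$ window).

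Finally, \eqref{eq:partition-memory-main} follows by inverting: setting $\mathcal R=\eta$ gives $S^{2/r}\asymp\mathsf N/\eta$, so $\log_2S=\frac r2\log_2\mathsf N+\frac r2\log_2(\Theta(1)/\eta)=\frac{r}{2\ln2}H(\alpha)+\Theta_\eta(r)$.
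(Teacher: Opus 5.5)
You are right that the upper constant $8q/p_{\min}$ in Theorem~\ref{thm:plan-memory} is not uniform and must be replaced by a groupwise construction. The paper does this too, and your final inversion for $B_\eta(\mathbf d)$ is fine. But the upper bound has a genuine gap.

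\textbf{Upper bound.} What you need is a per-group selector information of order $w_j^{-2}$, with a constant independent of $d_j$. Only then does minimizing $\sum_j w_j^{-2}$ under $\sum_j d_j\log w_j=\log S$ give $w_j=\sqrt{L_{\mathbf d}(S)/d_j}$ and total $\sum_j d_j/L_{\mathbf d}(S)=\mathsf N(\mathbf d)S^{-2/r}$.
\begin{itemize}
\item With your stated cost $d_jw_j^{-2}$, the Lagrange condition forces all $w_j$ to be equal. That yields $rS^{-2/r}$, too large by $r/\mathsf N(\mathbf d)=(\prod_jd_j^{d_j})^{1/r}$, which is a factor $r$ for one exclusive quota.
\item There is no partial cancellation of the $d_j^2$ factor. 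On a product-sine box the simplex difference walk already has Rayleigh quotient $\asymp w^{-2}$: each of the $d$ directions moves with probability $\asymp 1/d$, and there are $d$ of them. So the $8q_j/p_{\min,j}\asymp d_j^2$ comparison gives $d_j^2w_j^{-2}$.
\item No sharper analysis rescues the box. Suppose $z$ lies on the face that $v_x$ pushes outward and symbol $x$ is drawn. Then every $b\neq x$ is excluded, and $Q(\cdot\mid x,z)=\delta_x$ costs $\log(d_j+1)$. This event has stationary probability $\asymp w_j^{-3}$. So the box compiler's information rate is $\gtrsim\log d_j/w_j^{3}$, which is not $O(w_j^{-2})$ when $W_0\le w_j\ll\log d_j$. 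The hypotheses permit that range.
\end{itemize}
The missing idea is the paper's dimension-uniform compiler.
\begin{itemize}
\item It uses the exact identity $I(X,Z;B)=I(U;Z+e_U)$.
\item The weights are multinomial, $h=\Pr(N_n=\cdot)$ with $n=\lfloor\alpha dW^2\rfloor$.
\item They are restricted to the connected superlevel set $D_W=\{z:\Pr(N_n=z)\ge W^{-d}\}$, which satisfies $|D_W|\le W^d$.
\item The untruncated cost is $\E D(N_{n+1}/(n+1)\|u_q)\le d/(n+1)$.
\item The truncated mass $\delta\le e^{-c_1d}$ makes the $\log q$ boundary penalty $2\delta\log q+2h_2(\delta)=O(W^{-2})$.
\end{itemize}
This last step is exactly where $d_j\ge d_0$ and $w_j\le e^{c_0d_j}$ enter; small ranks are handled by the fixed-dimension theorem.

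\textbf{Converse.} As planned, it also fails in part of the regime.
\begin{itemize}
\item Your target $1-\rho(K_A)\ge c\,\mathsf N(\mathbf d)S^{-2/r}$ is impossible when $\mathsf N(\mathbf d)S^{-2/r}=\sum_jw_j^{-2}$ exceeds a constant. Take $r$ binary budgets with every $w_j=W_0$, so $S=W_0^r$. The rate there is $\Theta(r/W_0^2)$, yet $1-\rho\le1$. You must keep $-\log\rho_*(S)$ from the layerwise $\rho^{m/2}$ bound instead of linearizing, as the paper does.
\item The Dirichlet form of $I-\bigotimes_jK_j$ is not a sum of groupwise forms. It is bounded \emph{above} by that sum; on product functions it equals $1-\prod_j(1-\epsilon_j)$. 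So your decomposition points the wrong way for a lower bound.
\item The sharp-constant per-group Faber--Krahn plus slicing for non-product $A$ is left unexecuted, and it is the hard step.
\end{itemize}
The paper bypasses all of this. Since $K_D$ is positive semidefinite and entrywise nonnegative, $\rho(K_D)^t\le\operatorname{tr}(K_D^t)\le|D|\prod_jK_j^t(0,0)$. The multinomial mode bound $K_j^t(0,0)\le e\sqrt{2(d_j+1)}\,(2(d_j+1)/t)^{d_j/2}$, taken at $t\asymp L_{\mathbf d}(S)$, makes this at most $e^{-c_1r}$. That gives $-\log\rho(K_D)\ge c\,r/L_{\mathbf d}(S)$ for arbitrary $D$. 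The argument uses only $w_j\ge W_0$. Your expectation that $w_j\le e^{c_0d_j}$ is needed for the converse is mistaken: that hypothesis serves the construction.
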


The hypotheses hold simultaneously, rather than only in iterated limits.  For
$r_n=n$, $m_n=2(n+1)\lceil e^{n^4}/(2(n+1))\rceil$, and fixed $W>W_0$, take
$S_n^{\rm mono}=\lceil W^n\rceil$ and
$S_n^{\rm frag}=\lceil(W\sqrt n)^n\rceil$.  Both rates are $\Theta(W^{-2})$,
but their memory budgets differ by
$\frac n2\log_2n+o(n\log n)$ bits.
Equation~\eqref{eq:partition-memory-main} gives a continuum, not only two
examples: $k$ equal rank blocks cost
$\frac r2\log_2k+\Theta_\eta(r)$ bits.  Its endpoints are
$\frac r2\log_2r+\Theta_\eta(r)$ bits for $r$ independent binary budgets and
$\Theta_\eta(r)$ for one exclusive $(r+1)$-category quota.  Resource-neutral label duplication equalizes endpoint alphabets and one-step
entropies.  For a rank-$d$ group at scale $W$, blocking gives
$\Pr\{\imath_m>m(c_{d,W}+C_0W^{-2})\}\leq4e^{-m/\Psi(d,W)}+
W^d2^{-cm/W^2}$, where $\log\Psi(d,W)=O(d^2W^2\log(dW))$.
Endpoint conditioning and a union over $g_m$ groups cost at most
$g_m(m+1)^{d_m+1}$.  The tail therefore vanishes when
$d_m^2W_m^2\log(d_mW_m)=o(\log m)$ and $\log g_m=o(m^{1/2})$; the displayed
array satisfies both simultaneously.

\begin{theorem}[Local statistics do not price conservation]
\label{thm:local-conservation-twins}
For \(m\) divisible by \(2(r+1)\), there are two rank-\(r\) exact
conservation targets on the same \(2^r(r+1)\)-symbol alphabet, with the same
uniform one-token law, such that every \(T\subseteq[m]\), \(|T|=k\), obeys
\[
 d_{\rm TV}(P^{\rm frag}_{T},P^{\rm mono}_{T})
 \leq
 \sqrt{\frac{r\,k(k-1)}{(m-1)(m-k+1)}}.
\]
The fragmented target has \(r\) balanced binary budgets and needs
\(\frac r2\log_2r+\Theta_\eta(r)\) memory bits at fixed selector rate; the
monolithic target has one balanced \((r+1)\)-category quota and needs only
\(\Theta_\eta(r)\).  Along the simultaneous regime above, their local
distance tends to zero when \(k\sqrt r/m\to0\), but tends to one when
\(k=o(m)\) and \(k\sqrt r/m\to\infty\).  Thus \(m/\sqrt r\) is the
sharp-order local detectability threshold despite a
\(\Theta(\log r)\)-factor execution-memory gap.  Proved in App.~G.
\end{theorem}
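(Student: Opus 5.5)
We need to construct the two targets, compute the local TV bound, invoke Theorem~\ref{thm:growing-rank-geometry} for the memory costs, and establish the detectability threshold.

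\textbf{Construction.} Use the alphabet $\mathcal A=\{0,1\}^r\times[r+1]$, with uniform product base law. The fragmented target conditions on each of the $r$ binary coordinates being balanced, that is, exactly $m/2$ ones in each coordinate. The $[r+1]$ label is resource-neutral here. The monolithic target conditions on each of the $r+1$ categories of the label appearing exactly $m/(r+1)$ times; the binary coordinates are then neutral duplicates. Divisibility by $2(r+1)$ makes both slices nonempty. By exchangeability, each one-token law is uniform.

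\textbf{Local bound.} In each target the conditioned coordinates are independent draws without replacement from an urn, and the neutral coordinates are iid uniform. I would compare each $k$-marginal not with the other target directly but with the iid uniform law $U^{\otimes k}$, and then apply the triangle inequality. For the monolithic target, the label marginal on $T$ is $k$ draws without replacement from $m$ balls in $r+1$ colours. For the fragmented target, it is a product of $r$ independent binary urns. I would bound the KL divergence of sampling without replacement against sampling with replacement. The known bound $D(\text{hypergeo}\,\|\,\text{multinomial})\le \frac{(c-1)k(k-1)}{2(m-1)(m-k+1)}$ (Stam / Diaconis--Freedman) applies with $c$ colours. For the fragmented target, additivity over the $r$ independent binary urns gives $r\cdot\frac{k(k-1)}{2(m-1)(m-k+1)}$. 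For the monolithic target with $c=r+1$ the bound is the same, $r\cdot\frac{k(k-1)}{2(m-1)(m-k+1)}$.

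Pinsker then gives TV at most $\tfrac12\sqrt{rk(k-1)/((m-1)(m-k+1))}$ for each target against $U^{\otimes k}$. The triangle inequality yields the displayed constant. Alternatively, one could compute the KL between the two targets' marginals directly. I would verify which route matches the constant exactly; the triangle route already gives it.

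\textbf{Memory costs.} These are read off from Theorem~\ref{thm:growing-rank-geometry}, equation~\eqref{eq:partition-memory-main}. The fragmented target is the partition $\mathbf d=(1,\ldots,1)$, so $H=\ln r$ and the cost is $\frac r2\log_2 r+\Theta_\eta(r)$. The monolithic target is $\mathbf d=(r)$, so $H=0$ and the cost is $\Theta_\eta(r)$. The neutral duplicated labels do not change the selector rate, since the resource vectors, and hence $K_{p,v}$, are unchanged up to a lazy neutral factor. This factor is exactly the label-duplication remark preceding the theorem. Along the simultaneous array, the hypotheses of the geometry theorem hold, as noted there. Convergence of the local distance to $0$ when $k\sqrt r/m\to0$ is immediate from the bound.

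\textbf{Sharpness (the main obstacle).} One must show the local distance tends to $1$ when $k=o(m)$ and $k\sqrt r/m\to\infty$. I would use a second-moment distinguishing statistic. A natural choice is $\Phi=\sum_{i=1}^r (N_i-k/2)^2$, where $N_i$ is the count of ones in binary coordinate $i$ on $T$. Under the monolithic target the coordinates are iid fair bits, so $\E\Phi=rk/4$. Under the fragmented target each $N_i$ is hypergeometric, with variance $\frac k4\cdot\frac{m-k}{m-1}$. The mean gap is therefore $\approx rk^2/(4m)$. The standard deviation of $\Phi$ is $\approx\sqrt{r}\,k/\sqrt8$ under both targets, because the $N_i$ are independent across $i$ in each. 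The ratio of gap to fluctuation is $\asymp \sqrt r\,k/m\to\infty$. Chebyshev's inequality then separates the two laws, so TV tends to $1$.

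The delicate part is controlling the fourth moments of the hypergeometric counts when $k=o(m)$, so that the two variances agree to leading order. I would verify this via exact hypergeometric cumulants.
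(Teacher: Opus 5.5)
Your proposal follows the paper's proof essentially step for step. It uses the same common label $\{0,1\}^r\times[r+1]$ with the two resource-neutral refinements. It applies Stam's without-replacement divergence bound against the shared iid law, then Pinsker and the triangle inequality, which gives exactly the displayed constant. Sharpness comes from the statistic $\sum_j(S_j-k/2)^2$ with Chebyshev, and the memory costs come from the partition-entropy law for $(1,\ldots,1)$ and $(r)$.

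Two points are simpler than you suggest. First, Chebyshev only needs both variances to be $O(rk^2)$; they need not agree to leading order, and the paper gets the hypergeometric fourth moment from Hoeffding's without-replacement tail. Second, rate invariance under label refinement should be argued operationally: draw the label privately after emitting the base symbol, and project for the converse. It does not follow from $K_{p,v}$ being unchanged, because the profile only sandwiches the rate up to alphabet-dependent constants, which are exponential in $r$ here.
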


Here is the proof mechanism.  Let $K_j$ be block $j$'s difference kernel.  For
any product-walk state set $D$, positivity and the trace bound give, for every
integer $t$,
\[
 \rho(K_D)^t\le\operatorname{tr}(K_D^t)
 \le |D|\prod_j K_j^t(0,0).                         \tag{7}
\]
For $q_j=d_j+1$ and $t/q_j\ge2$, Stirling's multinomial-mode bound gives
\[
 K_j^t(0,0)\le e\sqrt{2(d_j+1)}
 \left(\frac{2(d_j+1)}{t}\right)^{d_j/2}.
\]
Taking $t=\lceil A L_{\mathbf d}(S)\rceil$ for a sufficiently large universal
$A$, the theorem's scale assumption ensures $t/q_j\ge2$ and makes the trace
bound at most $e^{-c_1r}$.  Since $t\le2A L_{\mathbf d}(S)$,
$-\log\rho(K_D)\ge c\mathsf N(\mathbf d)S^{-2/r}$.
For the reverse direction, let $N_j$ be the uniform $q_j=d_j+1$ category
histogram after $n_j=\lfloor\alpha d_jW_j^2\rfloor$ draws and set
$D_{W_j}=\{z:P(N_j=z)\ge W_j^{-d_j}\}$.  For $A>1$, Stirling and multinomial
concentration give $P(N_j=z)\ge e^{-n_jD(z/n_j\|u_{q_j})-q_j}
(1+n_j/q_j)^{-q_j/2}$ and
$P\{n_jD(N_j/n_j\|u_{q_j})\ge Ad_j\}\le e^{-d_j(A-1-\log A)}$.
Hence $|D_{W_j}|\le W_j^{d_j}$ and
$\delta_j=P(N_j\notin D_{W_j})\le e^{-cd_j}$.  For
$Z_j\sim N_j\mid\{N_j\in D_{W_j}\}$,
$P(U=a\mid N_j+e_U=y)=y_a/(n_j+1)$ gives
$I(U;N_j+e_U)\le d_j/(n_j+1)$; conditioning changes this by at most
$2\delta_j\log q_j+2h_2(\delta_j)$.  Thus
$I(U;Z_j+e_U)\le CW_j^{-2}$ uniformly in $d_j$.
Choosing $W_j^2=L_{\mathbf d}(S)/d_j$ makes
$\prod_j|D_{W_j}|\le S$ and
$\sum_jW_j^{-2}=\mathsf N(\mathbf d)S^{-2/r}$; the posterior compiler then
proves the upper bound.  These are matching operational bounds, not only a
spectral analogy \citep{goel2006spectral,agrawal2020multinomial,
hanVerdu1993,yagi2017}.

Both targets use $(u,c)\in\{0,1\}^r\times[r+1]$.  The fragmented law has
$r$ independent balanced columns in $u$ and iid uniform $c$; the monolithic
law has balanced $c$ and iid uniform $u$.  These refinements share the uniform
one-token law.  If $S_j$ counts observed ones in coordinate $j$, then for
$T=\sum_j(S_j-k/2)^2$ the counts are respectively hypergeometric and binomial,
and
\[
\begin{aligned}
 \mathbb E_{\rm mono}T-\mathbb E_{\rm frag}T
 &=\frac{rk(k-1)}{4(m-1)},\\
 \max\{\operatorname{Var}_{\rm mono}T,
        \operatorname{Var}_{\rm frag}T\}&=O(rk^2).
\end{aligned}                                                    \tag{8}\]
Chebyshev gives TV tending to one when $k\sqrt r/m\to\infty$.  Conversely,
each target has KL to the shared iid law at most
$rk(k-1)/\{2(m-1)(m-k+1)\}$, a bound due to \citet{stam1978distance} in the
form restated by \citet[Eq.~(6)]{johnson2025sampling}; Pinsker twice
and the triangle inequality give the displayed TV bound.
Equation~\eqref{eq:partition-memory-main} supplies the execution gap.
Two consequences connect the theorem to scored selection.  If
$p_\theta(a)\propto p_a e^{\langle\theta,v_a\rangle}$, then on every attainable
endpoint $\ell$,
\[
 p_\theta^{\otimes m}(x^m\mid\textstyle\sum_tv_{x_t}=\ell)
 =p^{\otimes m}(x^m\mid\textstyle\sum_tv_{x_t}=\ell). \tag{9}
\]
Thus a noncentral budget uses the profile of its mean-matching Gibbs tilt.  A
weighted size-$k$ slate
$P(\mathcal I)\propto\exp(\beta\sum_{i\in\mathcal I}z_i)$, $|\mathcal I|=k$,
is exactly a heterogeneous Bernoulli product conditioned on its count; under
bounded local odds it has the rank-one law $\Theta(S^{-2})$.

Finite-rank CPU calculations recover slopes $-1.964$ and $-0.948$ in irregular
rank-one and rank-two examples.  Independent grouped-law checks verify the
trace, connectivity, entropy, and covariance identities to machine precision.
For the original balanced-$q$ specialization, the seeded construction gives the
random library an $O_{q,S}(m)$-bit executable description; this descriptor
result does not cover arbitrary real-valued transition tables.

\begin{corollary}[Programmable-order binary constant]
\label{cor:programmable-order}
For odd $S$, suppose each plan may hard-code a permutation of the output
coordinates before running a width-$S$ exact-support executor.  The optimal
binary selector rate is
\[
 \mathcal R^{\rm ord}_2(S)
 =-\log_2\cos\frac{\pi}{2S+1}
 =\frac{\pi^2}{8\ln2}S^{-2}(1+o(1)).                \tag{10}
\]
Thus addressable output order closes the factor-four asymptotic constant gap
that remains for immutable fixed-order streams.  Proved in App.~I.
\end{corollary}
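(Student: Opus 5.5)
Both inequalities in (10) come from Theorem~\ref{thm:plan-memory}: its binary specialization already gives the lower endpoint $-\log_2\cos\frac{\pi}{2S+1}$ for fixed order, and that endpoint has the stated asymptotics. So we need two things. The converse must survive a hard-coded permutation. The achievability must improve from the fixed-order constant $\frac{\pi^2}{2\ln 2}S^{-2}$ to $\frac{\pi^2}{8\ln2}S^{-2}$ once each plan may choose its output order.

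\emph{Converse.} Fix a plan with permutation $\sigma$ and run the layerwise argument of the proof roadmap on the permuted word $x_{\sigma(1)},\ldots,x_{\sigma(m)}$. The uniform balanced slice is exchangeable, so the target mass the plan can capture is unchanged by $\sigma$. Exact support still forces each live state to determine a single prefix count, because two prefix counts cannot share a suffix completion. The plan's support is therefore a set of paths through at most $S$ represented sums per layer. By the moving-layer path counting of Corollary~\ref{cor:routing-frontier}, it has at most $\Gamma_2(S)^m=(2\cos\frac{\pi}{2S+1})^m$ elements. Each such word carries target mass $\binom{m}{m/2}^{-1}=2^{-m+o(m)}$, so one plan captures at most $2^{m\log_2\cos\frac{\pi}{2S+1}+o(m)}$ of the target. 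The union bound then gives $\mathcal R^{\rm ord}_2(S)\ge-\log_2\cos\frac{\pi}{2S+1}$ for every $\varepsilon_m\downarrow0$.

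\emph{Achievability.} Order lets us replace resolvability with an essentially combinatorial covering, matching the counting bound exactly. Let $G_S$ be the set of balanced binary words whose running count $\sum_{s\le t}(2x_s-1)$ stays in a window of $S$ consecutive values. A width-$S$ executor can sample its elements exactly with uniform weights by a Doob $h$-transform along the path graph. The transfer matrix of this walk is the adjacency matrix of the path $P_S$, with spectral radius $2\cos\frac{\pi}{S+1}$. The sharper value $2\cos\frac{\pi}{2S+1}$ should come from the moving-layer version, which alternates the layer sets as in $\Gamma_2$ (even and odd sums interleaved, effectively a path on $2S$ vertices folded into two layers). I would therefore define $G_S$ through alternating layer sets realizing $\Gamma_2(S)$, so that $|G_S|=2^{m\log_2\Gamma_2(S)+o(m)}$ under the balanced endpoint constraint (the endpoint costs only subexponentially, since the Perron vector is positive). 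Next I take a random family of $N$ permutations $\sigma_i$ and plans ``sample uniformly from $\sigma_i(G_S)$'', mixed uniformly. Each balanced word $w$ lies in $\sigma(G_S)$ for a uniform random $\sigma$ with probability $|G_S|/\binom{m}{m/2}$. By symmetry, the uniform mixture over all permutations is exactly the uniform slice. A pairwise-independent or Chernoff covering argument, in the style of bound (P), then shows that $N=2^{m(1-\log_2\Gamma_2(S))+o(m)}\cdot 2^{o(m)}$ random permutations give TV $\to0$. Here each plan's information density is the constant $\log_2(\binom{m}{m/2}/|G_S|)$, so the $\delta_u$ term vanishes with slack $u=o(m)$. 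This rate equals $-\log_2\cos\frac{\pi}{2S+1}$.

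\emph{Main obstacle.} The delicate step is showing that the exact-support uniform sampler on $\sigma(G_S)$ fits in width $S$ per layer, while $G_S$ still achieves the moving-layer growth rate $\Gamma_2(S)$ rather than the cruder $2\cos\frac{\pi}{S+1}$. Both quantities have the same $S^{-2}$ order, but only $\Gamma_2(S)$ gives the constant $\pi^2/8$. I would first try time-alternating windows of sizes $S$ and $S$ that are offset by one. The resulting bipartite transfer operator has squared singular value equal to the top eigenvalue of a path of length $2S$, which is $4\cos^2\frac{\pi}{2S+1}$. The final step is the expansion $-\log_2\cos x=\frac{x^2}{2\ln2}(1+o(1))$ at $x=\pi/(2S+1)$, which gives the stated asymptotic constant.
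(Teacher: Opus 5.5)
Your proposal is correct and follows essentially the paper's route. The converse is the permutation-invariant moving-layer support bound $K\lambda_S^m\ge(1-\varepsilon)\binom{m}{m/2}$. For achievability, your final alternating offset windows, equivalently a window of $2S$ consecutive values for the $\pm1$ walk, are exactly the paper's centered count corridor with per-step growth $2\cos\frac{\pi}{2S+1}$. The paper then symmetrizes that corridor over random coordinate permutations and closes with a $\chi^2$/second-moment covering bound in which the information density is constant, as you do. Your first window definition ($S$ consecutive values of the $\pm1$ sum) wastes half the live width because of parity, which is why it only reaches $2\cos\frac{\pi}{S+1}$. The corrected alternating construction you settle on is the right one.
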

\begin{proof}[Proof sketch]
The fixed-order support converse is permutation invariant and gives the lower
bound.  Conversely, take the connected width-$S$ count trellis whose path
growth is $\{2\cos(\pi/(2S+1))\}^m$.  Coordinate permutations act transitively
on the balanced slice, so averaging the permuted plan laws is exactly uniform.
A random sublibrary and the same second-moment covering bound as (P) retain
$2^{m(-\log_2\cos(\pi/(2S+1))+o(1))}$ permutations, proving the upper bound.
The permutation is part of the plan description and requires addressable
outputs; it is not free on an immutable stream.
\end{proof}

The remaining fixed-order gap is not a tuning artifact.  Replacing the
sine-square stationary weights by $g(j/(S+1))^2$, for any $g\in C^2[0,1]$
positive inside and vanishing at both endpoints, gives
$(S+1)^2c_S(g)\to\frac1{2\ln2}\int_0^1g'^2/\int_0^1g^2$, which the Dirichlet
Poincar\'e inequality bounds below by $\frac{\pi^2}{2\ln2}$, with equality only
at $g\propto\sin(\pi x)$.  The construction is thus leading-constant optimal in
this family: closing the gap needs a different architecture or a stronger
converse, not a better weight profile.

%% file: sections/rate_figure.tex
\begin{figure*}[t]
\centering
\begin{subfigure}[t]{0.32\textwidth}
  \centering
  \includegraphics[width=\linewidth]{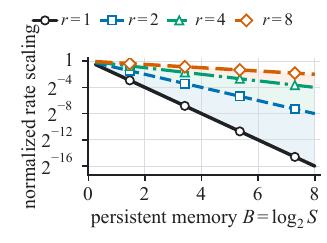}
  \subcaption{}
  \label{fig:theory-evidence-law}
\end{subfigure}\hfill
\begin{subfigure}[t]{0.32\textwidth}
  \centering
  \includegraphics[width=\linewidth]{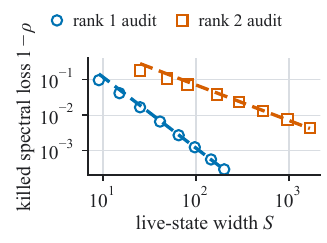}
  \subcaption{}
  \label{fig:theory-evidence-spectral}
\end{subfigure}\hfill
\begin{subfigure}[t]{0.32\textwidth}
  \centering
  \includegraphics[width=\linewidth]{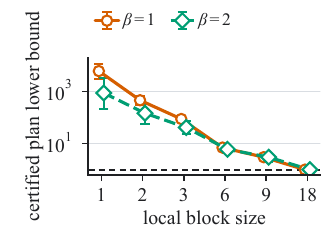}
  \subcaption{}
  \label{fig:theory-evidence-real}
\end{subfigure}
\caption{Theory and finite-scale evidence for the coordination--memory law.
Panel (a) visualizes the theorem-implied rate. In (b), exact computations on
nonuniform, irregular rank-one and rank-two walks give tail slopes $-1.964$
and $-0.948$, close to the predicted $-2$ and $-1$ (dashed lines). In (c),
points are median certified lower bounds over pinned UltraFeedback GPT-4 score
batches and error bars span the 10th--90th percentiles. Panels (b)--(c) are
numerical audits rather than runtime measurements.}
\label{fig:theory-evidence}
\end{figure*}

%% file: appendix.tex
\appendix

\makeatletter
\let\c@proposition\c@theorem
\let\c@lemma\c@theorem
\let\c@corollary\c@theorem
\let\c@definition\c@theorem
\let\c@remark\c@theorem
\@addtoreset{theorem}{section}
\makeatother
\setcounter{theorem}{0}
\renewcommand{\thetheorem}{\thesection.\arabic{theorem}}
\renewcommand{\theproposition}{\thetheorem}
\renewcommand{\thelemma}{\thetheorem}
\renewcommand{\thecorollary}{\thetheorem}
\renewcommand{\thedefinition}{\thetheorem}
\renewcommand{\theremark}{\thetheorem}
\providecommand*{\theHtheorem}{}
\renewcommand*{\theHtheorem}{appendix.\Alph{section}.\arabic{theorem}}
\providecommand*{\theHproposition}{}
\renewcommand*{\theHproposition}{\theHtheorem}
\providecommand*{\theHlemma}{}
\renewcommand*{\theHlemma}{\theHtheorem}
\providecommand*{\theHcorollary}{}
\renewcommand*{\theHcorollary}{\theHtheorem}
\providecommand*{\theHdefinition}{}
\renewcommand*{\theHdefinition}{\theHtheorem}
\providecommand*{\theHremark}{}
\renewcommand*{\theHremark}{\theHtheorem}

\section*{Claim-to-proof and scope index}

This appendix is keyed to the cross-references in the main text.  The main
text states every result and gives its proof roadmap; the appendix supplies the
complete derivations.  This index is only a navigation layer and introduces no
additional claim.

\paragraph{Interface covered by the main law.}
The selector--state bounds concern the fixed-order bounded-state interface of
Sec.~\ref{sec:plan-memory}: every selected component has legal support, fresh
within-step randomness is free, persistent information is charged through live
state, and read-only transition tables are nonuniform configuration.  The
mixture may approximate the target in total variation, but no selected
component may emit an illegal output.  Growing-state autoregression, joint
scheduling, token dropping, programmable order, and target-aware repair spend
different resources; their relation to the theorem is formalized in
Secs.~\ref{app:executor-model} and~\ref{app:interface-ledger}.

\begin{enumerate}[label=\textbf{M\arabic*.},leftmargin=*,itemsep=2.5pt,topsep=3pt]
\small
\interlinepenalty=10000

\item \textbf{Conservation-rank spectral law}
(Main Theorem~\ref{thm:plan-memory}, p.~\pageref{thm:plan-memory}).
The fixed-order converse is proved in
Sec.~\ref{sec:moving-layer-converse} (p.~\pageref{sec:moving-layer-converse});
the binary matching construction is in
Sec.~\ref{sec:fixed-frontier-v2} (p.~\pageref{sec:fixed-frontier-v2});
the general-rank and heterogeneous-position result is in
Sec.~\ref{sec:general-conservation-rank-app}
(p.~\pageref{sec:general-conservation-rank-app}).

\item \textbf{Codeword-wise exact posterior compiler}
(Main Proposition~\ref{prop:posterior-compiler},
p.~\pageref{prop:posterior-compiler}).
Weighted cancellation, Bayes reversal, finite-state realization, and
codeword-wise legal support are proved in
Secs.~\ref{sec:fixed-frontier-v2} and
\ref{sec:general-conservation-rank-app}.

\item \textbf{Uniform heterogeneous stability}
(Main Lemma~\ref{lem:heterogeneous-main},
p.~\pageref{lem:heterogeneous-main}).
Sec.~\ref{sec:general-conservation-rank-app} proves uniform minorization,
bounded information-density increments, and Dirichlet-form comparison with
constants independent of position and block length under the stated
probability floor.

\item \textbf{Finite-length quota-safe routing frontier}
(Main Corollary~\ref{cor:routing-frontier},
p.~\pageref{cor:routing-frontier}).
The finite-length support-count converse and resource accounting are in
Sec.~\ref{app:quota-safe-routing}
(p.~\pageref{app:quota-safe-routing}); the underlying balanced-type
finite-state law is in Sec.~\ref{sec:qary-plan-memory-law}
(p.~\pageref{sec:qary-plan-memory-law}).

\item \textbf{Minimal continuous shard interface}
(Main Theorem~\ref{thm:shard-interface},
p.~\pageref{thm:shard-interface}).
Sec.~\ref{app:conditional-poisson-beam}
(p.~\pageref{app:conditional-poisson-beam}) proves matching recovery and
collision lower bounds for the stated regular conditional-Poisson family.

\item \textbf{Rank-partition conservation geometry}
(Main Theorem~\ref{thm:growing-rank-geometry},
p.~\pageref{thm:growing-rank-geometry}).
Finite-rank estimates, a simultaneous triangular array, and the matching
compiler are proved in Sec.~\ref{sec:growing-rank-geometry-app}
(p.~\pageref{sec:growing-rank-geometry-app}).

\item \textbf{Local statistics do not price conservation}
(Main Theorem~\ref{thm:local-conservation-twins},
p.~\pageref{thm:local-conservation-twins}).
The common-alphabet construction and quantitative local-view bound are in
Sec.~\ref{sec:gr-local-twins} (p.~\pageref{sec:gr-local-twins}).

\item \textbf{Programmable-order constant}
(Main Corollary~\ref{cor:programmable-order},
p.~\pageref{cor:programmable-order}).
The orbit construction is in Sec.~\ref{app:orbit-symmetrization}
(p.~\pageref{app:orbit-symmetrization}), and its sharp support law is in
Sec.~\ref{app:soft-frontier} (p.~\pageref{app:soft-frontier}).
Neither statement is used to claim a sharp fixed-order constant.

\item \textbf{Block-profile coordination and local/global separation}
(Main Proposition~\ref{thm:robustcc} and
Corollaries~\ref{thm:commonentropy}--\ref{thm:localglobal},
pp.~\pageref{thm:robustcc}--\pageref{thm:localglobal}).
Dimension-free rounding, entropy continuity, and local-view coupling are
proved in Sec.~\ref{app:coordination-proofs}
(p.~\pageref{app:coordination-proofs}).
\end{enumerate}

\section*{Technical dependency and boundary index}

The entries below isolate the steps most easily obscured by the compressed
main-paper roadmaps.  Each entry states the exact location that closes the
step and, where relevant, the boundary of the resulting claim.

\paragraph{A. Converse and achievability closure.}
\begin{enumerate}[label=\textbf{A\arabic*.},leftmargin=*,itemsep=2pt,topsep=2pt]
\small
\item \textbf{State--resource-sum obstruction.}
Definition~\ref{def:fixed-order-executor} fixes the execution model and
removes non-co-reachable states; Lemma~\ref{lem:moving-layer-support} proves
that exact support forces each live state to determine one prefix resource
sum and derives the one-plan support bound.

\item \textbf{Relation to known small-space samplers.}
Proposition~\ref{prop:small-space-relation} identifies one binary plan with a
known bounded-width sampler in both directions and states the overheads.
Flattening a $K$-plan library gives width at most $KS$ but does not preserve
the prescribed width of each truthful component.

\item \textbf{Killed-walk converse.}
Lemma~\ref{lem:app-general-moving} gives the moving-layer operator bound,
Lemma~\ref{lem:moving-layer-support} converts it to a support bound, and
Lemma~\ref{lem:periodic-fk-specialization} supplies the periodic-graph
Faber--Krahn inequality used in the conversion.

\item \textbf{Intrinsic-rank exponent.}
Lemma~\ref{lem:app-general-rank-profile} proves both sides of
$\Theta(S^{-2/r})$: a lattice Nash inequality gives the lower bound and a
product-sine box gives the upper bound under a common Dirichlet-form
normalization.

\item \textbf{Information concentration and one-shot covering.}
Lemma~\ref{lem:information-stability-v2} and
Eqs.~\eqref{eq:additive-info-v2}--\eqref{eq:finite-a-tail-v2} establish
concentration.  Equation~\eqref{eq:finite-resolvability-v2} is the one-shot
resolvability statement actually used.

\item \textbf{Endpoint conditioning.}
Equation~\eqref{eq:conditioning-id-v2} is an exact decomposition of the
conditioned information density, and
Eq.~\eqref{eq:event-posterior-tail-v2} controls its residual tail.

\item \textbf{Dirichlet information cost and heterogeneous positions.}
Lemma~\ref{lem:app-general-information} proves the information--Dirichlet
comparison; Lemma~\ref{lem:app-heterogeneous-uniformity} proves the uniform
minorization and bounded increments needed for position-specific
probabilities.
\end{enumerate}

\paragraph{B. Constants and finite regimes.}
\begin{enumerate}[label=\textbf{B\arabic*.},leftmargin=*,itemsep=2pt,topsep=2pt]
\small
\item \textbf{Finite state width versus large-width asymptotics.}
Main Theorem~\ref{thm:plan-memory} gives explicit profile bounds for every
fixed $S$.  Only the simplified power law $\Theta(S^{-2/r})$ takes
$S\to\infty$ after the block-length limit.

\item \textbf{The binary factor-four gap.}
Proposition~\ref{prop:binary-profile-optimality} proves that the displayed
construction is leading-constant optimal throughout the smooth stationary
nearest-neighbor family.  Closing the remaining gap requires a different
construction or a stronger converse, not retuning.

\item \textbf{The robust $5\varepsilon$ constant.}
Lemma~\ref{lem:app-almost-fixed} derives the rounding loss from an explicit
concentration-function inequality; the two cases contribute at most
$e\delta$ and $4\delta$.

\item \textbf{Nonvacuous growing-rank window.}
Lemmas~\ref{lem:gr-profile-lower}--\ref{lem:gr-information} construct the
constants.  Theorem~\ref{thm:gr-triangular} gives one simultaneous
blocklength--rank array satisfying all hypotheses, and
Corollary~\ref{cor:gr-triangular-separation} rules out an iterated-limit
artifact.
\end{enumerate}

\paragraph{C. Resource accounting and scope.}
\begin{enumerate}[label=\textbf{C\arabic*.},leftmargin=*,itemsep=2pt,topsep=2pt]
\small
\item \textbf{Plan description is separated from selector rate.}
The core law prices selector bits and persistent live state.  For balanced
fixed-alphabet targets,
Proposition~\ref{prop:pairwise-one-shot-covering},
Lemma~\ref{lem:affine-seeded-library}, and
Corollary~\ref{cor:seeded-posterior-deployment} give an
$O_{q,S}(m)$-bit random-access library.  No compact-description theorem is
claimed for arbitrary position-specific real transition tables.

\item \textbf{A one-plan autoregressive sampler is another frontier point.}
Sec.~\ref{app:interface-ledger} shows that remaining-count autoregression
uses one plan while its live width grows with the horizon; it therefore pays
online memory rather than contradicting the selector--state law.

\item \textbf{Deployed routers leave the restricted interface explicitly.}
Corollary~\ref{cor:quota-safe-escape} and
Sec.~\ref{app:quota-safe-routing} identify which premise is relaxed by
dropping, batchwise assignment, dynamic dispatch, or target-aware repair.
The result prices an interface and is not stated as a lower bound for every
deployed router.

\item \textbf{Shard-summary injectivity.}
Theorem~\ref{thm:cpsbs-interface} first recovers the
elementary-symmetric coefficients in Eq.~\eqref{eq:cpsbs-coeff-recovery};
only then is invariance of domain applied to obtain the dimension bound.

\item \textbf{Programmable order and coverage are secondary boundaries.}
Lemma~\ref{lem:centered-corridor-support} and
Corollary~\ref{cor:order-programmable-frontier} are independent boundary
results and are not premises of the fixed-order spectral law.
\end{enumerate}

\paragraph{D. Interpretation of the numerical evidence.}
\begin{enumerate}[label=\textbf{D\arabic*.},leftmargin=*,itemsep=2pt,topsep=2pt]
\small
\item \textbf{Real-score validation is target-side.}
Sec.~\ref{app:realscore} specifies the heterogeneous-score target and exact
enumeration.  It establishes that the smooth-support lower bound survives
nonuniform real scores; it is not a learned-model, latency, or deployment
claim.

\item \textbf{The finite calculations avoid estimation noise.}
All subsets in each reported slate are enumerated exactly, so the displayed
profile masses and plan-count bounds are not Monte Carlo estimates.  The
spectral calculations test the finite-$S$ slopes against the theorem's
predicted exponents.

\item \textbf{The AI-system connection is through explicit interfaces.}
Sec.~\ref{app:conditional-poisson-beam} proves that the stated
conditional-Poisson beam law is a heterogeneous product conditioned on its
count and derives its minimum one-round shard summary.
Sec.~\ref{app:quota-safe-routing} derives the finite-length
control--state trilemma for exact-quota streaming allocation.  Each result
states the capability that must be added to leave the priced interface.
\end{enumerate}

\clearpage

\input{sections/executor_model_appendix.tex}
\input{sections/interface_ledger_appendix.tex}

\section{Full proofs for the coordination-rate results}
\label{app:coordination-proofs}

This section supplies the details omitted from the main paper.  Throughout,
the execution partition is $\Pi=\{B_1,\ldots,B_g\}$ and total variation is
$d_{\rm TV}(P,Q)=\sup_A|P(A)-Q(A)|$.

\begin{lemma}[Dimension-free stability of an almost fixed sum]
\label{lem:app-almost-fixed}
Let $R_1,\ldots,R_g$ be independent integer-valued random variables.  If
\[
 \Pr\!\left\{\sum_{j=1}^gR_j=k\right\}=1-\delta,
 \qquad 0\le\delta\le\frac14,
\]
then there are modal values $r_j\in\arg\max_r\Pr(R_j=r)$ such that
\begin{align*}
 \sum_jr_j&=k,\\
 \Pr\{(R_1,\ldots,R_g)=(r_1,\ldots,r_g)\}&\ge1-e\delta.
\end{align*}
Consequently, a block-product component that misses the conservation law
with probability $\delta$ is within $e\delta$ TV of a block-product law on
one legal profile.
\end{lemma}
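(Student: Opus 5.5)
The plan is to follow the route sketched after Corollary~\ref{thm:commonentropy}: bound the total non-modal mass through a concentration-function inequality, then read off both conclusions. For each $j$ put $\alpha_j=\max_r\Pr(R_j=r)$, fix any modal value $r_j$ (ties broken arbitrarily), and set $\lambda=\sum_j(1-\alpha_j)$. The target intermediate estimate is $\lambda\le e\delta$.

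Given that estimate, the rest is elementary. By independence and $\prod_j(1-x_j)\ge1-\sum_jx_j$ for $x_j\in[0,1]$,
\[
\Pr\{(R_1,\ldots,R_g)=(r_1,\ldots,r_g)\}=\prod_j\alpha_j\ge1-\lambda\ge1-e\delta .
\]
Legality of the modal profile follows by contradiction. If $\sum_jr_j\neq k$, the modal atom lies in the event $\{\sum_jR_j\neq k\}$, which has mass $\delta$. That would force $1-e\delta\le\delta$, i.e.\ $\delta\ge1/(1+e)\approx0.269$, contradicting $\delta\le1/4$. The ``consequently'' clause is then a coupling statement. A block-product component whose block totals are the $R_j$ can be compared with the product of its block laws conditioned on $R_j=r_j$. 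That comparison law is a block product supported on the single legal profile $(r_j)$, and the TV distance is at most $\Pr\{R\neq r\}\le e\delta$.

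To obtain $\lambda\le e\delta$, invoke the sharp Kolmogorov--Rogozin inequality of \citet{juskevicius2024sharp}:
\[
1-\delta=\Pr\Bigl\{\sum_jR_j=k\Bigr\}\le F(\lambda),\qquad F(\lambda)=e^{-2\lambda}\{I_0(2\lambda)+I_1(2\lambda)\}.
\]
It then suffices to show $1-F(\lambda)\ge\lambda/e$ whenever $F(\lambda)\ge3/4$.

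This analytic inequality is the main obstacle, and I would handle it by concavity. Using $I_0'=I_1$ and $I_1'=I_0-I_1/x$, one computes
\[
\frac{d}{dx}\bigl[e^{-x}(I_0+I_1)\bigr]=-e^{-x}\frac{I_1(x)}{x},\qquad\text{so}\qquad F'(\lambda)=-e^{-2\lambda}\frac{I_1(2\lambda)}{\lambda}.
\]
The integral representation $I_1(x)/x=\pi^{-1}\int_0^\pi\sin^2\theta\,e^{x\cos\theta}\,d\theta$ gives
\[
e^{-x}\frac{I_1(x)}{x}=\frac1\pi\int_0^\pi\sin^2\theta\,e^{-x(1-\cos\theta)}\,d\theta,
\]
which is decreasing in $x$. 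Hence $-F'$ is decreasing, so $1-F$ is concave and increasing, starting from $0$.

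Let $\lambda^*$ solve $F(\lambda^*)=3/4$. By concavity, on $[0,\lambda^*]$ we have $1-F(\lambda)\ge\lambda/(4\lambda^*)$, so it suffices that $\lambda^*\le e/4\approx0.68$. A direct series evaluation gives $F(0.68)\approx0.257\cdot(1.519+0.850)\approx0.61<3/4$, so $\lambda^*<0.68$ by monotonicity.

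Since $F(\lambda)\ge1-\delta\ge3/4$ places $\lambda$ in $[0,\lambda^*]$, we get $\delta\ge1-F(\lambda)\ge\lambda/e$, which closes the argument. The only delicate points are the numerical check of $F(0.68)$ and correctly citing the exact form of the sharp constant in \citet{juskevicius2024sharp} (its normalization by $\sum_j(1-\alpha_j)$); I would double-check both.
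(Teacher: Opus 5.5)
Your proposal is correct. It shares the paper's skeleton: the same $\alpha_j$ and $\lambda$, the cited sharp Kolmogorov--Rogozin bound $1-\delta\le F(\lambda)$, the product bound $\prod_j\alpha_j\ge1-\lambda$, legality from $1-e\delta>\delta$, and conditioning on the modal event for the TV clause. Where you differ is in closing the analytic step $\lambda\le e\delta$.

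The paper uses only that $F$ is decreasing. Its $F'(\lambda)=e^{-2\lambda}\{I_2(2\lambda)-I_0(2\lambda)\}$ is your $-e^{-2\lambda}I_1(2\lambda)/\lambda$ after the Bessel recurrence. It combines this with the single value $F(1/2)=0.67367<3/4$, which forces $\lambda<1/2$. It then reads $F(\lambda)$ as the probability that a difference $N_1-N_2$ of independent $\operatorname{Pois}(\lambda)$ variables lies in $\{0,1\}$. This gives $1-F(\lambda)\ge\Pr\{N_1=0,N_2=1\}=\lambda e^{-2\lambda}\ge\lambda/e$.

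You instead prove concavity of $1-F$ from the integral representation of $I_1(x)/x$ and use the chord from $0$ to $\lambda^*$. The paper's version is more elementary: no concavity, only monotonicity and a one-line probabilistic lower bound. Yours is sharper. In fact $\lambda^*\in(0.33,0.34)$, so your chord gives $\lambda\le4\lambda^*\delta<1.36\,\delta$ rather than $e\delta$.

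There is one small numerical slip. Since $e/4\approx0.6796<0.68$, showing $F(0.68)<3/4$ does not literally give $\lambda^*\le e/4$. Evaluate at $e/4$ itself, or at $1/2$ as the paper does; the margin is large either way.

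On the normalization you flag, the paper applies the cited corollary to $2R_1,\ldots,2R_g$ at span two with half-open windows. Each window then holds one lattice atom, so $\mathcal Q_2(2R_j)=\alpha_j$ and $\mathcal Q_2(2\sum_jR_j)=\max_s\Pr\{\sum_jR_j=s\}$.
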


\begin{proof}
Put $\alpha_j=\max_r\Pr(R_j=r)$ and
$\lambda=\sum_j(1-\alpha_j)$.  Convolution cannot increase maximal atom
mass, hence $\alpha_j\ge1-\delta\ge3/4$ for each $j$.

Use the concentration convention
$\mathcal Q_h(Y)=\sup_x\Pr\{Y\in(x,x+h]\}$.  Because $2R_j$ lies on the even
lattice, an interval of length two contains at most one atom, and hence
$\mathcal Q_2(2R_j)=\alpha_j$ and
$\mathcal Q_2(2\sum_jR_j)=\max_s\Pr(\sum_jR_j=s)$.  Apply the sharp
Kolmogorov--Rogozin inequality
\citep[Corollary~1.4]{juskevicius2024sharp} at span two to
$2R_1,\ldots,2R_g$.  Its extremal concentration is the symmetric Skellam law
of a Poisson difference, so
\begin{align*}
 1-\delta
 &\le \max_s\Pr\!\left(\sum_jR_j=s\right)
 \le F(\lambda),\\
 F(\lambda)&=e^{-2\lambda}
 \{I_0(2\lambda)+I_1(2\lambda)\},
\end{align*}
where $I_q$ is the modified Bessel function.  The recurrence for $I_q$
gives
$F'(\lambda)=e^{-2\lambda}\{I_2(2\lambda)-I_0(2\lambda)\}<0$.
Since $F(1/2)=0.67367\ldots<3/4$, we have $\lambda<1/2$.
The event $\{N_1=0,N_2=1\}$, for independent
$N_1,N_2\sim\operatorname{Pois}(\lambda)$, is outside
$\{N_1-N_2\in\{0,1\}\}$.  Therefore
\[
 \delta\ge1-F(\lambda)
 \ge \lambda e^{-2\lambda}
 \ge \lambda/e,
\]
and $\lambda\le e\delta$.

Independence and the union bound now give
\[
 q:=\prod_j\alpha_j\ge1-\lambda\ge1-e\delta>\delta.
\]
If $\sum_jr_j\ne k$, the joint modal atom would belong to the off-slice
event and have probability at most $\delta$, a contradiction.  Thus the
modal profile is legal.  Conditioning every block on $R_j=r_j$ preserves
block independence, and conditioning on their joint modal event changes the
component by exactly $1-q\le e\delta$ in TV.
\end{proof}

\paragraph{Proof of Proposition~\ref{thm:robustcc}.}
Let $Q=\sum_{z=1}^Kw_zQ_z$ be a mixture of $\Pi$-block-product laws with
$d_{\rm TV}(Q,P_k)\le\varepsilon$, and set
$\delta_z=Q_z(E_k^c)$ for $E_k=\{\sum_jT_j=k\}$.  Because $P_k(E_k)=1$,
\[
 \sum_zw_z\delta_z=Q(E_k^c)\le\varepsilon.
\]
If $\delta_z\le1/4$, Lemma~\ref{lem:app-almost-fixed} rounds $Q_z$ to a
legal single-profile block-product law at cost at most
$e\delta_z<3\delta_z$.  If $\delta_z>1/4$, replace it by any profile in
$\supp\nu_k$, at cost at most $1<4\delta_z$.  The rounded mixture $\bar Q$
therefore satisfies
\[
 d_{\rm TV}(Q,\bar Q)\le4\varepsilon,
 \qquad
 d_{\rm TV}(P_k,\bar Q)\le5\varepsilon.
\]
Its profile law is supported on at most $K$ profiles.  TV contracts under
the profile map, so these profiles have $\nu_k$-mass at least
$1-5\varepsilon$.  Hence $K\ge s_{\nu_k}(5\varepsilon)$.

Conversely, let $A$ be a smallest profile set with
$\nu_k(A)\ge1-\varepsilon$.  Given profile $r$, the conditioned-product
target factorizes as
\[
 P_k(\,\cdot\mid R=r)
 =\bigotimes_{j=1}^g\rho_j(\,\cdot\mid T_j=r_j).
\]
Thus $P_k(\cdot\mid R\in A)$ is a mixture of $|A|$ block-product laws, and
its TV distance from $P_k$ is $1-\nu_k(A)\le\varepsilon$.  This proves the
upper bound and the theorem. \qed

\paragraph{Proof of Corollary~\ref{thm:commonentropy}.}
In an exact representation, every positive-weight component must itself be
supported on $E_k$.  Conditional on $Z=z$, the block totals are independent
and their sum is constant.  Hence
\[
 0=\operatorname{Var}\!\left(\sum_jT_j\mid Z=z\right)
  =\sum_j\operatorname{Var}(T_j\mid Z=z),
\]
so the complete profile is a deterministic function of $Z$.  Therefore
$H(Z)\ge H(\nu_k)$.  Equality is attained by drawing the profile from
$\nu_k$ and then sampling blocks independently from the displayed
conditional laws.

If $M=|\supp\nu_k|=1$, this already gives zero exact common entropy and the
approximate lower bound is the trivial $H(Z)\ge0$.  Assume below that $M\ge2$.
For an $\varepsilon$-accurate representation, apply the preceding rounding
component by component.  It constructs a deterministic map $Y=f(Z)$ into
$\supp\nu_k$ whose law $\bar\nu$ obeys
$d_{\rm TV}(\bar\nu,\nu_k)\le5\varepsilon$.  Hence
$H(Z)\ge H(Y)=H(\bar\nu)$ and
$|\supp Z|\ge s_{\nu_k}(5\varepsilon)$.  A maximal coupling of
$Y\sim\bar\nu$ and $R\sim\nu_k$, followed by Fano's inequality, gives
\[
 H(\nu_k)\le H(\bar\nu)+h_2(\delta)
 +\delta\log_2(M-1),
 \qquad \delta\le5\varepsilon.
\]
The correction is increasing until $\delta=1-1/M$, giving exactly the
$\bar\delta$ stated in the main paper. \qed

\subsection{Fixed block size and the closed coordination rate}
For the uniform fixed-budget law $U_{m,k}$, take $\rho_j$ uniform on each
binary block and $T_j$ equal to its Hamming weight.  The profile law is
\[
 \nu_{\Pi,k}(r)=
 \frac{\prod_{j=1}^g\binom{|B_j|}{r_j}}{\binom{m}{k}},
 \qquad \sum_jr_j=k.
\]
The finite-sample smooth-support sandwich follows immediately from
Proposition~\ref{thm:robustcc}.

Now let every block have fixed size $b$, let $m=gb$, and let $k=gb/2$.
If $R_1,\ldots,R_g$ are i.i.d. with law
$p_b=\operatorname{Bin}(b,1/2)$, then $\nu_{\Pi,k}$ is the conditional law
of $R^g$ given $\sum_jR_j=gb/2$.  Under the product law,
\[
 -\frac1g\log_2p_b^{\otimes g}(R^g)
 \longrightarrow H_2(p_b)
\]
with exponentially small atypical probability.  The lattice local central
limit theorem gives
$\Pr(\sum_jR_j=gb/2)=\Theta_b(g^{-1/2})$; conditioning therefore preserves
the asymptotic equipartition property.  Since the conditioning contributes
only $O(\log g)$ to every conditional log probability, for every fixed
$0<\eta<1$,
\begin{align*}
 \log_2s_{\nu_{\Pi,k}}(\eta)&=gH_2(p_b)+o(g),\\
 H(\nu_{\Pi,k})&=gH_2(p_b)+o(g).
\end{align*}
The two sides of the sandwich yield
\[
 \lim_{m\to\infty}\frac1m
 \log_2K_{\Pi,\varepsilon}(U_{m,m/2})
 =\frac{H_2(p_b)}b=:c_b.
\]
At zero approximation the common-entropy rate is also $c_b$.  The standard
binomial entropy expansion
$H_2(p_b)=\frac12\log_2(\pi e b/2)+O(1/b)$ gives
$c_b=(\log_2b)/(2b)+O(1/b)$.

For completeness, the robust common-entropy statement also implies
\[
 \liminf_{m\to\infty}\frac1m
 \mathsf{CE}_{\Pi,\varepsilon}(U_{m,m/2})
 \ge c_b-\frac{5\varepsilon}{b}\log_2(b+1).
\]
Indeed the profile alphabet has at most $(b+1)^g$ points, so the Fano
correction is at most
$5\varepsilon g\log_2(b+1)+o(m)$.  We do not claim the displayed robust
entropy constant is sharp.

\subsection{Local equivalence and sequential escape}
\paragraph{Proof of Corollary~\ref{thm:localglobal}.}
Fix a population containing $m/2$ zeros and $m/2$ ones.  Sampling $r$
indices without replacement produces $(P_m)_S$; sampling with replacement
produces $r$ independent fair bits.  Couple the procedures until the first
repeated index.  Then
\begin{align*}
 d_{\rm TV}((P_m)_S,(G_m)_S)
 &\le1-\frac{(m)_r}{m^r}\\
 &\le\sum_{1\le a<c\le r}\Pr(I_a=I_c)
 =\frac{r(r-1)}{2m}.
\end{align*}
For singleton execution blocks, the profile is the output itself and is
uniform on $\binom{m}{m/2}$ points.  Proposition~\ref{thm:robustcc} and
Corollary~\ref{thm:commonentropy} therefore give
\begin{align*}
 |\supp Z|&\ge
 \left\lceil(1-5\varepsilon)\binom{m}{m/2}\right\rceil,\\
 H(Z)&\ge \log_2\binom{m}{m/2}-h_2(5\varepsilon)\\
 &\hspace{1.2em}-5\varepsilon\log_2\!\left(\binom{m}{m/2}-1\right),
\end{align*}
where replacing $5\varepsilon$ by
$\min\{5\varepsilon,1-1/M\}$ covers all finite $m$.  Stirling's formula
gives the rates in the main statement.  The quadratic reward has the
central slice as its maximizer set.  An affine reward on the hypercube fixes
coordinates with nonzero coefficients and leaves the others free, so its
maximizers form a subcube; a nontrivial central slice is not a subcube.
This proves the theorem. \qed

For a weighted fixed-budget law
$P(x)\propto\prod_iw_i^{x_i}\mathbf 1\{\sum_ix_i=k\}$, define suffix
partition functions
\[
 A_{i,r}=A_{i+1,r}+w_iA_{i+1,r-1},
 \quad A_{m+1,0}=1.
\]
An exact sequential sampler with remaining budget $r$ emits one with
probability $w_iA_{i+1,r-1}/A_{i,r}$.  The table has $O(mk)$ cells.  This
construction does not reduce output entropy; it proves only that sequential
state and front-loaded common randomness are different resources.

\paragraph{Unrestricted joint baselines.}
An explicit categorical policy $Q_{\rm cat}$ on the complete joint alphabet
sets $Q_{\rm cat}(x)=|L|^{-1}\mathbf1\{x\in L\}$ and therefore realizes a
uniform legal target with TV zero.  It is a representation ceiling, not a
bounded-width plan-library baseline.  The remaining-budget recursion above is
the corresponding structured autoregressive baseline for additive laws: it
also samples the target exactly with one plan, but its per-layer state width
grows with $m$.  These two endpoints are included to prevent the finite-$S$
lower bounds from being read as architecture-independent impossibility claims.

\input{sections/plan_memory_spectral_converse_appendix.tex}
\input{sections/plan_memory_binary_appendix.tex}
\input{sections/general_conservation_rank_appendix.tex}
\input{sections/growing_rank_geometry_appendix.tex}
\input{sections/soft_frontier_appendix.tex}
\input{sections/orbit_symmetrization_appendix.tex}
\input{sections/plan_memory_qary_appendix.tex}
\input{sections/seeded_library_appendix.tex}
\input{sections/quota_safe_router_appendix.tex}
\input{sections/conditional_poisson_beam_appendix.tex}

\section{Real-score fixed-budget validation}
\label{app:realscore}
The target-side validation uses the public binarized training-preference
release derived from UltraFeedback \citep{cui2023ultrafeedback}.  Its
documentation records release under the MIT license; the underlying scores are
GPT-4 AI feedback, not human ground truth.
We concatenate finite GPT-4 chosen and rejected absolute scores,
yielding 122,270 values, standardize them globally, and sample 200 batches
of $m=18$ distinct responses with seed 20260718.  We redistribute no response
text, only derived profile counts and aggregate statistics.  In each batch the target
over nine-item slates is
\[
 P_\beta(S)\propto
 \exp\!\left\{\beta\sum_{i\in S}z_i\right\}
 \mathbf 1\{|S|=9\},
 \qquad \beta\in\{0.5,1,2\}.
\]
All $\binom{18}{9}=48{,}620$ singleton profiles are enumerated exactly;
for larger blocks, elementary-symmetric polynomials compute profile masses.
There is no learned model, Monte Carlo estimate, or fitted parameter in
these values.

At $\varepsilon=.05$ and $\beta=1$, the median lower/upper smooth-support
bounds are 5,938/17,946 for singleton execution, and the exact profile
entropy is 13.42 bits.  Median lower bounds for block sizes
$1,2,3,6,9,18$ are respectively
$5{,}938,454,87,7,3,1$.  At singleton granularity, the median lower bounds
remain $17{,}236.5$ for $\beta=.5$ and $859.5$ for $\beta=2$.  Thus the
effect is not an artifact of a uniform slice, although stronger score tilts
concentrate the target and reduce the required plan count.  The exact
remaining-budget dynamic program uses 190 table cells (100 reachable prefix
count states).

%% file: sections/executor_model_appendix.tex
\section{Formal execution model}\label{app:executor-model}

This section fixes the stochastic-state convention used by every finite-memory
result.  It rules out an unpriced persistent random seed while leaving
within-step private randomization free.

\begin{definition}[Fixed-order stochastic executor]
\label{def:fixed-order-executor}
For a plan value $z$, a length-$m$ executor consists of finite state sets
$Q_{0,z},\ldots,Q_{m,z}$, an initial law on $Q_{0,z}$, and time-inhomogeneous
Markov kernels
\[
 W_{t,z}(x_t,q_t\mid q_{t-1}),
 \qquad t=1,\ldots,m.
\]
Its output law is obtained by summing
\[
 P_z(q_0)\prod_{t=1}^m
 W_{t,z}(x_t,q_t\mid q_{t-1})
\]
over the state path.  Equivalently, conditional on $(z,q_t)$, the future state
and output sequence is independent of all earlier states, outputs, and random
coins.  A coin may be consumed in evaluating one kernel, but information from
that coin can persist only through $q_t$ and is therefore charged to the state
width.  The live width is
$\max_t|Q^{\rm live}_{t,z}|$, where live states are both reachable and
co-reachable.  The executor is exact-support for a legal language
$L$ when $P_z(x)>0$ implies $x\in L$.
\end{definition}

A plan library is a finite mixture $P=\sum_z\pi_zP_z$ of such executors.
Identical transition tables may be merged, producing an arbitrary mixture
weight and never increasing the number of distinct plans.  Thus a uniform
random codebook with repeated codewords, as used in the resolvability proof,
is admissible after duplicate merging.
\begin{proposition}[Exact relation to small-space samplers]
\label{prop:small-space-relation}
For binary outputs, the following statements hold.
\begin{enumerate}[leftmargin=*,nosep]
\item Conditional on one plan value, a width-$S$ executor is exactly a
width-$S$ KRVZ sampler.
\item A library of $K$ width-$S$ plans is sampled by one KRVZ sampler of width
at most $KS$.
\item A width-$S$ KRVZ sampler with support contained in the legal language is
a one-plan exact-support library.
\end{enumerate}
Consequently, ordinary sampler width captures a flattened implementation but
not the minimum library size at a prescribed component width.
\end{proposition}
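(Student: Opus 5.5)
The proof matches the two models layer by layer. First fix the convention for a width-$S$ KRVZ sampler \citep{kamp2011smallspace,chattopadhyay2022space}: a layered graph with layers $V_0,\ldots,V_m$, $|V_t|\le S$, and a start vertex in $V_0$. From $v\in V_{t-1}$ it moves along a randomly chosen edge to $V_t$, and each edge is labelled by the output bit $x_t$. The sampler's law is the induced law of the label sequence. I will read the edge probabilities as an arbitrary stochastic kernel $v\mapsto(x_t,v')$. If the cited model instead insists on fair coins consumed within a step, I will say so explicitly. Under Definition~\ref{def:fixed-order-executor} that within-step randomness is free, so it changes neither width nor law. This is only a precision overhead, and it is one of the overheads to be stated.

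\textbf{Item 1.} Fix a plan value $z$. Take the executor's live states $Q^{\rm live}_{t,z}$ as the layers $V_t$ and the kernel $W_{t,z}(x_t,q_t\mid q_{t-1})$ as the labelled edge kernel. Definition~\ref{def:fixed-order-executor} says the future is conditionally independent of the past given $q_t$. Hence the product formula for $P_z(x)$ is exactly the KRVZ path-sum formula.

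The only mismatch is the initial law $P_z(q_0)$ against a single start vertex. Fold it into the first step: define the first-step kernel from the start vertex as $\sum_{q_0}P_z(q_0)W_{1,z}(x_1,q_1\mid q_0)$. This leaves the widths of layers $1,\ldots,m$ unchanged, and layer $0$ has width $1\le S$. Removing dead states does not change the output law, since they carry no mass on completed paths. The converse direction is immediate: a KRVZ sampler is an executor with a point-mass initial law.

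\textbf{Item 2.} Given a library $P=\sum_z\pi_zP_z$ with $K$ plans of width $\le S$, relabel the states so the state sets of different plans are disjoint. Take the layerwise disjoint union, so $|V_t|\le\sum_z|Q_{t,z}|\le KS$. From a common start vertex, the first step draws $z$ with probability $\pi_z$ and then $q_0\sim P_z$, folded into the first kernel as in Item 1. After that, transitions stay inside plan $z$'s block. The path law then decomposes as $\sum_z\pi_zP_z$, since the plan identity is stored in the state rather than in an external selector.

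\textbf{Item 3.} A KRVZ sampler with support in $L$ is, by Item 1, a single executor. It has one plan value, and after pruning its live width is at most $S$. Because $\operatorname{supp}\subseteq L$, it is exact-support.

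The "consequently" sentence needs one remark. The flattened sampler of Item 2 has width up to $KS$ and no longer exposes the decomposition into $K$ exact-support width-$S$ components. Its width can therefore bound $K$ only through $K\ge(\text{width})/S$, which is far weaker than the component count priced in Theorem~\ref{thm:plan-memory}. I will give no further argument for this point.

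The only delicate step is the convention matching in Item 1: the start vertex, and coin-based versus real-valued transitions. I expect it to be bookkeeping rather than a genuine obstacle. I will handle it by stating that within-step randomness is free and that the initial law folds into the first layer.
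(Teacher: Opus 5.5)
Your proposal is correct and follows essentially the same route as the paper. For item~1 you identify the layer-$t$ live states and the kernels $W_{t,z}$ with the KRVZ layers and labelled edge probabilities. For item~2 you realize the $K$-plan library by a start vertex that jointly samples $z$ and the first transition into the disjoint union $\bigcup_z\{z\}\times Q_{t,z}$, which has width at most $KS$. Item~3 is the degenerate-selector case.

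Your explicit folding of the initial law into the first step, and your remarks on pruning and coin-precision overheads, are bookkeeping the paper leaves implicit. The paper defers the randomness overheads to the cited ROBP--sampler simulation, so nothing is missing.
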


\begin{proof}
The first statement follows by identifying layer-$t$ states with
$Q_{t,z}$ and the probability of every output-labeled edge with
$W_{t,z}(x_t,q_t\mid q_{t-1})$; this is precisely the KRVZ definition.
For the second, the unique initial state jointly samples $z$ and the first
output transition, after which layer $t$ retains the pair $(z,q_t)$ in the
disjoint union $\bigcup_z\{z\}\times Q_{t,z}$.  This reproduces the mixture
law and has at most $KS$ live states per layer.  The third statement is the
same identification with a degenerate selector.  The ROBP-sampler relation
then follows, with the width and randomness overheads stated by
\citet[Theorem~13]{chattopadhyay2022space}.
\end{proof}

\begin{remark}[Why flattening does not solve the paper's problem]
For the uniform balanced slice, a remaining-count sampler has one plan and
width $m/2+1$.  At width one, however, each exact-support component is a
product law with deterministic total, hence a point mass; exact generation
requires all $\binom{m}{m/2}$ legal words as plans.  A single flattened-width
parameter records these endpoints but does not give the selector-state
frontier between them.  The support requirement is the sequential version of
truthfulness in the sense of \citet[Definition~2.8]{goldreich2010huge}; the
new claim is not that support inclusion itself is new.
\end{remark}

For an exact type class, Markov sufficiency has an immediate consequence used
in the moving-layer converse: every live state determines one prefix-count
vector.  If two different count vectors reached the same state, any
co-reachable suffix in that state's future support would have to complete two
different residual histograms, contradicting exact support.  Fresh private
randomness does not change this argument; a persistent seed not represented in
the state would violate the definition above.

%% file: sections/interface_ledger_appendix.tex
\section{Interface scope and provenance}
\label{app:interface-ledger}

Table~\ref{tab:scope} in the main text already records the resource
currencies and architectural escape routes.  Here we retain only the formal
component-model relation, nearest-model provenance, and boundary conditions
needed to interpret the proofs.

For binary outputs, Proposition~\ref{prop:small-space-relation} shows that a
single plan is exactly a KRVZ sampler, while a $K$-plan width-$S$ library
flattens to width at most $KS$.  This establishes the overlap rather than
hiding it.  The selector-state frontier is the constrained decomposition of
that flattened sampler into truthful low-width components, a quantity not
preserved by ordinary width.  Truthfulness here means support inclusion, as in
\citet[Definition~2.8]{goldreich2010huge}.  The core selector rate treats
transition tables as nonuniform read-only configuration.  General heterogeneous
achievability is therefore an existence result unless the separate descriptor
term $L(\mathcal C_m)$ is also controlled.

\begin{table*}[t]
\centering
{\small
\setlength{\tabcolsep}{4pt}
\begin{tabular}{@{}p{.23\textwidth}p{.31\textwidth}p{.38\textwidth}@{}}
\toprule
Object & Prior result used & Increment carried by this paper \\
\midrule
One bounded-memory component
& KRVZ small-space source \citep{kamp2011smallspace}; KRVZ/ROBP simulation
  \citep[Def.~12, Thm.~13]{chattopadhyay2022space}
& no novelty claim for the component model \\
Legal support
& truthful implementation as support inclusion
  \citep[Def.~2.8]{goldreich2010huge}
& require truthfulness after conditioning on every selected plan \\
Selector-state frontier
& flattening gives width at most $KS$
& minimize $K$ at fixed truthful component width $S$; prove the killed-profile
  upper and lower rates \\
\bottomrule
\end{tabular}
}
\caption{Nearest-model provenance.  The novelty claim is the last row, not the
finite-state component or support-inclusion concepts in the first two rows.}
\label{tab:small-space-provenance}
\end{table*}
\subsection{Boundary audit for the main claims}
\label{app:boundary-audit}
\begingroup\small

\paragraph{Order of limits and constants.}
The selector rate first takes blocklength $m\to\infty$ at fixed live width
$S$.  The exponent $-2/r$ then describes sufficiently large $S$ at fixed
intrinsic rank $r$.  Main Theorem~\ref{thm:plan-memory} gives explicit
constant-factor bounds for every fixed $S$; it does not claim a sharp leading
constant in the general lattice case.  In the balanced binary fixed-order
case, the displayed upper and lower asymptotic constants differ by four.
Main Corollary~\ref{cor:programmable-order} closes that constant gap only after
plan-specific output order is added as a priced interface capability.

\paragraph{Exact support versus approximation.}
Only the aggregate library law is TV-approximate: every component satisfies
$\supp(P_z)\subseteq L$.  Thus mixing cannot hide illegal outputs.

\paragraph{Flattening does not remove the selector price.}
Proposition~\ref{prop:small-space-relation} trades a $K$-plan width-$S$ library
for one sampler of width at most $KS$; it does not preserve prescribed component
width.  A lower bound on $K$ at fixed $S$ is therefore compatible with ordinary
flattened-width bounds.
\paragraph{What the state and selector currencies exclude.}
The persistent-state budget counts reachable and co-reachable live states.
Fresh coins may affect one transition but may persist only through that state.
The selector counts prechosen transition programs; the core theorem treats
their read-only tables as nonuniform configuration.  Therefore the
heterogeneous extension is an existence theorem, not a compact-description
theorem.  The seeded construction in Sec.~\ref{app:seeded-library} supplies a
separate executable-description result for its stated balanced specialization.

\paragraph{Why a one-plan autoregressive sampler is not a counterexample.}
A remaining-count dynamic program samples the balanced target exactly with one
plan, while its live width grows with the horizon.  It occupies a different
point of the same coordination--memory plane.  Likewise, a global scheduler,
target-aware repair, token dropping, or dynamic dispatch buys out one of the
fixed-order exact-support premises instead of violating the lower bound.

\paragraph{What the empirical material establishes.}
The numerical audits verify finite spectral slopes, exact profile masses, and
reported medians.  They are not evidence of learned-model quality or wall-clock
speedups.  Figure~\ref{fig:theory-evidence} in the main text is explicitly
labeled as a numerical audit, and Sec.~\ref{app:realscore} records the pinned
data object, deterministic computation, and aggregation protocol.
\endgroup

%% file: sections/plan_memory_spectral_converse_appendix.tex

\section{A moving-layer spectral converse}
\label{sec:moving-layer-converse}

Fix $q\ge2$, put $d=q-1$, and write
$v_1=e_1,\ldots,v_d=e_d,v_q=0$.  For finite
$A,B\subset\mathbb Z^d$, let
\begin{equation}
 M_{A,B}(x,y)=\mathbf 1\{y-x\in\{v_1,\ldots,v_q\}\}.
 \label{eq:moving-incidence}
\end{equation}

\begin{lemma}[Periodic-graph Faber-Krahn specialization]
\label{lem:periodic-fk-specialization}
Let $\Gamma_q$ have left and right copies of $\mathbb Z^d$ and edges
$x_L\sim(x+v_a)_R$ for $a\in[q]$.  There is a constant
$\kappa_q>0$, depending only on $q$, such that every nonempty finite vertex
set $H\subset\Gamma_q$ satisfies
\begin{equation}
 \lambda_1^{\rm D}(H;qI-A_{\Gamma_q})
 \ge \kappa_q |H|^{-2/d}.
 \label{eq:periodic-fk-specialization}
\end{equation}
In particular the constant is uniform over the location, shape, and number of
connected components of $H$.
\end{lemma}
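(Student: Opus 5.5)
The plan is to reduce the Dirichlet eigenvalue on the bipartite periodic graph $\Gamma_q$ to a Nash/Faber--Krahn inequality on $\mathbb Z^d$ for the two-step walk, and then invoke the standard lattice isoperimetric machinery \citep{coulhonGrigoryan1998}.

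\textbf{Step 1: squaring the bipartite operator.} Write $A_{\Gamma_q}$ in block form with off-diagonal blocks $T$ and $T^*$, where $(Tf)(x)=\sum_af(x+v_a)$. For $H=H_L\sqcup H_R$, a Dirichlet eigenfunction $g=(g_L,g_R)$ with $(qI-A)g=\lambda g$ satisfies
\[
 \|P_{H_L}TP_{H_R}\|_2\ge q-\lambda.
\]
It therefore suffices to show that $\|P_{H_L}TP_{H_R}\|_2^2\le q^2(1-c|H|^{-2/d})$; then $q-\lambda\le q\sqrt{1-c|H|^{-2/d}}$, which gives $\lambda\ge(qc/2)|H|^{-2/d}$. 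The squared norm is $\rho(P_{H_R}T^*P_{H_L}TP_{H_R})\le\rho(P_{H_R}T^*TP_{H_R})$, because $P_{H_L}\le I$ and the entries are nonnegative. Here $T^*T=q^2K$, where $K$ is the symmetric difference walk with jumps $v_a-v_b$ and uniform weights. The problem thus reduces to a Dirichlet spectral gap $1-\rho(K^{D})\ge c|D|^{-2/d}$ for $D=H_R$, with $|D|\le|H|$.

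\textbf{Step 2: Dirichlet-form comparison.} The jumps $v_a-v_b$ include $\pm e_i$: take $a=i$ and $b=q$, using $v_q=0$. Hence $\mathcal E_K(f)\ge q^{-2}\sum_i\sum_z|f(z+e_i)-f(z)|^2=q^{-2}\mathcal E_{\rm nn}(f)$ for $f$ supported on $D$ and extended by zero. This comparison is direct, so no path argument is needed.

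\textbf{Step 3: lattice Faber--Krahn.} For finitely supported $f$ on $\mathbb Z^d$, I would prove $\mathcal E_{\rm nn}(f)\ge c_d|\supp f|^{-2/d}\|f\|_2^2$. The route is the discrete Sobolev/isoperimetric inequality $|\partial U|\ge c|U|^{(d-1)/d}$, applied through the co-area/level-set argument to $f^2$, which gives the $\ell^1$ Sobolev inequality and then Nash. Alternatively one can cite Coulhon--Grigor'yan directly: polynomial volume growth of order $d$ on the Cayley graph $\mathbb Z^d$ yields Faber--Krahn with exponent $2/d$. In dimension $d=1$ the bound is elementary, since a set of size $n$ has first Dirichlet eigenvalue $2-2\cos(\pi/(n+1))\gtrsim n^{-2}$, and disconnected sets only increase the eigenvalue. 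The constant depends only on $d$, so it is uniform over location and shape. Components need no separate treatment, because the inequality is stated for arbitrary finite support.

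The main obstacle is Step 3, namely getting a clean, cited Faber--Krahn constant for arbitrary (possibly disconnected) sets. The rest is bookkeeping: the block-squaring in Step 1 must correctly handle the case where $H_L$ or $H_R$ is empty, in which case $\lambda=q$ trivially.
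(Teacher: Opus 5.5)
Your proof is correct, but it takes a different route from the paper's. The paper never squares the adjacency operator. It compares the Dirichlet form of $\Gamma_q$ directly with the sum of the nearest-neighbour $\mathbb Z^d$ forms of the two orbit restrictions $f_L,f_R$. For this it uses the length-two comparison paths $x_L\to(x+e_i)_R\to(x+e_i)_L$ and $x_R\to x_L\to(x+e_i)_R$, whose congestion is at most $2d$. It then applies a lattice Nash inequality on each orbit to obtain a Nash inequality on $\Gamma_q$ itself, and converts that to Faber--Krahn with $\|f\|_1\le|H|^{1/2}\|f\|_2$.

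You instead exploit bipartiteness. The domination $P_{H_L}\preceq I$ reduces $q-\lambda_1^{\rm D}=\|P_{H_L}TP_{H_R}\|_2$ to the one-sided killed difference walk $T^*T=q^2K$. This is exactly the mechanism of the paper's general moving-layer bound, Lemma~\ref{lem:app-general-moving}. After that, comparison with the nearest-neighbour walk is a one-step containment, because $\pm e_i$ are already jumps of $K$ through $v_q=0$. So you need no comparison paths and no congestion count.

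Your route buys economy and unification. The lemma becomes a special case of the killed-profile bound $\Lambda_{p,v}$. Since $TT^*=T^*T$, you may also bound by whichever side of $H$ is smaller, which sharpens $|H|$ to $\min\{|H_L|,|H_R|\}$ when both sides are nonempty. The costs are a harmless factor $2$ from $\sqrt{1-x}\le 1-x/2$ and a reliance on the two-orbit bipartite structure. The paper's route instead yields a genuine Nash inequality on the whole periodic graph, and its path comparison would carry over to periodic graphs with more vertex orbits or without bipartite structure.

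Both arguments rest on the same black box, a Nash/Faber--Krahn inequality on $\mathbb Z^d$ with a constant depending only on $d$. Your Step~3 is therefore no more of an obstacle than it is in the paper.
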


\begin{proof}
Translation by $\mathbb Z^d$ acts cocompactly on $\Gamma_q$ with two vertex
orbits, and $\Gamma_q$ is connected with degree $q$.  The comparison paths can
be made explicit.  On the left orbit, compare the unit step
$x_L\to(x+e_i)_L$ with
$x_L\to(x+e_i)_R\to(x+e_i)_L$; on the right orbit use
$x_R\to x_L\to(x+e_i)_R$.  Every path has length two.  An $e_i$-increment
edge is used by at most two translated comparison paths, and a zero-increment
edge by at most $2d$.  Thus the path congestion is at most $2d$, and
telescoping plus Cauchy-Schwarz bounds the sum of the two nearest-neighbor
$\mathbb Z^d$ Dirichlet forms by $4d$ times the form on $\Gamma_q$.

If $N_d$ denotes any explicit Nash constant on $\mathbb Z^d$, applying that
inequality to the two orbits gives, for every finitely supported $f$ on
$\Gamma_q$,
\[
 \|f\|_2^{2+4/d}
 \le C'_q\,\langle f,(qI-A_{\Gamma_q})f\rangle
       \|f\|_1^{4/d}
\]
with $C'_q\le C d\,2^{2/d}N_d$ for an absolute constant $C$ determined by
the Dirichlet-form convention (the comparison above gives $C=4$ under our
normalization).  Hence all constants below are effective functions of $q$;
we do not optimize them.
If $\supp f\subseteq H$, then
$\|f\|_1\le |H|^{1/2}\|f\|_2$; division and minimization over nonzero such
$f$ prove \eqref{eq:periodic-fk-specialization} with
$\kappa_q=(C'_q)^{-1}$.  This is the uniform Faber-Krahn consequence of the
Nash inequality; the general Nash/Faber-Krahn equivalence on graphs is
developed by \citet{coulhonGrigoryan1998}.  The comparison above verifies the
extra geometric input for this particular periodic graph rather than assuming
that volume growth alone is sufficient.
\end{proof}

\begin{lemma}[Uniform moving-layer deficit]
\label{lem:moving-layer-deficit}
For each fixed $q$, there is $c_q>0$ such that, whenever
$|A|,|B|\le S$,
\begin{equation}
 \|M_{A,B}\|_2\le q-c_qS^{-2/d}.
 \label{eq:moving-layer-deficit}
\end{equation}
For $q=2$, the sharper bound
\begin{equation}
 \|M_{A,B}\|_2\le
 2\cos\!\left(\frac{\pi}{2S+1}\right)
 \label{eq:binary-moving-layer-deficit}
\end{equation}
holds.
\end{lemma}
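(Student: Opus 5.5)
The plan is to bound $\|M_{A,B}\|_2^2=\rho(M_{A,B}^{\top}M_{A,B})$ by viewing $M_{A,B}$ as an induced piece of the adjacency operator of the periodic bipartite graph $\Gamma_q$ of Lemma~\ref{lem:periodic-fk-specialization}. Put $H=A_L\cup B_R\subset\Gamma_q$, so $|H|\le 2S$. With respect to the left/right splitting, the adjacency matrix of $\Gamma_q$ restricted to $H$ is
\[
 P_HA_{\Gamma_q}P_H=\begin{pmatrix}0&M_{A,B}\\ M_{A,B}^{\top}&0\end{pmatrix},
\]
and its spectral radius equals $\|M_{A,B}\|_2$. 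Indeed, the spectrum of this block matrix is symmetric, consisting of $\pm$ the singular values of $M_{A,B}$. The top Dirichlet eigenvalue of $qI-A_{\Gamma_q}$ on $H$ is attained by the Rayleigh quotient over $f$ supported in $H$. Since $\Gamma_q$ is $q$-regular,
\[
 \lambda_1^{\rm D}(H;qI-A_{\Gamma_q})=q-\lambda_{\max}(P_HA_{\Gamma_q}P_H)=q-\|M_{A,B}\|_2 .
\]
Lemma~\ref{lem:periodic-fk-specialization} then gives $\|M_{A,B}\|_2\le q-\kappa_q(2S)^{-2/d}$. This is \eqref{eq:moving-layer-deficit} with $c_q=\kappa_q2^{-2/d}$, and the bound is uniform in the shape and location of $A$ and $B$.

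For $q=2$ we have $d=1$, and $\Gamma_2$ is the bi-infinite path $\cdots x_L\sim x_R\sim(x+1)_L\sim(x+1)_R\cdots$. The induced subgraph on $H$ is then a disjoint union of finite paths with at most $|H|\le 2S$ vertices in total. The spectral radius of a path on $n$ vertices is $2\cos(\pi/(n+1))$, which increases in $n$. A naive bound using $n\le 2S$ gives only $2\cos(\pi/(2S+1))$, and this is exactly the claimed constant. The case split is between a single component holding all $2S$ vertices and several shorter components; in the second case the spectral radius is governed by the longest component, which has at most $2S$ vertices, so the bound holds either way. To tidy up, I would check that monotonicity of $2\cos(\pi/(n+1))$ in $n$ covers odd-length components and components that touch $A$ or $B$ only partially.

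The step I expect to be the main obstacle is the identification of the Dirichlet eigenvalue with $q-\|M_{A,B}\|_2$ in the general case. Two things must be checked. First, the Dirichlet problem in Lemma~\ref{lem:periodic-fk-specialization} must use the full-degree diagonal $q$, so that edges leaving $H$ count as killing; this is what makes the Rayleigh quotient $q-\langle f,A f\rangle/\|f\|^2$ for $f$ supported in $H$. Second, the top eigenvalue of the symmetric block matrix must be $+\|M_{A,B}\|_2$ rather than its negative. This follows from bipartite spectral symmetry, or more simply from Perron--Frobenius applied to the nonnegative matrix. Everything else reduces to the cited Faber--Krahn bound and the explicit path spectrum.
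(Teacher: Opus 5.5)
Your proposal is correct and follows the paper's proof essentially step for step. Both arguments realize $M_{A,B}$ as the off-diagonal block of the induced adjacency of $\Gamma_q$ on $H=A_L\cup B_R$ and identify the lowest Dirichlet eigenvalue of $qI-A_{\Gamma_q}$ on $H$ with $q-\|M_{A,B}\|_2$. They then apply Lemma~\ref{lem:periodic-fk-specialization} with $|H|\le 2S$, and for $q=2$ compare against the path on $2S$ vertices.

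Two slips are cosmetic and do not affect the argument. First, the path order in $\Gamma_2$ is $\cdots x_R\sim x_L\sim(x+1)_R\sim(x+1)_L\cdots$, because $x_L\sim x_R$ and $x_L\sim(x+1)_R$. Second, you mean the lowest Dirichlet eigenvalue, not the top one.
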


\begin{proof}
Form the infinite bipartite graph $\Gamma_q$ with left and right copies of
$\mathbb Z^d$ and edges $x_L\sim(x+v_a)_R$ for $a\in[q]$.  This is a
connected, $q$-regular periodic graph of volume-growth degree $d$.  On the
induced vertex set $H=A_L\cup B_R$, the adjacency matrix is
\[
 \begin{pmatrix}0&M_{A,B}\\M_{A,B}^{\mathsf T}&0\end{pmatrix},
\]
  whose spectral radius is $\|M_{A,B}\|_2$.  Therefore the first Dirichlet
eigenvalue of the combinatorial Laplacian $qI-A_{\Gamma_q}$ on $H$ is
  $q-\|M_{A,B}\|_2$.  Lemma~\ref{lem:periodic-fk-specialization} gives
\[
 q-\|M_{A,B}\|_2\ge \kappa_q|H|^{-2/d};
\]
  since $|H|\le2S$, this proves \eqref{eq:moving-layer-deficit} with
  $c_q=\kappa_q2^{-2/d}$.  The preceding lemma already covers disconnected
  $H$.

For $q=2$, $\Gamma_2$ is the infinite path.  Every induced subgraph on at
most $2S$ vertices is a union of paths, so its spectral radius is no larger
than that of the path on $2S$ vertices, namely
$2\cos(\pi/(2S+1))$.
This value is the elementary spectrum of a finite path; no external
constrained-coding capacity formula is invoked for the constant.
\end{proof}

\begin{lemma}[One-plan support bound]
\label{lem:moving-layer-support}
Let one fixed-order width-$S$ stochastic plan be supported entirely on the
balanced $q$-ary type class of length $m$.  Its output support has size at
most
\begin{equation}
 \big(q-c_qS^{-2/(q-1)}\big)^m.
 \label{eq:one-plan-support}
\end{equation}
For $q=2$, the right side can be replaced by
$[2\cos(\pi/(2S+1))]^m$.
\end{lemma}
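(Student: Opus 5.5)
The proof has three steps: show that each live state pins down one prefix count vector, count support by paths through a layered graph, and bound that count with the moving-layer deficit of Lemma~\ref{lem:moving-layer-deficit}.

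\emph{Step 1: each live state determines one prefix count.} Work with a single plan of width $S$, using Definition~\ref{def:fixed-order-executor}, and keep only live states. Fix a layer $t$ and a live state $q\in Q^{\rm live}_t$. Suppose two positive-probability prefixes $x^t$ and $y^t$ both reach $q$ with different count vectors $c(x^t)\ne c(y^t)$; write counts in the coordinates $e_1,\ldots,e_d$ with $v_q=0$. Because $q$ is co-reachable, some suffix $w$ has positive probability from $q$. By the Markov property, both $x^tw$ and $y^tw$ then have positive probability. These two words have different total counts, so at least one lies off the balanced type class, which contradicts exact support. Hence there is a map $\phi_t:Q^{\rm live}_t\to\mathbb Z^d$, and its image $A_t$ satisfies $|A_t|\le S$.

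\emph{Step 2: from support to paths.} Every supported word $x^m$ is emitted along some live state path $q_0,\ldots,q_m$. Along that path, the count sequence $\phi_0(q_0),\ldots,\phi_m(q_m)$ is exactly the prefix count sequence of $x^m$. The map from words to count sequences is injective, since the increment $\phi_t-\phi_{t-1}\in\{v_1,\ldots,v_q\}$ recovers $x_t$. Each count sequence is a path with $u_t\in A_t$ and $u_t-u_{t-1}\in\{v_a\}$. Such paths start at $0$ and end at the balanced endpoint. Therefore
\[
 |\operatorname{supp}P_z|\le \mathbf 1_{\{0\}}^{\mathsf T}M_{A_0,A_1}M_{A_1,A_2}\cdots M_{A_{m-1},A_m}\mathbf 1_{\{\ell_m\}},
\]
where the matrices are the moving incidences \eqref{eq:moving-incidence}.

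\emph{Step 3: bounding the product.} The right-hand side is at most $\prod_t\|M_{A_{t-1},A_t}\|_2$, because the two end vectors are unit vectors. Lemma~\ref{lem:moving-layer-deficit} bounds each factor by $q-c_qS^{-2/d}$, and for $q=2$ by $2\cos(\pi/(2S+1))$. This gives \eqref{eq:one-plan-support}.

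The main obstacle is Step 1. The argument has to be airtight about fresh randomness and about states that are not co-reachable, which is exactly why the definition removes dead states. It must also handle layers where a state is reachable only through prefixes of probability zero, so reachability should be taken in the positive-probability sense. Step 3 is routine once the layers are allowed to move, and the uniformity of the lemma in $A$ and $B$ is what makes that work.
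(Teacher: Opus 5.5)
Your proposal is correct and follows the paper's proof step for step. Both use exact support plus co-reachability to show each live state carries one prefix count, so $|A_t|\le S$, then count supported words as paths through the moving incidences $M_{A_{t-1},A_t}$ bounded by $\prod_t\|M_{A_{t-1},A_t}\|_2$, and finish with Lemma~\ref{lem:moving-layer-deficit}. Your explicit remarks on positive-probability reachability and on the injectivity of words into count paths (the increments $v_a$ are distinct) only spell out what the paper leaves implicit.
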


\begin{proof}
By the Markov sufficiency in Definition~\ref{def:fixed-order-executor}, every
reachable machine state determines the current prefix-count vector.
Indeed, if two prefixes with different count vectors reached the same state,
the state would have the same nonempty future-suffix support after both
prefixes, while no suffix can complete both residual histograms.  This would
contradict exact support on the prescribed type class.  States carrying the
same count vector may be merged for an upper bound.

Let $A_t\subset\mathbb Z^{q-1}$ be the count vectors represented at layer
$t$; then $|A_t|\le S$.  Every supported word traces a path through the
incidence matrices $M_t=M_{A_{t-1},A_t}$, and hence the number of supported
words is at most
\[
 e_0^{\mathsf T}M_1\cdots M_me_{\star}
 \le\prod_{t=1}^m\|M_t\|_2.
\]
Here $e_{\star}$ denotes the basis vector at the terminal count
$(m/q,\ldots,m/q)\in\mathbb Z^{q-1}$.
Lemma~\ref{lem:moving-layer-deficit} completes the proof.  The argument is
layerwise, so the sets $A_t$ may move arbitrarily with time.
\end{proof}

%% file: sections/plan_memory_binary_appendix.tex

\section{A fixed-order plan-memory frontier}\label{sec:fixed-frontier-v2}

Let
\[
 \Omega_m=\left\{x\in\{0,1\}^m:
 \sum_{t=1}^m x_t=m/2\right\},
\]
and let $U_m=\operatorname{Unif}(\Omega_m)$,
where $m$ is even.  A fixed-order width-$S$ plan is a time-inhomogeneous
stochastic generator in the sense of Definition~\ref{def:fixed-order-executor} that
emits $x_1,\ldots,x_m$ from left to right, uses
private randomness, and has at most $S$ live internal states at every layer.
Its transition tables may depend on the front-loaded plan value and on time.
Every output in its support must lie in $\Omega_m$.

Let $K^{\rm fix}_{m,\epsilon}(S)$ be the least number of such component laws
whose mixture, with arbitrary weights, is within total variation $\epsilon$
of $U_m$.  Define
\[
 \mathcal R^{\rm fix}_2(S)
 =\lim_{\epsilon\downarrow0}\limsup_{\substack{m\to\infty\\m\ {
m even}}}
   \frac1m\log_2 K^{\rm fix}_{m,\epsilon}(S).
\]

\begin{theorem}[Fixed-order plan-memory frontier]\label{thm:fixed-frontier-v2}
For odd $S$,
\begin{equation}
 1-\log_2\!\left(2\cos\frac{\pi}{2S+1}\right)
 \ \le\ \mathcal R^{\rm fix}_2(S)\ \le\ c_S,
 \label{eq:finite-S-frontier}
\end{equation}
where $c_S$ is the finite sum in \eqref{eq:cS-finite-sum} below.  Consequently,
\begin{equation}
 \frac{\pi^2}{8\ln2}\,S^{-2}(1+o(1))
 \ \le\ \mathcal R^{\rm fix}_2(S)\ \le\
 \frac{\pi^2}{2\ln2}\,S^{-2}(1+o(1)),
 \label{eq:frontier-asymptotic}
\end{equation}
and hence $\mathcal R^{\rm fix}_2(S)=\Theta(S^{-2})$.
The same order holds for general $S$ by monotonicity.
\end{theorem}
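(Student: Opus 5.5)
The plan is to prove the two sides of \eqref{eq:finite-S-frontier} separately and then expand both in $S$.

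\textbf{Converse.} For the lower bound we invoke Lemma~\ref{lem:moving-layer-support} with $q=2$. Every exact-support width-$S$ component has support of size at most $[2\cos(\pi/(2S+1))]^m$. Take a mixture of $K$ components within TV $\epsilon$ of $U_m$. The union $T$ of their supports has full mass under the mixture, so $U_m(T)\ge1-\epsilon$. This gives
\[
 (1-\epsilon)\binom{m}{m/2}\le K\,[2\cos(\pi/(2S+1))]^m .
\]
Stirling's formula gives $\log_2\binom{m}{m/2}=m-O(\log m)$. Dividing by $m$ and letting $m\to\infty$ and then $\epsilon\downarrow0$ yields $1-\log_2(2\cos\frac{\pi}{2S+1})$. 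The expansion $\cos x=1-x^2/2+O(x^4)$ turns this into $\frac{\pi^2}{8\ln2}S^{-2}(1+o(1))$, since $(2S+1)^{-2}\sim S^{-2}/4$.

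\textbf{Achievability.} For the upper bound we use the posterior compiler of Proposition~\ref{prop:posterior-compiler} in the binary case. The ingredients are $v_0=0$, $v_1=1$, $p=(1/2,1/2)$, $S=2R+1$, $D=\{-R,\dots,R\}$, and $h_d=\cos^2\{\pi d/(2(R+1))\}$. This $h$ is positive on $D$ and vanishes just outside it, so (*) stays in $D$.

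The cancellation identity makes $B^m$ iid uniform and $D_t\mid B^t\sim h/H$. The pair $(X_t,D_{t-1})\to(B_t,D_t)$ is then a stationary finite chain. By (3), the information density concentrates at $mI(X,D;B)$, and the boundary term coming from $A$ is $o_P(m)$. Conditioning on the slice $E_m$ costs only $\log_2P(E_m)=-O(\log m)$, by (4).

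One-shot resolvability (Han--Verd\'u/Yagi) then supplies $2^{m(I+o(1))}$ auxiliary words whose conditional mixture tends to $U_m$ in TV. This holds because the conditioned source $p^{\otimes m}(\cdot\mid E_m)$ is exactly $U_m$. Proposition~\ref{prop:posterior-compiler} realizes each retained word as a width-$|D|=S$ exact-support executor, and duplicate words are merged.

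It remains to compute $I(X,D;B)$ and identify it with $c_S$. Conditioning on the state $z$ and input $x$, the channel output has two values $b\in\{0,1\}$ with weights $h(z+x-b)$. Reindexing by the arriving position turns the average over the stationary law into $(2H)^{-1}\sum_j\Psi(u_j,u_{j+1})$, where $u_j$ are the shifted $h$-values $\sin^2(\pi j/(S+1))$ padded by zeros. This gives $c_S$ as in the definition.

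\textbf{Asymptotics and the main obstacle.} The hardest step is the asymptotic evaluation of $c_S$. Using $1-h_2(1/2+\delta)=\frac{2}{\ln2}\delta^2+O(\delta^4)$ gives $\Psi(a,b)\approx (a-b)^2/(2(a+b)\ln2)$. The summand is then $\approx(u'_j)^2/(4u_j\ln2)\cdot(S+1)^{-2}$, where $u'$ is the derivative in $x=j/(S+1)$. For $u=\sin^2(\pi x)$ this ratio is $\pi^2\cos^2(\pi x)$, which is bounded, so the singular endpoints are harmless. I will handle them by a separate $O(S^{-3})$ tail estimate near $j=0$ and $j=S$, where $\Psi(0,b)=b$.

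With $H\sim(S+1)/2$, the Riemann sum gives $c_S\sim \frac{\pi^2}{2\ln2}S^{-2}$. I expect the delicate part to be making the Taylor remainder uniform near the endpoints, where $u_j$ is small. The $\Theta(S^{-2})$ conclusion follows from the two matching orders. Even $S$ are handled by monotonicity of $\mathcal R^{\rm fix}_2$ in $S$, since a width-$S$ plan is also a width-$(S+1)$ plan.
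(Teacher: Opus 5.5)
Your proposal is correct and follows the paper's proof essentially step for step. The converse uses the binary moving-layer support bound with a union/TV covering argument; achievability uses the cosine-square discrepancy channel with fair-output cancellation, Bayes-reversed width-$S$ posteriors, endpoint-conditioned information stability, one-shot resolvability, and a Riemann-sum evaluation of $c_S$ in which the two boundary edges contribute only $O(\Delta^3)$. The one point to make explicit is that ``concentrates'' must mean an exponential deviation bound, which the paper obtains from a Chernoff bound on the tilted Perron root of the primitive discrepancy chain. A merely $o(1)$ deviation probability would not survive division by $P(E_m)=\Theta(m^{-1/2})$ when passing to $P^{E_m}$.
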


The constants in \eqref{eq:frontier-asymptotic} are related to classical
bounded-running-digital-sum capacity calculations
\citep{chien1970,immink2004}.  Our claim is the operational exchange law in
\eqref{eq:finite-S-frontier}: a mixture of fixed-order, conservation-respecting
generators attains the $S^{-2}$ order, closing the $S^{-2}$ versus $S^{-1}$
gap left by the block-residual construction.  We do not claim a new constrained-
coding eigenvalue or a sharp leading constant.

\subsection{A stationary bounded-discrepancy channel}

Write $S=2R+1$ and $\mathcal D=\{-R,\ldots,R\}$.  For $d\in\mathcal D$, set
\[
 h_d=\cos^2\!\left(\frac{\pi d}{2(R+1)}\right),
 \qquad \pi_d=\frac{h_d}{H},
\]
where $H=\sum_{d\in\mathcal D}h_d=(S+1)/2$,
and put $h_d=0$ outside $\mathcal D$.  Let $X_1,\ldots,X_m$ initially be iid
fair bits.  Draw an initial discrepancy $A=D_0\sim\pi$, independently of
$X^m$.  Given $(D_{t-1},X_t)=(d,x)$, emit an auxiliary bit $B_t=b$ with
\begin{equation}
 q(b\mid x,d)
 =\frac{h_{d+x-b}}{h_{d+x}+h_{d+x-1}},
 \qquad b\in\{0,1\},
 \label{eq:test-channel-v2}
\end{equation}
and update $D_t=d+x-b$.  The denominator is positive and the channel never
leaves $\mathcal D$.  The front-loaded plan will be $Z_m=(A,B^m)$.

\begin{lemma}[Fair auxiliary output]\label{lem:fair-output-v2}
Under the iid source and stationary initialization above, $B^m$ is iid
$\operatorname{Bern}(1/2)$ and, for every $t$,
\[
 P(D_t=d\mid B^t=b^t)=\pi_d.
\]
\end{lemma}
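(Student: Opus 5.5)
We prove both claims together by induction on $t$, using the single-step cancellation identity displayed in the proof of Proposition~\ref{prop:posterior-compiler}, specialised to the binary channel \eqref{eq:test-channel-v2}. The inductive hypothesis at step $t-1$ is
\[
 P(D_{t-1}=d\mid B^{t-1}=b^{t-1})=\pi_d
\]
for every $b^{t-1}$ of positive probability. The base case $t-1=0$ holds by construction, since $D_0=A\sim\pi$.

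\textbf{Step 1: conditional independence of the fresh bit.} First we check that $X_t$ is independent of $(D_{t-1},B^{t-1})$. The pair $(D_{t-1},B^{t-1})$ is a measurable function of $A$, of $X^{t-1}$, and of the private coins used by the channel at steps $1,\dots,t-1$. The bit $X_t$ is independent of all of these. Hence, conditional on $B^{t-1}=b^{t-1}$, the pair $(D_{t-1},X_t)$ has law $\pi_d\cdot\tfrac12$.

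\textbf{Step 2: the cancellation computation.} Fix $b\in\{0,1\}$ and $y\in\mathcal D$. The event $\{B_t=b,D_t=y\}$ forces $d+x=y+b$, so for each $x\in\{0,1\}$ there is exactly one contributing $d=y+b-x$. Therefore
\[
 P(B_t=b,D_t=y\mid B^{t-1}=b^{t-1})
 =\sum_{x\in\{0,1\}}\tfrac12\,\pi_{y+b-x}\,
 \frac{h_y}{h_{y+b}+h_{y+b-1}}.
\]
The denominator does not depend on $x$. The numerator sum is $(h_{y+b}+h_{y+b-1})/H$, because $\pi$ vanishes outside $\mathcal D$ along with $h$. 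The two factors cancel, leaving $\tfrac12\,h_y/H=\tfrac12\,\pi_y$. For $y\notin\mathcal D$ the probability is zero, since the channel never leaves $\mathcal D$.

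\textbf{Step 3: conclusion.} The joint conditional law factorises as $\tfrac12\cdot\pi_y$. Summing over $y$ shows that $B_t$ is $\operatorname{Bern}(1/2)$ conditionally on $B^{t-1}$; because this holds for every $b^{t-1}$, $B^m$ is iid fair. Dividing by $\tfrac12$ gives $P(D_t=y\mid B^t)=\pi_y$, which closes the induction.

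\textbf{Main obstacle.} The only delicate point is well-definedness of the channel: the denominator $h_{d+x}+h_{d+x-1}$ must be positive whenever $d\in\mathcal D$ and $x\in\{0,1\}$. In that case $d+x\in\{-R,\dots,R+1\}$, so at least one of $d+x$ and $d+x-1$ lies in $\mathcal D$. The weights $h$ are strictly positive on $\mathcal D$ because $|\pi d/(2(R+1))|<\pi/2$ there. At the boundary value $d+x=R+1$ we have $h_{R+1}=\cos^2(\pi/2)=0$, so that term vanishes and the move is to $d+x-1=R\in\mathcal D$. The same reasoning shows that $D_t$ never leaves $\mathcal D$.
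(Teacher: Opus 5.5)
Your proof is correct and follows the paper's argument: induction on $t$ starting from $D_0=A\sim\pi$, with the two-predecessor cancellation giving $P(B_t=b,D_t=y\mid B^{t-1})=\pi_y/2$. You differ only in presentation. You handle both values of $b$ in one formula, and you make explicit two points the paper leaves implicit: that $X_t$ is independent of $(D_{t-1},B^{t-1})$, and that the channel denominator is positive.
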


\begin{proof}
Suppose $D_{t-1}\mid B^{t-1}=b^{t-1}$ has law $\pi$.  For $j\in\mathcal D$,
the two ways to obtain $(B_t,D_t)=(0,j)$ give
\[
 p_{0,j}:=P(B_t=0,D_t=j\mid B^{t-1}).
\]
\begin{align*}
 p_{0,j}&=\frac12\left[
 \pi_j\frac{h_j}{h_j+h_{j-1}}
 +\pi_{j-1}\frac{h_j}{h_j+h_{j-1}}
 \right]\\
 &=\frac12\pi_j.
\end{align*}
The same calculation using predecessors $j$ and $j+1$ gives
\[
 P(B_t=1,D_t=j\mid B^{t-1})=\pi_j/2.
\]
Thus $B_t$ is fair and independent of
$B^{t-1}$, while $D_t\mid B^t$ again has law $\pi$.  Induction starts from
$D_0=A\sim\pi$.
\end{proof}

\subsection{Bayes reversal gives width-$S$ plans}

Let $E_m=\{\sum_tX_t=m/2\}$.  Condition the joint law generated above on
$E_m$, so that $X^m\sim U_m$.  Denote the resulting law by $P^{E_m}$.

\begin{lemma}[Exact posterior finite-state realization]\label{lem:bayes-compiler-v2}
For every even $m$,
\begin{equation}
 U_m(x)=\sum_z P^{E_m}_{Z_m}(z)P^{E_m}(X^m=x\mid Z_m=z).
 \label{eq:bayes-mixture-v2}
\end{equation}
For every $z=(a,b^m)$ of positive probability, the conditional law
$P^{E_m}(X^m\mid Z_m=z)$ is generated exactly in the fixed left-to-right
order by a width-$S$ stochastic machine.
\end{lemma}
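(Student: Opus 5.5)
The first claim, \eqref{eq:bayes-mixture-v2}, is the law of total probability applied to $P^{E_m}$. By Lemma~\ref{lem:fair-output-v2}'s setup, $X^m$ is iid fair and independent of $A$, and $B^m$ is generated from $(A,X^m)$. Under $P^{E_m}$ the marginal of $X^m$ is therefore the iid fair law conditioned on $E_m$, which is $U_m$. Disintegrating along $Z_m=(A,B^m)$ gives \eqref{eq:bayes-mixture-v2}, with sums only over $z$ of positive conditional probability.

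For the finite-state realization, fix $z=(a,b^m)$ with $P^{E_m}(Z_m=z)>0$.

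\textbf{Step 1: the path identity.} Telescoping $D_t=D_{t-1}+X_t-B_t$ from $D_0=a$ gives
\[
 \sum_{s\le t}X_s=\sum_{s\le t}b_s+D_t-a .
\]
Given the plan, the discrepancy $D_t\in\mathcal D$ therefore determines the prefix count, and the event $E_m$ is the event $D_m=m/2-\sum_s b_s+a$.

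\textbf{Step 2: the joint law given the plan.} Conditional on $z$, the joint law of $(X^m,D^m)$ is proportional to
\[
 \prod_{t=1}^m \tfrac12\, q(b_t\mid X_t,D_{t-1})\,\mathbf 1\{D_t=D_{t-1}+X_t-b_t\}\;\mathbf 1\{D_m=d^\star\},
\]
where $D_0=a$ and $d^\star=m/2-\sum_s b_s+a$. This is a Markov chain on $\mathcal D$ weighted by a terminal indicator.

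\textbf{Step 3: backward recursion.} Define suffix likelihoods $F_m(u)=\mathbf 1\{u=d^\star\}$ and
\[
 F_{t-1}(u)=\sum_{x\in\{0,1\}}\tfrac12\,q(b_t\mid x,u)\,F_t(u+x-b_t)
\]
for $u\in\mathcal D$, with $F$ set to zero outside $\mathcal D$ (there $q$ already vanishes, since $h$ does).

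\textbf{Step 4: the executor.} Start in state $a$; positivity of the plan gives $F_0(a)>0$. At a state $u$ with $F_{t-1}(u)>0$, emit $x$ with probability $\tfrac12 q(b_t\mid x,u)F_t(u+x-b_t)/F_{t-1}(u)$ and move to $u+x-b_t$. By the recursion these probabilities sum to one. The product of the step probabilities telescopes to the unnormalized weight from Step 2 divided by $F_0(a)$, so the machine's output law is exactly $P^{E_m}(X^m\mid Z_m=z)$.

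\textbf{Step 5: width and support.} The states lie in $\mathcal D$, so there are at most $2R+1=S$ live states per layer; restricting to states with positive $F_t$ keeps only co-reachable ones. Every emitted path ends at $D_m=d^\star$, so by Step 1 every output has exactly $m/2$ ones, and the executor is exact-support.

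The main obstacle is Step 2: checking that conditioning on $E_m$ and on $z$ really leaves a Markov structure in $D$ with only a terminal constraint. This holds because $E_m$ is a function of $(a,b^m,D_m)$ by Step 1, so conditioning multiplies the path weight by a terminal indicator and adds nothing else. One must also verify that $F_0(a)>0$ is equivalent to $z$ having positive probability under $P^{E_m}$; this follows because $F_0(a)$ equals $P(B^m=b^m,E_m\mid A=a)$.
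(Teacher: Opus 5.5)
Your proposal is correct and follows essentially the same route as the paper: the law of total probability gives the mixture identity; Bayes' rule factors the plan-conditioned posterior into the local terms $q(b_t\mid x_t,d_{t-1})$ times the endpoint indicator; a backward suffix partition function over $d_t\in\mathcal D$ gives the exact next-bit probabilities; and the path identity $\sum_{s\le t}x_s=\sum_{s\le t}b_s+d_t-a$ enforces balance without an extra counter. Your explicit telescoping check and the identification $F_0(a)=P(B^m=b^m,E_m\mid A=a)$ supply details that the paper leaves implicit.
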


\begin{proof}
Equation \eqref{eq:bayes-mixture-v2} is the law of total probability.  Fix
$z=(a,b^m)$.  For a candidate prefix, let
\[
 d_0=a,\qquad d_t=a+\sum_{s\le t}(x_s-b_s).
\]
Bayes' rule and \eqref{eq:test-channel-v2} give
\begin{equation}
\begin{aligned}
 P^{E_m}(x^m\mid z)&\ \propto\
 \mathbf 1\!\left\{\sum_tx_t=m/2\right\}\\
 &\quad\times\prod_{t=1}^m
 \frac{h_{d_t}}{h_{d_{t-1}+x_t}+h_{d_{t-1}+x_t-1}}.
\end{aligned}
 \label{eq:posterior-local-weight-v2}
\end{equation}
The denominator in \eqref{eq:posterior-local-weight-v2} is essential for the
exact posterior factorization.  Every factor is local
in $(t,d_{t-1},x_t)$ once $(a,b^m)$ is fixed.  A backward suffix partition
function over $d_t\in\mathcal D$ therefore gives the exact next-bit
probabilities.  The start state $a$ is part of the plan, and at time $t$ the
count is recovered as
\[
 \sum_{s\le t}x_s=\sum_{s\le t}b_s+d_t-a.
\]
Thus the balancing endpoint is enforced without an additional counter, and
only the $S$ possible values of $d_t$ are live.  Sampling from the resulting
suffix-DP transitions generates \eqref{eq:posterior-local-weight-v2} exactly.
\end{proof}

\subsection{Information rate and stability}

Let $\mathrm h_2$ denote binary entropy in bits and define
\begin{equation}
 c_S
 =\sum_{d\in\mathcal D}\frac{\pi_d}{2}
   \sum_{x\in\{0,1\}}
   \left[1-\mathrm h_2\big(q(1\mid x,d)\big)\right].
 \label{eq:cS-expectation}
\end{equation}
Equivalently, put $g_j=\sin(\pi j/(S+1))$, $u_j=g_j^2$ for
$j=1,\ldots,S$, and $u_0=u_{S+1}=0$.  With
\[
 \Psi(a,b)=(a+b)\left[1-\mathrm h_2\!\left(\frac{a}{a+b}\right)\right],
 \qquad \Psi(0,0)=0,
\]
we have the exact finite sum
\begin{equation}
 c_S=\frac{1}{2H}\sum_{j=0}^{S}\Psi(u_j,u_{j+1}).
 \label{eq:cS-finite-sum}
\end{equation}

\begin{lemma}[Information stability before and after balancing]
\label{lem:information-stability-v2}
For fixed $S$ and $Z_m=(A,B^m)$, the unconditioned information density obeys
\[
 \frac1m\imath(X^m;Z_m)\longrightarrow c_S
\]
with exponentially small deviation probability.  Under $P^{E_m}$,
\begin{equation}
 \frac1m\imath_{E_m}(X^m;Z_m)\longrightarrow c_S
 \quad\text{in probability}.
 \label{eq:slice-info-stability-v2}
\end{equation}
Hence the spectral sup-mutual-information rate of the conditioned plan-to-
output channel is $c_S$.
\end{lemma}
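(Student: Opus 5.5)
The plan is to write the unconditioned information density as a sum of bounded additive functionals of a finite-state Markov chain, plus a boundary term. Then transfer the concentration to the slice using the exact conditioning identity (4).

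\emph{Step 1: decomposition.} Start from Lemma~\ref{lem:fair-output-v2}: $B^m$ is iid fair and $D_t\mid B^t\sim\pi$. The joint density of $(X^m,A,B^m)$ factors as $2^{-m}\pi(A)\prod_tq(B_t\mid X_t,D_{t-1})$, and the marginal of $(A,B^m)$ is $2^{-m}P(A\mid B^m)$. This gives exactly (3):
\[
 \imath(X^m;Z_m)=\sum_{t=1}^m\ell_t+\log_2\frac{\pi(A)}{P(A\mid B^m)},
 \qquad \ell_t=\log_2\{2q(B_t\mid X_t,D_{t-1})\}.
\]
The boundary term is bounded above by $\log_2(1/\min_d\pi_d)=O_S(1)$. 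From below, $P\{-\log_2P(A\mid B^m)>u\}\le S2^{-u}$, because $A$ takes at most $S$ values. Taking $u=\delta m$ makes this term exponentially negligible.

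\emph{Step 2: concentration of the sum.} The triple $(D_{t-1},X_t,B_t)$ is a function of the Markov chain $D_t$. Under the unconditioned law, $D_t$ evolves with $X_t$ fair and $B_t$ drawn from \eqref{eq:test-channel-v2}. This gives a chain on $\mathcal D$ with stationary law $\pi$. The chain is irreducible, since $\pm1$ moves have positive probability in the interior, and aperiodic, since it holds with positive probability via $b=x$.

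Each $\ell_t$ is bounded because all probabilities in \eqref{eq:test-channel-v2} are bounded below on the live set. Its stationary mean is $\sum_d\pi_d\tfrac12\sum_x[1-h_2(q(1\mid x,d))]$ after averaging over $b$, which is \eqref{eq:cS-expectation}. A Hoeffding-type inequality for uniformly ergodic finite chains (or Doeblin minorization followed by blocking) then gives exponential concentration at $c_S$.

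The equality of \eqref{eq:cS-expectation} with \eqref{eq:cS-finite-sum} is a reindexing. Set $j=d+x+R$, so the pair of neighbouring weights is $(h_{d+x-1},h_{d+x})=(u_j,u_{j+1})$ under the shift $\cos^2(\pi d/(2(R+1)))=\sin^2(\pi(d+R+1)/(S+1))$. Collect the contributions $\pi_{d}$ and $\pi_{d'}$ of the two states feeding the same $j$; their sum gives the factor $(u_j+u_{j+1})/H$ multiplying $1-h_2$.

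\emph{Step 3: conditioning on $E_m$.} Use the exact identity (4), $\imath_{E_m}=\imath+\log_2P(E_m)-\log_2G_m$ with $G_m=P(E_m\mid Z_m)$. The term $\log_2P(E_m)=-\tfrac12\log_2 m+O(1)=o(m)$. The term $-\log_2G_m$ is nonnegative, and its tail satisfies $P^{E_m}\{G_m<2^{-m\delta}\}\le2^{-m\delta}/P(E_m)\to0$.

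The deviation events for $\imath$ from Step 2 have unconditioned probability $e^{-\Omega(m)}$. Under $P^{E_m}$ their probability is at most $e^{-\Omega(m)}/P(E_m)=e^{-\Omega(m)}\sqrt m\to0$. Combining the three terms gives \eqref{eq:slice-info-stability-v2} in probability, and the convergence in probability identifies the sup-information rate as $c_S$.

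The main obstacle is Step 2's tail bound. An ergodic theorem only gives convergence in probability, which would be too weak to survive division by $P(E_m)$. The plan is to obtain an explicit exponential rate via Doeblin minorization after $S$ steps, with regeneration blocks and Hoeffding applied to the i.i.d. block sums. Polynomial loss from conditioning is then harmless.
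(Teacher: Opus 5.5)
Your proposal is correct and follows the paper's proof essentially step for step: the decomposition \eqref{eq:additive-info-v2}, the $S2^{-u}$ tail \eqref{eq:finite-a-tail-v2} for the boundary term, primitivity of the discrepancy chain, and the exact identity \eqref{eq:conditioning-id-v2} with the $2^{-m\delta}/P(E_m)$ tail for $G_m$. The one difference is that you get exponential concentration from Doeblin minorization and regeneration rather than the paper's tilted Perron-root Chernoff bound, an interchangeable standard tool, and you add a correct reindexing proof that \eqref{eq:cS-expectation} equals \eqref{eq:cS-finite-sum}. Two harmless slips: the boundary term is deterministically bounded \emph{below} by $-\log_2(1/\min_d\pi_d)$ and controlled \emph{above} by the tail bound, not the reverse as you wrote; and $\ell_t$ is a function of the finite Markov chain $(D_{t-1},X_t,B_t)$, not of $D_t$ alone, so that extended chain is the one your concentration argument should run on.
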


\begin{proof}
Lemma~\ref{lem:fair-output-v2} gives $P(B^m=b^m)=2^{-m}$.  Therefore
\begin{align}
 \imath(X^m;A,B^m)
 &=\log_2\frac{\pi_A\prod_{t=1}^m q(B_t\mid X_t,D_{t-1})}
 {2^{-m}P(A\mid B^m)}\notag\\
 &=\sum_{t=1}^m\log_2\!\left(
   2q(B_t\mid X_t,D_{t-1})\right)\notag\\
 &\quad+\log_2\frac{\pi_A}{P(A\mid B^m)}.
 \label{eq:additive-info-v2}
\end{align}
The discrepancy chain is primitive: choosing $b=x$ gives a positive self-loop
at every $d$; choosing $(x,b)=(1,0)$ or $(0,1)$ gives a positive move to
$d+1$ or $d-1$ whenever that state lies in $\mathcal D$.  Hence all states
communicate and the self-loop makes the period one.  On the finite set of
positive-probability edges, $\log_2(2q(b\mid x,d))$ is bounded; formal moves
that leave $\mathcal D$ are simply absent.  Thus the first term is a bounded
edge-additive functional of a finite primitive Markov chain.  Its stationary
mean is
\[
 1-H(B_t\mid X_t,D_{t-1})=c_S.
\]
For completeness, exponential concentration follows directly by tilting
the finite transition matrix: for $\theta$ near zero, its Perron root is
analytic, equals one at zero, and has logarithmic derivative $c_S$.
Chernoff's argument on either side of the mean then gives, for every
$\delta>0$, constants $C_{S,\delta},\eta_{S,\delta}>0$ such that the first
term deviates from $mc_S$ by $m\delta$ with probability at most
$C_{S,\delta}e^{-\eta_{S,\delta}m}$.

The final term in \eqref{eq:additive-info-v2} is sublinear with an
exponential tail.  Indeed, for $u>0$,
\begin{equation}
 P\{-\log_2P(A\mid B^m)>u\}
 \le S2^{-u},
 \label{eq:finite-a-tail-v2}
\end{equation}
because, conditional on each $b^m$, at most $S$ posterior atoms can be
smaller than $2^{-u}$ and their total contribution is at most $S2^{-u}$.
Also $\log_2\pi_A$ is bounded for fixed $S$.  This proves the unconditioned
claim.

For conditioning, the information densities satisfy the exact identity
\begin{equation}
\begin{aligned}
 \imath_{E_m}(X^m;Z_m)
 &=\imath(X^m;Z_m)+\log_2P(E_m)\\
 &\quad-\log_2P(E_m\mid Z_m).
\end{aligned}
 \label{eq:conditioning-id-v2}
\end{equation}
The central-binomial estimate gives $P(E_m)=\Theta(m^{-1/2})$, so the middle
term is $o(m)$.  Exponential concentration transfers through this
polynomial-probability conditioning event:
\[
 P\!\left(\left.\left|m^{-1}\imath-c_S\right|>\delta\ \right|E_m\right)
 \le \frac{C_{S,\delta}e^{-\eta_{S,\delta}m}}{P(E_m)}\longrightarrow0.
\]
Finally, for every $\delta>0$,
\begin{equation}
 P_{Z_m\mid E_m}\!\left(P(E_m\mid Z_m)<2^{-m\delta}\right)
 \le \frac{2^{-m\delta}}{P(E_m)}\longrightarrow0.
 \label{eq:event-posterior-tail-v2}
\end{equation}
Thus the last term of \eqref{eq:conditioning-id-v2} is also $o_P(m)$ under
$P^{E_m}$, proving \eqref{eq:slice-info-stability-v2}.
\end{proof}

\begin{lemma}[Quadratic rate]\label{lem:quadratic-rate-v2}
As $S\to\infty$,
\begin{equation}
 c_S=\frac{\pi^2}{2\ln2}\,S^{-2}(1+o(1)).
 \label{eq:cS-asymptotic-v2}
\end{equation}
\end{lemma}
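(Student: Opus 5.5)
We plan to evaluate the exact finite sum \eqref{eq:cS-finite-sum} directly. It suffices to show
\[
 \sum_{j=0}^{S}\Psi(u_j,u_{j+1})=\frac{\pi^2}{2\ln2}\,\frac{1}{S+1}\,(1+o(1)),
\]
because $2H=S+1$, and $(S+1)^{-2}=S^{-2}(1+o(1))$ then gives \eqref{eq:cS-asymptotic-v2}. Throughout we write $g_j=\sin(\pi j/(S+1))$, so that $u_j=g_j^2$ and $g_0=g_{S+1}=0$.

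\textbf{Step 1 (local expansion of $\Psi$).} Put $p=a/(a+b)$. Then $1-2p=(b-a)/(a+b)$, and near $p=1/2$ we have
\[
 1-\mathrm h_2(p)=\frac{(1-2p)^2}{2\ln2}\bigl(1+O((1-2p)^2)\bigr).
\]
Substituting this into $\Psi$ gives
\[
 \Psi(a,b)=\frac{(a-b)^2}{2\ln2\,(a+b)}\bigl(1+O(\varepsilon^2)\bigr),
 \qquad \varepsilon=\frac{a-b}{a+b}.
\]
With $a=g_j^2$ and $b=g_{j+1}^2$, the main factor becomes
\[
 \frac{(a-b)^2}{a+b}=(g_j-g_{j+1})^2\,\frac{(g_j+g_{j+1})^2}{g_j^2+g_{j+1}^2}.
\]
The last fraction equals $2(1+O(\varepsilon^2))$ whenever $g_j/g_{j+1}\to1$. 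Hence in the bulk we expect $\Psi(u_j,u_{j+1})\approx (g_j-g_{j+1})^2/\ln2$.

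\textbf{Step 2 (uniform domination).} We need a bound that holds for every $j$, including near the two endpoints where $g_j\to0$ and $p$ is far from $1/2$. Here $1-\mathrm h_2(p)$ is the base-2 divergence of $\mathrm{Bern}(p)$ from $\mathrm{Bern}(1/2)$. The bound $D\le\chi^2/\ln2$ gives $1-\mathrm h_2(p)\le(1-2p)^2/\ln2$. Combining this with $(g_j+g_{j+1})^2\le2(g_j^2+g_{j+1}^2)$ yields, for all $j$,
\[
 \Psi(u_j,u_{j+1})\le\frac{2}{\ln2}(g_j-g_{j+1})^2\le\frac{2\pi^2}{\ln2\,(S+1)^2}.
\]

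\textbf{Step 3 (Riemann sum).} Fix $\delta\in(0,1/4)$ and split the indices $j$ according to whether $j/(S+1)$ lies in $[\delta,1-\delta]$.

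In the bulk range, $g_j$ is bounded below by $\sin(\pi\delta)$ and $g_{j+1}-g_j=O(1/S)$. Therefore $\varepsilon=O_\delta(1/S)$, and the Step 1 expansion holds with a relative error $o(1)$ that is uniform in $j$. By the mean value theorem, $g_j-g_{j+1}=-\frac{\pi}{S+1}\cos(\pi\xi_j)$ for some $\xi_j\in[j/(S+1),(j+1)/(S+1)]$. The bulk sum is therefore
\[
 \frac{\pi^2}{\ln2\,(S+1)^2}\sum_j\cos^2(\pi\xi_j)\,(1+o(1))
 =\frac{\pi^2}{\ln2\,(S+1)}\int_\delta^{1-\delta}\cos^2(\pi x)\,dx\,(1+o(1)).
\]

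In the boundary range there are at most $2\delta(S+1)+2$ indices. By Step 2 each term is $O(S^{-2})$, so the boundary sum is $O(\delta/S)$.

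Letting $S\to\infty$ and then $\delta\downarrow0$, and using $\int_0^1\cos^2(\pi x)\,dx=1/2$, the total sum is $\frac{\pi^2}{2\ln2}(S+1)^{-1}(1+o(1))$. Dividing by $2H=S+1$ gives the claim.

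\textbf{Main obstacle.} The delicate point is the endpoints, where the quadratic Taylor expansion of $\mathrm h_2$ fails: at $j=0$ and $j=S$ one of $u_j,u_{j+1}$ vanishes, so $p\in\{0,1\}$. The plan isolates this region through the global $\chi^2$ domination of Step 2. That bound needs no closeness of $p$ to $1/2$, and it keeps the boundary contribution proportional to $\delta$. The remaining argument is a standard uniform Riemann-sum estimate.
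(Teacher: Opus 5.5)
Your proof is correct and follows the paper's route: a second-order expansion of $\Psi$ (equivalently of $1-\mathrm h_2$) followed by a Riemann sum, and your integrand $g'^2=\pi^2\cos^2(\pi x)$ is exactly the paper's $u'^2/(4u)$. The only difference is at the endpoints. The paper isolates just the two edges touching $u_0=u_{S+1}=0$ and asserts that the remaining Taylor remainders total $o(1/S)$. Your $\delta$-cutoff, combined with the global bound $\Psi(u_j,u_{j+1})\le\frac{2}{\ln2}(g_j-g_{j+1})^2$ from $D\le\chi^2$, controls the near-endpoint edges (where $u_j/u_{j+1}$ does not tend to $1$) more explicitly.
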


\begin{proof}
For nearby positive $a,b$,
\[
 \Psi(a,b)
 =\frac{(a-b)^2}{2\ln2\,(a+b)}
 +O\!\left(\frac{(a-b)^4}{(a+b)^3}\right).
\]
Set $\Delta=(S+1)^{-1}$ and $u(x)=\sin^2(\pi x)$.  The two boundary edges in
\eqref{eq:cS-finite-sum} contribute $O(\Delta^3)$ after division by $2H$.
On the remaining edges, a Riemann-sum argument gives
\begin{align*}
 \sum_{j=0}^{S}\frac{(u_{j+1}-u_j)^2}{u_j+u_{j+1}}
 &=\frac{\Delta}{2}\int_0^1\frac{u'(x)^2}{u(x)}\,dx+o(\Delta)\\
 &=\pi^2\Delta+o(\Delta).
\end{align*}
The Taylor remainders contribute $o(\Delta)$ to this sum.  Quadratic boundary
vanishing keeps $u'^2/u$ bounded at the endpoints.
Since $H=(S+1)/2=(2\Delta)^{-1}$, substitution into
\eqref{eq:cS-finite-sum} yields
\[
c_S=\frac{\pi^2\Delta^2}{2\ln2}+o(\Delta^2),
\]
which is \eqref{eq:cS-asymptotic-v2}.
\end{proof}

\begin{proposition}[Retuning the stationary profile cannot close the constant gap]
\label{prop:binary-profile-optimality}
Replace the sine-square weights in the stationary fair-output channel by
$h_j^{(S)}=g(j/(S+1))^2$, where $g\in C^2[0,1]$ is nonzero,
$g(0)=g(1)=0$, and $g>0$ on $(0,1)$.  Let $c_S(g)$ be the resulting exact
information rate, obtained from \eqref{eq:cS-finite-sum} after replacing
$u_j$ by $h_j^{(S)}$.  Then
\[
 \lim_{S\to\infty}(S+1)^2c_S(g)
 =\frac{1}{2\ln2}
   \frac{\int_0^1 g'(x)^2\,dx}{\int_0^1g(x)^2\,dx}
 \ge\frac{\pi^2}{2\ln2}.
\]
Equality holds for $g(x)\propto\sin(\pi x)$.  Thus the current construction is
leading-constant optimal within this smooth stationary nearest-neighbor
fair-output family; closing the factor-four theorem gap requires a different
construction architecture or a stronger converse, not merely a new stationary
weight profile.
\end{proposition}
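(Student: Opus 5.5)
The plan is to follow the proof of Lemma~\ref{lem:quadratic-rate-v2} with $u_j=h_j^{(S)}=g(j\Delta)^2$ and $\Delta=(S+1)^{-1}$, replacing its Taylor remainder argument by a dominated Riemann-sum argument that is uniform near the zeros of $g$.

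\textbf{Validity of the formula.} First I check that \eqref{eq:cS-finite-sum} is still the exact information rate for general weights. The cancellation in Lemma~\ref{lem:fair-output-v2} uses only positivity of $h$ on $\mathcal D$. The normalizer therefore becomes $H_S=\sum_{j=1}^S g(j\Delta)^2$ instead of $(S+1)/2$, and the fair output and stationary law $\pi=h/H_S$ persist. The identity $c_S=1-H(B\mid X,D)$ of Lemma~\ref{lem:information-stability-v2} then regroups edge by edge into
\[
 c_S(g)=\frac{1}{2H_S}\sum_{j=0}^{S}\Psi(u_j,u_{j+1}),
\]
exactly as before. The sum $\Delta H_S$ is a Riemann sum for the continuous function $g^2$, so $\Delta H_S\to\int_0^1g^2>0$.

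\textbf{Convergence of the edge sum.} The key pointwise fact is that $\Psi(a,b)=(a+b)\,D_2(\mathrm{Bern}(a/(a+b))\|\mathrm{Bern}(1/2))$. The $\chi^2$ bound on KL then gives
\[
 0\le\Psi(a,b)\le\frac{(a-b)^2}{(a+b)\ln2},
\]
and near $a=b$ the expansion $\Psi(a,b)=\frac{(a-b)^2}{2\ln2(a+b)}(1+o(1))$ holds as $|a-b|/(a+b)\to0$. With $a=g_1^2$ and $b=g_2^2$, the algebraic bound $(g_1^2-g_2^2)^2/(g_1^2+g_2^2)\le2(g_1-g_2)^2\le2\Delta^2\|g'\|_\infty^2$ shows that every summand is $O(\Delta^2)$, uniformly in $j$ and including the edges next to the zeros of $g$.

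Now split the sum. On $[\epsilon,1-\epsilon]$, $g$ is bounded below, so $|a-b|/(a+b)=O(\Delta)$ and the expansion applies uniformly. There the summand equals
\[
 \frac{\Delta^2}{2\ln2}\cdot\frac{(2gg')^2}{2g^2}(1+o(1))=\frac{\Delta^2 g'(j\Delta)^2}{\ln2}(1+o(1)).
\]
Its sum is $\frac{\Delta}{\ln2}\int_\epsilon^{1-\epsilon}g'^2+o(\Delta)$. The $O(\epsilon/\Delta)$ edges in the two boundary strips contribute at most $O(\epsilon\Delta)\|g'\|_\infty^2$. Letting $\epsilon\downarrow0$ after $S\to\infty$ gives $\sum_j\Psi(u_j,u_{j+1})=\frac{\Delta}{\ln2}\int_0^1g'^2+o(\Delta)$. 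Dividing by $2H_S=2\Delta^{-1}\int_0^1 g^2\,(1+o(1))$ gives
\[
 (S+1)^2c_S(g)\to\frac{1}{2\ln2}\frac{\int_0^1 g'^2}{\int_0^1g^2}.
\]

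\textbf{The inequality.} The Dirichlet Poincar\'e (Wirtinger) inequality on $[0,1]$ gives $\int_0^1g'^2\ge\pi^2\int_0^1g^2$ whenever $g(0)=g(1)=0$. Expanding $g$ in the basis $\sin(k\pi x)$, equality holds only when $g$ is a multiple of $\sin(\pi x)$. That case reproduces $u_j=\sin^2(\pi j/(S+1))$ and hence the constant of Lemma~\ref{lem:quadratic-rate-v2}.

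\textbf{Main obstacle.} The delicate step is the boundary analysis: near the zeros of $g$ the relative increments $|a-b|/(a+b)$ are not small, so the quartic Taylor remainder is not controlled there. I expect the uniform $\chi^2$ domination above, combined with the $2(g_1-g_2)^2$ bound, to settle it without needing second derivatives of $g$ at the endpoints.
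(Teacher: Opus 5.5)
Your proof is correct and follows essentially the same route as the paper's proof. Both use four steps: the interior expansion $\Psi(h_j,h_{j+1})=\Delta^2g'(j\Delta)^2/\ln2+o(\Delta^2)$, a uniform $O(\Delta^2)$ bound in the boundary layers, Riemann sums for the edge sum and for $2H$, and the Dirichlet Poincar\'e inequality. Your $\chi^2$ domination $\Psi(a,b)\le(a-b)^2/((a+b)\ln2)\le2(g_1-g_2)^2/\ln2$ just makes explicit the paper's one-line remark that the quadratic endpoint vanishing of $h=g^2$ gives a summable bound near the zeros of $g$. One small fix: write the interior $(1+o(1))$ as a uniform additive $o(\Delta^2)$, because $g'$ may vanish inside $(0,1)$.
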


\begin{proof}
The cancellation in Lemma~\ref{lem:fair-output-v2} uses only
$\pi_j\propto h_j$ and therefore remains valid for these profiles.  Uniformly
away from the endpoints, the expansion defining $\Psi$ gives
\[
 \Psi(h_j,h_{j+1})
 =\frac{g'(j\Delta)^2}{\ln2}\Delta^2+o(\Delta^2),
 \qquad \Delta=(S+1)^{-1}.
\]
The quadratic endpoint vanishing supplies the same summable bound in the two
boundary layers.  Hence the numerator of \eqref{eq:cS-finite-sum} is
$\Delta(\ln2)^{-1}\int_0^1g'^2+o(\Delta)$, while
$2H=2\Delta^{-1}\int_0^1g^2+o(\Delta^{-1})$.  Their ratio gives the limit.
The final inequality is the Dirichlet Poincar\'e inequality on $[0,1]$, whose
ground state is $\sin(\pi x)$.
\end{proof}

\subsection{Achievability and converse}

\begin{proposition}[Fixed-order achievability]\label{prop:fixed-ach-v2}
$\mathcal R^{\rm fix}_2(S)\le c_S$.
\end{proposition}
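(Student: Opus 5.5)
The plan is to combine the exact finite-state realization of Lemma~\ref{lem:bayes-compiler-v2} with a one-shot resolvability (channel-covering) argument applied to the conditioned plan-to-output channel, and to use the information stability of Lemma~\ref{lem:information-stability-v2} to control the required codebook size. Concretely, fix $\delta>0$ and set $K_m=\lceil 2^{m(c_S+2\delta)}\rceil$. Consider the joint law $P^{E_m}$ of $(Z_m,X^m)$, where $Z_m=(A,B^m)$. Its $X$-marginal is exactly $U_m$ by \eqref{eq:bayes-mixture-v2}, and its channel is $W_z=P^{E_m}(X^m\in\cdot\mid Z_m=z)$. Draw $Z^{(1)},\ldots,Z^{(K_m)}$ i.i.d.\ (or pairwise independent) from $P^{E_m}_{Z_m}$ and form the uniform mixture $K_m^{-1}\sum_iW_{Z^{(i)}}$. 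Repeated codewords are merged with added weights, as allowed after Definition~\ref{def:fixed-order-executor}, so the number of distinct plans is at most $K_m$.

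The key estimate is the covering bound (P), applied with $Q_Z=P^{E_m}_{Z_m}$, $Q_X=U_m$ and threshold $u=m(c_S+\delta)$. It gives
\[
 \mathbb E\,d_{\rm TV}\Bigl(K_m^{-1}\sum_iW_{Z^{(i)}},U_m\Bigr)
 \le P^{E_m}\bigl\{\imath_{E_m}(X^m;Z_m)>m(c_S+\delta)\bigr\}
 +\tfrac12\,2^{-m\delta/2}.
\]
By \eqref{eq:slice-info-stability-v2}, the first term tends to zero, so some realization of the codebook achieves TV error $\varepsilon_m\to0$. The error sequence $\varepsilon_m$ depends on $\delta$. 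Take a diagonal sequence $\delta_m\downarrow0$ slowly enough that the probability bound still vanishes. This produces a vanishing-error sequence with $\limsup_m m^{-1}\log_2K_m\le c_S$, and it also gives the bound for every fixed $\epsilon>0$ in the $\lim_{\epsilon\downarrow0}$ definition.

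Next I verify that each retained codeword $z$ is admissible. Since $z$ is drawn from $P^{E_m}_{Z_m}$, it has positive conditional probability. Lemma~\ref{lem:bayes-compiler-v2} then supplies a fixed left-to-right, width-$S$ executor whose law is exactly $W_z$. Its support lies in $\Omega_m$, because the terminal indicator together with the identity $\sum_{s\le t}x_s=\sum_{s\le t}b_s+d_t-a$ removes every unbalanced word. After pruning non-co-reachable states, the live states are contained in $\mathcal D$, so $|\mathcal D|=S$ bounds the width.

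The main obstacle I expect is that only convergence in probability of the conditioned information density is available, not exponential concentration. This is why the argument must go through the information-spectrum form (P) rather than a typical-set counting argument. It is also why the diagonalization over $\delta$ has to be done carefully to match the $\inf$ over vanishing-error sequences. If the identity for the tail term in (P) were problematic under conditioning, I would instead apply it to the unconditioned channel and transfer the result using \eqref{eq:conditioning-id-v2} and \eqref{eq:event-posterior-tail-v2}, exactly as in Lemma~\ref{lem:information-stability-v2}.
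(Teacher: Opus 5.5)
Your proposal is correct and follows essentially the same route as the paper. Both arguments apply one-shot truncation resolvability to the conditioned channel $z\mapsto P^{E_m}(X^m\mid Z_m=z)$ with threshold $m(c_S+\delta)$ and $\lceil 2^{m(c_S+2\delta)}\rceil$ codewords, use Lemma~\ref{lem:information-stability-v2} to make the tail term vanish, use Lemma~\ref{lem:bayes-compiler-v2} to turn each retained codeword into an exact-support width-$S$ plan, and merge repeated codewords. Your diagonal choice $\delta_m\downarrow0$ must also keep $m\delta_m\to\infty$, and it is needed only for the main-text definition with vanishing $\varepsilon_m$. For the appendix's $\lim_{\epsilon\downarrow0}$ definition, letting $m\to\infty$ and then $\delta\downarrow0$ suffices, which is what the paper does.
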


\begin{proof}
View $Z_m\sim P^{E_m}_{Z_m}$ as the input to the general channel
$W_m(x\mid z)=P^{E_m}(X^m=x\mid Z_m=z)$, whose output is $U_m$.  The standard
one-shot truncation form of finite-length variational-distance resolvability
for general sources and channels \citep{hanVerdu1993,yagi2017}, proved
explicitly in Lemma~\ref{lem:qary-one-shot-resolvability}, gives that, for every integer
$M$ and threshold $r$ in bits, there is a deterministic map from a uniform
$M$-point random variable to channel inputs such that
\begin{equation}
 d_{\rm TV}(\widetilde P_{X^m},U_m)
 \le P^{E_m}\{\imath_{E_m}(X^m;Z_m)>r\}
    +\frac12\sqrt{\frac{2^r}{M}}.
 \label{eq:finite-resolvability-v2}
\end{equation}
Choose $r=m(c_S+\delta)$ and
$M=\lceil2^{m(c_S+2\delta)}\rceil$.  Lemma
\ref{lem:information-stability-v2} sends the first term to zero, and the
second is at most $2^{-m\delta/2}/2$.  Every selected channel input
$z=(a,b^m)$ indexes a fixed-order width-$S$ component by Lemma
\ref{lem:bayes-compiler-v2}.  Repeated codewords are harmless: merging equal
$z$ values creates an arbitrary-weight mixture on no more than $M$ distinct
transition tables, which is allowed by the plan-library definition.  Letting
$m\to\infty$ and then $\delta,\epsilon\downarrow0$ proves the claim.
\end{proof}

\begin{proposition}[Support converse]\label{prop:fixed-converse-v2}
\[
 \mathcal R^{\rm fix}_2(S)
 \ge 1-\log_2\!\left(2\cos\frac{\pi}{2S+1}\right).
\]
\end{proposition}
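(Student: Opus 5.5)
The argument is a support-counting converse: every exact-support plan has small support, while the slice is uniform and large. Fix $S$, $\epsilon\in(0,1)$, and a library of $K=K^{\rm fix}_{m,\epsilon}(S)$ fixed-order width-$S$ exact-support component laws $P_1,\ldots,P_K$. Let $\widetilde P=\sum_zw_zP_z$ be their mixture, with $d_{\rm TV}(\widetilde P,U_m)\le\epsilon$.

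\textbf{Step 1 (one-plan support bound).} By Lemma~\ref{lem:moving-layer-support} with $q=2$, the support of each component satisfies
\[
|\supp P_z|\le\left[2\cos\frac{\pi}{2S+1}\right]^m .
\]
This lemma already uses the state--count injection: exact support forces each live state to determine one prefix count, so at most $S$ counts are represented per layer. It also uses the path-spectral bound $\|M_{A,B}\|_2\le2\cos(\pi/(2S+1))$ from Lemma~\ref{lem:moving-layer-deficit}. No further work is needed here.

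\textbf{Step 2 (the union carries most of the target mass).} Set $T=\bigcup_z\supp P_z$, so that $|T|\le K[2\cos(\pi/(2S+1))]^m$. Since $\widetilde P(T)=1$ and the TV distance is at most $\epsilon$, we get $U_m(T)\ge1-\epsilon$. Because $U_m$ is uniform on $\Omega_m$, this gives
\[
|T|\ge(1-\epsilon)\binom{m}{m/2}.
\]

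\textbf{Step 3 (rate).} Combining the two bounds,
\[
\log_2K\ge\log_2\binom{m}{m/2}+\log_2(1-\epsilon)-m\log_2\!\left(2\cos\frac{\pi}{2S+1}\right).
\]
Stirling's formula gives $\log_2\binom{m}{m/2}=m-\tfrac12\log_2m+O(1)$. Dividing by $m$ and taking $\limsup_{m\to\infty}$ along even $m$ yields
\[
\limsup_{m\to\infty}\frac1m\log_2K^{\rm fix}_{m,\epsilon}(S)\ge1-\log_2\!\left(2\cos\frac{\pi}{2S+1}\right)
\]
for every fixed $\epsilon<1$. The limit $\epsilon\downarrow0$ therefore preserves the bound.

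The only delicate point is in Step 1: states must be restricted to live (reachable and co-reachable) ones, as in Definition~\ref{def:fixed-order-executor}, so that the injection into counts is valid. Without this restriction, a dead state could be reached by prefixes with different counts and the injection would fail. Everything else is arithmetic.
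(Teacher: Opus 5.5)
Your proposal is correct and follows the paper's own proof: both apply the one-plan bound $\lambda_S^m$ from Lemma~\ref{lem:moving-layer-support}, then use the union-of-supports covering argument to get $K\lambda_S^m\ge(1-\epsilon)\binom{m}{m/2}$, and finish with $\log_2\binom{m}{m/2}=m-O(\log m)$. Your extra remark about restricting to live states is harmless, since the paper builds the same restriction into its definition of width.
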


\begin{proof}
Lemma~\ref{lem:moving-layer-support} gives at most
$\lambda_S^m$ supported words per component, where
$\lambda_S=2\cos(\pi/(2S+1))$.  If a $K$-component mixture is within TV
$\epsilon$ of the flat law $U_m$, the union of its supports contains at least
$(1-\epsilon)|\Omega_m|$ words.  Therefore
\[
 K\lambda_S^m\ge(1-\epsilon)\binom{m}{m/2}.
\]
Taking the per-symbol logarithm and using
$\log_2\binom{m}{m/2}=m-O(\log m)$ proves the finite-$S$ lower bound.
Finally,
\[
 1-\log_2\!\left(2\cos\frac{\pi}{2S+1}\right)
 =\frac{\pi^2}{8\ln2}S^{-2}(1+o(1)).
\]
\end{proof}

Propositions~\ref{prop:fixed-ach-v2} and
\ref{prop:fixed-converse-v2}, followed by Lemma
\ref{lem:quadratic-rate-v2}, prove Theorem
\ref{thm:fixed-frontier-v2}.

\begin{remark}[What remains open]
The constants in \eqref{eq:frontier-asymptotic} differ by a factor of four.
The lower bound is an order-zero support converse, while the upper bound is
the order-one information rate of one explicit test channel.  The theorem
closes the exponent gap, not the exact-constant problem.  A direct numerical
optimization of arbitrary positive stationary profiles at
$S\in\{3,5,9,17,33\}$ improves the finite-$S$ rate by only $1.1$-$2.9\%$;
this numerical optimization and Proposition~\ref{prop:binary-profile-optimality}
show why this retuning cannot improve the leading constant.
\end{remark}

\begin{remark}[Finite verification]
Exhaustive enumeration for $m=8$ and radii $R=1,2,3$ gives source-channel
normalization errors below $4\times10^{-15}$, an iid-fair $B^m$ marginal
error below $3\times10^{-18}$, Bayes reconstruction TV below
$2\times10^{-16}$, and errors below $3\times10^{-15}$ in both
\eqref{eq:additive-info-v2} and \eqref{eq:conditioning-id-v2}.  At
$(m,R)=(10,2)$, the same errors remain below $7\times10^{-15}$ and sampled
codebook median TV falls from $0.279$ at $M=1$ to $0.042$ at $M=64$.
\end{remark}

%% file: sections/general_conservation_rank_appendix.tex

\section{General conservation-rank spectral law}
\label{sec:general-conservation-rank-app}

Let $\mathcal A=[q]$, let $p_a>0$, write $p_{\min}=\min_a p_a$, and attach a resource vector
$v_a\in\mathbb Z^d$ to each symbol.  Work intrinsically on the difference
lattice
\[
 L=\operatorname{span}_{\mathbb Z}\{v_a-v_b:a,b\in\mathcal A\}
 \cong\mathbb Z^r.
\]
Subtract $v_1$ from every increment.  This translates every length-$t$ resource
sum by the deterministic vector $tv_1$ and changes neither endpoint fibers nor
any plan requirement.  We may therefore assume $v_1=0$ and $v_a\in L$.
For an attainable endpoint $\ell_m$, write
$E_m=\{\sum_t v_{X_t}=\ell_m\}$ and assume $P(E_m)=2^{-o(m)}$.
The target is $P_m=p^{\otimes m}(\cdot\mid E_m)$.

Define the convolution and symmetric difference-walk operators
\begin{align}
 (Tf)(z)&=\sum_a p_af(z+v_a),\label{eq:app-general-T}\\
 (Kf)(z)&=\sum_{a,b}p_ap_bf(z+v_a-v_b),\label{eq:app-general-K}
\end{align}
so that $T^*T=K$.  For finite $D\subset L$, let $K_D=P_DKP_D$ and define
\begin{align}
 \lambda(D)
 &=\inf_{\substack{f\ne0\\\supp f\subseteq D}}
 \frac{\langle f,(I-K)f\rangle}{\|f\|_2^2}
 =1-\rho(K_D),\notag\\
 \Lambda_{p,v}(S)
 &=\inf_{1\le |D|\le S}\lambda(D)
 =1-\sup_{|D|\le S}\rho(K_D).
 \label{eq:app-general-profile}
\end{align}

\subsection{Moving-layer converse}

\begin{lemma}[General moving-layer deficit]
\label{lem:app-general-moving}
For finite $A,B\subset L$ with $|A|,|B|\le S$,
\[
 \|P_ATP_B\|_{2\to2}^2\le1-\Lambda_{p,v}(S).
\]
\end{lemma}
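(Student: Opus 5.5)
The plan is to reduce the operator norm of $P_ATP_B$ to a Rayleigh quotient for $K$ on the set $B$, and then invoke the definition of the killed profile.

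First, the norm identity. Since $\|P_ATP_B\|_{2\to2}^2=\|(P_ATP_B)^*(P_ATP_B)\|$, we have
\[
 \|P_ATP_B\|_{2\to2}^2=\sup_{\supp g\subseteq B,\ \|g\|_2=1}\|P_ATg\|_2^2 .
\]

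Second, we drop the projection onto $A$. Projections are contractions, so $\|P_ATg\|_2^2\le\|Tg\|_2^2$. The operator $T$ is a finite convex combination of lattice translations, so it is bounded on $\ell^2(L)$. Its adjoint is $(T^*f)(z)=\sum_ap_af(z-v_a)$. For $g$ supported in $B$,
\[
 \|Tg\|_2^2=\langle g,T^*Tg\rangle=\langle g,Kg\rangle=\langle g,K_Bg\rangle ,
\]
where the identity $T^*T=K$ is checked directly from \eqref{eq:app-general-T}--\eqref{eq:app-general-K}.

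Third, we bound the quadratic form. The operator $K_B=P_BKP_B$ is a finite symmetric matrix with nonnegative entries. Hence $\langle g,K_Bg\rangle\le\rho(K_B)\|g\|_2^2$, which equals $(1-\lambda(B))\|g\|_2^2$ by \eqref{eq:app-general-profile}. Since $|B|\le S$, we have $\lambda(B)\ge\Lambda_{p,v}(S)$, and the claim follows. The case $B=\emptyset$ is trivial, because then the operator is zero.

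The only point needing care is that $\rho(K_B)$ is the top eigenvalue of $K_B$ rather than a bound on its absolute value. We need $\sup_g\langle g,K_Bg\rangle/\|g\|^2=\rho(K_B)$. This holds because $K_B$ is self-adjoint and entrywise nonnegative: by Perron--Frobenius its spectral radius is its largest eigenvalue, so the top of the numerical range equals $\rho$. Moreover, $K$ is itself positive semidefinite since $K=T^*T$, so $K_B$ is positive semidefinite as well, and no negative eigenvalue can exceed $\rho$ in modulus.

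Notably, the set $A$ plays no role in the bound, which is why the resulting estimate is uniform over moving layers.
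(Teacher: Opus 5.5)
Your proof is correct and is the paper's argument written on vectors. The paper drops $P_A$ through the operator inequality $P_BT^*P_ATP_B\preceq P_BT^*TP_B=P_BKP_B$, coming from $P_A\preceq I$; this is exactly your contraction step $\|P_ATg\|_2\le\|Tg\|_2$, after which the paper bounds the top eigenvalue of $K_B$ by $1-\Lambda_{p,v}(S)$ directly from the definition. (Your Perron--Frobenius remark is needed only to identify $\lambda(B)=1-\rho(K_B)$; the bound $\langle g,K_Bg\rangle\le\rho(K_B)\|g\|_2^2$ itself holds for any symmetric matrix.)
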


\begin{proof}
Because $P_A\preceq I$,
\begin{align*}
 (P_ATP_B)^*(P_ATP_B)
 &=P_BT^*P_ATP_B\\
 &\preceq P_BT^*TP_B=P_BKP_B.
\end{align*}
The largest eigenvalue of the last operator is at most
$1-\Lambda_{p,v}(S)$ by definition.
\end{proof}

The support of one exact-support width-$S$ plan has at most $S$ represented
resource sums at every layer.  Indeed, if two positive-probability prefixes
with different resource sums reached the same state, the same nonempty suffix
support would have to complete both to $\ell_m$, which is impossible.  Let
$A_t$ be the represented sums at layer $t$.  Under the unconditioned product
law, the supported-path mass is at most
\begin{align*}
 &\left\langle e_0,
 P_{A_0}TP_{A_1}T\cdots P_{A_{m-1}}TP_{A_m}e_{\ell_m}
 \right\rangle\\
 &\hspace{35mm}\le(1-\Lambda_{p,v}(S))^{m/2}.
\end{align*}
After conditioning, one plan therefore has target mass at most
$P(E_m)^{-1}(1-\Lambda_{p,v}(S))^{m/2}$.  If a $K$-plan mixture is within TV
$\varepsilon$ of $P_m$, the union of its supports has target mass at least
$1-\varepsilon$.  A union bound gives
\[
 K\ge(1-\varepsilon)P(E_m)
 (1-\Lambda_{p,v}(S))^{-m/2}.
\]
Taking rates and using $-\log(1-u)\ge u$ proves
$\mathcal R^{\rm fix}_{p,v}(S)\ge\Lambda_{p,v}(S)/(2\ln2)$.

\subsection{A marginal-preserving discrepancy compiler}

Fix a finite connected $D\subset L$ and a strictly positive $f$ on $D$,
extended by zero.  Put $h=f^2$, $H=\sum_zh(z)$, and
$\pi(z)=h(z)/H$.  Draw $D_0=A\sim\pi$.  Given $D_{t-1}=z$ and
$X_t=x\sim p$, define
\begin{equation}
 Q(b\mid x,z)
 =\frac{p_bh(z+v_x-v_b)}{\sum_cp_ch(z+v_x-v_c)},
 \qquad D_t=z+v_x-v_b.
 \label{eq:app-general-channel}
\end{equation}
The identity choice $b=x$ keeps the state at $z$, so the denominator is
positive and the chain remains in $D$.

\begin{lemma}[Weighted cancellation]
\label{lem:app-general-cancellation}
For every $t$, $B_t$ has law $p$ independently of $B^{t-1}$, and
$D_t\mid B^t$ has law $\pi$.
\end{lemma}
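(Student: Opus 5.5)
The plan is to proceed by induction on $t$, mirroring Lemma~\ref{lem:fair-output-v2}. The inductive claim is that, conditional on $B^{t-1}=b^{t-1}$, the pair $(D_{t-1},X_t)$ has law $\pi\otimes p$. The base case holds because $D_0=A\sim\pi$ is drawn independently of the iid source. For the inductive step, $X_t$ is independent of $(A,X^{t-1},B^{t-1})$, since the auxiliary bits are generated from the past source symbols and private coins. It therefore suffices to show that if $D_{t-1}\mid B^{t-1}\sim\pi$, then
\[
 P(B_t=b,D_t=y\mid B^{t-1})=p_b\,\pi(y)
\]
for every $b\in\mathcal A$ and $y\in D$. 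This single identity gives both conclusions. The marginal of $B_t$ is $p$ and does not depend on $B^{t-1}$, and $D_t\mid(B^{t-1},B_t)$ has law $\pi$, which propagates the induction.

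To verify the identity, fix $(b,y)$ and sum over the predecessor pairs $(z,x)$ with $z+v_x-v_b=y$. That is, $z=y+v_b-v_x$, one predecessor for each $x\in\mathcal A$ whose $z$ lies in $D$. Then
\[
 P(B_t=b,D_t=y\mid B^{t-1})
 =\sum_{x}\pi(y+v_b-v_x)\,p_x\,
 \frac{p_b\,h(y)}{\sum_c p_c\,h(y+v_b-v_x+v_x-v_c)} .
\]
The key point is that the denominator collapses to $\sum_c p_c h(y+v_b-v_c)=:N_b(y)$, which does not depend on $x$. The numerator summed over $x$ is
\[
 H^{-1}\sum_x p_x\,h(y+v_b-v_x)=N_b(y)/H,
\]
where terms with $y+v_b-v_x\notin D$ vanish because $h=0$ there. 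So the sum equals $p_b h(y)/H=p_b\pi(y)$, as claimed. We need $N_b(y)>0$ for this division to be valid. This holds because the term $c=b$ contributes $p_b h(y)>0$ for $y\in D$. For $y\notin D$ the mass is zero, consistent with the channel staying in $D$.

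The only delicate step is the conditional-independence bookkeeping in the inductive hypothesis. We must check that conditioning on $B^{t-1}$ leaves $X_t\sim p$ independent of $D_{t-1}$. This follows because $(D_{t-1},B^{t-1})$ is a function of $(A,X^{t-1})$ and private coins, all independent of $X_t$. With that in hand, the cancellation computation above is routine, and iterating the product identity yields $P(B^t=b^t)=\prod_s p_{b_s}$, so $B^t$ is iid from $p$.
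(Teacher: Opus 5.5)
Your proposal is correct and matches the paper's proof: induction on $D_{t-1}\mid B^{t-1}\sim\pi$, summing over the predecessor $y-v_x+v_b$, and observing that the numerator sum over $x$ equals the $x$-independent denominator $\sum_c p_c h(y+v_b-v_c)$ after renaming $x$ as $c$. Your explicit checks that $X_t$ remains independent of $D_{t-1}$ given $B^{t-1}$ and that the denominator is positive on $D$ supply bookkeeping the paper leaves implicit.
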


\begin{proof}
Assume $D_{t-1}\mid B^{t-1}\sim\pi$.  For fixed $b,y$, the predecessor
paired with source symbol $x$ is $y-v_x+v_b$.  Hence
\begin{align*}
 &P(B_t=b,D_t=y\mid B^{t-1})\\
 &=\sum_xp_x\frac{h(y-v_x+v_b)}H
 \frac{p_bh(y)}{\sum_cp_ch(y+v_b-v_c)}
 =p_b\frac{h(y)}H,
\end{align*}
because the numerator sum equals the denominator after renaming $x$ as $c$.
\end{proof}

Condition this joint law on $E_m$ and front-load the plan $Z=(A,B^m)$.
For every positive-probability plan,
\begin{equation}
 \sum_{s\le t}v_{X_s}
 =\sum_{s\le t}v_{B_s}+D_t-A.
 \label{eq:app-general-resource-identity}
\end{equation}
Thus the plan and current discrepancy determine the remaining resource
budget.  The suffix recursion
\begin{align*}
 F_m(u)&=\mathbf1\left\{
 \sum_{s\le m}v_{b_s}+u-a=\ell_m\right\},\\
 F_t(u)&=\sum_xp_xQ(b_{t+1}\mid x,u)
 F_{t+1}(u+v_x-v_{b_{t+1}})
\end{align*}
gives the posterior transition
\begin{align*}
 &P(X_{t+1}=x\mid D_t=u,Z,E_m)\\
 &\quad=\frac{p_xQ(b_{t+1}\mid x,u)}{F_t(u)}
 F_{t+1}(u+v_x-v_{b_{t+1}}).
\end{align*}
It uses only the $|D|$ possible discrepancy states and every supported word
lies in $E_m$.

\subsection{Information cost and resolvability}

The stationary one-step information cost is
\[
 c_D(f)=\sum_{z,x}\pi(z)p_x
 D_2\!\left(Q(\cdot\mid x,z)\middle\|p\right).
\]

\begin{lemma}[Information-Dirichlet comparison]
\label{lem:app-general-information}
\[
 c_D(f)\le\frac{8q}{p_{\min}\ln2}
 \frac{\langle f,(I-K)f\rangle}{\|f\|_2^2}.
\]
\end{lemma}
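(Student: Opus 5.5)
We bound each stationary summand separately and then sum against $\pi(z)p_x$. Fix $z\in D$ and $x\in\mathcal A$, and write $a_b=f(z+v_x-v_b)$, so that $h(z+v_x-v_b)=a_b^2$. Then $Q(b\mid x,z)=p_ba_b^2/W$ with $W=\sum_cp_ca_c^2>0$; the identity choice $b=x$ gives $a_x=f(z)>0$. The divergence is
\[
D_2(Q\|p)=\frac1{W\ln2}\sum_bp_ba_b^2\ln\frac{a_b^2}{W}.
\]
Applying $\ln u\le u-1$ directly only gives the $\chi^2$ bound $\sum_bp_b(a_b^2-W)^2/W^2$. This is quartic in $a$, so we need a better route.

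\textbf{Main step (obstacle): a quadratic bound on the divergence.} We would use $D(Q\|p)\le\chi^2(Q\|p)$ but write the $\chi^2$ quantity in difference form:
\[
\sum_bp_b\Bigl(\frac{a_b^2}{W}-1\Bigr)^2=\frac1{2W^2}\sum_{b,c}p_bp_c\bigl(a_b^2-a_c^2\bigr)^2 .
\]
Then factor $(a_b^2-a_c^2)^2=(a_b-a_c)^2(a_b+a_c)^2$ and bound $(a_b+a_c)^2\le2(a_b^2+a_c^2)\le 2\cdot 2W/p_{\min}$, since $p_{\min}a_b^2\le W$ for every $b$. This yields
\[
D_2(Q\|p)\le\frac{2}{p_{\min}W\ln2}\sum_{b,c}p_bp_c(a_b-a_c)^2 ,
\]
which is at least as strong as the main text's constant $4$. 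The delicate point is that this controls the divergence by a difference of square roots without needing $a$ to be near-constant. Zero entries with $a_b=0$ outside $D$ are harmless, because the factorization keeps the bound finite.

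\textbf{Summation.} Multiply by $\pi(z)p_x=p_xf(z)^2/H$ and sum over $(z,x)$. The factor $W$ is troublesome, since $W=\sum_cp_cf(z+v_x-v_c)^2$, not $f(z)^2$. We handle it by reparametrizing with $y=z+v_x$. Since $a_x=f(z)$, we have $W\ge p_xf(z)^2$, so $\pi(z)p_x/W\le 1/H$. The weighted sum is then at most
\[
\frac{2}{p_{\min}H\ln2}\sum_{z,x}\sum_{b,c}p_bp_c\bigl(f(z+v_x-v_b)-f(z+v_x-v_c)\bigr)^2 .
\]
For fixed $x$ the map $z\mapsto y=z+v_x$ is a bijection of $L$, and $f$ vanishes off $D$, so the inner sum over $z$ equals $\sum_y\sum_{b,c}p_bp_c(f(y-v_b)-f(y-v_c))^2$, independent of $x$. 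Summing over $x$ therefore contributes a factor $q$.

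Finally, the substitution $w=y-v_c$ turns $\sum_y\sum_{b,c}p_bp_c(f(y-v_b)-f(y-v_c))^2$ into $\sum_w\sum_{b,c}p_bp_c(f(w+v_c-v_b)-f(w))^2=2\langle f,(I-K)f\rangle$. Here we use the symmetry of $K$ under $b\leftrightarrow c$ and the fact that $K$ is Markov. Since $H=\|f\|_2^2$, collecting constants gives a bound of the form $\tfrac{4q}{p_{\min}\ln2}\langle f,(I-K)f\rangle/\|f\|_2^2$. This is within the claimed $8q/(p_{\min}\ln2)$, with slack to absorb any looser step, for instance if one prefers the $4/(p_{\min}W\ln 2)$ constant quoted in the main text.
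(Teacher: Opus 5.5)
Your proof is correct and follows the paper's route: bound the divergence by $\chi^2$ via $\ln u\le u-1$, turn it into squared differences using $a_b^2\le W/p_{\min}$, absorb $\pi(z)p_x/W\le 1/H$ through the identity term $b=x$, translate $y=z+v_x$, and apply the Dirichlet-form identity. The differences are minor. You use the exact variance identity where the paper uses Jensen, which loses a factor of two, so you obtain the sharper constant $4q$ in place of $8q$. Also, extending the $z$-sum from $D$ to all of $L$ adds nonnegative terms, so it is an inequality rather than the equality you wrote; the direction is still the right one.
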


\begin{proof}
Fix $(z,x)$ and write
$a_b=f(z+v_x-v_b)$ and $W=\sum_bp_ba_b^2$.  Then
$Q_b=p_ba_b^2/W$.  The inequality $\ln u\le u-1$, Jensen's inequality, and
$a_b^2\le W/p_{\min}$ give
\[
 D_2(Q\|p)
 \le\frac{4}{p_{\min}W\ln2}
 \sum_{b,c}p_bp_c(a_b-a_c)^2.
\]
The identity term gives $W\ge p_xf(z)^2$.  Multiply by
$\pi(z)p_x$, sum, translate $y=z+v_x$, and use the Dirichlet-form identity:
\begin{align*}
 c_D(f)
 &\le\frac{4q}{p_{\min}H\ln2}
 \sum_{y,b,c}p_bp_c(f(y-v_b)-f(y-v_c))^2\\
 &=\frac{8q}{p_{\min}\ln2}
 \frac{\langle f,(I-K)f\rangle}{\|f\|_2^2}.
\end{align*}
\end{proof}

For completeness, put
$\ell_t=\log_2[Q(B_t\mid X_t,D_{t-1})/p_{B_t}]$.
Before endpoint conditioning the information density is
\begin{align}
 \imath(X^m;A,B^m)
 &=\sum_{t=1}^m\ell_t
 +\log_2\frac{\pi(A)}{P(A\mid B^m)}.
 \label{eq:app-general-info-density}
\end{align}
On a connected component, the discrepancy chain is finite, irreducible, and
aperiodic, and the first sum is a bounded edge-additive functional with mean
$mc_D(f)$ and exponentially small deviations.  Since $A$ has at most $|D|$
values,
\[
 P\{-\log_2P(A\mid B^m)>u\}\le |D|2^{-u},
\]
so the final term is $o_P(m)$.  Endpoint conditioning preserves the limit.
Writing $G_m=P(E_m\mid A,B^m)$,
\begin{align*}
 \imath_{E_m}&=\imath+\log_2P(E_m)-\log_2G_m,\\
 P_{A,B^m\mid E_m}\{G_m<2^{-m\delta}\}
 &\le2^{-m\delta}/P(E_m)\longrightarrow0.
\end{align*}
The one-shot truncation proof for general-source/channel resolvability
\citep{hanVerdu1993,yagi2017} now gives a codebook of
$2^{m(c_D(f)+o(1))}$ plans whose posterior mixture converges in TV to $P_m$.
Every selected plan has the exact width-$|D|$ implementation above.  Choosing
a connected component of a near-minimizer in
\eqref{eq:app-general-profile} proves the upper profile bound in
Theorem~\ref{thm:plan-memory}.

\subsection{Intrinsic-rank profile and heterogeneous positions}

\begin{lemma}[Finite-range lattice profile]
\label{lem:app-general-rank-profile}
If the differences generate a rank-$r$ lattice, then
$\Lambda_{p,v}(S)=\Theta_{p,v}(S^{-2/r})$.
\end{lemma}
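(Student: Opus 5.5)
We prove both inequalities through the variational characterization $\Lambda_{p,v}(S)=\inf_{1\le|D|\le S}\inf_{\supp f\subseteq D}\mathcal E_K(f)/\|f\|_2^2$, with $\mathcal E_K(f)=\langle f,(I-K)f\rangle$. The strategy is to replace $\mathcal E_K$ by the nearest-neighbour Dirichlet form $\mathcal E_{\rm nn}$ on $\mathbb Z^r\cong L$, with constants depending only on $(p,v)$. We then use a Nash inequality for the lower bound and an explicit product-sine test function for the upper bound.

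\textbf{Comparison of Dirichlet forms.} Fix a lattice basis $e_1,\ldots,e_r$ of $L$. Using $T^*T=K$, write
\[
\mathcal E_K(f)=\tfrac12\sum_{z}\sum_{a,b}p_ap_b\bigl(f(z+v_a-v_b)-f(z)\bigr)^2 .
\]
Since the jumps $v_a-v_b$ generate $L$, each $e_i$ is a fixed finite signed sum of jumps, of length at most some $\ell=\ell(p,v)$. Telescoping along this path and applying Cauchy--Schwarz gives $|f(z+e_i)-f(z)|^2\le\ell\sum_j|\text{jump increments}|^2$. Summing over $z$ and $i$, the translation invariance bounds the congestion of each jump edge by a constant. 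This yields $\mathcal E_{\rm nn}(f)\le C_1\,\mathcal E_K(f)/p_{\min}^2$.

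The reverse direction is symmetric. Each jump $v_a-v_b$ is a bounded-length nearest-neighbour path, so $\mathcal E_K(f)\le C_2\,\mathcal E_{\rm nn}(f)$. Both inequalities hold for every finitely supported $f$, and so they hold uniformly over support sets $D$. This step mirrors the path-comparison argument in Lemma~\ref{lem:periodic-fk-specialization}.

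\textbf{Lower bound.} The Nash inequality on $\mathbb Z^r$ gives
\[
\|f\|_2^{2+4/r}\le N_r\,\mathcal E_{\rm nn}(f)\,\|f\|_1^{4/r}.
\]
For $\supp f\subseteq D$ with $|D|\le S$, we have $\|f\|_1\le S^{1/2}\|f\|_2$. Dividing through gives $\mathcal E_{\rm nn}(f)\ge N_r^{-1}S^{-2/r}\|f\|_2^2$. The comparison step then yields $\Lambda_{p,v}(S)\ge c\,S^{-2/r}$.

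This is exactly the Faber--Krahn argument of Lemma~\ref{lem:periodic-fk-specialization}, transported to $L$. For $r=1$ the same inequality holds with exponent $2$; alternatively, one can argue directly via path spectra.

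\textbf{Upper bound.} Take $R=\lfloor S^{1/r}\rfloor$ and let $D=\{1,\ldots,R\}^r$ in basis coordinates, so $|D|\le S$. Define
\[
f(z)=\prod_i\sin\!\bigl(\pi z_i/(R+1)\bigr).
\]
A direct computation gives $\mathcal E_{\rm nn}(f)/\|f\|_2^2=2r\bigl(1-\cos(\pi/(R+1))\bigr)=O_r(R^{-2})$. Comparison then gives $\Lambda_{p,v}(S)\le C_2\,O_r(R^{-2})=O(S^{-2/r})$. The monotonicity of $\Lambda$ in $S$ handles values of $S$ that are not perfect $r$-th powers, which is why the statement says "for sufficiently large $S$".

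\textbf{Main obstacle.} The only delicate point is keeping the comparison constants uniform. The congestion count must not depend on $D$. This works because we compare the full-lattice forms of a finitely supported $f$ extended by zero, rather than forms restricted to $D$. That is legitimate because both Dirichlet problems are defined with zero extension, so no boundary terms arise.

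The remaining risk is the normalization between the $\tfrac12$-convention for $\mathcal E_K$ and the convention for $\mathcal E_{\rm nn}$. This affects only the constants $C_1$ and $C_2$, not the exponent $-2/r$.
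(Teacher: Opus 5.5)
Your proposal is correct and follows essentially the same route as the paper: a bounded-length path comparison between $\mathcal E_K$ and the nearest-neighbour form on $L\cong\mathbb Z^r$, the lattice Nash (Faber--Krahn) inequality with $\|f\|_1\le|D|^{1/2}\|f\|_2$ for the lower bound, and a product-sine test function on an $R^r$ box with $S\asymp R^r$ for the upper bound. One small correction: under the paper's $\tfrac12$ normalization of $\mathcal E_{\rm nn}$, the box quotient is $r\bigl(1-\cos(\pi/(R+1))\bigr)$ rather than $2r(\cdots)$, which, as you anticipated, affects only the constants and not the exponent $-2/r$.
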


\begin{proof}
Choose a basis $e_1,\ldots,e_r$ of $L$.  Let $\mathcal E_K$ be the
Dirichlet form of $K$ and let
\[
 \mathcal E_{\rm nn}(f)=\frac12\sum_{z,i}
 (f(z+e_i)-f(z))^2.
\]
Every basis step is a bounded-length word in the finite difference jump set,
and every difference jump is a bounded-length word in the basis steps.
Telescoping, Cauchy-Schwarz, fixed positive edge weights, and translation
invariance therefore give constants $C_1,C_2$ with
\[
 C_1^{-1}\mathcal E_{\rm nn}(f)
 \le\mathcal E_K(f)\le C_2\mathcal E_{\rm nn}(f).
\]
The lattice Nash inequality, equivalently the graph Faber-Krahn profile
\citep{coulhonGrigoryan1998}, gives
$\mathcal E_{\rm nn}(f)/\|f\|_2^2\ge c_r|\supp f|^{-2/r}$.
For the reverse direction, a product sine on an $R^r$ lattice box has
nearest-neighbor Rayleigh quotient $O_r(R^{-2})$.  The form comparison and
$S\asymp R^r$ complete the proof.
\end{proof}

Now let position $t$ have law $p_{m,t}$ with
$\inf_{m,t,a}p_{m,t}(a)\ge\eta>0$, and assume that the conditioned endpoint
event has probability $2^{-o(m)}$.  The layer-specific Dirichlet forms are
uniformly comparable to the same nearest-neighbor form, so the converse above
has a uniform $c_{\eta,v}S^{-2/r}$ deficit.  For achievability, use one box
$D_R$, one product-sine $f_R$, and replace $p$ by $p_{m,t}$ in
\eqref{eq:app-general-channel} at time $t$.  Weighted cancellation is
pointwise in the position law, hence $B_t\sim p_{m,t}$ independently and
$D_t\mid B^t\sim\pi_R$.

\begin{lemma}[Uniformity for heterogeneous positions]
\label{lem:app-heterogeneous-uniformity}
Fix a finite connected discrepancy set $D$ and $h>0$ on $D$.  Put
$h_-:=\min_Dh$, $h_+:=\max_Dh$, and
$\kappa:=\eta^2h_-/h_+$.  Let $M_{m,t}$ be the unconditional discrepancy
kernel induced by the time-$t$ channel,
\[
 M_{m,t}(z,y)=
 \sum_{x,b:\,z+v_x-v_b=y}
 p_{m,t}(x)Q_{m,t}(b\mid x,z).
\]
Every allowed difference edge, including every self-loop, has probability at
least $\kappa$.  If $L_D$ is the diameter of the difference graph on $D$, then
for every starting time $t$ and $z,y\in D$,
\[
 (M_{m,t+1}\cdots M_{m,t+L_D})(z,y)\ge\kappa^{L_D},
\]
where shorter paths are padded by self-loops.  Thus these inhomogeneous blocks
obey a common Doeblin minorization with constant
$\alpha_D=|D|\kappa^{L_D}$ and the uniform law on $D$.  Moreover, every
positive one-step information density satisfies
\[
 \left|\log_2\frac{Q_{m,t}(b\mid x,z)}{p_{m,t}(b)}\right|
 \le \log_2\frac{h_+}{\eta h_-}.
\]
All constants depend only on $(\eta,D,h,v)$, not on $(m,t)$.
\end{lemma}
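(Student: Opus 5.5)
The statement has three parts: an edge-mass bound, a block minorization, and an information-density bound. I will prove them in that order, using only the explicit form of the time-$t$ channel \eqref{eq:app-general-channel} with $p$ replaced by $p_{m,t}$:
\[
Q_{m,t}(b\mid x,z)=\frac{p_{m,t}(b)\,h(z+v_x-v_b)}{\sum_c p_{m,t}(c)\,h(z+v_x-v_c)}.
\]

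\textbf{Edge mass.} Fix an allowed difference edge $z\to y=z+v_x-v_b$ with $y\in D$. The kernel $M_{m,t}(z,y)$ is at least the single term $p_{m,t}(x)Q_{m,t}(b\mid x,z)$. For this term:
\begin{itemize}
\item the numerator satisfies $p_{m,t}(b)h(y)\ge\eta h_-$;
\item the denominator is a convex combination of values of $h$ (zero outside $D$), so it is at most $h_+$;
\item $p_{m,t}(x)\ge\eta$.
\end{itemize}
Hence $M_{m,t}(z,y)\ge\eta^2h_-/h_+=\kappa$. The self-loop is the case $b=x$ and is covered by the same bound. The constant depends on neither $t$ nor $m$, since only the floor $\eta$ on the position laws enters.

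\textbf{Block minorization.} The difference graph on $D$ is connected with diameter $L_D$, so any $z,y\in D$ are joined by an allowed path of length at most $L_D$. I pad it with self-loops to length exactly $L_D$; the self-loops may sit anywhere, so they can absorb whatever time-inhomogeneity is present. The product $M_{m,t+1}\cdots M_{m,t+L_D}$ has nonnegative entries, so its $(z,y)$ entry is at least the probability of this single path, which is at least $\kappa^{L_D}$ by the edge bound. Writing $\kappa^{L_D}=\alpha_D\cdot|D|^{-1}$ with $\alpha_D=|D|\kappa^{L_D}$ gives the Doeblin minorization $\ge\alpha_D\,\mathrm{Unif}_D(y)$, uniformly in the starting time.

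\textbf{Information density.} From the formula for $Q_{m,t}$,
\[
\frac{Q_{m,t}(b\mid x,z)}{p_{m,t}(b)}=\frac{h(y)}{\sum_c p_{m,t}(c)h(z+v_x-v_c)}.
\]
For the upper bound, the denominator is at least $p_{m,t}(x)h(z)\ge\eta h_-$ (the identity term, with $z\in D$), and the numerator is at most $h_+$. For the lower bound on a positive-probability edge, the numerator is at least $h_-$ and the denominator at most $h_+$, so the ratio is at least $h_-/h_+\ge\eta h_-/h_+$. Since $\eta\le1$ and $h_-\le h_+$ make $h_+/(\eta h_-)\ge1$, both directions give the stated absolute bound $\log_2(h_+/(\eta h_-))$.

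The only step needing care is the identity-term lower bound on the denominator. It relies on the current state $z$ lying in $D$, which holds because the chain never leaves $D$. Everything else is routine bookkeeping, so I expect no real obstacle.
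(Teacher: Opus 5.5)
Your proof is correct and follows the paper's argument essentially line by line. Both use the same two-sided bound $\eta h_-\le \sum_c p_{m,t}(c)h(z+v_x-v_c)\le h_+$ on the normalizer (identity term from below, convex combination from above), and the same single-term edge bound yielding $\kappa=\eta^2h_-/h_+$. Both then pad a shortest path with self-loops to get the block minorization, and use the same sandwich $h_-/h_+\le Q_{m,t}(b\mid x,z)/p_{m,t}(b)\le h_+/(\eta h_-)$ for the information-density bound.
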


\begin{proof}
Write
$W_{m,t}(x,z)=\sum_cp_{m,t}(c)h(z+v_x-v_c)$.  The identity choice $c=x$
and positivity give
$\eta h_-\le W_{m,t}(x,z)\le h_+$.  If
$y=z+v_x-v_b\in D$, its contribution to $M_{m,t}(z,y)$ is
\[
 p_{m,t}(x)p_{m,t}(b)h(y)/W_{m,t}(x,z)\ge\kappa.
\]
The difference graph is symmetric, connected by assumption, and contains all
self-loops.  A path of length at most $L_D\le |D|-1$ therefore connects any
ordered pair; padding and multiplying the edge bounds proves the block bound
and hence the minorization.  Finally,
\[
 \frac{h_-}{h_+}
 \le\frac{Q_{m,t}(b\mid x,z)}{p_{m,t}(b)}
 =\frac{h(z+v_x-v_b)}{W_{m,t}(x,z)}
 \le\frac{h_+}{\eta h_-}
\]
on every positive transition, proving the information-density bound.
\end{proof}

For each fixed $R$, Lemma~\ref{lem:app-heterogeneous-uniformity} gives
uniform geometric mixing in $(m,t)$.  Blocking controls the state-dependent
part of the information-density sum, while its conditionally centered part is
a bounded martingale difference.  Standard exponential concentration follows
with constants depending on $(\eta,R,v)$ but not on $(m,t)$.  Applying
Lemma~\ref{lem:app-general-information} at each layer and the box Rayleigh
quotient gives average cost $O_{\eta,v}(R^{-2})$.  Endpoint conditioning and
resolvability then proceed as above.  The order of limits is important: the
blocklength tends to infinity for each fixed $R$ before $R$ grows.  This proves
the heterogeneous rank law stated in the main paper.

Finally, two exact representations explain the scope.  Exponential tilting
$p_a^{(\theta)}\propto p_a\exp(\langle\theta,v_a\rangle)$ multiplies every
word on a fixed endpoint fiber by the same factor, so noncentral budgets use
the profile of their mean-matching tilt.  Likewise, for a weighted size-$k$
slate, independent Bernoulli odds
$p_i/(1-p_i)=\exp(\beta z_i+\lambda)$ produce, after conditioning on
$\sum_iX_i=k$, mass proportional to
$\exp(\beta\sum_{i:X_i=1}z_i)$.  This proves the two corollaries used in the
main paper.

%% file: sections/growing_rank_geometry_appendix.tex
\section{Growing-rank conservation geometry}
\label{sec:growing-rank-geometry-app}

This section proves Theorem~\ref{thm:growing-rank-geometry}.  All logarithms
without a subscript are natural.  We first derive dimension-uniform one-group
bounds.  A rank-$d$ group has $q=d+1$ equiprobable categories: category $0$
has resource vector $0\in\mathbb Z^d$ and category $j$ has resource vector
$e_j$.  For block lengths divisible by $q$, we condition the group to use
every category exactly $m/q$ times.  We then tensor groups of possibly
different ranks $\mathbf d=(d_1,\ldots,d_J)$, whose total difference-lattice
rank is $r=\sum_jd_j$.

The proof has two dimension-uniform parts.  The converse retains the logarithm
of the killed spectral radius rather than linearizing it.  The construction
uses connected multinomial probability superlevel sets whose cardinalities
and information costs are simultaneously controlled.  Optimizing their widths
under one total state budget produces the entropy of the rank partition.

\subsection{An exact log-spectral converse}

Let $K$ be the symmetric action-difference walk and put
\[
 \rho_*(S)=\sup_{|D|\leq S}\rho(P_DKP_D).
\]

\begin{lemma}[Operational log-spectral bound]
\label{lem:gr-log-spectral}
Whenever the conditioned endpoint has probability $2^{-o(m)}$,
\[
 \mathcal R^{\rm fix}(S)
 \geq \frac{-\log\rho_*(S)}{2\ln2}.
\]
\end{lemma}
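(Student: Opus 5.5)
The plan is to rerun the moving-layer converse of Sec.~\ref{sec:general-conservation-rank-app}, keeping the per-layer operator norm in logarithmic form rather than linearizing $-\log(1-\Lambda)\ge\Lambda$. Here $K$ is the group-product difference walk. With $T$ the one-step convolution operator, $T^*T=K$, so Lemma~\ref{lem:app-general-moving} applies verbatim: for represented-sum sets $|A|,|B|\le S$,
\[
 \|P_ATP_B\|_{2\to2}^2\le\rho(P_BKP_B)\le\rho_*(S).
\]
The only input is $P_A\preceq I$, so the bound holds for any finitely supported kernel. It therefore holds for the tensored multi-group walk on $\mathbb Z^r$.

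First, invoke the state--resource-sum obstruction of Lemma~\ref{lem:moving-layer-support} and Definition~\ref{def:fixed-order-executor}. Exact support forces every live state of a width-$S$ plan to carry a single prefix resource sum. Hence layer $t$ represents a set $A_t\subset L$ with $|A_t|\le S$.

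Second, bound the unconditioned product mass of one plan's support by the path sum
\[
 \langle e_0,P_{A_0}TP_{A_1}\cdots TP_{A_m}e_{\ell_m}\rangle.
\]
This is at most $\prod_t\|P_{A_{t-1}}TP_{A_t}\|\le\rho_*(S)^{m/2}$, because the unit vectors $e_0$ and $e_{\ell_m}$ contribute norm one. After conditioning on $E_m$, one plan carries target mass at most $P(E_m)^{-1}\rho_*(S)^{m/2}$.

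Third, take a $K$-plan mixture within TV $\varepsilon$ of $P_m$. The union of the plans' supports has $P_m$-mass at least $1-\varepsilon$, so
\[
 K\ge(1-\varepsilon)P(E_m)\rho_*(S)^{-m/2}.
\]
Taking $\frac1m\log_2$ and using $P(E_m)=2^{-o(m)}$ and $\varepsilon_m\downarrow0$ with $\log_2(1-\varepsilon_m)=o(m)$ gives $\mathcal R^{\rm fix}(S)\ge-\tfrac12\log_2\rho_*(S)=-\log\rho_*(S)/(2\ln2)$.

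The step needing the most care is the first. The operator $T$ must act on the intrinsic lattice, so that resource sums are genuine lattice points, and the per-group balanced endpoint must be a single attainable point $\ell_m$ of the product lattice. Both follow from the common translation $v_1=0$ already used in the general appendix. The tensor structure itself plays no role in this lemma; it enters only later, through the trace bound (7).
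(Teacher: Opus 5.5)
Your proposal is correct and follows the paper's own proof. Both keep the moving-layer bound of Lemma~\ref{lem:app-general-moving} in its unlinearized form. Both then bound one plan's target mass by $P(E_m)^{-1}\rho_*(S)^{m/2}$, use the union bound $K\ge(1-\varepsilon)P(E_m)\rho_*(S)^{-m/2}$, and take per-symbol logarithms; the paper additionally notes that the limit runs along admissible block lengths, which your argument handles without change.
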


\begin{proof}
The moving-layer argument in Lemma~\ref{lem:app-general-moving} shows, before
the relaxation $-\log u\geq1-u$, that one width-$S$ plan captures at most
$P(E_m)^{-1}\rho_*(S)^{m/2}$ target mass.  A mixture at total variation at most
$\epsilon$ must cover target mass at least $1-\epsilon$, and therefore needs at
least
\[
 (1-\epsilon)P(E_m)\rho_*(S)^{-m/2}
\]
plans.  Take $m^{-1}\log_2$, then let $m\to\infty$ along admissible block
lengths and $\epsilon\downarrow0$.
\end{proof}

For one group, let
$N_t\sim\operatorname{Mult}(t;1/q,\ldots,1/q)$.  Its collision probability is
the return probability of the category-difference walk.

\begin{lemma}[Uniform multinomial mode]
\label{lem:gr-mode}
If $t/q\geq2$, then
\[
 \max_z\Pr\{N_t=z\}
 \leq e\sqrt{2q}\left(\frac{2q}{t}\right)^{d/2}.
\]
\end{lemma}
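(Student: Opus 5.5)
The plan is a direct Stirling estimate evaluated at an explicitly located mode. No spectral or walk argument is needed: the collision interpretation only motivates the lemma, and the bound is a statement about one multinomial atom.

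\emph{Locating the mode.} Write $\Pr\{N_t=z\}=q^{-t}\,t!/\prod_{j=0}^{d}z_j!$ for $z\in\mathbb Z_{\ge0}^{q}$ with $\sum_jz_j=t$. Comparing neighbouring atoms gives
\[
 \frac{\Pr\{N_t=z+e_i-e_j\}}{\Pr\{N_t=z\}}=\frac{z_j}{z_i+1}.
\]
If $z_j\ge z_i+2$, this ratio exceeds one, so such a $z$ is not a maximizer. Hence every mode satisfies $|z_i-z_j|\le1$, and therefore $z_j\ge\lfloor t/q\rfloor$ for all $j$. Since $t/q\ge2$ and $\lfloor x\rfloor\ge x/2$ for $x\ge1$, every mode coordinate obeys $z_j\ge t/(2q)\ge1$. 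In particular no coordinate vanishes, so Stirling applies to every factorial.

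\emph{Stirling bounds.} I will use the two-sided bound $\sqrt{2\pi n}\,(n/e)^n\le n!\le e\sqrt n\,(n/e)^n$ for $n\ge1$. Apply the upper bound to $t!$ and the lower bound to each $z_j!$. Because $\sum_jz_j=t$, the factors $e^{-t}$ cancel, leaving
\[
 \Pr\{N_t=z\}\le \frac{e\sqrt t}{\prod_j\sqrt{2\pi z_j}}\cdot
 \frac{t^t}{q^t\prod_jz_j^{z_j}}.
\]
The second factor is at most one. Indeed, $\sum_jz_j\log z_j\ge t\log(t/q)$ by convexity of $u\log u$ (equivalently, Gibbs' inequality for the empirical type against the uniform law). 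That leaves only the square-root prefactor to control.

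\emph{Bounding the prefactor.} Using $z_j\ge t/(2q)$ and $2\pi\ge1$ gives $\prod_j\sqrt{2\pi z_j}\ge(t/(2q))^{q/2}$. Therefore
\[
 \Pr\{N_t=z\}\le e\sqrt t\,(2q/t)^{q/2}
 =e\sqrt t\,(2q/t)^{1/2}(2q/t)^{d/2}
 =e\sqrt{2q}\,(2q/t)^{d/2},
\]
using $q=d+1$. This is exactly the claimed bound. The only steps needing care are ensuring that every mode coordinate is at least one, so that the Stirling lower bound is valid, and justifying the rounding $\lfloor t/q\rfloor\ge t/(2q)$; both rely on the hypothesis $t/q\ge2$.
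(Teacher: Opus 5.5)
Your proof is correct and follows the paper's argument. You locate a mode whose cell counts differ by at most one, apply the Stirling upper bound to $t!$ and the lower bound to each $z_j!$, discard the factor $\exp\{-tD(z/t\|u_q)\}\le1$, and use $z_j\ge t/(2q)$ with $2\pi\ge1$ in the prefactor. Your neighbour-ratio argument for the mode's balance and your explicit check that $\lfloor t/q\rfloor\ge t/(2q)\ge1$ fill in steps the paper only asserts.
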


\begin{proof}
A multinomial mode has cell counts differing by at most one.  If its positive
counts are $n_1,\ldots,n_q$, the assumption implies
$n_j\geq t/(2q)$.  The elementary Stirling bounds
\[
 t!\leq e\sqrt t\,(t/e)^t,
 \qquad
 n_j!\geq\sqrt{2\pi n_j}\,(n_j/e)^{n_j}
\]
give
\[
 \Pr\{N_t=z\}
 \leq e\sqrt t\prod_{j=1}^q n_j^{-1/2}
       \exp\{-tD(z/t\|u_q)\}.
\]
Use $D\geq0$ and $n_j\geq t/(2q)$.  The factor $(2\pi)^{-q/2}$ only improves
the displayed bound.
\end{proof}

\begin{lemma}[Grouped killed-profile lower bound]
\label{lem:gr-profile-lower}
There is a universal $c>0$ such that every $g,d\geq1$ and all sufficiently
large $W$ satisfy
\[
 -\log\sup_{|D|\leq W^{gd}}\rho(K_D)\geq c\,gW^{-2}.
\]
\end{lemma}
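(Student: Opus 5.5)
The plan follows the trace mechanism (7) from the main text. Let $K$ be the action-difference walk for $g$ independent copies of a rank-$d$ group with $q=d+1$ uniform categories. Then $K=\bigotimes_{j=1}^gK_j$ acts on $\mathbb Z^{gd}$, and each $K_j$ is the one-group difference kernel. Fix $D$ with $|D|\le S:=W^{gd}$. The operator $K_D=P_DKP_D$ is symmetric and positive semidefinite: $K=T^*T$ and compression preserves positivity. So for every integer $t\ge1$,
\[
 \rho(K_D)^t\le\operatorname{tr}(K_D^t)\le\operatorname{tr}(P_DK^tP_D)
 =|D|\,K^t(0,0)=|D|\prod_{j=1}^gK_j^t(0,0).
\]
The middle inequality is the standard fact $\operatorname{tr}((P_DKP_D)^t)\le\operatorname{tr}(P_DK^tP_D)$ for positive $K$ and a projection $P_D$. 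One can see it by expanding into nonnegative path weights, because $K$ has nonnegative entries and killing only removes paths. The diagonal is constant by translation invariance.

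Next, identify $K_j^t(0,0)$ with a collision probability. Since $K_j^t=(T_j^t)^*T_j^t$, the return probability equals $\sum_zP\{N_t=z\}^2\le\max_zP\{N_t=z\}$, where $N_t\sim\operatorname{Mult}(t;1/q,\dots,1/q)$ is the category histogram and the resource sum is a bijective image of $N_t$. Lemma~\ref{lem:gr-mode} then gives, for $t\ge2q$,
\[
 K_j^t(0,0)\le e\sqrt{2q}\,(2q/t)^{d/2}.
\]
Choose $t=\lceil A\,dW^2\rceil$ for a large universal $A$. Then $2q/t\le 4d/(AdW^2)=4/(AW^2)$, and $t\ge2q$ holds once $AW^2\ge4$. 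Substituting,
\[
 \rho(K_D)^t\le W^{gd}\bigl(e\sqrt{2(d+1)}\bigr)^g\bigl(4/(AW^2)\bigr)^{gd/2}
 =\Bigl[e\sqrt{2(d+1)}\,(4/A)^{d/2}\Bigr]^g .
\]
Here $W^{gd}$ cancels exactly against $W^{-gd}$. The bracket is at most $e^{-c_1d}\le e^{-c_1}$ uniformly in $d\ge1$ once $A$ is a large absolute constant. This works because $\sqrt{2(d+1)}$ is absorbed by $(4/A)^{d/2}$ with room to spare, for instance $A=4e^{8}$.

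Taking logarithms gives $-t\log\rho(K_D)\ge c_1g$. With $t\le 2AdW^2$ this yields $-\log\rho(K_D)\ge c_1g/(2AdW^2)$.

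\textbf{Main obstacle.} This last bound loses a factor $d$ relative to the claimed $cgW^{-2}$. I expect this to be the real difficulty. I would resolve it by retaining the decay rather than bounding the bracket by a constant. The bracket is actually at most $e^{-c_1d}$, so $\rho(K_D)^t\le e^{-c_1gd}$ and $-\log\rho(K_D)\ge c_1gd/t\ge c_1g/(2AW^2)$, which is dimension-free as required. The care needed is to check that $\log(e\sqrt{2(d+1)})\le\frac d2\log(A/4)-c_1d$ for all $d\ge1$ with one universal $A$, which holds since $\log\sqrt{d+1}\le d$.

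"Sufficiently large $W$" is used only to ensure $t/q\ge2$ and $t\ge1$, via $AW^2\ge4$. The argument never needs $D$ to be connected or box-shaped, because the trace bound is insensitive to the geometry of $D$. Finally, I would replace $\sup$ over $|D|\le W^{gd}$ by the uniform bound above, which completes the proof with $c=c_1/(2A)$.
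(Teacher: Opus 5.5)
Your proposal is correct and matches the paper's proof. Both use positive semidefiniteness of $K_D$ with the trace bound $\rho(K_D)^t\le|D|\prod_jK_j^t(0,0)$, the collision-probability bound via Lemma~\ref{lem:gr-mode}, and $t\asymp dW^2$ (the paper takes $t=\lceil256qW^2\rceil$) so that $W^{gd}$ cancels and each group contributes $e^{-c_1d}$; the factor-$d$ loss you flag comes only from your intermediate crude bound, and keeping $\rho(K_D)^t\le e^{-c_1gd}$, as the paper does via $d/q\ge1/2$, gives the dimension-free constant.
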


\begin{proof}
The $g$-group kernel is a tensor product.  Its $t$-step return probability is
the $g$th power of the collision probability of $N_t$, and
$\sum_z\Pr(N_t=z)^2\leq\max_z\Pr(N_t=z)$.  Moreover, $K_D$ is positive
semidefinite because the unrestricted difference kernel is $T^*T$.  Hence
\[
 \rho(K_D)^t\leq\operatorname{tr}(K_D^t)
 \leq |D|K^t(0,0).
\]
Take $t=\lceil256qW^2\rceil$.  Lemma~\ref{lem:gr-mode} and
$|D|\leq W^{gd}$ yield
\[
 \rho(K_D)^t
 \leq\left[e\sqrt{2q}\,128^{-d/2}\right]^g
 \leq e^{-c_1gd}
\]
for a universal $c_1>0$; the smallest exponent occurs at $d=1$.  Since
$t\leq257qW^2$ and $d/q\geq1/2$, division by $t$ proves the claim.
\end{proof}

Lemmas~\ref{lem:gr-log-spectral} and \ref{lem:gr-profile-lower} give the lower
bound for equal groups.  Keeping $-\log\rho$ is essential: replacing it by
$1-\rho$ before taking a growing-rank product can lose the factor $g$.  The
same trace argument gives the nonuniform form needed below.

\begin{lemma}[Rank-partition killed-profile bound]
\label{lem:gr-partition-profile}
Let $\mathbf d=(d_1,\ldots,d_J)$, $r=\sum_jd_j$, and
\[
 \mathsf N(\mathbf d)
 =\frac{r}{(\prod_jd_j^{d_j})^{1/r}}.
\]
There are universal $c,C_0>0$ such that, whenever
\[
 L_{\mathbf d}(S)
 =\left(\prod_jd_j^{d_j}\right)^{1/r}S^{2/r}
 \geq C_0d_{\max},
\]
the product-simplex difference kernel satisfies
\[
 -\log\sup_{|D|\le S}\rho(K_D)
 \geq c\,\mathsf N(\mathbf d)S^{-2/r}.
\]
\end{lemma}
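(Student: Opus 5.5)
We follow the trace argument of Lemma~\ref{lem:gr-profile-lower}, now allowing a different time scale in each group. Since $K=T^*T$ for the product kernel, every $K_D$ is positive semidefinite. Hence for any integer $t$,
\[
 \rho(K_D)^t\le\operatorname{tr}(K_D^t)\le |D|\,K^t(0,0)=|D|\prod_{j=1}^J K_j^t(0,0),
\]
where $K_j$ is the difference walk of group $j$. Its return probability is the collision probability of $N^{(j)}_t\sim\operatorname{Mult}(t;u_{q_j})$, with $q_j=d_j+1$, and is therefore at most the multinomial mode. We take one common $t$ for all groups, namely $t=\lceil A\,L_{\mathbf d}(S)\rceil$ with a universal constant $A$ to be fixed later. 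The hypothesis $L_{\mathbf d}(S)\ge C_0d_{\max}$, with $C_0$ large relative to $A^{-1}$, gives $t/q_j\ge2$ for every $j$. Lemma~\ref{lem:gr-mode} then applies in each group and gives
\[
 \rho(K_D)^t\le S\prod_j e\sqrt{2q_j}\Bigl(\frac{2q_j}{t}\Bigr)^{d_j/2}.
\]

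Next we evaluate the right side. Using $q_j\le 2d_j$, we get $\prod_j(2q_j/t)^{d_j/2}\le\prod_j(4d_j)^{d_j/2}\,t^{-r/2}$. Since $t\ge A L_{\mathbf d}(S)$, we also have $t^{-r/2}\le A^{-r/2}(\prod_jd_j^{d_j})^{-1/2}S^{-1}$. The factor $S$ cancels exactly, and so does $\prod_jd_j^{d_j/2}$. What remains is
\[
 \rho(K_D)^t\le\prod_j\bigl(e\sqrt{4d_j}\bigr)\,(4/A)^{r/2}.
\]
Because $\sqrt{4d_j}\le e^{d_j}$ and $J\le r$, the prefactor is at most $e^{2r}$. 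Choosing $A$ large enough, say $A\ge4e^{8}$, makes the whole bound at most $e^{-c_1r}$ for a universal $c_1>0$. Taking logarithms and dividing by $t\le 2A L_{\mathbf d}(S)$, which holds because $L_{\mathbf d}\ge C_0d_{\max}\ge1$, gives
\[
 -\log\rho(K_D)\ge\frac{c_1 r}{2A\,L_{\mathbf d}(S)}=\frac{c_1}{2A}\,\frac{r}{(\prod_jd_j^{d_j})^{1/r}}S^{-2/r}=c\,\mathsf N(\mathbf d)S^{-2/r},
\]
uniformly over $|D|\le S$. Taking the supremum over such $D$ proves the claim.

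The main point needing care is the bookkeeping that keeps every constant universal, independent of $J$, of the $d_j$ and of $r$. Three things must be checked. First, the per-group polynomial prefactors $e\sqrt{2q_j}$ must be absorbed into $e^{O(d_j)}$, so that their product is $e^{O(r)}$ and is beaten by $(4/A)^{r/2}$; this works because each group contributes at least $d_j\ge1$ to $r$. Second, the single common $t$ must satisfy the mode-lemma condition $t\ge2q_j$ in every group at once, which is exactly what $L_{\mathbf d}\ge C_0d_{\max}$ provides. Third, we keep $-\log\rho$ rather than $1-\rho$ until the end, as stressed after Lemma~\ref{lem:gr-profile-lower}, so that the factor $r$, and hence $\mathsf N(\mathbf d)$, is not lost.
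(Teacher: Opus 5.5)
Your proposal is correct and follows the paper's proof essentially step for step: positive semidefiniteness of $K_D$, the trace bound $\rho(K_D)^t\le |D|\prod_jK_j^t(0,0)$, one common time $t=\lceil A L_{\mathbf d}(S)\rceil$, the multinomial mode bound in each group, absorption of the prefactors $e\sqrt{2q_j}$ into $e^{O(d_j)}$, and division by $t\le 2AL_{\mathbf d}(S)$. Your bookkeeping adds detail the paper leaves implicit: the exact cancellation of $S$ and $\prod_j d_j^{d_j/2}$, and concrete constants such as $A\ge 4e^{8}$ and $AC_0\ge 4$, which make the paper's ``sufficiently large universal $A$'' explicit.
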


\begin{proof}
The product kernel is positive semidefinite.  For every $|D|\le S$,
\[
 \rho(K_D)^t\leq\operatorname{tr}(K_D^t)
 \leq S\prod_{j=1}^J K_j^t(0,0).
\]
The return probability in group $j$ is a multinomial collision probability,
so Lemma~\ref{lem:gr-mode} gives, when $t/(d_j+1)\geq2$,
\[
 K_j^t(0,0)
 \leq e\sqrt{2(d_j+1)}
 \left(\frac{2(d_j+1)}t\right)^{d_j/2}.
\]
Now $J\leq r$, $\log(2(d_j+1))\leq(\log4)d_j$, and
$d_j\leq d_j+1\leq2d_j$.  Taking
$t=\lceil A L_{\mathbf d}(S)\rceil$ for a sufficiently large universal $A$
makes the trace upper bound at most $e^{-c_1r}$.  The assumption on
$L_{\mathbf d}(S)$ ensures the mode-bound condition and also
$t\leq2A L_{\mathbf d}(S)$.  Dividing by $t$ gives
\[
 -\log\rho(K_D)
 \geq c_2\frac r{L_{\mathbf d}(S)}
 =c_2\mathsf N(\mathbf d)S^{-2/r}.
\]
\end{proof}

\subsection{A dimension-uniform compiler}

We first record a point-probability bound.  It is deliberately elementary so
that the constants in the state-set construction are explicit.

\begin{lemma}[Multinomial point probability]
\label{lem:gr-point}
For $N_n\sim\operatorname{Mult}(n;u_q)$ and every histogram $z$,
\[
 \Pr\{N_n=z\}
 \geq \exp\{-nD(z/n\|u_q)-q\}(1+n/q)^{-q/2}.
\]
\end{lemma}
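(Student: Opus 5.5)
The plan is to write the multinomial point probability exactly and bound each factorial factor with elementary Stirling inequalities, so that the relative entropy appears in the exponent. Let $z=(z_1,\ldots,z_q)$ with $\sum_jz_j=n$. Then
\[
 \Pr\{N_n=z\}=\frac{n!}{\prod_jz_j!}\,q^{-n},
\]
and the identity $\prod_j(z_j/n)^{z_j}\,q^{n}=\exp\{nD(z/n\|u_q)\}$ means it suffices to show
\[
 \frac{n!}{\prod_jz_j!}\ \ge\ \frac{n^n}{\prod_jz_j^{z_j}}\;e^{-q}(1+n/q)^{-q/2},
\]
with the convention $0^0=1$.

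For the numerator I will use the lower bound $n!\ge\sqrt{2\pi n}\,(n/e)^n\ge(n/e)^n$; the case $n=0$ is trivial. For each denominator factor I will use the upper bound $z_j!\le e\sqrt{z_j}\,(z_j/e)^{z_j}$ when $z_j\ge1$, and $z_j!=1$ when $z_j=0$. A uniform version valid for all $z_j\ge0$ is
\[
 z_j!\le e\sqrt{1+z_j}\,(z_j/e)^{z_j}.
\]
Multiplying these bounds, the powers of $e$ cancel because $\sum_jz_j=n$. What remains is
\[
 \frac{n!}{\prod_jz_j!}\ \ge\ \frac{n^n}{\prod_jz_j^{z_j}}\;e^{-q}\prod_j(1+z_j)^{-1/2}.
\]

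The last factor is controlled by AM--GM:
\[
 \prod_j(1+z_j)\le\Bigl(\tfrac1q\textstyle\sum_j(1+z_j)\Bigr)^q=(1+n/q)^q .
\]
Hence $\prod_j(1+z_j)^{-1/2}\ge(1+n/q)^{-q/2}$. Combining this with the two displays above gives the stated inequality.

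The main care point is the endpoint and zero-count bookkeeping. I need the upper Stirling bound in the form $z!\le e\sqrt{z}(z/e)^z$ for $z\ge1$, which follows from the standard $z!\le e\,z^{z+1/2}e^{-z}$. I also need to check that zero cells contribute a factor of exactly $1$ on both sides; this holds because the $0^0=1$ convention is also used in $D$. No other obstacle is expected, and the constant $e^{-q}$ is deliberately loose.
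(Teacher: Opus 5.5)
Your proposal is correct and follows essentially the same route as the paper: the exact identity for $\Pr\{N_n=z\}e^{nD(z/n\|u_q)}$, the bounds $n!\ge(n/e)^n$ and $k!\le e\,k^{k+1/2}e^{-k}$ (which leave the factor $e^{-q}\prod_j(1+z_j)^{-1/2}$), and AM--GM on $\prod_j(1+z_j)$. Your uniform form $z_j!\le e\sqrt{1+z_j}\,(z_j/e)^{z_j}$ differs only cosmetically: it absorbs the zero cells directly, whereas the paper restricts the product to cells with $z_j>0$.
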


\begin{proof}
The identity
\[
 \Pr\{N_n=z\}e^{nD(z/n\|u_q)}
 =\frac{n!}{n^n}\prod_{j:z_j>0}\frac{z_j^{z_j}}{z_j!}
\]
and the bounds $n!\geq(n/e)^n$ and
$k!\leq e\,k^{k+1/2}e^{-k}$ imply that the right-hand side is at least
\[
 e^{-q}\prod_{j=1}^q(1+z_j)^{-1/2}.
\]
The conclusion follows from
$\prod_j(1+z_j)\leq(1+n/q)^q$ by AM-GM.
\end{proof}

\begin{lemma}[Probability-threshold state set]
\label{lem:gr-threshold}
There are universal $d_0,W_0,c_0,c_1>0$ and $\alpha\in(0,1)$ such that the
following holds.  If $d\geq d_0$, $W_0\leq W\leq e^{c_0d}$,
$n=\lfloor\alpha dW^2\rfloor$, and
\[
 D_W=\{z:\Pr(N_n=z)\geq W^{-d}\},
\]
then
\[
 |D_W|\leq W^d,
 \qquad
 \Pr\{N_n\notin D_W\}\leq e^{-c_1d}.
\]
The set $D_W$ is connected under category transfers $e_i-e_j$.
\end{lemma}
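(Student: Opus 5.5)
The bound $|D_W|\le W^d$ follows from counting alone. The pmf of $N_n$ sums to one, and every $z\in D_W$ carries mass at least $W^{-d}$, so $|D_W|\,W^{-d}\le1$. The other two claims need separate arguments. For the tail bound, the plan is to show that $D_W$ contains the relative-entropy ball
\[
 \mathcal B_A=\{z:\ nD(z/n\|u_q)\le Ad\}
\]
for a universal constant $A$ chosen later. Once that holds,
\[
 \Pr\{N_n\notin D_W\}\le\Pr\{nD(N_n/n\|u_q)>Ad\}\le e^{-d(A-1-\log A)},
\]
where the second inequality is the multinomial concentration inequality already quoted in the main text \citep{agrawal2020multinomial}. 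Fixing $A$ large, for instance $A=4$, gives $c_1=A-1-\log A>0$.

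\textbf{Inclusion $\mathcal B_A\subseteq D_W$.} Apply Lemma~\ref{lem:gr-point} to $z\in\mathcal B_A$:
\[
 \Pr\{N_n=z\}\ge e^{-Ad-q}(1+n/q)^{-q/2}.
\]
Since $q=d+1$ and $n\le\alpha dW^2$, we have $n/q\le\alpha W^2$. For $W\ge W_0$ with $W_0$ large relative to $\alpha^{-1/2}$, this gives $1+n/q\le2\alpha W^2$. Substituting, the lower bound is at least
\[
 W^{-d}\cdot W^{-1}\,e^{-(A+1)d-1}\,(2\alpha)^{-(d+1)/2}.
\]
Choose $\alpha\in(0,1)$ so small that $(2\alpha)^{-1/2}\ge e^{A+3}$. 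Then
\[
 (2\alpha)^{-(d+1)/2}e^{-(A+1)d-1}\ge e^{2d}\quad\text{for all }d\ge1.
\]
The hypothesis $W\le e^{c_0d}$ with $c_0\le 2$ gives $W^{-1}e^{2d}\ge1$. Hence the point probability is at least $W^{-d}$, so $z\in D_W$. The thresholds $d_0$ and $W_0$ only absorb floor effects in $n=\lfloor\alpha dW^2\rfloor$, for example ensuring $n\ge1$ and that $1+n/q\le 2\alpha W^2$.

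\textbf{Connectivity.} Use the exact transfer ratio
\[
 \frac{\Pr\{N_n=z-e_i+e_j\}}{\Pr\{N_n=z\}}=\frac{z_i}{z_j+1}.
\]
This ratio is at least $1$ whenever $z_i\ge z_j+1$. Starting from any $z\in D_W$, repeatedly move one unit from a largest cell to a smallest cell while their counts differ by at least two. Each such step does not decrease the probability, so the path stays inside the superlevel set $D_W$. The process terminates at a mode, meaning a histogram whose counts differ by at most one. Any two modes are joined by transfers between cells whose counts differ by exactly one, and those transfers have ratio exactly $1$. So all modes lie in one connected piece, and $D_W$ is connected under the transfers $e_i-e_j$.

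\textbf{Main obstacle.} The delicate step is the constant bookkeeping in the inclusion $\mathcal B_A\subseteq D_W$. The polynomial factor $(1+n/q)^{-q/2}$ costs roughly $W^{-d-1}$, and this must be beaten by $(2\alpha)^{-(d+1)/2}$ uniformly in $d$. It is this requirement that forces both the upper window $W\le e^{c_0 d}$, which absorbs the extra $W^{-1}$, and the choice of $\alpha$ depending on $A$.
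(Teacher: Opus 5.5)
Your proof is correct and follows the same route as the paper. The cardinality bound comes from summing masses over $D_W$. Connectivity comes from the transfer ratio $z_i/(z_j+1)$ together with equal-mass moves between modes. The tail bound comes from showing that the ball $\{nD(z/n\|u_q)\le Ad\}$ lies inside $D_W$, using Lemma~\ref{lem:gr-point}, $1+n/q\le 2\alpha W^2$, a small $\alpha$ and $\log W\le c_0d$, and then applying Agrawal's inequality. The paper additionally fixes $c_0<(A-1-\log A)/4$, so that $2c_0<c_1$. That constraint is not needed for this lemma but is used later in Lemma~\ref{lem:gr-information}, so you may want to shrink your $c_0\le 2$ accordingly.
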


\begin{proof}
The cardinality bound follows by summing the probabilities over $D_W$.  For
connectivity, if two cells differ by at least two, moving one count from the
larger cell to the smaller one multiplies the multinomial mass by
$z_i/(z_j+1)\geq1$.  Iteration reaches a mode without leaving the superlevel
set, and the modes are connected by equal-mass transfers.

Agrawal's finite-sample KL inequality \citep{agrawal2020multinomial} implies,
for every $A>1$,
\[
 \Pr\{nD(N_n/n\|u_q)\geq Ad\}
 \leq\exp\{-d(A-1-\log A)\}.
\]
Fix $A>1$, take
$0<c_0<(A-1-\log A)/4$, and then choose $\alpha$ small enough that
\[
 A+2+c_0+\tfrac12\log(2\alpha)<0.
\]
Increase $W_0$ until $\alpha W^2\geq1$.  On the complementary KL event,
Lemma~\ref{lem:gr-point}, $q=d+1\leq2d$, and
$\log W\leq c_0d$ give, with
$\Delta=A+2+c_0+\tfrac12\log(2\alpha)<0$,
\begin{align*}
 -\log\Pr\{N_n=z\}
 &\leq Ad+q+\frac q2\log(1+\alpha W^2)\\
 &\leq d\log W+d\Delta\\
 &\leq d\log W.
\end{align*}
Thus the KL event contains $D_W^c$, proving the tail bound with
$c_1=A-1-\log A$.
\end{proof}

The next identity exploits the exact cancellation compiler from
Section~\ref{sec:general-conservation-rank-app}.  For a positive histogram
weight $h$, draw $Z\sim h/\sum h$ and let $U$ be an independent uniform
category.  Given source category $X$ and state $Z$, the compiler selects $B$
with probability proportional to $h(Z+e_X-e_B)$.

\begin{lemma}[Selector-information identity]
\label{lem:gr-selector-info}
The stationary one-step information cost of the uniform-category compiler is
\[
 I(X,Z;B)=I(U;Z+e_U).
\]
\end{lemma}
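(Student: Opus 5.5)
The plan is to reduce the left-hand side to a mutual information between $B$ and the single statistic $Y:=Z+e_X$, and then to identify the joint law of $(Y,B)$ with the law of $(Z+e_U,U)$. Throughout I use the convention $e_0=0$ for category $0$, so the compiler of Section~\ref{sec:general-conservation-rank-app} reads $Q(b\mid x,z)=h(z+e_x-e_b)/W(z+e_x)$ with $W(y):=\sum_c h(y-e_c)$ and uniform $p$. The identity choice $c=x$ gives $W(z+e_x)\ge h(z)>0$ whenever $h(z)>0$.

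\emph{Step 1: sufficiency of $Y$.} The selection kernel depends on $(X,Z)$ only through $Y=Z+e_X$: given $Y=y$, the law of $B$ is $h(y-e_b)/W(y)$. Hence $(X,Z)\to Y\to B$ is a Markov chain, and $Y$ is a deterministic function of $(X,Z)$. The data-processing inequality therefore holds with equality in both directions, which gives
\[
 I(X,Z;B)=I(Y;B).
\]

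\emph{Step 2: exact joint law of $(Y,B)$.} This is the same cancellation computation as in Lemma~\ref{lem:app-general-cancellation}, done at the level of $Y$ rather than the updated state. First,
\[
 P(Y=y)=\sum_x\tfrac1q\,\frac{h(y-e_x)}{H}=\frac{W(y)}{qH}.
\]
Multiplying by the selection kernel,
\[
 P(Y=y,B=b)=\frac{W(y)}{qH}\cdot\frac{h(y-e_b)}{W(y)}=\frac1q\cdot\frac{h(y-e_b)}{H}.
\]
This is exactly the joint law of $(Z'+e_U,U)$, where $U$ is uniform on the $q$ categories and $Z'\sim h/H$ is independent of $U$. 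In particular $B$ is uniform and $Y-e_B$ is independent of $B$ with law $h/H$, consistent with the earlier weighted cancellation. Since mutual information depends only on the joint law, $I(Y;B)=I(U;Z+e_U)$, which together with Step~1 proves the lemma.

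\emph{Main obstacle.} The only delicate point is Step~1, namely checking that the kernel truly factors through $Y$. That requires the resource vectors to enter only via $z+e_x-e_b$, with no separate dependence on $x$. This holds here because $p$ is uniform, so the prefactor $p_b$ is constant and the normalizer $W$ is a function of $y$ alone. For nonuniform $p$ the same argument would go through with $p_b$ retained in both the numerator and $W$, but the uniform case is all that is claimed. Zero-mass states $y$ with $W(y)=0$ have $P(Y=y)=0$ and can be discarded.
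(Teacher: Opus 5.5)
Your proposal is correct and takes essentially the same route as the paper: both rest on the fact that the selection kernel depends on $(X,Z)$ only through $Y=Z+e_X$, that it coincides with the Bayes posterior $\Pr\{U=\cdot\mid Z+e_U=Y\}$, and that $Y$ has the same law as $Z+e_U$. The paper compares conditional entropies and uses that $B$ and $U$ are both uniform, leaving the equality in law of $Y$ and $Z+e_U$ implicit, whereas your direct matching of the joint laws of $(Y,B)$ and $(Z+e_U,U)$ makes that equality explicit and yields the uniformity of $B$ in the same computation.
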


\begin{proof}
Put $Y=Z+e_X$.  Conditional on $(X,Z)$, the compiler draws $B$ according to
\[
 \Pr\{B=b\mid X,Z\}
 =\frac{h(Y-e_b)}{\sum_ch(Y-e_c)}.
\]
Bayes' rule gives the same vector for
$\Pr\{U=b\mid Z+e_U=Y\}$.  Averaging its entropy over $Y$, and using that both
$B$ and $U$ are uniform, proves the identity.
\end{proof}

\begin{lemma}[Dimension-uniform information cost]
\label{lem:gr-information}
Let $Z\sim N_n$ conditional on $N_n\in D_W$ under the conditions of
Lemma~\ref{lem:gr-threshold}.  There is a universal $C_1$ such that
\[
 I(U;Z+e_U)\leq C_1W^{-2}
\]
in nats.
\end{lemma}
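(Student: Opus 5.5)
The plan is to prove the bound first for the unconditioned histogram $N_n$, where there is an exact closed form, and then pay a small correction for conditioning on $\{N_n\in D_W\}$ using the tail bound of Lemma~\ref{lem:gr-threshold}. Throughout, $U$ is an independent uniform category and $q=d+1$.

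\emph{Step 1: the unconditioned information.} Put $Y=N_n+e_U$. Since $U$ is an independent uniform draw, $Y\sim\operatorname{Mult}(n+1;u_q)$: it is one more uniform draw added to the histogram. By exchangeability of the $n+1$ draws, the category of a designated draw given the final histogram $y$ is $y_a/(n+1)$. This is exactly the identity $P(U=a\mid N_n+e_U=y)=y_a/(n+1)$ quoted in the main text. Hence, in nats,
\[
 I(U;Y)=\log q-\mathbb E\,H\!\left(\tfrac{Y}{n+1}\right)
 =\mathbb E\,D\!\left(\tfrac{Y}{n+1}\,\middle\|\,u_q\right).
\]
I will bound the KL term by the $\chi^2$ divergence:
\[
 \mathbb E\,D\le\mathbb E\,\chi^2
 =q\sum_a\operatorname{Var}\!\left(\tfrac{Y_a}{n+1}\right)
 =\frac{q-1}{n+1}=\frac{d}{n+1}.
\]
With $n=\lfloor\alpha dW^2\rfloor$ and $\alpha W^2\ge1$, this gives $I(U;N_n+e_U)\le 1/(\alpha W^2)$ up to an absolute factor. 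That is already of the form $C W^{-2}$ with a constant uniform in $d$.

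\emph{Step 2: conditioning on the good event.} Let $J=\mathbf 1\{N_n\in D_W\}$ and $\delta=P(J=0)\le e^{-c_1d}$. Because $J$ is a function of $N_n$, it is independent of $U$, so $I(U;J)=0$. The chain rule then gives
\[
 I(U;Y\mid J)=I(U;Y,J)\le I(U;Y)+H(J)=I(U;Y)+h(\delta),
\]
with $h$ the binary entropy in nats. Dropping the $J=0$ term, $(1-\delta)\,I(U;Y\mid J=1)\le I(U;Y\mid J)$. Conditional on $J=1$, $U$ is still uniform and independent of $Z$, and the joint law of $(U,Y)$ is exactly that of $(U,Z+e_U)$. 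Therefore
\[
 I(U;Z+e_U)\le\frac{I(U;N_n+e_U)+h(\delta)}{1-\delta}.
\]
This route is cleaner than the $2\delta\log q+2h_2(\delta)$ continuity bound mentioned in the main text, and it gives the same conclusion.

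\emph{Step 3: absorbing the correction.} I expect this to be the only delicate point. The error $h(\delta)\le\delta(1+\log(1/\delta))\le e^{-c_1d}(1+c_1d)$ must be at most a constant times $W^{-2}$ uniformly in $d$. The hypothesis $W\le e^{c_0d}$ gives $W^{-2}\ge e^{-2c_0d}$. In the proof of Lemma~\ref{lem:gr-threshold}, $c_0<(A-1-\log A)/4=c_1/4$. Hence
\[
 h(\delta)\,W^2\le(1+c_1d)\,e^{-(c_1-2c_0)d}\le(1+c_1d)\,e^{-c_1d/2},
\]
which is bounded by a universal constant. For $d\ge d_0$ we also have $\delta\le1/2$, so the factor $(1-\delta)^{-1}$ is at most $2$. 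Combining the three steps gives $I(U;Z+e_U)\le C_1W^{-2}$ with $C_1$ depending only on $\alpha,c_0,c_1$, all of which are universal.
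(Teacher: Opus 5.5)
Your proof is correct. Step~1 matches the paper's argument: exchangeability gives $I(U;N_n+e_U)=\mathbb E\,D(N_{n+1}/(n+1)\|u_q)$, and the $\chi^2$ bound gives $d/(n+1)\le 1/(\alpha W^2)$. Step~2 takes a genuinely different route from the paper.

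\emph{The paper's route.} It treats truncation as a total-variation perturbation of size $\delta$ of the joint law of $(U,N_n+e_U)$. It then invokes a finite-input continuity bound for conditional entropy, $|I(U;Z+e_U)-I(U;N_n+e_U)|\le 2\delta\log q+2h_2(\delta)$.

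\emph{Your route.} You use the fact that the truncation indicator $J$ is a function of $N_n$ alone, hence independent of $U$. The chain rule and nonnegativity of mutual information then give the one-sided bound $I(U;Z+e_U)\le\{I(U;N_n+e_U)+h(\delta)\}/(1-\delta)$.

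\emph{What each buys.} Your argument is more elementary, and it has no alphabet-size term $\delta\log q$. The paper's continuity bound is two-sided and black-box: it would cover any perturbation of the joint law, not only conditioning on an event independent of $U$. Its extra $\delta\log q$ term is harmless here, since $\delta\le e^{-c_1d}$ while $q=d+1$.

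The final absorption step is the same in both proofs. The error term is beaten by $W^{-2}\ge e^{-2c_0d}$ because the threshold lemma's constants satisfy $2c_0<c_1$; you correctly read the stronger $c_0<c_1/4$ from its proof. In your proof the error term is $h(\delta)$; in the paper's it is $2h_2(\delta)+2\delta\log q$.

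One small simplification: you do not need to enlarge $d_0$ to control $(1-\delta)^{-1}$. Since $\delta\le e^{-c_1}$ for every $d\ge1$, that factor is at most $(1-e^{-c_1})^{-1}$, which is a universal constant.
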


\begin{proof}
Without truncation, exchangeability gives
\[
 \Pr\{U=j\mid N_n+e_U=y\}=\frac{y_j}{n+1}.
\]
Consequently,
\begin{align*}
 I(U;N_n+e_U)
 &=\E D\!\left(\frac{N_{n+1}}{n+1}\middle\|u_q\right)\\
 &\leq\E\chi^2\!\left(\frac{N_{n+1}}{n+1},u_q\right)
 =\frac d{n+1}\leq\frac1{\alpha W^2}.
\end{align*}
Let $\delta=\Pr\{N_n\notin D_W\}$.  Conditioning changes the joint law of
$(U,N_n+e_U)$ by total variation at most $\delta$.  Finite-input
conditional-entropy continuity therefore gives
\[
 \left|I(U;Z+e_U)-I(U;N_n+e_U)\right|
 \leq2\delta\log q+2h_2(\delta).
\]
Here $\delta\leq e^{-c_1d}$ and $W\leq e^{c_0d}$, where the constants can be
chosen with $2c_0<c_1$.  The final display is therefore $O(W^{-2})$ uniformly
in the stated regime.
\end{proof}

\begin{proof}[Proof of Theorem~\ref{thm:growing-rank-geometry}]
Lemma~\ref{lem:gr-partition-profile} and the operational log-spectral bound
give the lower inequality.

For the upper inequality, put
\[
 W_j=\left\lfloor
 \sqrt{L_{\mathbf d}(S)/d_j}\right\rfloor.
\]
After increasing the universal threshold $W_0$, the assumptions in the main
theorem give $W_j\geq W_0$ and
$W_j\geq\frac12\sqrt{L_{\mathbf d}(S)/d_j}$.  Moreover,
\[
 \prod_jW_j^{d_j}
 \leq\prod_j\left(\frac{L_{\mathbf d}(S)}{d_j}\right)^{d_j/2}=S.
\]
For $d_j\geq d_0$, use the discrepancy weight
$h_j(z)=\Pr\{N_{n_j}=z\}\mathbf1\{z\in D_{W_j}\}$.  Lemma
\ref{lem:gr-threshold} gives a connected state set of size at most
$W_j^{d_j}$, while Lemmas~\ref{lem:gr-selector-info} and
\ref{lem:gr-information} bound its one-step selector information by
$C_1W_j^{-2}$ nats.  For $d_j<d_0$, the fixed-dimensional profile theorem and
the compiler in Section~\ref{sec:general-conservation-rank-app} give the same
bound after maximizing the constants and width thresholds over this finite
set of dimensions.

Weighted cancellation, posterior realization, endpoint conditioning, and
one-shot resolvability proceed groupwise.  Taking the Cartesian product of the
group codebooks multiplies state counts and adds information costs, while
total variation between product laws is at most the sum of the group errors.
Therefore
\begin{align*}
 \mathcal R^{\rm fix}_{\mathbf d}(S)
 &\leq C_2\sum_jW_j^{-2} \\
 &\leq4C_2\sum_j\frac{d_j}{L_{\mathbf d}(S)}
 =4C_2\mathsf N(\mathbf d)S^{-2/r}.
\end{align*}
\end{proof}

\subsection{Partition entropy, covariance volume, and memory}

Let $\Sigma$ be the covariance of one resource increment in an intrinsic
lattice basis and define
\[
 \mathsf G(p,v)=r\det(\Sigma)^{1/r}.
\]
Translation leaves $\Sigma$ unchanged, and a unimodular basis change has
determinant one in absolute value, so $\mathsf G$ is lattice invariant.  For
one uniform $d+1$ category group,
\begin{align*}
 \Sigma_d&=\frac1{d+1}I_d
 -\frac1{(d+1)^2}\mathbf1\mathbf1^{\mathsf T},\\
 \det(\Sigma_d)&=(d+1)^{-(d+1)}.
\end{align*}
For the rank partition $\mathbf d=(d_1,\ldots,d_J)$, block diagonality gives
\[
 \mathsf G_{\mathbf d}
 =r\prod_j(d_j+1)^{-(d_j+1)/r}.
\]
Since
\[
 \frac{\mathsf G_{\mathbf d}}{\mathsf N(\mathbf d)}
 =\prod_j\left[d_j(d_j+1)^{-(d_j+1)/d_j}\right]^{d_j/r}
\]
and every bracket lies in $[1/4,1]$, we obtain
$\mathsf N(\mathbf d)/4\leq\mathsf G_{\mathbf d}\leq\mathsf N(\mathbf d)$.
This proves the covariance statement in the main theorem.  We do not claim
that covariance volume universally determines the rate outside products of
simplex groups.

\begin{lemma}[Resource-neutral label refinement]
\label{lem:gr-label-refinement}
Replace each action $a$ by finitely many labels $(a,j)$, give every copy the
same resource vector $v_a$, and assign labels according to a conditional law
$r(j\mid a)$.  For every state width $S$, this refinement leaves
$\mathcal R^{\rm fix}(S)$ unchanged.
\end{lemma}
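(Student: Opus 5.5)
The plan is to prove the two inequalities $\mathcal R^{\rm fix}_{\rm ref}(S)\le\mathcal R^{\rm fix}(S)$ and $\mathcal R^{\rm fix}(S)\le\mathcal R^{\rm fix}_{\rm ref}(S)$ separately. Each direction transports a width-$S$ exact-support library between the original and the refined alphabet without changing the number of plans or the live width. The TV error either is unchanged or does not increase.

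Write the refined target. Since every copy $(a,j)$ carries resource $v_a$, the refined product law is $\tilde p(a,j)=p_ar(j\mid a)$ and the endpoint event depends only on the action sequence. Conditioning on $E_m$ therefore gives
\[
\tilde P_m(a^m,j^m)=P_m(a^m)\prod_t r(j_t\mid a_t).
\]
The target is thus the original conditioned law followed by the memoryless relabeling channel $\mathcal K$. The projection $\phi(a,j)=a$ maps $\tilde P_m$ back to $P_m$.

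\textbf{Original to refined.} Take a $K$-plan width-$S$ library for $P_m$. In each plan, keep the same states and at each transition that emits $a_t$ additionally draw $j_t\sim r(\cdot\mid a_t)$ with fresh private randomness. That coin is used within one step and never persists, so Definition~\ref{def:fixed-order-executor} charges nothing for it. The live states are unchanged: reachability and co-reachability depend only on the action projection, because $r(j\mid a)>0$ for at least one $j$. Support stays legal because legality depends only on the actions. The mixture output equals $\mathcal K$ applied to the original mixture, and TV contracts under the channel $\mathcal K$. Hence $K_m^{\rm ref}(S,\varepsilon)\le K_m(S,\varepsilon)$.

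\textbf{Refined to original.} Take a refined library and compose every plan with $\phi$, emitting only $a_t$ from each labeled edge. This is again a fixed-order executor with the same state sets. Live states can only shrink, since no edge is removed and merged parallel edges preserve reachability, so the width is at most $S$. Legality is preserved, and TV contracts under the deterministic map $\phi$, with $\phi_\#\tilde P_m=P_m$. Hence $K_m(S,\varepsilon)\le K_m^{\rm ref}(S,\varepsilon)$.

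Combining the two directions gives equality of $K_m$ for every $m$ and $\varepsilon$, so the rates agree. The same argument applies verbatim to heterogeneous position laws and to the entropy version $H(Z)$.

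The only delicate point is the live-width accounting. In the forward direction one must confirm that the label coin does not need to be stored, which holds because $\mathcal K$ is memoryless and each $j_t$ is emitted immediately. In the reverse direction one must confirm that co-reachability under the projection is not enlarged beyond the original live set. I expect this bookkeeping to be the main, if mild, obstacle; the distributional identities themselves are immediate.
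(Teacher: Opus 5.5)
Your proposal is correct and follows the paper's proof. You lift a base library by drawing the label with fresh within-step private randomness, which adds no live state or plan index, and you push a refined library down by the projection $(a,j)\mapsto a$, with total variation contracting in both directions. Your factorization $\tilde P_m(a^m,j^m)=P_m(a^m)\prod_t r(j_t\mid a_t)$ and your live-state bookkeeping make explicit what the paper's proof leaves implicit.
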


\begin{proof}
From a base executor, draw the refinement label privately after emitting the
base action.  This adds neither live state nor a plan index and realizes the
refined conditioned target.  Conversely, project each refined label $(a,j)$
to $a$.  Projection preserves the state width and exact endpoint support, and
total variation contracts under projection.  The two transformations give the
two rate inequalities.
\end{proof}

\begin{corollary}[Memory is priced by rank-partition entropy]
\label{cor:gr-memory-separation}
Fix any sufficiently small constant selector-rate target $\eta>0$.  For rank
partitions whose optimizing widths lie in the regime of
Theorem~\ref{thm:growing-rank-geometry}, the minimum persistent-memory budget
satisfies
\[
 B_\eta(\mathbf d)
 =\frac{r}{2\ln2}H(d_1/r,\ldots,d_J/r)+\Theta_\eta(r).
\]
Thus $k$ equal rank blocks cost
$\frac r2\log_2k+\Theta_\eta(r)$ bits.  The endpoints are
$\frac r2\log_2r+\Theta_\eta(r)$ for $r$ independent binary budgets and
$\Theta_\eta(r)$ for one exclusive $(r+1)$-category quota.  The endpoint
sources can be refined to have the same conservation rank, alphabet size, and
one-step entropy without changing either threshold.
\end{corollary}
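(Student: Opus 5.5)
The corollary follows from Theorem~\ref{thm:growing-rank-geometry} once the two-sided rate law \eqref{eq:growing-rank-law} is inverted in $S$ at a fixed target rate $\eta$. Define $B_\eta(\mathbf d)$ as the least $\lceil\log_2 S\rceil$ for which $\mathcal R^{\rm fix}_{\mathbf d}(S)\le\eta$.

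\textbf{Step 1: inverting the two bounds.}
By the upper inequality, any $S$ with $C\,\mathsf N(\mathbf d)S^{-2/r}\le\eta$ suffices. That is, $\log_2S\ge\frac r2\log_2\mathsf N(\mathbf d)+\frac r2\log_2(C/\eta)$ is enough. By the lower inequality, $\mathcal R^{\rm fix}_{\mathbf d}(S)\le\eta$ forces $\log_2S\ge\frac r2\log_2\mathsf N(\mathbf d)+\frac r2\log_2(c/\eta)$. Monotonicity of the rate in $S$ makes this threshold well defined; a larger live width can only help, by padding with unused states. Since
\[
\log_2\mathsf N(\mathbf d)=H(\alpha_1,\ldots,\alpha_J)/\ln2,
\]
both thresholds equal $\frac r{2\ln2}H(\alpha)$ up to an additive $\frac r2\log_2(C/c)+O(1)$. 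This difference is $\Theta_\eta(r)$; more precisely, the additive term is $\frac r2\log_2(1/\eta)+O(r)$, which is $\Theta_\eta(r)$ for $\eta$ small.

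\textbf{Step 2: checking the optimizing widths are admissible.}
The theorem applies only when every $w_j\ge W_0$ and $w_j\le e^{c_0d_j}$ for $d_j\ge d_0$. At the optimizing scale, $S^{2/r}\asymp\mathsf N(\mathbf d)/\eta$, so
\[
L_{\mathbf d}(S)=(\prod_j d_j^{d_j})^{1/r}S^{2/r}\asymp r/\eta
\qquad\text{and}\qquad
w_j^2\asymp r/(\eta d_j).
\]
The hypothesis ``optimizing widths lie in the regime'' is exactly this requirement, so the argument needs only that both thresholds from Step 1 fall in the regime. They differ from the optimizing scale by a constant factor in $S^{2/r}$, and hence by constant factors in each $w_j$. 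This is the one delicate point: constant-factor slack must not push the $w_j$ outside $[W_0,e^{c_0d_j}]$. Choosing $\eta$ sufficiently small (``sufficiently small fixed target'') enlarges every $w_j\gtrsim\eta^{-1/2}$ beyond $W_0$ with room to spare. The upper constraint is an assumption on the partition family.

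\textbf{Step 3: specializations.}
Equal blocks $d_j=r/k$ give $H=\ln k$, hence $\frac r2\log_2k$ bits. The partition $(1,\ldots,1)$ gives $H=\ln r$. The single block $(r)$ gives $H=0$; there $w_1^2\asymp 1/\eta$ with $d_1=r$, so the upper constraint $w_1\le e^{c_0r}$ holds for large $r$.

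\textbf{Step 4: equalizing the endpoint sources.}
Lemma~\ref{lem:gr-label-refinement} supplies this. Duplicate labels in the binary-budget source and the $(r+1)$-category source so that both reach a common alphabet size and equal one-step entropy, with resource vectors unchanged. The rank is unchanged by construction. Since the refinement leaves $\mathcal R^{\rm fix}(S)$ unchanged for every $S$, the thresholds $B_\eta$ are unchanged.
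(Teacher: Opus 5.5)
Your proposal is correct and follows the paper's route. You invert the two-sided rate law in $B=\log_2 S$ at fixed $\eta$, using $\log_2\mathsf N(\mathbf d)=H/\ln 2$; you specialize the partition entropy to equal blocks and to the two endpoints; and you equalize the endpoint sources by resource-neutral label duplication via Lemma~\ref{lem:gr-label-refinement}, which the paper makes explicit by duplicating up to $Q_r=\operatorname{lcm}(2^r,r+1)$ uniform labels. Your Step~2 check, that the constant-factor slack keeps every $w_j$ inside the theorem's regime and that small $\eta$ forces $w_j\ge W_0$, is a detail the paper's one-line proof leaves implicit.
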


\begin{proof}
Solving Eq.~\eqref{eq:growing-rank-law} for $B=\log_2S$ at fixed $\eta$ gives
the first display; universal multiplicative constants change $B$ by only
$O(r)$.  For $k$ equal blocks the normalized rank partition is uniform, so
its entropy is $\log k$.  The two endpoints correspond to $k=r$ and $k=1$.

For the refinement claim, set
\[
 Q_r=\operatorname{lcm}(2^r,r+1).
\]
Duplicate every binary-vector action $Q_r/2^r$ times and every simplex action
$Q_r/(r+1)$ times, with uniform private labels.  Both refined one-step sources
are uniform on $Q_r$ symbols and hence have entropy $\log_2Q_r$; Lemma
\ref{lem:gr-label-refinement} preserves their operational rates.
\end{proof}

\subsection{Local statistics do not reveal the memory class}
\label{sec:gr-local-twins}

\begin{proof}[Proof of Theorem~\ref{thm:local-conservation-twins}]
Use the common label
\[
 (u,c)\in\{0,1\}^r\times[r+1].
\]
For the fragmented target, the \(r\) columns of \(u\) are independent
uniform balanced binary strings and the \(c\)-labels are iid uniform.  For
the monolithic target, \(c\) is a uniform string with exactly \(m/(r+1)\)
copies of each category and the \(u\)-labels are iid uniform.  These are
resource-neutral refinements of rank partitions \((1,\ldots,1)\) and
\((r)\), so Lemma~\ref{lem:gr-label-refinement} preserves their operational
rates and both one-coordinate laws are uniform on the common alphabet.

For an urn of \(m\) balls with \(a\) colors, the relative entropy between
the ordered \(k\)-draw laws without and with replacement is bounded by
\[
 D(H_{m,k}\Vert M_{m,k})
 \leq\frac{(a-1)k(k-1)}
 {2(m-1)(m-k+1)}
 \label{eq:gr-stam}
\]
\citep[Eq.~(6)]{johnson2025sampling}, restating the sampling bound of
\citet{stam1978distance}.  Relative entropy adds across the \(r\)
independent balanced binary columns.  Thus the fragmented-to-iid divergence
and the monolithic-to-iid divergence are each at most
\[
 \frac{r\,k(k-1)}{2(m-1)(m-k+1)}.
\]
The conditional uniform copy labels preserve these divergences.  Pinsker's
inequality for each target followed by the triangle inequality proves the
displayed upper bound in the theorem.

It remains to show that its scale is sharp.  Project the common label onto
\(u=(u_1,\ldots,u_r)\).  For each bit column \(j\), let \(S_j\) be the
number of ones among the inspected coordinates and set
\[
 T=\sum_{j=1}^r(S_j-k/2)^2.
\]
Under the monolithic target the \(S_j\) are independent
\(\operatorname{Bin}(k,1/2)\) variables.  Under the fragmented target they
are independent \(\operatorname{Hypergeom}(m,m/2,k)\) variables.  Hence
\begin{align*}
 \mathbb E_{\rm mono}T&=\frac{rk}{4},&
 \mathbb E_{\rm frag}T&=\frac{rk(m-k)}{4(m-1)},\\
 \mathbb E_{\rm mono}T-\mathbb E_{\rm frag}T
 &=\frac{rk(k-1)}{4(m-1)}.
\end{align*}
For a binomial column,
\(\operatorname{Var}((S_j-k/2)^2)=k(k-1)/8\).
The sampling-without-replacement Hoeffding bound gives
\(\Pr\{|S_j-k/2|\ge t\}\le2e^{-2t^2/k}\); integrating this tail yields
\(\mathbb E(S_j-k/2)^4\le k^2\).  Therefore both variances of \(T\) are
\(O(rk^2)\).  A test that thresholds \(T\) halfway between the two means has
total error
\[
 O\!\left(\frac{m^2}{rk^2}\right).
\]
It vanishes when \(k\sqrt r/m\to\infty\), so data processing implies that
the full local-marginal TV tends to one.  The upper bound tends to zero in
the opposite regime.

Finally, Corollary~\ref{cor:gr-triangular-separation} supplies, along one
admissible blocklength-rank sequence, the two memory thresholds stated in the
main paper.  For example \(k=\lfloor m/r\rfloor\) still lies in the
indistinguishable regime, since \(k\sqrt r/m=O(r^{-1/2})\), while the memory
ratio diverges as \(\Theta(\log r)\).
\end{proof}

\paragraph{Bounded local transcripts.}
If an observer receives \(n\) independent fresh outputs and adaptively chooses
at most \(k\) coordinates of each, sequential maximal coupling bounds the TV
between its two transcript laws by \(n\) times the one-query bound in
Theorem~\ref{thm:local-conservation-twins}.  Hence local evaluation cannot
certify the execution-memory class whenever that product vanishes.  This is
an operational consequence of the conservation law, not a new finite
exchangeability theorem.

\subsection{A simultaneous blocklength-rank window}

The operational rate above takes block length to infinity for each fixed
conservation geometry.  We now give a conservative window in which the same
law holds when geometry and block length vary together.  The condition is not
optimized; its role is to rule out an order-of-limits artifact.

For one rank-$d$ threshold compiler, put $q=d+1$,
\begin{align*}
 n&=\lfloor\alpha dW^2\rfloor,
 &N&=2n,\\
 a&=q^{-2}W^{-d},
 &\lambda&=a^N,\\
 L&=d\log_2W+\log_2q,\\
 \Psi(d,W)&=C N^2W^4(L+1)^2\lambda^{-2},
\end{align*}
where $\alpha$ is from Lemma~\ref{lem:gr-threshold} and $C$ is a sufficiently
large universal constant.

\begin{lemma}[Uniform information-density concentration]
\label{lem:gr-triangular-information}
Let $\imath_m$ be the one-group compiler information density before endpoint
conditioning, and let its one-step mean satisfy
$c_{d,W}\leq C_0W^{-2}$.  Whenever $m\geq\Psi(d,W)$,
\begin{align*}
 &\Pr\!\left\{\imath_m>
 m(c_{d,W}+C_0W^{-2})\right\}\\
 &\qquad\leq4e^{-m/\Psi(d,W)}+W^d2^{-c m/W^2}.
\end{align*}
After conditioning on the balanced endpoint, the right side is multiplied by
at most $(m+1)^q$, with an additional term of the form
$(m+1)^q2^{-c m/W^2}$.
\end{lemma}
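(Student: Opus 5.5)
We prove Lemma~\ref{lem:gr-triangular-information} by splitting the unconditioned information density as in \eqref{eq:app-general-info-density}:
\[
 \imath_m=\sum_{t=1}^m\ell_t+\log_2\frac{\pi(A)}{P(A\mid B^m)}.
\]
The two terms are treated separately, with a threshold of $mC_0W^{-2}/2$ allotted to each. The boundary term is handled exactly as in \eqref{eq:finite-a-tail-v2}. Since $A$ takes at most $|D_W|\le W^d$ values, we have $P\{-\log_2P(A\mid B^m)>u\}\le W^d2^{-u}$. Also $\log_2\pi(A)\le0$. Taking $u=cm/W^2$ gives the term $W^d2^{-cm/W^2}$ in the statement.

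The additive part requires fully explicit mixing constants, which is where $\lambda=a^N$ enters.

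\textbf{Edge and density bounds.} We first record, following Lemma~\ref{lem:app-heterogeneous-uniformity}, that every allowed edge of the stationary discrepancy kernel has probability at least of order $a=q^{-2}W^{-d}$. Here $p_x=p_b=1/q$, and $h(y)/W(x,z)\ge h_-/h_+\ge W^{-d}$, because $h$ is a probability on the superlevel set $D_W$, so $h_-\ge W^{-d}$ and $h_+\le1$. Every positive increment satisfies $|\ell_t|\le L=d\log_2W+\log_2q$.

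\textbf{Block minorization.} The difference graph on $D_W$ is connected (Lemma~\ref{lem:gr-threshold}). A transfer path between two histograms with $n$ draws has length at most $2n=N$, padded by self-loops. Hence $N$-step blocks satisfy a Doeblin minorization with constant at least $|D_W|\lambda\ge\lambda$.

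\textbf{Martingale concentration.} We decompose $\sum_t(\ell_t-c_{d,W})$ through the Poisson equation $g-Mg=\bar\ell-c_{d,W}$, where $\bar\ell(z)=\E[\ell_t\mid D_{t-1}=z]$. Doeblin gives $\|g\|_\infty\le 2NL/\lambda$. The sum becomes a martingale with increments bounded by $O(NL/\lambda)$, plus a boundary remainder $O(NL/\lambda)$. Azuma--Hoeffding at deviation $mC_0W^{-2}/2$ gives a tail $2\exp\{-m^2C_0^2W^{-4}/(8m(NL/\lambda)^2)\}$. This has the form $2e^{-m/\Psi}$ for $\Psi=CN^2W^4(L+1)^2\lambda^{-2}$. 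The requirement $m\ge\Psi$ absorbs the boundary remainder into the threshold. Doubling for the two martingale pieces produces the constant $4$.

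The step most likely to fail is making the Poisson-equation bound free of hidden dependence on $d$. I would try first to verify directly that the $N$-step coupling time yields $\|g\|_\infty\le 2N\max|\bar\ell|/\lambda$, with no factor $|D_W|$ entering.

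\textbf{Endpoint conditioning.} For the balanced endpoint, $P(E_m)\ge(m+1)^{-(q-1)}$, since the type class is the largest among at most $(m+1)^{q-1}$ types. Hence any unconditioned event probability is multiplied by at most $(m+1)^q$ under $P(\cdot\mid E_m)$. By the identity \eqref{eq:conditioning-id-v2}, the term $-\log_2G_m$ is controlled by \eqref{eq:event-posterior-tail-v2} at threshold $cm/W^2$. This yields the extra term $(m+1)^q2^{-cm/W^2}$.
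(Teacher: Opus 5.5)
Your proposal is correct and follows the paper's proof. It uses the same edge floor $a=q^{-2}W^{-d}$, paths of length at most $N=2n$ through a common mode giving the minorization constant $\lambda=a^N$, the $|D_W|\le W^d$ tail for the $A$-boundary term, and the type-counting bound on $P(E_m)$ together with the $G_m$ tail for endpoint conditioning. The only difference is that where the paper cites the Glynn--Ormoneit Hoeffding inequality for uniformly recurrent chains, you unpack it into its Poisson-equation and Azuma argument. The worry you flag does not materialize: the Doeblin contraction bounds $\|g\|_\infty$ by $N\log_2 q/\lambda$ with no factor of $|D_W|$.
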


\begin{proof}
Every positive histogram weight in $D_W$ lies in $[W^{-d},1]$.  Consequently,
each allowed category transfer, and a self-loop used for padding, has marginal
discrepancy-transition probability at least $a=q^{-2}W^{-d}$.  Every retained
histogram reaches a mode in at most $n$ balancing transfers without leaving
$D_W$.  Reversing the corresponding path for a second histogram connects any
pair in at most $N=2n$ transfers.  Thus, for the uniform law $\nu$ on $D_W$,
\[
 P^N(z,\cdot)\geq\lambda\nu(\cdot),\qquad\lambda=a^N.
\]

Write
$\ell_t=\log_2[Q(B_t\mid X_t,D_{t-1})/p_{B_t}]$.  On every sampled edge,
$|\ell_t|\leq L$, while
$g(z)=\mathbb E[\ell_t\mid D_{t-1}=z]$ lies in $[0,\log_2q]$.
The uniformly recurrent Markov-chain Hoeffding inequality
\citep{glynnOrmoneit2002} applied to $\sum_tg(D_{t-1})$, together with
Azuma's inequality for $\sum_t(\ell_t-g(D_{t-1}))$, gives
\begin{align*}
 &\Pr\!\left\{\sum_t\ell_t-mc_{d,W}>c mW^{-2}\right\}\\
 &\qquad\leq4\exp\!\left\{-
 \frac{c m\lambda^2}{N^2W^4(L+1)^2}\right\}.
\end{align*}
The remaining information-density boundary term is at most
$-\log_2P(A\mid B^m)$.  Since $|D_W|\leq W^d$, its upper tail at level
$c m/W^2$ is at most $W^d2^{-c m/W^2}$.

The balanced multinomial histogram is a mode among at most $(m+1)^q$ types,
so its endpoint probability is at least $(m+1)^{-q}$.  Conditioning therefore
multiplies the preceding tail by at most $(m+1)^q$.  Finally, for
$G_m=P(E_m\mid A,B^m)$,
\[
 P_{A,B^m\mid E_m}\{G_m<2^{-u}\}
 \leq2^{-u}/P(E_m),
\]
which gives the additional term.
\end{proof}

\begin{theorem}[Triangular-array grouped law]
\label{thm:gr-triangular}
Let $(g_m,d_m,W_m)$ vary along block lengths divisible by $d_m+1$, with
$d_m\geq d_0$ and $W_0\leq W_m\leq e^{c_0d_m}$.  Assume the remaining
dimension-uniform compiler conditions in Theorem~\ref{thm:growing-rank-geometry}
and
\begin{equation}
 \begin{aligned}
  &\log g_m+(d_m+1)\log(m+1)+d_m\log W_m\\
  &\qquad=o\!\left(\frac{m}{\Psi(d_m,W_m)}\right).
 \end{aligned}                                           \label{eq:gr-triangular-condition}
\end{equation}
Then width-$W_m^{g_md_m}$ exact-support plan libraries attain vanishing total
variation error with selector rate at most $C g_mW_m^{-2}$, and every such
vanishing-error library has rate at least $c g_mW_m^{-2}$.  In particular,
the simpler conditions
\begin{equation}
 d_m^2W_m^2\log(d_mW_m)=o(\log m),
 \qquad\log g_m=o(m^{1/2})                              \label{eq:gr-triangular-simple}
\end{equation}
are sufficient.
\end{theorem}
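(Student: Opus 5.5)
The plan is to prove the two directions separately: the achievability side (upper bound $Cg_mW_m^{-2}$) by a one-shot resolvability argument with explicit finite-$m$ error terms, and the converse (lower bound $cg_mW_m^{-2}$) from the log-spectral trace bound, which holds uniformly in $m$. For the converse I would use the non-asymptotic form of Lemma~\ref{lem:gr-log-spectral}. Any width-$S$ library with $S=W_m^{g_md_m}$ and TV error $\epsilon_m$ needs at least $(1-\epsilon_m)P(E_m)\rho_*(S)^{-m/2}$ plans. Lemma~\ref{lem:gr-profile-lower} gives $-\log\rho_*(S)\ge cg_mW_m^{-2}$ uniformly in $(g,d,W)$ once $W\ge W_0$. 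The balanced endpoint has probability at least $(m+1)^{-g_mq_m}$ (mode among types, groupwise). So the rate is at least $cg_mW_m^{-2}/(2\ln2)-g_m(d_m+1)\log_2(m+1)/m-o(1)$. Condition \eqref{eq:gr-triangular-condition} makes the correction $o(g_mW_m^{-2})$, because $m/\Psi\le m W^{-2}$ up to constants. Hence the lower bound holds with a possibly smaller $c$.

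For achievability I would run the product compiler from the proof of Theorem~\ref{thm:growing-rank-geometry}, with $g_m$ identical rank-$d_m$ groups and threshold sets $D_{W_m}$ of Lemma~\ref{lem:gr-threshold}. This gives width $\prod|D_{W_m}|\le W_m^{g_md_m}$, and by Proposition~\ref{prop:posterior-compiler} every plan is an exact-support executor. The total information density is the sum of the $g_m$ independent groupwise densities, each conditioned on its own balanced endpoint; the endpoint event factorizes over groups, so the conditioning identity \eqref{eq:conditioning-id-v2} applies group by group. Set the threshold $r_m=mg_m(c_{d,W}+2C_0W^{-2})$ in the one-shot bound \eqref{eq:finite-resolvability-v2} and take $M=2^{r_m+mg_mC_0W^{-2}}$. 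The covering term is then at most $2^{-mg_mC_0W^{-2}/2}\to0$.

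The first term in \eqref{eq:finite-resolvability-v2} is at most a union over groups of the per-group tail. By Lemma~\ref{lem:gr-triangular-information} this is bounded by
\[
 g_m(m+1)^{q_m}\bigl[4e^{-m/\Psi}+(W^{d}+1)2^{-cm/W^2}\bigr].
\]
Taking logarithms, \eqref{eq:gr-triangular-condition} says precisely that $\log g_m+q_m\log(m+1)+d_m\log W_m=o(m/\Psi)$, so each piece vanishes. The $2^{-cm/W^2}$ piece is dominated because $\Psi\gtrsim W^2$. The resulting rate is $g_m(c_{d,W}+3C_0W^{-2})\le Cg_mW_m^{-2}$, using $c_{d,W}\le C_1W^{-2}$ from Lemma~\ref{lem:gr-information}.

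The main obstacle is the bookkeeping in that union bound. The per-group tail threshold must be the deviation of a single group, not of the product, and summing groups must not inflate the slack $C_0W^{-2}$. I would allocate to each group the threshold $m(c_{d,W}+C_0W^{-2})$ and sum, so that the product threshold is exactly $g_m$ times larger.

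Finally, for the sufficient conditions \eqref{eq:gr-triangular-simple}, I would bound $\log\Psi$. Using $\lambda^{-2}=(q^2W^d)^{4n}$ with $n\asymp dW^2$, we get $\log\Psi=O(d^2W^2\log(dW))=o(\log m)$, hence $\Psi=m^{o(1)}$. Then $m/\Psi\ge m^{1-o(1)}$, which dominates $\log g_m=o(m^{1/2})$, $(d+1)\log(m+1)=o(\log^2m)$, and $d\log W$. This verifies \eqref{eq:gr-triangular-condition}.
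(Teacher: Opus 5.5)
Your argument is correct, and its ingredients are the same as the paper's:
\begin{itemize}
\item for the converse, the finite-block moving-layer count with $P(E_m)\ge(m+1)^{-g_m(d_m+1)}$ together with Lemma~\ref{lem:gr-profile-lower};
\item for achievability, Lemma~\ref{lem:gr-triangular-information} together with one-shot resolvability;
\item for the simplified conditions, the bound $\log\Psi=O(d^2W^2\log(dW))$.
\end{itemize}

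The one real difference is how you combine the $g_m$ groups on the achievability side.

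\textbf{The paper's route.} It builds one codebook per group and makes each group's TV error $o(1/g_m)$ by taking $\log_2M=u+2\log_2g_m+\omega_m$. It then takes the Cartesian product and uses subadditivity of TV over product laws. This keeps the library product-structured: each group's plan index is selected independently, as in the groupwise proof of Theorem~\ref{thm:growing-rank-geometry}. The cost is an extra $2g_m\log_2g_m/m+g_m\omega_m/m$ in the rate, which must then be checked to be $o(g_mW_m^{-2})$.

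\textbf{Your route.} You run resolvability once on the joint channel, whose endpoint-conditioned information density is the sum of the groupwise densities. You bound its tail by a union over groups. The covering term becomes $\frac12\,2^{-mg_mC_0W_m^{-2}/2}$, with no $\log g_m$ overhead. The price is a codebook whose plans are correlated across groups. Each plan is still a product of width-$|D_{W_m}|$ exact-support executors, so the width bound $W_m^{g_md_m}$ is unaffected.

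In both routes the hypothesis $\log g_m=o(m/\Psi)$ serves the same purpose: controlling the union of $g_m$ per-group tails.

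\textbf{One correction.} In your converse, the residual term is $\frac1m\log_2(1-\varepsilon_m)=o(1/m)$, not a generic $o(1)$. This matters because $g_mW_m^{-2}$ may itself tend to zero along the array. The term is negligible only because $m/W_m^2\ge m/\Psi\to\infty$.
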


\begin{proof}
For one group, apply the one-shot truncation inequality
\[
 \mathbb E\|\widehat P_m-P_m\|_{\rm TV}
 \leq P\{\imath_m>u\}+\tfrac12\sqrt{2^u/M}
\]
with Lemma~\ref{lem:gr-triangular-information} and
$u=m(c_{d_m,W_m}+C_0W_m^{-2})$.  The condition makes the first term
$o(1/g_m)$.  Choose
\begin{align*}
 \log_2M&=u+2\log_2g_m+\omega_m,\\
 \omega_m&\longrightarrow\infty,
 &\omega_m&=o(m/W_m^2).
\end{align*}
Then the square-root term is also $o(1/g_m)$ and
$\log_2M=O(m/W_m^2)$.  Taking the Cartesian product of the $g_m$ one-group
codebooks multiplies state and plan counts, while product-law TV is at most
the sum of the group errors.  The extra $g_m\log g_m/m$ rate is
$o(g_mW_m^{-2})$ under the same condition.

For the converse, the finite-block moving-layer inequality and
Lemma~\ref{lem:gr-profile-lower} give
\begin{align*}
 \frac1m\log_2K_m
 &\geq c g_mW_m^{-2}
 -\frac{g_m(d_m+1)\log_2(m+1)}m\\
 &\qquad+\frac1m\log_2(1-\varepsilon_m),
\end{align*}
where $\varepsilon_m\to0$ is the library's total-variation error.
The correction is $o(g_mW_m^{-2})$ under
Eq.~\eqref{eq:gr-triangular-condition}.  Finally,
$\log\Psi(d,W)=O(d^2W^2\log(dW))$.  Under
Eq.~\eqref{eq:gr-triangular-simple}, $\Psi(d_m,W_m)=m^{o(1)}$, so the right
side of Eq.~\eqref{eq:gr-triangular-condition} is $m^{1-o(1)}$ and dominates
its left side.
\end{proof}

\begin{corollary}[The separation is not an iterated-limit artifact]
\label{cor:gr-triangular-separation}
Let $r_m\to\infty$ satisfy
$r_m^2\log(r_m+1)=o(\log m)$ and fix a sufficiently small selector-rate
target $\eta>0$.  Along this same blocklength sequence, $r_m$ independent
balanced binary budgets require
$\frac{r_m}{2}\log_2r_m+\Theta_\eta(r_m)$ memory bits, whereas one balanced
$(r_m+1)$-category quota requires only $\Theta_\eta(r_m)$ bits.  Both use
vanishing-error libraries whose every component has exact legal support.
\end{corollary}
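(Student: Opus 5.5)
The plan is to apply Corollary~\ref{cor:gr-memory-separation} inside the simultaneous window of Theorem~\ref{thm:gr-triangular}. The two targets are the rank partitions $\mathbf d^{\rm frag}=(1,\ldots,1)$ and $\mathbf d^{\rm mono}=(r_m)$, taken along the given blocklength sequence.

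\textbf{Step 1 (admissible widths).} Fix the small rate target $\eta$. For each partition, choose the state width $S$ so that $\mathsf N(\mathbf d)S^{-2/r}$ is a constant multiple of $\eta$. For the monolithic partition $\mathsf N=1$, so we take a single group with $g_m=1$, $d_m=r_m$, and $W_m$ a large fixed constant $W$. For the fragmented partition $\mathsf N=r$, so we take $g_m=r_m$ groups of rank $d=1$ with $W_m\asymp\sqrt{r_m}$. This is the point where the fixed-dimensional compiler of Section~\ref{sec:general-conservation-rank-app} replaces the threshold compiler, which only applies for $d\ge d_0$. I would state a rank-one analogue of Lemma~\ref{lem:gr-triangular-information} with constants depending only on $W$ polynomially; the rank-one chain on an interval of length $W^2$ still has a Doeblin constant of the form $a^{O(W^2)}$.

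\textbf{Step 2 (verify the window).} Check the simple condition \eqref{eq:gr-triangular-simple} for each case, using the hypothesis $r_m^2\log(r_m+1)=o(\log m)$. In the monolithic case $d_m^2W^2\log(d_mW)=O(r_m^2\log r_m)=o(\log m)$, and $W\le e^{c_0r_m}$ holds eventually. In the fragmented case $d=1$ and $W_m^2\log W_m=O(r_m\log r_m)=o(\log m)$, while $\log g_m=\log r_m=o(m^{1/2})$. For $d=1<d_0$, the condition $d_m\ge d_0$ of Theorem~\ref{thm:gr-triangular} is replaced by the rank-one version from Step~1.

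\textbf{Step 3 (matching rates and memory bits).} Theorem~\ref{thm:gr-triangular} then gives vanishing-error exact-support libraries with rate $\le C g_mW_m^{-2}$ and a converse of rate $\ge c g_mW_m^{-2}$ for every vanishing-error library at that width. Inverting this at target $\eta$ gives the memory bits. In the fragmented case $\log_2S=r_m\log_2W_m=\frac{r_m}{2}\log_2r_m+\Theta_\eta(r_m)$. In the monolithic case $\log_2S=r_m\log_2W=\Theta_\eta(r_m)$. The lower bounds on $B$ use monotonicity of the converse in $S$ together with the finite-block log-spectral bound, Lemma~\ref{lem:gr-partition-profile}. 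Its error term $g_m(d_m+1)\log(m+1)/m$ is negligible here, so any smaller width forces rate above $\eta$ up to constants; this contributes only $\Theta_\eta(r_m)$ bits.

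\textbf{Main obstacle.} The hardest step is the uniform rank-one concentration at growing width $W_m\asymp\sqrt{r_m}$ in the fragmented case. I would adapt the proof of Lemma~\ref{lem:gr-triangular-information} with $D$ a sine-weighted interval in place of $D_W$, giving $\log\Psi=O(W^2\log W)$. I also need to check that the multiplicative endpoint-conditioning factor $(m+1)^{2g_m}$ across $g_m=r_m$ groups is still dominated by $m/\Psi$.
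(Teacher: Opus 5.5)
Your proposal is correct and follows the paper's own route. For the exclusive quota you apply Theorem~\ref{thm:gr-triangular} with $g_m=1$, $d_m=r_m$, and constant $W=\Theta_\eta(1)$; for the binary budgets you run the sine-square corridor groupwise with $g_m=r_m$ and $W_m=\Theta_\eta(\sqrt{r_m})$; and the finite-block log-spectral converse gives both memory lower bounds.

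The only difference is quantitative. Each corridor has $\Theta(W)$ states, not an interval of length $W^2$, so its path diameter is $O(W)$ and the paper gets $\log\Psi=O(W\log W)$. Your cruder Doeblin block length $O(W^2)$ gives $\log\Psi=O(W^2\log W)=O(r_m\log r_m)$, which still satisfies the window condition under $r_m^2\log(r_m+1)=o(\log m)$.
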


\begin{proof}
For the exclusive quota, use Theorem~\ref{thm:gr-triangular} with $g_m=1$,
$d_m=r_m$, and constant $W_m=\Theta_\eta(1)$.  For the binary family, apply
the same minorization proof to the sine-square corridor groupwise.  Its
smallest positive profile weight is $\Theta(W^{-2})$ and its path diameter is
$O(W)$, so the corresponding concentration cost satisfies
$\log\Psi_2(W)=O(W\log W)$.  Take $g_m=r_m$ and
$W_m=\Theta_\eta(\sqrt{r_m})$.  The finite-block log-spectral converse gives
the matching lower bounds.  The displayed growth condition is stronger than
both concentration requirements.
\end{proof}

\paragraph{Scope and finite checks.}
The partition-entropy theorem is dimension-uniform only in its stated
interior compiler regime; no universal covariance descriptor is claimed
outside products of simplex groups.  The triangular condition is deliberately
conservative rather than a sharp finite-block threshold.  Spectral-profile
bounds, multinomial concentration, Markov concentration, and resolvability are
borrowed tools.  The new claim is their matching exact-support composition,
the entropy law across rank decompositions, and its finite-block persistence.
Independent finite enumeration verifies positive semidefiniteness, the trace
inequality, threshold-set cardinality and connectivity, the
selector-information identity, integer allocation, and mixed-group killed
kernels; these checks verify the algebra but are not used as proofs.

%% file: sections/soft_frontier_appendix.tex

\section{Zero-distortion support law for programmable order}
\label{app:soft-frontier}

Let
\begin{align*}
 \Omega_m&=\{x\in\{0,1\}^m:\textstyle\sum_i x_i=m/2\},\\
 \lambda_S&=2\cos\frac{\pi}{2S+1}.
\end{align*}
and put
\[
 \iota_S=1-\log_2\lambda_S
          =-\log_2\cos\frac{\pi}{2S+1}.
\]

\begin{lemma}[Sharp zero-distortion support fraction]
\label{lem:centered-corridor-support}
For odd $S$, every fixed-order width-$S$ exact-support binary plan language
$\mathcal L_m\subseteq\Omega_m$ satisfies
\[
 |\mathcal L_m|\le \lambda_S^m.
\]
There is a centered width-$S$ count corridor $\mathcal C_{m,S}$ such that,
along even $m$,
\[
 \frac{|\mathcal C_{m,S}|}{|\Omega_m|}
 =2^{-m\iota_S+o(m)}.
\]
Its uniform law is generated by a fixed-order width-$S$ suffix-count dynamic
program.
\end{lemma}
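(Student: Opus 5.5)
The statement has two parts: a support upper bound for one plan, and the existence of a corridor with the stated exponential fraction. The first part is already available. Lemma~\ref{lem:moving-layer-support} with $q=2$ bounds the support of any fixed-order width-$S$ exact-support plan by $[2\cos(\pi/(2S+1))]^m=\lambda_S^m$, and a plan language is exactly such a support. The argument is the prefix-count injection from Definition~\ref{def:fixed-order-executor}. Each live state carries one prefix count, so layer $t$ represents a set $A_t$ of at most $S$ counts. Supported words are paths through the incidence matrices $M_{A_{t-1},A_t}$, and the binary case of Lemma~\ref{lem:moving-layer-deficit} bounds each $\|M_{A_{t-1},A_t}\|_2$ by the spectral radius of the path on $2S$ vertices.

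For the construction, I write $S=2R+1$ and let $\mathcal C_{m,S}$ be the set of balanced words whose running discrepancy $d_t=\sum_{s\le t}(2x_s-1)$ satisfies two conditions:
\begin{itemize}
\item $d_t$ stays in a window of $2S$ consecutive integers containing $0$;
\item $d_t$ has the parity forced by $t$.
\end{itemize}
Equivalently, at each time the prefix count lies in a fixed set of at most $S$ values. Because of the parity alternation, the lattice points used at even times and at odd times together form a path graph on $2S$ vertices. A word in the corridor is then a walk of length $m$ on $P_{2S}$ from a central vertex back to itself. The suffix-count dynamic program $N_t(u)$ counts the legal completions from state $u$ at time $t$, and the executor emits $x$ with probability $N_{t+1}(u')/N_t(u)$. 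This uses width at most $S$ per layer and samples the uniform law on $\mathcal C_{m,S}$ exactly, with support in $\Omega_m$.

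The counting step is where I expect the most care. I need to show that the number of closed walks of length $m$ on $P_{2S}$ between fixed interior vertices is $\lambda_S^{m+o(m)}$ along even $m$. The spectral decomposition of the path adjacency gives $\sum_j \phi_j(a)\phi_j(b)\mu_j^m$, where $\mu_j=2\cos(\pi j/(2S+1))$. The terms with $\mu_j$ and $-\mu_j$ pair up, and along even $m$ they reinforce, so they cannot cancel. The Perron eigenvector is strictly positive, so the leading coefficient $\phi_1(a)^2$ is nonzero. This gives $|\mathcal C_{m,S}|=\Theta_S(\lambda_S^m)$.

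It remains to divide by $|\Omega_m|=\binom{m}{m/2}=2^{m-O(\log m)}$. This gives the fraction $2^{-m(1-\log_2\lambda_S)+o(m)}=2^{-m\iota_S+o(m)}$, which matches the upper bound and shows it is sharp. One detail needs checking: the window must contain the endpoint discrepancy $0$ with the right parity at even $m$. Centering the window so that $0$ is an interior vertex of the even layer handles this.
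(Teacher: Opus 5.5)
Your proposal is correct and follows the paper's proof. The upper bound is the binary moving-layer support bound; your parity window of $2S$ discrepancy values is exactly the paper's corridor of the $S$ prefix counts nearest $t/2$ (with ties at odd $t$ broken upward); and the uniform law comes from the same suffix-count dynamic program. The only difference is the counting step: you unfold the paper's alternating two-step transfer $UL$ into closed walks on the path $P_{2S}$ and use its explicit spectrum. Because $m$ is even, every term is nonnegative, which handles the bipartite periodicity and gives $|\mathcal C_{m,S}|=\Theta_S(\lambda_S^m)$ directly. This also makes the paper's separate treatment of the $O(S)$ endpoint layers unnecessary.
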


\begin{proof}
The upper bound is Lemma~\ref{lem:moving-layer-support}: distinct reachable
prefix sums cannot merge under exact support, and the sharp moving-layer path
bound is $\lambda_S^m$.

For attainment, keep at every interior layer the $S$ feasible prefix counts
nearest to half the elapsed time, truncating only in the first and last $O(S)$
layers.  Away from those endpoints the two-step transfer alternates
$U=\tfrac12(I+N)$ and $L=JUJ$, where $N$ is the upper shift and $J$ reverses
coordinates.  The standard tridiagonal determinant recursion gives
\[
 \rho(UL)^{1/2}=\cos\frac{\pi}{2S+1}.
\]
Perron--Frobenius theory therefore makes the fair-iid probability of the
corridor $\cos(\pi/(2S+1))^{m+o(m)}$; the $O(S)$ endpoint layers change only
the subexponential factor.  Every fair binary word has mass $2^{-m}$, while
$|\Omega_m|=2^{m-o(m)}$, which gives the displayed support fraction.
Finally, suffix path counts within the corridor determine the exact next-bit
probabilities from the current live count, yielding at most $S$ states per
layer and the uniform law on $\mathcal C_{m,S}$.
\end{proof}

This lemma is the only neighborhood fact needed for the
programmable-order result.  No positive-distortion covering theorem is claimed
in the main paper.

%% file: sections/orbit_symmetrization_appendix.tex
\section{Order-programmable achievability by orbit symmetrization}
\label{app:orbit-symmetrization}

An \emph{order-programmable} plan fixes a permutation before private coins are
drawn and emits addressable coordinates in that order.  This changes the
available exact-support languages without increasing live-state width.

\begin{corollary}[Sharp order-programmable generation rate]
\label{cor:order-programmable-frontier}
For odd $S$, the minimum selector rate for TV-accurate generation of the
uniform balanced binary law is
\begin{align*}
 \mathcal R^{\rm ord}_2(S)
 &=\iota_S=-\log_2\cos\frac{\pi}{2S+1},\\
 &=\frac{\pi^2}{8\ln2}S^{-2}(1+o(1)).
\end{align*}
Thus programmable address order closes the factor-four binary constant gap
for this stronger interface.
\end{corollary}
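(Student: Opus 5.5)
The converse and the construction are handled separately. For the lower bound $\mathcal R^{\rm ord}_2(S)\ge\iota_S$, I will observe that a plan which hard-codes a permutation $\sigma$ and then runs a fixed-order width-$S$ exact-support executor produces a language $\sigma(\mathcal L_m)$. Here $\mathcal L_m\subseteq\Omega_m$ is a fixed-order width-$S$ exact-support language, so $|\sigma(\mathcal L_m)|=|\mathcal L_m|\le\lambda_S^m$ by Lemma~\ref{lem:centered-corridor-support}; the balanced slice is itself permutation invariant. Suppose a $K$-plan mixture is within TV $\epsilon$ of $U_m$. Then the union of its supports has at least $(1-\epsilon)\binom{m}{m/2}$ words, so $K\lambda_S^m\ge(1-\epsilon)\binom{m}{m/2}$. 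Taking $m^{-1}\log_2$ and using $\log_2\binom{m}{m/2}=m-O(\log m)$ gives $\iota_S$. The expansion $\iota_S=\frac{\pi^2}{8\ln2}S^{-2}(1+o(1))$ follows from $-\log_2\cos x=x^2/(2\ln2)+O(x^4)$ with $x=\pi/(2S+1)$.

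For achievability I will use orbit symmetrization of the corridor from Lemma~\ref{lem:centered-corridor-support}. Let $C=\mathcal C_{m,S}$ and let $W_\sigma$ be the uniform law on $\sigma(C)$. Each $W_\sigma$ is realized exactly by one order-programmable width-$S$ plan: apply the suffix-count dynamic program of the lemma in the order $\sigma$. Because the symmetric group acts transitively on $\Omega_m$ and every $\sigma(C)\subseteq\Omega_m$, averaging $W_\sigma$ over uniform $\sigma$ gives a permutation-invariant law on $\Omega_m$, which is exactly $U_m$. This is a channel $\sigma\mapsto W_\sigma$ with output $U_m$, so I will apply the one-shot covering bound (P), or equivalently Lemma~\ref{lem:qary-one-shot-resolvability} with i.i.d.\ uniform codewords, to this channel.

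The information density is deterministic on the support: $\imath(\sigma;x)=\log_2\bigl(|C|^{-1}/|\Omega_m|^{-1}\bigr)=\log_2(|\Omega_m|/|C|)=m\iota_S+o(m)$ by the lemma's support fraction. I will choose $u=m(\iota_S+\delta)$, so that $\delta_u=0$ for large $m$, and $K=\lceil2^{m(\iota_S+2\delta)}\rceil$. Then the expected TV is at most $\tfrac12 2^{-m\delta/2}\to0$, so some sublibrary of $K$ permutations works. Merging repeated permutations keeps at most $K$ distinct plans. Letting $\delta\downarrow0$ gives $\mathcal R^{\rm ord}_2(S)\le\iota_S$.

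The main obstacle is making sure the covering bound applies without leftover slack. (P) is stated for a finite channel with pairwise-independent inputs, and here the input alphabet $S_m$ is finite and i.i.d.\ uniform draws are pairwise independent, so it does apply. The remaining risk is that the $o(m)$ in $|C|/|\Omega_m|$ must be genuinely subexponential. That comes from the lemma, where the $O(S)$ truncated endpoint layers contribute only a subexponential factor. So the step I would check most carefully is that the corridor endpoint truncation still yields a valid exact-support width-$S$ plan. I would also check that, for odd $S$, the corridor keeps $S$ consecutive feasible counts of the correct parity structure, so that the Perron root computation $\rho(UL)^{1/2}=\cos(\pi/(2S+1))$ applies.
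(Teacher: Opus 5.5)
Your proposal is correct and follows the paper's argument. The converse is the permutation-invariant support count from Lemma~\ref{lem:centered-corridor-support}, and achievability is orbit symmetrization of the centered corridor with a random sublibrary of permutations. The only cosmetic difference is in bounding the sublibrary error: the appendix uses the exact $\chi^2$ computation of Lemma~\ref{lem:orbit-codebook}, giving $\tfrac12\sqrt{(N/M-1)/K}$, while you invoke the truncated covering bound (P). Because the information density equals the constant $\log_2(|\Omega_m|/|\mathcal C_{m,S}|)$, the two yield essentially the same estimate.
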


\begin{lemma}[Finite transitive-orbit codebook]
\label{lem:orbit-codebook}
Let a finite group $G$ act transitively on a finite set $\Omega$.  For nonempty
$L\subseteq\Omega$, put $N=|\Omega|$, $M=|L|$, and
$P_g=\operatorname{Unif}(gL)$.  Then
\[
 \frac1{|G|}\sum_{g\in G}P_g=\operatorname{Unif}(\Omega).
\]
If $G_1,\ldots,G_K$ are iid uniform on $G$ and
$Q_K=K^{-1}\sum_{j=1}^K P_{G_j}$, then
\[
 \mathbb E\,d_{\rm TV}\!\left(Q_K,\operatorname{Unif}(\Omega)\right)
 \le \frac12\sqrt{\frac{N/M-1}{K}}.
\]
\end{lemma}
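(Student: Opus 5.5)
The plan is to handle the exact averaging identity first and the random-codebook bound second, treating the latter as a one-line second-moment computation.

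\textbf{Averaging identity.} For fixed $x\in\Omega$,
\[
 \frac1{|G|}\sum_{g}P_g(x)=\frac{1}{M|G|}\,\bigl|\{g:g^{-1}x\in L\}\bigr|.
\]
By transitivity and the orbit--stabilizer theorem, every $y\in\Omega$ equals $g^{-1}x$ for exactly $|G|/N$ elements $g$. Hence the count is $M|G|/N$ and the average equals $1/N$, which is the first claim.

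\textbf{Codebook bound.} Write $U=\operatorname{Unif}(\Omega)$. By the identity just proved, $\mathbb E P_{G_j}(x)=1/N$ for every $x$. We pass from total variation to $\chi^2$ via Cauchy--Schwarz:
\[
 d_{\rm TV}(Q_K,U)=\tfrac12\sum_x|Q_K(x)-1/N|
 \le\tfrac12\sqrt{N\sum_x(Q_K(x)-1/N)^2}.
\]
Jensen's inequality then moves the expectation inside the square root. Independence of the $G_j$ gives
\[
 \mathbb E\sum_x(Q_K(x)-1/N)^2=\frac1K\sum_x\operatorname{Var}P_{G_1}(x).
\]

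\textbf{Variance computation.} This is the only step needing care, and it is elementary. Since $P_{G_1}(x)$ takes only the values $1/M$ and $0$,
\[
 \mathbb E P_{G_1}(x)^2=\frac1M\,\mathbb E P_{G_1}(x)=\frac{1}{MN}.
\]
Therefore
\[
 \sum_x\operatorname{Var}P_{G_1}(x)=N\Bigl(\frac1{MN}-\frac1{N^2}\Bigr)=\frac1M-\frac1N.
\]
Multiplying by $N/K$ gives $(N/M-1)/K$, and inserting this into the previous display yields exactly
\[
 \tfrac12\sqrt{(N/M-1)/K}.
\]
No real obstacle is expected; the only subtleties are using the uniform averaging from the first part to identify the mean, and noting that repeated draws $G_j$ are harmless, consistent with the duplicate-merging convention in Definition~\ref{def:fixed-order-executor}.
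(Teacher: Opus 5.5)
Your proof is correct and follows the paper's argument. Orbit--stabilizer gives the exact average, the two-valued law of $P_G(x)$ gives the second moment, and your Cauchy--Schwarz step with the factor $N$ is exactly the bound $d_{\rm TV}\le\frac12\sqrt{\chi^2(Q_K\|\operatorname{Unif}(\Omega))}$ that the paper invokes before applying Jensen; the closing remark about duplicate merging is harmless but not needed for this purely distributional lemma.
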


\begin{proof}
Orbit--stabilizer gives
$|\{g:x\in gL\}|=|G|M/N$ for every $x$, proving the exact orbit average.
For uniform $G$, $P_G(x)$ equals $M^{-1}$ with probability $M/N$ and zero
otherwise.  Independence therefore yields
\[
 \mathbb E\,\chi^2\!\left(Q_K\middle\|\operatorname{Unif}(\Omega)\right)
 =\frac{N/M-1}{K}.
\]
Apply $d_{\rm TV}(P,Q)\le\tfrac12\sqrt{\chi^2(P\|Q)}$ and Jensen's
inequality.
\end{proof}

\begin{proof}[Proof of Corollary~\ref{cor:order-programmable-frontier}]
Use the centered corridor $L_{m,S}=\mathcal C_{m,S}$ from
Lemma~\ref{lem:centered-corridor-support}.  Its uniform law is a width-$S$
plan.  A hard-coded coordinate permutation $\pi$ produces
$\operatorname{Unif}(\pi L_{m,S})$ with the same live width.  The symmetric
group acts transitively on $\Omega_m$, and
\[
 \frac{|L_{m,S}|}{|\Omega_m|}=2^{-m\iota_S+o(m)}.
\]
For $K_m=2^{m(\iota_S+\eta_m)}$, with $\eta_m\downarrow0$ slowly enough,
Lemma~\ref{lem:orbit-codebook} gives a deterministic sublibrary whose TV
error vanishes.  Conversely, Lemma~\ref{lem:centered-corridor-support} bounds
every width-$S$ plan support by $\lambda_S^m$.  If a $K_m$-plan mixture is
within TV $\epsilon$ of the flat law on $\Omega_m$, the union of its supports
contains at least $(1-\epsilon)|\Omega_m|$ words, so
\[
 K_m\lambda_S^m\ge(1-\epsilon)|\Omega_m|.
\]
Taking per-symbol logarithms proves the matching lower rate.  The Taylor
expansion of $-\log_2\cos(\pi/(2S+1))$ gives the final expression.
\end{proof}

\paragraph{Interface boundary.}
The permutation is selected before generation and requires addressable output
coordinates.  It is not free on an immutable stream, and its table-description
length is a separate provisioning currency.  Orbit symmetrization is standard
\citep{ahlswedeDueck1982}; the paper-specific input is the sharp fixed-width
support fraction in Lemma~\ref{lem:centered-corridor-support}.

%% file: sections/plan_memory_qary_appendix.tex

\section{Plan-memory exchange for balanced type classes}
\label{sec:qary-plan-memory-law}

Fix an alphabet $[q]$, put $d=q-1$, and assume $q\mid m$.  Let
\[
 \Omega_{m,q}=\left\{x\in[q]^m:
 N_a(x)=m/q\ \text{for every }a\in[q]\right\}.
\]
Let $U_{m,q}=\operatorname{Unif}(\Omega_{m,q})$.
A fixed-order width-$S$ plan is a time-inhomogeneous stochastic generator in
the sense of Definition~\ref{def:fixed-order-executor}
that emits $x_1,\ldots,x_m$ from left to right, has at most $S$ live states
per layer, and is supported on $\Omega_{m,q}$.  Its transition tables may
depend on time and on a front-loaded plan value; private randomness is free.
Let $K^{\rm fix}_{m,q,\epsilon}(S)$ be the fewest such component laws whose
mixture is within TV $\epsilon$ of $U_{m,q}$, and define
\[
 \mathcal R^{\rm fix}_q(S)
 =\lim_{\epsilon\downarrow0}\limsup_{\substack{m\to\infty\\q\mid m}}
   \frac1m\log_2K^{\rm fix}_{m,q,\epsilon}(S).
\]

\begin{theorem}[Balanced-type resource law]
\label{thm:qary-resource-law}
For every fixed $q\ge2$, constants $0<a_q\le b_q<\infty$ exist such that
\begin{equation}
 a_q S^{-2/(q-1)}
 \ \le\ \mathcal R^{\rm fix}_q(S)\ \le\
 b_q S^{-2/(q-1)}
 \label{eq:qary-resource-law}
\end{equation}
for all sufficiently large $S$.  Thus, for this balanced type-class family,
the $q-1$ independent count constraints set the exchange exponent between
front-loaded coordination and online memory.  The constants are effective for
each fixed $q$ but inherit nonoptimized graph-isoperimetric and channel bounds.
For $q=2$, the sharper sandwich in Theorem
\ref{thm:fixed-frontier-v2} applies.
\end{theorem}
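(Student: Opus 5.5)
The two sides of \eqref{eq:qary-resource-law} come from separate earlier results. The lower bound follows from the support-counting converse in Section~\ref{sec:moving-layer-converse}. The upper bound specializes the general posterior compiler of Section~\ref{sec:general-conservation-rank-app} to the balanced type class with $p$ uniform on $[q]$ and $v_a=e_a$ ($v_q=0$).

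\emph{Converse.} By Lemma~\ref{lem:moving-layer-support}, every fixed-order width-$S$ exact-support component has at most $(q-c_qS^{-2/(q-1)})^m$ supported words. Suppose a $K$-component mixture is within TV $\epsilon$ of $U_{m,q}$. Then the union of the component supports carries mass at least $1-\epsilon$ under the flat law, so
\[
K\,(q-c_qS^{-2/(q-1)})^m\ge(1-\epsilon)|\Omega_{m,q}|.
\]
Stirling gives $\log_2|\Omega_{m,q}|=m\log_2q-O_q(\log m)$. Dividing by $m$ and letting $m\to\infty$, then $\epsilon\downarrow0$, yields
\[
\mathcal R^{\rm fix}_q(S)\ge-\log_2\bigl(1-c_qq^{-1}S^{-2/(q-1)}\bigr)\ge\frac{c_q}{q\ln2}S^{-2/(q-1)},
\]
so $a_q=c_q/(q\ln2)$ works. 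Alternatively, the spectral converse of Section~\ref{sec:general-conservation-rank-app} combined with Lemma~\ref{lem:app-general-rank-profile} at $r=q-1$ gives the same order, because the balanced endpoint has probability $\Theta(m^{-(q-1)/2})=2^{-o(m)}$.

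\emph{Achievability.} Here $L=\mathbb Z^{q-1}$ has rank $r=q-1$, $p_{\min}=1/q$, and $P(E_m)=2^{-o(m)}$.
\begin{itemize}
\item Take $D$ to be a lattice box of side $R$ with $R^{q-1}\le S$, and take $f$ to be the product sine on that box.
\item Lemma~\ref{lem:app-general-cancellation} makes $B_t$ iid uniform with stationary discrepancy.
\item The posterior realization after \eqref{eq:app-general-resource-identity} turns each plan $(A,B^m)$ into an exact-support executor of width $|D|\le S$.
\item Lemma~\ref{lem:app-general-information} bounds the one-step information by
\[
\frac{8q^2}{\ln2}\cdot\frac{\langle f,(I-K)f\rangle}{\|f\|_2^2}.
\]
\item By the form comparison and box Rayleigh-quotient estimate in Lemma~\ref{lem:app-general-rank-profile}, this is $O_q(R^{-2})=O_q(S^{-2/(q-1)})$, choosing $R=\lfloor S^{1/(q-1)}\rfloor$, which is legitimate for large $S$.
\item The information-density concentration and endpoint-conditioning identity of Section~\ref{sec:general-conservation-rank-app}, modelled on Lemma~\ref{lem:information-stability-v2}, then feed the one-shot resolvability bound \eqref{eq:finite-resolvability-v2}. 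This produces $2^{m(c_D(f)+o(1))}$ plans with vanishing TV.
\end{itemize}
Hence $\mathcal R^{\rm fix}_q(S)\le b_qS^{-2/(q-1)}$.

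\emph{Main obstacle.} The delicate step is the uniformity in $S$ of the Faber--Krahn constant $\kappa_q$ in Lemma~\ref{lem:periodic-fk-specialization}. The converse needs this constant to hold for arbitrary, possibly disconnected, moving sets $A_t$; that lemma already supplies it, so the converse closes. On the achievability side, concentration constants may depend on $S$. This is harmless because $m\to\infty$ is taken at fixed $S$. Only the leading rate $c_D(f)$ must scale as $S^{-2/(q-1)}$, and that scaling is purely spectral. The case $q=2$ then follows from Theorem~\ref{thm:fixed-frontier-v2} with sharper constants.
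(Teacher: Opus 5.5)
Your proposal is correct and follows the paper's proof. The converse is the paper's own argument: the one-plan support bound of Lemma~\ref{lem:moving-layer-support}, a union bound over component supports, Stirling for $|\Omega_{m,q}|$, and $-\log_2(1-u)\ge u/\ln 2$, giving $a_q$ of order $\kappa_q/(q\ln 2)$. The achievability is the same box ground-state cancellation compiler with Bayes reversal and one-shot resolvability. The only difference is bookkeeping: you get the $O_q(R^{-2})$ information cost from the general-lattice Lemmas~\ref{lem:app-general-information} and~\ref{lem:app-general-rank-profile}, whereas the paper uses the product-cosine weight on $\{-R,\ldots,R\}^{q-1}$ and computes its Dirichlet energy directly in Lemma~\ref{lem:qary-dirichlet-bound}, which makes $b_q$ explicit.
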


The lower bound is spectral: an $S$-state generator cannot support enough
balanced paths.  The upper bound is distributional and fixed-order: a
Bayes-reversed finite-state channel produces a mixture of exact balanced
generators at the matching rate.  This second direction is the load-bearing
posterior-realization result.

\subsection{The root-lattice Bayes channel}

Represent discrepancies in $\mathbb Z^d$.  Let
$v_1=e_1,\ldots,v_d=e_d,v_q=0$.  For an integer $R\ge1$, define
\[
 \mathcal D_R=\{-R,\ldots,R\}^d,
 \qquad S_R=|\mathcal D_R|=(2R+1)^d.
\]
For $z\in\mathbb Z^d$, set
\[
 g_R(z)=\prod_{j=1}^d
 \cos\!\left(\frac{\pi z_j}{2(R+1)}\right)
 \mathbf 1\{z\in\mathcal D_R\}.
\]
Set $h_R(z)=g_R(z)^2$,
and $H_R=\sum_z h_R(z)=(R+1)^d$.  Initialize
$A=D_0$ with $P(A=z)=\pi_R(z):=h_R(z)/H_R$.  Given an iid uniform source
symbol $X_t=x$ and current discrepancy $D_{t-1}=z$, emit $B_t=b$ according to
\begin{align}
 Q_R(b\mid x,z)
 &=\frac{h_R(z+v_x-v_b)}
 {\sum_{c=1}^q h_R(z+v_x-v_c)},\label{eq:qary-test-channel}\\
 D_t&=z+v_x-v_b.\notag
\end{align}
At least the term $c=x$ in the denominator equals $h_R(z)>0$, so the channel
is well defined and never leaves $\mathcal D_R$.

\begin{remark}[Ground-state bias, not a hidden homogeneity assumption]
Equation~\eqref{eq:qary-test-channel} uses the positive ground-state weight
$h_R$, in the spirit of an $h$-transform.  It is not, in general, one
homogeneous Doob transform: its normalizer depends on both $(x,z)$.  The iid
auxiliary law below follows from the exact numerator-denominator cancellation,
not from an unstated stationarity assumption.
\end{remark}

\begin{lemma}[Exact uniform auxiliary process]
\label{lem:qary-uniform-aux}
For every $t$, $B_t$ is uniform on $[q]$ and independent of $B^{t-1}$, while
$D_t\mid B^t$ has law $\pi_R$.
\end{lemma}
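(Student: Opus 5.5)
The plan is induction on $t$, mirroring Lemma~\ref{lem:fair-output-v2} and Lemma~\ref{lem:app-general-cancellation}. The inductive hypothesis is that $D_{t-1}\mid B^{t-1}=b^{t-1}$ has law $\pi_R$. The base case $t=1$ holds because $D_0=A\sim\pi_R$ is drawn independently of the source and before any auxiliary symbol is emitted. Throughout, $X_t$ is uniform on $[q]$ and independent of $(D_{t-1},B^{t-1})$, since the source is iid and $D_{t-1}$ depends only on $A,X^{t-1}$ and the channel's private coins.

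For the inductive step, fix $b\in[q]$ and $y\in\mathcal D_R$ and compute the joint probability $P(B_t=b,D_t=y\mid B^{t-1}=b^{t-1})$. The transition produces $D_t=y$ exactly when the predecessor is $z=y-v_x+v_b$. Summing over $x$ with $P(X_t=x)=1/q$ gives
\[
 P(B_t=b,D_t=y\mid B^{t-1})
 =\sum_{x}\frac1q\,\frac{h_R(y-v_x+v_b)}{H_R}\,
 \frac{h_R(y)}{\sum_{c}h_R(y+v_b-v_c)},
\]
using that $z+v_x-v_c=y+v_b-v_c$ in the denominator of \eqref{eq:qary-test-channel}. The key observation is that this denominator does not depend on $x$ once $(b,y)$ are fixed. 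After renaming $x$ as $c$, the numerator sum $\sum_x h_R(y+v_b-v_x)$ equals it exactly. The only care needed is that terms with $z\notin\mathcal D_R$ vanish on both sides simultaneously, because $h_R=0$ outside $\mathcal D_R$ and $\pi_R$ is supported there. The denominator is positive, since its $c=b$ term is $h_R(y)>0$ for $y\in\mathcal D_R$. The result is $q^{-1}h_R(y)/H_R=q^{-1}\pi_R(y)$.

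Because this joint law factorizes as $\mathrm{Unif}[q](b)\cdot\pi_R(y)$ for every conditioning value $b^{t-1}$, three conclusions follow. $B_t$ is uniform and independent of $B^{t-1}$, and $D_t\mid B^t\sim\pi_R$, which closes the induction. The main obstacle is only bookkeeping. One must check that the channel's normalizer depends on $(x,z)$ only through $y+v_b$, which is the non-Doob-transform point flagged in the preceding remark. One must also check that the boundary truncation of $\mathcal D_R$ is consistent on both sides.
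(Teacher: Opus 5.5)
Your proposal is correct and follows the paper's proof exactly: you run an induction from $D_0=A\sim\pi_R$ and, in the inductive step, sum over the predecessor $y-v_x+v_b$. The $x$-sum of $h_R(y+v_b-v_x)$ then cancels the $(x,z)$-independent normalizer $\sum_c h_R(y+v_b-v_c)$, leaving $q^{-1}\pi_R(y)$. Your added checks, that the normalizer is positive via the $c=b$ term and that out-of-box terms vanish on both sides, are exactly the bookkeeping this cancellation needs.
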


\begin{proof}
Assume $D_{t-1}\mid B^{t-1}=b^{t-1}$ has law $\pi_R$.  Fix an output symbol
$b$ and a next state $y$.  The predecessor paired with source symbol $x$ is
$y-v_x+v_b$.  Therefore
\begin{align*}
 &P(B_t=b,D_t=y\mid B^{t-1})\\
 &\quad=\frac1q\sum_{x=1}^q
 \frac{h_R(y-v_x+v_b)}{H_R}
 \frac{h_R(y)}{\sum_{c=1}^q h_R(y+v_b-v_c)}\\
 &\quad=\frac{h_R(y)}{qH_R}=\frac1q\pi_R(y).
\end{align*}
The cancellation uses the fact that the numerator sum over $x$ is exactly
the denominator sum over $c$.  Induction starts from $D_0=A\sim\pi_R$.
\end{proof}

Let $E_{m,q}$ be the event that $X^m$ has exactly $m/q$ occurrences of each
symbol, and condition the joint law on $E_{m,q}$.  The resulting source is
$U_{m,q}$ and the front-loaded plan is $Z_m=(A,B^m)$.

\begin{lemma}[Exact fixed-order q-ary posterior realization]
\label{lem:qary-bayes-compiler}
The Bayes decomposition
\[
 U_{m,q}(x)=\sum_zP(z\mid E_{m,q})P(x\mid z,E_{m,q})
\]
is exact.  For every $z=(a,b^m)$ of positive probability,
$P(X^m\mid z,E_{m,q})$ is generated by a fixed-order width-$S_R$ stochastic
machine.
\end{lemma}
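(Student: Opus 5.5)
The exact decomposition needs no real work: it is the law of total probability. Conditioning the joint law of $(X^m,A,B^m,D^m)$ on $E_{m,q}$ gives $X^m\sim U_{m,q}$, because the unconditioned source is iid uniform on $[q]$ and $E_{m,q}$ is a type class of equiprobable words. Summing $P(z\mid E_{m,q})P(x\mid z,E_{m,q})$ over $z$ then returns $U_{m,q}(x)$. The rest of the plan concerns the finite-state realization for a fixed plan $z=(a,b^m)$ with $P(z\mid E_{m,q})>0$.

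\textbf{Step 1: write the posterior in local form.} Fix $z$ and a candidate word $x^m$, and set $d_0=a$ and $d_t=a+\sum_{s\le t}(v_{x_s}-v_{b_s})$. The joint probability of $(x^m,a,b^m)$ is
\[
q^{-m}\,\pi_R(a)\prod_t Q_R(b_t\mid x_t,d_{t-1}).
\]
Hence
\[
P(x^m\mid z,E_{m,q})\propto\mathbf 1\{x^m\in\Omega_{m,q}\}\prod_t q^{-1}Q_R(b_t\mid x_t,d_{t-1}).
\]
Each factor depends only on $(t,d_{t-1},x_t)$, and $d_t$ is a deterministic function of $(d_{t-1},x_t,b_t)$. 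The factor is zero whenever $d_t\notin\mathcal D_R$, so positive-weight prefixes keep $d_t$ inside $\mathcal D_R$.

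\textbf{Step 2: the discrepancy determines the prefix count.} The analogue of \eqref{eq:app-general-resource-identity} gives
\[
\sum_{s\le t}v_{x_s}=\sum_{s\le t}v_{b_s}+d_t-a.
\]
The first $d$ coordinates of $\sum_{s\le t}v_{x_s}$ are the counts of symbols $1,\dots,d$, and the count of symbol $q$ is $t$ minus their sum. So the full prefix histogram is a function of $(t,d_t)$ once the plan is fixed. In particular, the indicator of $E_{m,q}$ equals a terminal indicator $F_m(d_m)=\mathbf 1\{\sum_{s\le m}v_{b_s}+d_m-a=(m/q)\mathbf 1\}$.

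\textbf{Step 3: build the suffix partition function and sampler.} Define $F_t$ by the backward recursion
\[
F_t(u)=\sum_x q^{-1}Q_R(b_{t+1}\mid x,u)\,F_{t+1}(u+v_x-v_{b_{t+1}}).
\]
At a state $u$ with $F_t(u)>0$, emit $x$ with probability proportional to the summand, and move to $u+v_x-v_{b_{t+1}}$. Backward induction shows that $F_t(u)$ is the conditional probability of completing into $E_{m,q}$ while reproducing $b_{t+1..m}$, given the plan and $D_t=u$. The product of the emitted transition probabilities telescopes to the displayed posterior normalized by $F_0(a)$, and $F_0(a)>0$ because $z$ has positive probability. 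Thus the machine reproduces $P(\cdot\mid z,E_{m,q})$ exactly.

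\textbf{Step 4: width and legal support.} The live states at layer $t$ are a subset of $\mathcal D_R$, so the width is at most $S_R$. The initial state is the deterministic value $a$, and no other persistent information is used. Every supported word has $F_m=1$ at the end and therefore lies in $\Omega_{m,q}$.

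The only genuine subtlety is Step 2, namely that no separate counter is required. It rests on the identity above together with the observation that the $q$-th count is recovered from $t$. Everything else mirrors Lemma~\ref{lem:bayes-compiler-v2} and Proposition~\ref{prop:posterior-compiler}.
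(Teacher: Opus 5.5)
Your proposal is correct and follows the paper's proof essentially step for step: the law of total probability gives the decomposition, and the posterior factorizes into local channel factors restricted to the balanced endpoint. As in the paper, the identity $N_{1:d}(X_{1:t})=N_{1:d}(B_{1:t})+D_t-a$, with the $q$-th count recovered from $t$, removes any separate counter, and the backward suffix partition function $F_t$ on $\mathcal D_R$ yields exact left-to-right transitions of width at most $S_R$. Your explicit checks that $F_0(a)>0$ for every positive-probability plan and that the emitted transition probabilities telescope to the normalized posterior are details the paper leaves implicit, but they do not change the route.
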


\begin{proof}
Only implementability needs proof.  For fixed $(a,b^m)$, the posterior is a
product of the local factors in \eqref{eq:qary-test-channel}, restricted to
the balanced endpoint.  A suffix partition function indexed by
$D_t\in\mathcal D_R$ gives exact left-to-right transitions.  Moreover,
writing $N_{1:d}(w)=(N_1(w),\ldots,N_d(w))$, we have
\[
 N_{1:d}(X_{1:t})=N_{1:d}(B_{1:t})+D_t-a.
\]
The last symbol count is $t$ minus the displayed counts.  Hence $D_t$ alone
determines every remaining budget once the plan is fixed, so the endpoint
constraint requires no state beyond the $S_R$ discrepancy values.
\end{proof}

\paragraph{Executable suffix recursion and cost.}
For a fixed positive-probability plan $z=(a,b^m)$, define the backward
partition value $F_t(u)$ for $u\in\mathcal D_R$ by
\begin{align*}
 F_m(u)&=\mathbf 1\!\left\{
 N_{1:d}(b^m)+u-a=\tfrac mq\mathbf 1_d\right\},\\
 F_t(u)&=\frac1q\sum_{x=1}^q
 Q_R(b_{t+1}\mid x,u)
 F_{t+1}(u+v_x-v_{b_{t+1}}).
\end{align*}
Values outside $\mathcal D_R$ are zero.  Whenever $F_t(u)>0$, the exact
posterior transition, with $u_x^+=u+v_x-v_{b_{t+1}}$, is
\[
\begin{aligned}
 &\Pr\{X_{t+1}=x\mid D_t=u,z,E_{m,q}\}\\
 &\qquad=\frac{q^{-1}Q_R(b_{t+1}\mid x,u)}{F_t(u)}
 F_{t+1}(u_x^+).
\end{aligned}
\]
The backward pass takes $O(mqS_R)$ arithmetic operations.  Materializing all
time-dependent transition tables takes $O(mqS_R)$ cells; online sampling then
uses $O(q)$ work per token and stores only $D_t$, i.e.,
$\lceil\log_2S_R\rceil$ persistent bits.  The theorem prices the latter live
state and the plan index, not this offline table description; the augmented
ledger in the main paper reports that distinction explicitly.

\subsection{The posterior construction has quadratically small plan rate}

Define the stationary per-symbol information rate
\begin{equation}
 c_{q,R}
 =\log_2q-
 \frac1q\sum_{x=1}^q\sum_{z\in\mathcal D_R}
 \pi_R(z)H_2\big(Q_R(\cdot\mid x,z)\big).
 \label{eq:qary-information-rate}
\end{equation}

\begin{lemma}[Dirichlet information bound]
\label{lem:qary-dirichlet-bound}
For every fixed $q$, $c_{q,R}\le C_qR^{-2}$.
\end{lemma}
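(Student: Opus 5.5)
The plan is to specialize the general information--Dirichlet comparison to the root-lattice channel, then bound the resulting Rayleigh quotient with the explicit product-cosine test function. First, identify $c_{q,R}$ with the stationary one-step cost $c_D(f)$ of Section~\ref{sec:general-conservation-rank-app}. Take $p$ uniform on $[q]$, $D=\mathcal D_R$ and $f=g_R$, so that $h=f^2=h_R$. Then \eqref{eq:qary-test-channel} is exactly \eqref{eq:app-general-channel}, since the factors $p_b=1/q$ cancel between numerator and denominator. Because $B_t$ is uniform (Lemma~\ref{lem:qary-uniform-aux}), the quantity $\log_2q-H(Q_R(\cdot\mid x,z))$ equals $D_2(Q_R(\cdot\mid x,z)\Vert u_q)$. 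Averaging over $\pi_R(z)/q$ gives
\[
 c_{q,R}=c_{\mathcal D_R}(g_R).
\]
The set $\mathcal D_R$ is connected under the jumps $v_a-v_b$, which include $\pm e_i$.

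Second, apply Lemma~\ref{lem:app-general-information} with $p_{\min}=1/q$. This gives
\[
 c_{q,R}\le \frac{8q^2}{\ln2}\,\frac{\langle g_R,(I-K)g_R\rangle}{\|g_R\|_2^2}.
\]
The problem therefore reduces to showing that the Rayleigh quotient of $g_R$ (extended by zero) under the uniform difference walk is $O_q(R^{-2})$. Write
\[
 \langle f,(I-K)f\rangle=\tfrac12\sum_{a,b}q^{-2}\sum_z\bigl(f(z+v_a-v_b)-f(z)\bigr)^2 .
\]
Each jump $v_a-v_b$ is one of $0$, $\pm e_i$, or $e_i-e_j$. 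Using $(f(z+e_i-e_j)-f(z))^2\le 2[(f(z+e_i-e_j)-f(z-e_j))^2+(f(z-e_j)-f(z))^2]$ and translation invariance, we get
\[
 \langle f,(I-K)f\rangle\le C\,\mathcal E_{\rm nn}(f)
\]
with an absolute constant $C$. This is the comparison already invoked in Lemma~\ref{lem:app-general-rank-profile}, with an explicit constant here.

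Third, compute the nearest-neighbour form for the product function $g_R=\prod_j\phi(z_j)$, where $\phi(k)=\cos(\pi k/(2(R+1)))\mathbf1\{|k|\le R\}$. This is the Dirichlet ground state of the path on $\{-R,\ldots,R\}$: $\phi$ vanishes at $\pm(R+1)$, so the zero extension is consistent. The one-dimensional quotient is
\[
 \frac{\sum_k(\phi(k+1)-\phi(k))^2}{\sum_k\phi(k)^2}=2-2\cos\frac{\pi}{2(R+1)}\le\frac{\pi^2}{4(R+1)^2}.
\]
By the product structure, $\mathcal E_{\rm nn}(g_R)/\|g_R\|_2^2$ is half of $d$ times this one-dimensional value, which is $O(dR^{-2})$. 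Combining the three steps gives $c_{q,R}\le C_qR^{-2}$ with $C_q=O(q^3)$.

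The main obstacle is the first step. It requires checking that the boundary behaviour is consistent, namely that the formal moves leaving $\mathcal D_R$ are exactly those where $h_R=0$, so the zero-extension Dirichlet form is the right object. It also requires confirming that the stationary averaging in $c_{q,R}$ matches the weighting used inside Lemma~\ref{lem:app-general-information}, where $W\ge p_xf(z)^2$ was used. Everything else is explicit trigonometry.
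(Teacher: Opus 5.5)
Your proposal is correct and follows essentially the paper's route. The paper inlines the same per-step $\chi^2$ bound instead of citing Lemma~\ref{lem:app-general-information}, using a Hellinger-type step and $h_R(z)/W\le1$ from the identity choice $b=x$, then reduces the shifted energies to coordinate Dirichlet energies and uses the same cosine ground-state ratio $2\bigl(1-\cos\frac{\pi}{2(R+1)}\bigr)\le\frac{\pi^2}{4(R+1)^2}$. Two cosmetic remarks: the identity $\log_2q-H=D_2(\cdot\Vert u_q)$ needs no appeal to the uniformity of $B_t$, and under uniform weights your comparison constant is at most $4/q$, which recovers the paper's $O(q^2)$ constant instead of $O(q^3)$.
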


\begin{proof}
For a fixed $(z,x)$, write
$a_b=g_R(z+v_x-v_b)$, $W=\sum_ba_b^2$, and
$p_b=a_b^2/W=Q_R(b\mid x,z)$.  If $u_q$ is uniform on $[q]$, then
\begin{equation}
\begin{aligned}
\log_2q-H_2(p)
&=D_2(p\|u_q)\\
&\le\frac{1}{\ln2}\chi^2(p\|u_q)
\le\frac{4}{\ln2}\frac{\sum_{b,c}(a_b-a_c)^2}{W}.
\end{aligned}
\label{eq:entropy-hellinger-bound}
\end{equation}
The first inequality is $\ln t\le t-1$.  For the second, use
$|p_b-q^{-1}|\le2|\sqrt{p_b}-q^{-1/2}|$ and
\[
 \sum_b(\sqrt{p_b}-q^{-1/2})^2
 \le\frac1{qW}\sum_{b,c}(a_b-a_c)^2.
\]
Because the choice $b=x$ gives $a_x^2=h_R(z)$, we have
$h_R(z)/W\le1$.  Averaging \eqref{eq:entropy-hellinger-bound} with weight
$\pi_R(z)/q$, extending $g_R$ by zero outside the box, and translating the
lattice sums gives
\begin{equation}
 c_{q,R}
 \le\frac{4}{H_R\ln2}
 \sum_{b,c=1}^q\sum_{y\in\mathbb Z^d}
 \big(g_R(y-v_b)-g_R(y-v_c)\big)^2.
 \label{eq:qary-dirichlet-energy}
\end{equation}
Every shift $v_b-v_c$ is either a coordinate step or a difference of two
coordinate steps.  By translation invariance and the triangle inequality in
$\ell_2$, the right side is at most a $q$-dependent constant times the sum of
the $d$ coordinate Dirichlet energies.  The product-cosine ground state has
the exact one-coordinate ratio
\begin{align*}
 \frac{\sum_y(g_R(y+e_j)-g_R(y))^2}{\sum_yg_R(y)^2}
 &=2\left(1-\cos\frac{\pi}{2(R+1)}\right)\\
 &\le\frac{\pi^2}{4(R+1)^2}.
\end{align*}
Since $\sum_yg_R(y)^2=H_R$, Equation
\eqref{eq:qary-dirichlet-energy} is $O_q(R^{-2})$.
\end{proof}

\begin{lemma}[Primitive discrepancy chain]
\label{lem:qary-primitive-chain}
Under the iid source and channel \eqref{eq:qary-test-channel}, the discrepancy
chain on $\mathcal D_R$ is irreducible and aperiodic. Moreover,
$\log_2(qQ_R(B_t\mid X_t,D_{t-1}))$ is bounded on every transition of positive
probability.
\end{lemma}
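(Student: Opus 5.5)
We prove the two claims of Lemma~\ref{lem:qary-primitive-chain} separately, following the binary argument in Lemma~\ref{lem:information-stability-v2}. First we show that the discrepancy chain on $\mathcal D_R$ is primitive, using only the support of the kernel. Then we bound the information-density increments using the explicit form of $Q_R$.

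\textbf{Primitivity.} The unconditioned discrepancy kernel is
\[
 M(z,y)=\frac1q\sum_{x,b:\,z+v_x-v_b=y}Q_R(b\mid x,z).
\]
An edge $z\to y$ has positive probability exactly when $y=z+v_x-v_b\in\mathcal D_R$ for some pair $(x,b)$. This holds because $h_R(y)>0$ for every $y\in\mathcal D_R$: the cosine factors are positive for $|z_j|\le R$, since $\pi R/(2(R+1))<\pi/2$.
\begin{itemize}
\item The identity choice $b=x$ gives a self-loop at every $z$, with probability at least $q^{-1}h_R(z)/\sum_c h_R(z+v_x-v_c)>0$. This already settles aperiodicity.
\item The choices $(x,b)=(j,q)$ and $(x,b)=(q,j)$ give the moves $z\to z\pm e_j$, since $v_q=0$. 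Each such move is allowed whenever the target stays in the box.
\end{itemize}
The box $\{-R,\ldots,R\}^d$ is connected under unit coordinate steps, so any two states communicate by a monotone coordinate path that never leaves $\mathcal D_R$. This gives irreducibility.

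\textbf{Bounded information density.} On a positive transition $(x,z,b)$ we have
\[
 qQ_R(b\mid x,z)=\frac{q\,h_R(z+v_x-v_b)}{\sum_c h_R(z+v_x-v_c)},
\]
and the numerator is positive. Let $h_-=\min_{\mathcal D_R}h_R$ and $h_+=\max h_R=1$; note that $h_-\ge\cos^{2d}(\pi R/(2(R+1)))>0$.
\begin{itemize}
\item The denominator has at most $q$ terms, each at most $h_+$, and the $c=x$ term is $h_R(z)\ge h_-$. So the denominator lies in $[h_-,qh_+]$.
\item The numerator lies in $[h_-,h_+]$.
\end{itemize}
Combining these, on every positive transition
\[
 \frac{h_-}{h_+}\le qQ_R\le\frac{qh_+}{h_-},
\]
so $|\log_2(qQ_R)|\le\log_2(q/h_-)$, a finite constant for fixed $(q,R)$. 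This is the specialization of Lemma~\ref{lem:app-heterogeneous-uniformity} with $\eta=1/q$.

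There is no real obstacle here. The one point that needs care is that \emph{formal} moves leaving $\mathcal D_R$ have zero numerator and are therefore absent from the chain rather than carrying $\log 0$. We state explicitly that boundedness is asserted only on positive-probability transitions, as in the binary proof.
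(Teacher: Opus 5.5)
Your proposal is correct and follows the paper's proof: self-loops from $b=x$ give aperiodicity, the moves $(x,b)=(j,q)$ and $(q,j)$ connect the box, and the log score is bounded on positive-probability transitions. The only difference is that you give the explicit bound $|\log_2(qQ_R)|\le\log_2(q/h_-)$, whereas the paper simply notes that the set of positive triples $(z,x,b)$ is finite and each $Q_R(b\mid x,z)$ on it is strictly positive.
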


\begin{proof}
Every state has a positive self-loop by choosing $B_t=X_t$.  Whenever
$z+e_i\in\mathcal D_R$, choosing $(X_t,B_t)=(i,q)$ moves from $z$ to $z+e_i$
with positive probability; whenever $z-e_i\in\mathcal D_R$, choosing
$(X_t,B_t)=(q,i)$ moves to $z-e_i$.  Coordinate moves connect the entire box,
so the chain is irreducible, and the self-loops make its period one.  Although
some formal symbol pairs leave the box and have probability zero, the set of
positive-probability triples $(z,x,b)$ is finite and every corresponding
$Q_R(b\mid x,z)$ is strictly positive.  The displayed log score is therefore
bounded on its support.
\end{proof}

\begin{lemma}[Type-conditioned information stability]
\label{lem:qary-information-stability}
For fixed $(q,R)$,
\[
 \frac1m\imath_{E_{m,q}}(X^m;A,B^m)\longrightarrow c_{q,R}
 \quad\text{in probability}.
\]
\end{lemma}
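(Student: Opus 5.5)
I will follow the template of Lemma~\ref{lem:information-stability-v2}, replacing the binary ingredients by their $q$-ary analogues. The first step is the unconditioned information density. By Lemma~\ref{lem:qary-uniform-aux}, $P(B^m=b^m)=q^{-m}$, and $A$ is independent of the source with law $\pi_R$. Therefore
\[
 \imath(X^m;A,B^m)=\sum_{t=1}^m\log_2\bigl(qQ_R(B_t\mid X_t,D_{t-1})\bigr)
 +\log_2\frac{\pi_R(A)}{P(A\mid B^m)} .
\]
The sum is an edge-additive functional of the triple chain $(D_{t-1},X_t,B_t)$. Lemma~\ref{lem:qary-primitive-chain} says the discrepancy chain on $\mathcal D_R$ is irreducible and aperiodic, and that each summand is bounded on positive-probability transitions. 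The chain starts in its stationary law $\pi_R$. Its stationary mean is $\log_2q-H(B\mid X,D)$, which matches the definition \eqref{eq:qary-information-rate} of $c_{q,R}$.

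To get exponential concentration of the sum around $mc_{q,R}$, I will use the same tilting argument as in the binary case. The Perron root of the tilted finite transition matrix is analytic near zero, and its logarithmic derivative at zero is the stationary mean. A Chernoff bound on each side then gives deviation probability at most $C e^{-\eta m}$ for every $\delta>0$.

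The boundary term has two parts. $\log_2\pi_R(A)$ is bounded for fixed $R$. The other part is controlled by the tail bound $P\{-\log_2P(A\mid B^m)>u\}\le S_R2^{-u}$, proved exactly as in \eqref{eq:finite-a-tail-v2}. So the boundary term is $o(m)$ with an exponential tail, and the unconditioned claim follows.

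The second step passes to the conditioned law through the exact identity
\[
 \imath_{E_{m,q}}=\imath+\log_2P(E_{m,q})-\log_2P(E_{m,q}\mid A,B^m),
\]
which is the analogue of \eqref{eq:conditioning-id-v2}. By Stirling's formula for the central multinomial coefficient, $P(E_{m,q})=\Theta_q(m^{-(q-1)/2})$. This is polynomial, so the middle term is $o(m)$. Dividing the unconditioned exponential deviation bound by $P(E_{m,q})$ still gives a bound that tends to zero, so the deviation estimate transfers to the conditioned law. For the last term, the analogue of \eqref{eq:event-posterior-tail-v2} gives
\[
 P_{Z\mid E_{m,q}}\bigl\{P(E_{m,q}\mid Z)<2^{-m\delta}\bigr\}\le 2^{-m\delta}/P(E_{m,q})\to0 .
\]
Since $P(E_{m,q}\mid Z)\le1$, the term $-\log_2P(E_{m,q}\mid Z)$ is nonnegative. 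Hence it is $o_P(m)$ on both sides, and the three estimates together give convergence in probability to $c_{q,R}$.

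The main obstacle is the concentration step. The summand depends on the triple $(D_{t-1},X_t,B_t)$ and not on $D$ alone, so I must check that the triple chain is itself a finite irreducible aperiodic Markov chain to which the tilting argument applies. It is: the next triple is determined by the current $D_t$ together with fresh $X_{t+1}$ and the channel draw. Its irreducibility and aperiodicity are inherited from the discrepancy chain through the positive self-loops $B=X$. If the tilting argument turns out to be awkward here, the fallback is a blocked Doeblin minorization combined with Azuma's inequality. Convergence in probability does not need sharp rates, so that route would suffice.
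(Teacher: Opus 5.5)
Your proposal is correct and follows the paper's proof essentially step for step. It uses the same decomposition of the unconditioned density via $P(B^m)=q^{-m}$, the Perron--Frobenius Chernoff bound for the bounded edge-additive sum, and the $S_R2^{-u}$ tail for the $A$-posterior term. It then uses the polynomial size $P(E_{m,q})=\Theta_q(m^{-(q-1)/2})$ to transfer concentration, together with the exact conditioning identity and the $2^{-m\delta}/P(E_{m,q})$ tail for $-\log_2P(E_{m,q}\mid A,B^m)$. Your explicit check that the triple $(D_{t-1},X_t,B_t)$ forms a finite irreducible aperiodic chain is a detail the paper compresses into calling the sum an edge-additive functional whose zero-probability pairs are omitted from the tilted matrix.
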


\begin{proof}
Lemma~\ref{lem:qary-uniform-aux} gives $P(B^m)=q^{-m}$, so before
conditioning,
\begin{align*}
 \imath(X^m;A,B^m)
 &=\sum_{t=1}^m\log_2\big(qQ_R(B_t\mid X_t,D_{t-1})\big)\\
 &\quad+\log_2\frac{\pi_R(A)}{P(A\mid B^m)}.
\end{align*}
By Lemma~\ref{lem:qary-primitive-chain}, the sum is a bounded edge-additive
functional of a finite primitive Markov chain, with mean $mc_{q,R}$ and
exponentially small deviations by a finite-matrix
Perron-Frobenius Chernoff bound.  Zero-probability symbol pairs are absent
from the tilted transition matrix rather than assigned an infinite score.
Since $A$ has only $S_R$
values,
\[
 P\{-\log_2P(A\mid B^m)>u\}\le S_R2^{-u},
\]
so the final term is $o_P(m)$.  Stirling's formula gives
\[
 P(E_{m,q})
 =\frac{m!}{(m/q)!^q}q^{-m}
 =\Theta_q\big(m^{-(q-1)/2}\big).
\]
Thus exponential concentration survives conditioning on $E_{m,q}$.  The
identity
\begin{align*}
 \imath_{E_{m,q}}&=\imath+\log_2P(E_{m,q})\\
 &\quad-\log_2P(E_{m,q}\mid A,B^m)
\end{align*}
finishes the argument because, for every $\delta>0$,
\begin{align*}
 &P_{A,B^m\mid E_{m,q}}
 \{P(E_{m,q}\mid A,B^m)<2^{-m\delta}\}\\
 &\qquad\le\frac{2^{-m\delta}}{P(E_{m,q})}\longrightarrow0.
\end{align*}
\end{proof}

For completeness, we isolate the standard finite-length
variational-distance bound used here rather than attributing this exact display
to a numbered theorem in the cited papers.
\begin{lemma}[One-shot finite-codebook resolvability]
\label{lem:qary-one-shot-resolvability}
For every codebook size $M$ and threshold $r$ in bits, some uniform
$M$-position input codebook has output law $\widetilde P$ with
\begin{equation}
 d_{\rm TV}(\widetilde P,U_{m,q})
 \le P\{\imath_{E_{m,q}}(X^m;A,B^m)>r\}
    +\frac12\sqrt{\frac{2^r}{M}}.
 \label{eq:qary-finite-resolvability}
\end{equation}
\end{lemma}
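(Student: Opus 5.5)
We would prove Lemma~\ref{lem:qary-one-shot-resolvability} by the standard random-coding truncation argument, specialized to the joint law $P=P(\cdot\mid E_{m,q})$ of $(Z,X^m)$ with $Z=(A,B^m)$. Write $W(x\mid z)=P(X^m=x\mid Z=z,E_{m,q})$ and $P_Z$ for the conditioned plan law, so that $\sum_zP_Z(z)W(\cdot\mid z)=U_{m,q}$ by Lemma~\ref{lem:qary-bayes-compiler}. Draw codewords $Z_1,\ldots,Z_M$ i.i.d.\ from $P_Z$, set $\widetilde P=M^{-1}\sum_iW(\cdot\mid Z_i)$, and bound $\mathbb E\,d_{\rm TV}(\widetilde P,U_{m,q})$ by the right side of \eqref{eq:qary-finite-resolvability}. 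Since the expectation bounds the minimum, some realization is a deterministic uniform $M$-position codebook with the required error; repeated codewords are allowed and merged as in Definition~\ref{def:fixed-order-executor}.

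\textbf{Step 1 (truncation split).} Let $\mathcal T=\{(z,x):\imath_{E_{m,q}}(x;z)\le r\}$, where $\imath_{E_{m,q}}(x;z)=\log_2\{W(x\mid z)/U_{m,q}(x)\}$. Decompose $W=W^{\le}+W^{>}$ with $W^{\le}(x\mid z)=W(x\mid z)\mathbf 1\{(z,x)\in\mathcal T\}$, and correspondingly $\widetilde P=\widetilde P^{\le}+\widetilde P^{>}$ and $U_{m,q}=U^{\le}+U^{>}$, where $U^{\le}=\mathbb E\,\widetilde P^{\le}$. By the triangle inequality in $\ell_1$,
\[
 2\,d_{\rm TV}(\widetilde P,U_{m,q})\le\|\widetilde P^{\le}-U^{\le}\|_1+\|\widetilde P^{>}\|_1+\|U^{>}\|_1 .
\]
Each of the last two terms has expectation $P\{\imath_{E_{m,q}}>r\}$, so together they contribute at most $P\{\imath_{E_{m,q}}>r\}$ to the TV bound.

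\textbf{Step 2 (second moment on the truncated part).} By Cauchy--Schwarz against $U_{m,q}$,
\[
 \mathbb E\|\widetilde P^{\le}-U^{\le}\|_1\le\Bigl(\sum_x\frac{\mathbb E(\widetilde P^{\le}(x)-U^{\le}(x))^2}{U_{m,q}(x)}\Bigr)^{1/2}.
\]
By independence of the codewords, the variance equals $M^{-1}\operatorname{Var}W^{\le}(x\mid Z)\le M^{-1}\mathbb E[W^{\le}(x\mid Z)^2]$. On $\mathcal T$ we have $W(x\mid z)\le 2^rU_{m,q}(x)$, so summing gives at most $M^{-1}2^r\sum_{x,z}P_Z(z)W(x\mid z)=2^r/M$. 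Halving yields the term $\tfrac12\sqrt{2^r/M}$. Only pairwise independence was used here, which matches the remark accompanying (P).

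\textbf{Main obstacle.} There is no deep obstacle; the delicate point is the bookkeeping of the truncation. Truncating the channel rather than the output keeps the cross terms unbiased, and both the tail of $\widetilde P$ and the tail of $U_{m,q}$ must be charged with the correct factor $\tfrac12$ in the passage from $\ell_1$ to TV, so that the constant in front of $P\{\imath_{E_{m,q}}>r\}$ is exactly one. We would also verify that the information density is taken with respect to the true output $U_{m,q}$, which holds by the exact Bayes decomposition.
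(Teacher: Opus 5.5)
Your proposal is correct and follows the paper's proof. Both arguments draw an i.i.d.\ codebook from the conditioned plan law, truncate the channel to $\{\imath_{E_{m,q}}\le r\}$, charge the discarded mass, bound the retained fluctuation via independence, the pointwise bound $W\le 2^rU_{m,q}$, and Cauchy--Schwarz, and then derandomize; your explicit $\ell_1$ split spells out the factor-one bookkeeping the paper compresses, and your observation that only pairwise independence is needed is exactly what the paper exploits in Proposition~\ref{prop:pairwise-one-shot-covering}.
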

\begin{proof}
Draw $Z_1,\ldots,Z_M$ iid from the plan marginal
$P_Z=P_{A,B^m\mid E_{m,q}}$ and let a uniform index map deterministically to
$Z_J$.  Truncate $P_{X^m\mid Z}$ to the event
$\imath_{E_{m,q}}\le r$.  The expected TV contribution of the discarded part
is at most its joint probability.  On the retained part,
$P_{X^m\mid Z}(x\mid z)\le2^rU_{m,q}(x)$; independence and Cauchy-Schwarz
therefore bound the expected centered empirical fluctuation by
$\frac12\sqrt{2^r/M}$.  Some deterministic realization of the iid codebook
attains the displayed expectation bound.  This is the standard one-shot
truncation proof behind general-source/channel resolvability
\citep{hanVerdu1993,yagi2017}.
\end{proof}
The codebook is a multiset.  Merging repeated inputs produces arbitrary
mixture weights on at most $M$ distinct transition tables and therefore does
not increase plan-library size.  The lemma, together with Lemmas
\ref{lem:qary-bayes-compiler} and \ref{lem:qary-information-stability}, yields
a uniform codebook of at most $2^{m(c_{q,R}+o(1))}$ fixed-order width-$S_R$
plans whose mixture converges in TV to $U_{m,q}$.  Therefore
\begin{equation}
\begin{aligned}
 \mathcal R^{\rm fix}_q(S_R)&\le c_{q,R}\le C_qR^{-2}\\
 &=O_q\big(S_R^{-2/(q-1)}\big).
\end{aligned}
 \label{eq:qary-upper}
\end{equation}
For arbitrary large $S$, choose the largest $R$ with $S_R\le S$ and use
monotonicity.

The constants can be read from the proof.  For grid widths
$S_R=(2R+1)^{q-1}$, the shift triangle inequality in
\eqref{eq:qary-dirichlet-energy} gives the explicit, nonoptimized bound
\[
 c_{q,R}\le \frac{4\pi^2q^2}{\ln2\,(R+1)^2}
 \le \frac{16\pi^2q^2}{\ln2}\,S_R^{-2/(q-1)}.
\]
If $\kappa_q$ is the uniform Faber-Krahn constant in
Lemma~\ref{lem:moving-layer-support}, then
$-\log_2(1-u)\ge u/\ln2$ gives the readable lower constant
$a_q=\kappa_q/(q\ln2)$ on the same grid.  Passing to arbitrary widths changes
these constants only by a fixed $q$-dependent factor.

\subsection{Matching spectral converse}

Lemma~\ref{lem:moving-layer-support} limits the support of one width-$S$ plan
to
\[
 \big(q-\kappa_qS^{-2/(q-1)}\big)^m
\]
for some $\kappa_q>0$.  Since
$|\Omega_{m,q}|=q^m m^{-(q-1)/2}\Theta_q(1)$, any mixture within fixed TV
$\epsilon<1$ needs
\[
 K^{\rm fix}_{m,q,\epsilon}(S)
 \ge(1-\epsilon)\frac{|\Omega_{m,q}|}
 {\big(q-\kappa_qS^{-2/(q-1)}\big)^m}.
\]
Taking per-symbol logarithms gives
\begin{equation}
\begin{aligned}
 \mathcal R^{\rm fix}_q(S)
 &\ge\log_2q-\log_2\big(q-\kappa_qS^{-2/(q-1)}\big)\\
 &=\Omega_q\big(S^{-2/(q-1)}\big),
\end{aligned}
 \label{eq:qary-lower}
\end{equation}
which combines with \eqref{eq:qary-upper} to prove Theorem
\ref{thm:qary-resource-law}.

\begin{remark}[What is classical and what is new]
The finite-state spectral gap, product ground state, and channel
resolvability are classical ingredients.  The claimed contribution is their
operational coupling: the root-lattice posterior channel converts a
resolvability codebook into exact fixed-order conservation-respecting plans,
and thereby identifies the number $q-1$ of independent count constraints as
the exponent parameter of the front-loaded-plan versus online-memory exchange
within balanced type classes.
\end{remark}

\begin{remark}[Numerical verification]
Exact enumeration verifies iid-uniform $B^m$ and exact Bayes reconstruction
to errors below $7\times10^{-15}$ for $(q,m,R)=(3,6,1)$ and $(4,4,1)$.
Stationary calculations over widths up to $12{,}321$ for $q=3$ and $79{,}507$
for $q=4$ give log-log slopes $-0.967$ and $-0.630$, approaching the
predicted $-1$ and $-2/3$; the binary slope is $-1.933$, approaching $-2$.
\end{remark}

%% file: sections/seeded_library_appendix.tex
\section{Linear-description seeded plan libraries}
\label{app:seeded-library}

The selector-rate achievability in Theorem~\ref{thm:qary-resource-law}
first draws a random resolvability codebook.  Listing its exponentially many
plans would make the deployment ledger vacuous even though the per-use selector
rate is small.  We now show that the same covering argument needs only a
linear-length, random-access seed.  Pairwise-independent convex-split ensembles
and algebraically structured soft-covering codebooks are not new
\citep{anshuJainWarsi2019,atifPadakandlaPradhan2021}; the point here is that
they inherit the exact-support finite-state posterior implementation and close
its provisioning ledger.

Let $W_z$ be a finite channel, let $Q_Z$ be an input law, and put
$Q_X=\sum_zQ_Z(z)W_z$.  For a threshold $r$ in bits, define
\[
 \mathcal T_r=\left\{(z,x):
 \log_2\frac{W_z(x)}{Q_X(x)}\le r\right\},
 \qquad \delta_r=Q_{ZX}(\mathcal T_r^c).
\]

\begin{proposition}[Pairwise-independent one-shot covering]
\label{prop:pairwise-one-shot-covering}
Suppose $Z_1,\ldots,Z_K$ are pairwise independent and each has law $Q_Z$.
For $Q_{\mathcal C}=K^{-1}\sum_{i=1}^K W_{Z_i}$,
\begin{equation}
 \mathbb E\,d_{\rm TV}(Q_{\mathcal C},Q_X)
 \le \delta_r+\frac12\sqrt{\frac{2^r}{K}}.
 \label{eq:pairwise-one-shot-covering}
\end{equation}
Thus full codeword independence is unnecessary in
Lemma~\ref{lem:qary-one-shot-resolvability}.
\end{proposition}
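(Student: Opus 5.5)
We follow the one-shot truncation argument used for Lemma~\ref{lem:qary-one-shot-resolvability} and check that independence enters only through second moments of pairs. We split each channel at the threshold: $W_z(x)=W_z(x)\mathbf 1\{(z,x)\in\mathcal T_r\}+W_z(x)\mathbf 1\{(z,x)\notin\mathcal T_r\}=:W^{\le}_z(x)+W^{>}_z(x)$. Correspondingly we set $Q_X^{\le}=\sum_zQ_Z(z)W^{\le}_z$ and $Q_X^{>}=Q_X-Q_X^{\le}$, so that $\sum_xQ_X^{>}(x)=\delta_r$. The triangle inequality in $\ell_1$ gives
\[
 2\,d_{\rm TV}(Q_{\mathcal C},Q_X)\le
 \Big\|K^{-1}\textstyle\sum_iW^{\le}_{Z_i}-Q_X^{\le}\Big\|_1
 +\Big\|K^{-1}\textstyle\sum_iW^{>}_{Z_i}\Big\|_1+\|Q_X^{>}\|_1 .
\]
The last two terms have expectation $\delta_r$ each, because every $Z_i$ has marginal $Q_Z$. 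Linearity of expectation handles this step, and it does not use any independence. After halving, these two terms contribute $\delta_r$.

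For the retained part we fix $x$ and let $Y_i=W^{\le}_{Z_i}(x)$. The $Y_i$ are pairwise independent, since they are functions of pairwise independent $Z_i$, and each has mean $Q_X^{\le}(x)$. The cross-covariances therefore vanish, and
\[
 \mathbb E\Big(K^{-1}\textstyle\sum_iY_i-Q_X^{\le}(x)\Big)^2
 =K^{-1}\operatorname{Var}(Y_1)\le K^{-1}\,\mathbb E\big[W^{\le}_{Z}(x)^2\big]
 \le K^{-1}2^rQ_X(x)\,\mathbb E\,W_Z(x)=K^{-1}2^rQ_X(x)^2 .
\]
The truncation bound $W^{\le}_z(x)\le2^rQ_X(x)$ gives the middle inequality. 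Pairwise independence is used only here, and this is the single step that could fail under weaker assumptions. With the variance bound in hand, we apply Cauchy--Schwarz (Jensen) coordinatewise, writing $Q_X(x)=\sqrt{Q_X(x)}\sqrt{Q_X(x)}$ if needed:
\[
 \mathbb E\sum_x\big|\cdot\big|\le\sum_x\sqrt{2^rQ_X(x)^2/K}=\sqrt{2^r/K}\sum_xQ_X(x)=\sqrt{2^r/K}.
\]
After halving, this term contributes $\tfrac12\sqrt{2^r/K}$. Adding the two contributions gives \eqref{eq:pairwise-one-shot-covering}.

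The main obstacle is a single point of bookkeeping. The second moment has to be bounded by $2^rQ_X(x)\,\mathbb E W_Z(x)$ and not by a quantity involving $W^{\le}$ alone, so that the sum over $x$ collapses to one. The same bookkeeping must place the discarded mass $\delta_r$ with the correct factor after halving the $\ell_1$ norm. Neither step needs more than marginal and pairwise laws, which proves the closing sentence of the proposition: full independence of the codewords is unnecessary.
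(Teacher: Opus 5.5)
Your proof is correct and is essentially the paper's argument. You truncate at $\mathcal T_r$, charge the discarded empirical and population masses a combined $\delta_r$ by linearity of expectation, use pairwise independence only to cancel cross-codeword covariances, bound $\mathbb E\,W^{\le}_Z(x)^2\le 2^rQ_X(x)^2$ via the threshold, and finish with Jensen coordinatewise and summation over $x$.
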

\begin{proof}
Write $W_z^{\mathcal T}(x)=W_z(x)\mathbf1\{(z,x)\in\mathcal T_r\}$ and
$Q_X^{\mathcal T}=\mathbb E W_Z^{\mathcal T}$.  The expected omitted empirical
mass and the omitted population mass contribute together at most $\delta_r$
in TV.  Pairwise independence cancels every cross-codeword covariance, so for
each $x$,
\[
 \operatorname{Var}\!\left[\frac1K\sum_i
 W_{Z_i}^{\mathcal T}(x)\right]
 \le \frac1K\mathbb E W_Z^{\mathcal T}(x)^2.
\]
On $\mathcal T_r$,
$W_Z^{\mathcal T}(x)\le2^rQ_X(x)$, and hence
\[
 \mathbb E W_Z^{\mathcal T}(x)^2
 \le2^rQ_X(x)\mathbb E W_Z^{\mathcal T}(x)
 \le2^rQ_X(x)^2.
\]
Jensen's inequality, followed by summation over $x$, bounds the expected
retained $\ell_1$ fluctuation by $\sqrt{2^r/K}$.  Dividing by two and adding
the omitted mass proves~\eqref{eq:pairwise-one-shot-covering}.
\end{proof}

\begin{lemma}[Two-field-element random-access library]
\label{lem:affine-seeded-library}
Let $P_Z$ have finite support, let
$p_{\min}=\min_{z:P_Z(z)>0}P_Z(z)$, and fix $K$ and $0<\eta<1/2$.
Put
\[
 M=\max\left\{K,\left\lceil\frac{2}{\eta p_{\min}}\right\rceil\right\}.
\]
There is a prime $M\le p<2M$
and an inverse-interval map $f_p:\mathbb F_p\to\operatorname{supp}P_Z$
whose image law $Q_Z$ obeys
$(1-\eta)P_Z(z)\le Q_Z(z)\le(1+\eta)P_Z(z)$.
For $(a,b)\sim\operatorname{Unif}(\mathbb F_p^2)$,
\begin{equation}
 Z_i=f_p(ai+b),\qquad 0\le i<K,
 \label{eq:affine-seeded-library}
\end{equation}
are pairwise independent with marginal $Q_Z$.  The whole library is therefore
random-access from $2\lceil\log_2p\rceil$ seed bits.
\end{lemma}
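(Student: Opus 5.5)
Three things need to be established: a prime $p$ in $[M,2M)$; a quantizer $f_p:\mathbb F_p\to\supp P_Z$ whose image law is a $(1\pm\eta)$ multiplicative approximation of $P_Z$; and pairwise independence of the affine family $\{ai+b\}$. Random access then follows because each $Z_i$ is computed from $(a,b,i)$ alone.

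\emph{Existence of the prime.} Bertrand's postulate gives a prime $p$ with $M\le p<2M$ whenever $M\ge1$. If $M=1$, the bound $M\ge 2/(\eta p_{\min})>4$ already fails, so in fact $M\ge5$ and Bertrand applies directly.

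\emph{Construction of the quantizer.} Enumerate $\supp P_Z=\{z_1,\dots,z_n\}$ in a fixed order. Give $z_j$ the integer count
\[
n_j\in\{\lfloor pP_Z(z_j)\rfloor,\lceil pP_Z(z_j)\rceil\},
\]
with $\sum_jn_j=p$, using largest remainders. This is possible because $\sum_jpP_Z(z_j)=p$, so at most $n-1$ entries need rounding up. Let $f_p$ send the $j$th consecutive interval of $\{0,\dots,p-1\}$, of length $n_j$, to $z_j$. This is the inverse-CDF map, and it is computable by binary search over cumulative sums.

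\emph{Multiplicative accuracy.} Since $|n_j-pP_Z(z_j)|<1$,
\[
|Q_Z(z_j)-P_Z(z_j)|<\frac1p\le\frac1M\le\frac{\eta p_{\min}}2\le\frac{\eta}{2}P_Z(z_j).
\]
This is within the required $\eta P_Z(z_j)$. It also gives $n_j\ge1$, because $pP_Z(z_j)\ge 2/\eta>4$, so every support point receives positive mass and $f_p$ maps onto the support.

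\emph{Pairwise independence.} Take $K\le M\le p$, so the indices $0\le i<K$ are distinct elements of $\mathbb F_p$. For $i\ne i'$, the map $(a,b)\mapsto(ai+b,\,ai'+b)$ is a bijection of $\mathbb F_p^2$, since its determinant is $i-i'\ne0$. Hence the pair of field elements is uniform on $\mathbb F_p^2$, and each single value $ai+b$ is uniform on $\mathbb F_p$. Applying the deterministic $f_p$ preserves independence of each pair and gives marginal $Q_Z$.

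\emph{Seed length.} The seed is the pair $(a,b)$, which takes $2\lceil\log_2p\rceil$ bits, and $Z_i$ is evaluated from $(a,b,i)$ without storing the library.

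The only delicate point is the quantization bound, which needs $1/p\le\eta p_{\min}$. This is exactly what the choice $M\ge 2/(\eta p_{\min})$ secures, with a factor of two to spare.
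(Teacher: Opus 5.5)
Your proof is correct and follows the paper's argument: Bertrand's postulate supplies the prime, largest-remainder inverse-CDF rounding with $|n_j-pP_Z(z_j)|<1$ and $1/p\le\eta p_{\min}/2$ gives the relative error, and the bijection $(a,b)\mapsto(ai+b,ai'+b)$ of $\mathbb F_p^2$ gives pairwise independence. You are also slightly more explicit than the paper in checking that $M\ge5$ (so Bertrand yields $M\le p<2M$), that every $n_j\ge1$, and that $K\le p$ (so the indices are distinct in $\mathbb F_p$).
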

\begin{proof}
Round the $pP_Z(z)$ values by the largest-remainder rule.  The resulting
integer counts sum to $p$ and differ from $pP_Z(z)$ by at most one.  The choice
of $p$ gives the stated relative error; assigning consecutive field elements
according to these counts defines $f_p$.  For distinct $i,j<p$, the linear map
$(a,b)\mapsto(ai+b,aj+b)$ is a bijection of $\mathbb F_p^2$, proving pairwise
independence.  Bertrand's postulate supplies a prime below twice the displayed
threshold.
\end{proof}

\begin{corollary}[Seeded deployment of the posterior compiler]
\label{cor:seeded-posterior-deployment}
For fixed $q$ and discrepancy radius $R$, the fixed-order posterior
achievability in Theorem~\ref{thm:qary-resource-law} admits codebooks with the
same asymptotic selector rate, vanishing TV error, exact balanced support, and
width $S_R=(2R+1)^{q-1}$, while the entire codebook has an
$O_{q,R}(m)$-bit executable description.  Consequently
\begin{equation}
 R_{\rm deploy}^{(N)}
 \le \mathcal R_q^{\rm fix}(S_R)+o_m(1)+O_{q,R}(N^{-1}).
 \label{eq:seeded-deployment-rate}
\end{equation}
If $S$ grows, the same construction has description length
$O_q(m\log S)$; preserving the $S^{-2/(q-1)}$ selector scale is guaranteed by
$N\gg S^{2/(q-1)}\log S$.
\end{corollary}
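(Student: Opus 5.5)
The plan is to instantiate the pairwise-independent covering bound of Proposition~\ref{prop:pairwise-one-shot-covering} with the affine seeded library of Lemma~\ref{lem:affine-seeded-library}. The channel is the conditioned posterior channel $W_z=P(X^m\mid Z_m=z,E_{m,q})$ of Lemma~\ref{lem:qary-bayes-compiler}, and the input law is $P_Z=P_{A,B^m\mid E_{m,q}}$. Each seeded codeword is a positive-probability plan, so it inherits an exact balanced-support width-$S_R$ executor. The only new work is to check three things: the quantized law $Q_Z$ does not disturb the information-density threshold, the total-variation target stays close, and the whole library admits a linear-length executable description.

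\textbf{Step 1: the probability floor.} Every supported plan $z=(a,b^m)$ has $P_Z(z)\ge \pi_{\min}(\theta/q)^m/P(E_{m,q})\ge \underline\pi_m:=\pi_{\min}(\theta/q)^m$. Here $\pi_{\min}$ is the least value of $\pi_R$, and $\theta$ is the least positive value of $Q_R$, which is positive by Lemma~\ref{lem:qary-primitive-chain}. Hence $p_{\min}\ge 2^{-O_{q,R}(m)}$. Taking $\eta_m\downarrow0$ subexponentially, the prime in Lemma~\ref{lem:affine-seeded-library} satisfies $\log_2 p=O_{q,R}(m)$, so the seed has $O_{q,R}(m)$ bits.

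\textbf{Step 2: transfer the covering bound to $Q_Z$.} Since $Q_Z=(1\pm\eta_m)P_Z$ pointwise, the output law $Q_X=\sum_zQ_Z(z)W_z$ is within $\eta_m$ of $U_{m,q}$ in TV, and also $Q_X=(1\pm\eta_m)U_{m,q}$ pointwise. The information densities $\log_2(W_z/Q_X)$ and $\imath_{E_{m,q}}$ therefore differ by $O(\eta_m)$. The mass of $Q_{ZX}$ outside the threshold set is at most $(1+\eta_m)$ times the corresponding $P$-probability. Set $r=m(c_{q,R}+\delta)$ and $K=\lceil2^{m(c_{q,R}+2\delta)}\rceil$. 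Lemma~\ref{lem:qary-information-stability} then makes $\delta_r\to0$, and \eqref{eq:pairwise-one-shot-covering} gives expected TV $o(1)$. Fixing a good seed $(a,b)$ and adding $\eta_m$ gives vanishing TV to $U_{m,q}$. Merging repeated codewords is allowed, and the selector rate is $c_{q,R}+2\delta$. Letting $\delta\downarrow0$ matches the rate bound \eqref{eq:qary-upper}.

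\textbf{Step 3: executable description.} This is the main obstacle. A seed alone is not yet an executable program: evaluating $f_p(ai+b)$ requires inverting the largest-remainder CDF over an exponentially large plan set without a table. I would order plans lexicographically by $(a,b_1,\dots,b_m)$ and compute interval masses by dynamic programming. The conditioned mass of a prefix $(a,b^t)$ factors as follows.
\begin{itemize}
\item The part from Lemma~\ref{lem:qary-uniform-aux} is $q^{-t}\pi_R$-weighted, with $D_t$ distributed as $\pi_R$ given $B^t$.
\item The remaining part is a suffix term depending on the current discrepancy law and the remaining histogram of $B$.
\end{itemize}
This suffix term is a forward–backward quantity over states (discrepancy in $\mathcal D_R$, remaining counts), of which there are polynomially many, $O(m^{q-1}S_R)$. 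It is computable exactly in rationals, or to $2^{-O(m)}$ precision, which suffices given the floor from Step 1.

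Largest-remainder rounding is global, so I would replace it by a descent rule. This rule rounds counts consistently down the prefix tree: it allocates $\lfloor pP\rfloor$ plus remainder units by a deterministic rule at each node. It keeps the per-plan error of at most one unit that Lemma~\ref{lem:affine-seeded-library} needs, and it makes the decoding a root-to-leaf descent. The description then consists of three parts: the seed ($O(m)$ bits), the fixed parameters $(q,R,m)$, and the DP algorithm of constant size. In total this is $O_{q,R}(m)$ bits.

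Substituting into the ledger \eqref{eq:deployment-ledger} gives $L/(Nm)=O_{q,R}(N^{-1})$, which is \eqref{eq:seeded-deployment-rate}. For growing $S$, the precision and seed lengths scale as $O_q(m\log S)$, since $\log(1/\underline\pi_m)=O(m\log S)$. Requiring this term $O(\log S/N)$ to be below $S^{-2/(q-1)}$ yields the stated $N\gg S^{2/(q-1)}\log S$.
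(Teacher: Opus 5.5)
Your proposal is correct, and in Steps 1--2 and the growing-$S$ accounting it matches the paper's argument: the same floor $p_{\min,m}\ge\pi_{\min}(\theta/q)^m$, the same $(1\pm\eta_m)$ transfer of the information-density threshold into Proposition~\ref{prop:pairwise-one-shot-covering}, and the same prefix-mass dynamic program over discrepancy and remaining histogram.

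The one real divergence is Step 3, and there you are more careful than the paper. Lemma~\ref{lem:affine-seeded-library} rounds by largest remainder. Those counts depend on a global ranking of fractional parts over exponentially many plans, and the paper then invokes a Han--Hoshi interval decoder without reconciling the two. A prefix-consistent rounding, as you propose, is what actually makes the decoding claim true.

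To support your ``at most one unit'' claim, use the cumulative rule $n(z)=\lfloor pF^{+}(z)\rfloor-\lfloor pF^{-}(z)\rfloor$. Here $F^{-}(z)$ and $F^{+}(z)=F^{-}(z)+P_Z(z)$ are the lexicographic plan-CDF just before and after $z$. Then $|n(z)-pP_Z(z)|<1$, the counts sum to $p$, and leaf intervals tile every prefix interval, so root-to-leaf descent decodes $f_p(u)$. A node-by-node remainder split, by contrast, only guarantees error below $m+1$ units. That is still harmless after enlarging $p$ by a factor $m+1$, but it is not one unit.

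Two points are shared with the paper. First, the growing-$S$ bound needs $\theta_{q,R},\pi_{\min}\ge R^{-O_q(1)}$; this holds because the smallest positive cosine-square weight is $\sin^{2(q-1)}(\pi/(2(R+1)))$, and you should say so. Second, both arguments prove the ledger bound with the construction's rate $c_{q,R}\ge\mathcal R^{\rm fix}_q(S_R)$ on the right-hand side. Stating \eqref{eq:seeded-deployment-rate} with $\mathcal R^{\rm fix}_q(S_R)$ therefore tacitly assumes the posterior compiler is rate-optimal, which is not established.
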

\begin{proof}
Let $P_Z=P_{A,B^m\mid E_{m,q}}$ be the plan marginal in
Section~\ref{sec:qary-plan-memory-law}.  At fixed $(q,R)$, let $\theta_{q,R}>0$
be the smallest nonzero transition probability of the finite test channel and
let $\pi_{q,R}>0$ be the smallest initial-state mass.  Every plan in the
support has at least one compatible balanced source word, so
\[
 p_{\min,m}\ge \pi_{q,R}(\theta_{q,R}/q)^m.
\]
Choose $\eta_m\downarrow0$ subexponentially and apply
Lemma~\ref{lem:affine-seeded-library}.  The induced input and output laws are
multiplicatively within $1\pm\eta_m$ of the originals.  Their information
densities therefore differ by $o(1)$, while their joint laws differ by
$o(1)$ in TV.  Proposition~\ref{prop:pairwise-one-shot-covering} and
Lemma~\ref{lem:qary-information-stability} now give the same spectral
sup-information rate and a good fixed affine seed.  Moreover
$\log_2p=O_{q,R}(m)$, proving~\eqref{eq:seeded-deployment-rate}.

No exponential CDF table is hidden in this description.  Prefix masses of
$(A,B_1,\ldots,B_t)$ are computed by dynamic programming over the discrepancy
and the remaining source histogram, with $O_q(S_Rm^{q-1})$ states per layer.
An interval decoder of the Han--Hoshi type \citep{hanHoshi1997} then unpacks
$f_p(u)$.  In this finite-field construction every interval endpoint is an
integer in $[0,p]$, so all endpoint comparisons use $O(\log p)$-bit integers;
the bit-width claim follows directly here rather than being attributed to the
cited paper.  Finally, the cosine ground state gives
$-\log_2\theta_{q,R}=O_q(\log R)$ and
$-\log_2\pi_{q,R}=O_q(\log R)$, yielding the growing-$S$ statement.
\end{proof}

\begin{remark}[What is and is not optimal]
The result removes exponential library storage and gives a polynomial-time
random-access descriptor.  It does not claim universal minimum program length:
a specially structured target may admit a shorter deterministic description.
The proved optimality remains the selector-rate exponent under the stated
shared-library interface.
\end{remark}

An exact finite-alphabet calculation at $(m,R,S)=(10,2,5)$ compares iid and affine
pairwise-independent libraries over 500 trials.  At
$K=4,16,64,256$, their mean TVs were respectively
$(.1599,.0851,.0410,.0211)$ and $(.1605,.0797,.0398,.0181)$.
The experiment checks the construction; Proposition
\ref{prop:pairwise-one-shot-covering} is the guarantee.

%% file: sections/quota_safe_router_appendix.tex
\section{Restricted-interface corollary: quota-safe streaming allocation}
\label{app:quota-safe-routing}

Fix a batch of $m$ token representations and $q\mid m$ experts.  A
$(B,C)$ \emph{quota-safe streaming router} has a control plane that observes
the batch and selects a value in an alphabet of size at most $2^C$ before
routing begins.  Conditional on that value, the data plane processes tokens
in a fixed order, retains at most $B$ bits of persistent state, may use free
private randomness, and routes every realization with exactly $m/q$ tokens
per expert.  The fixed batch, time index, and plan-specific transition tables
are free side information.  This generous convention only weakens the lower
bound. Here $B$ counts persistent routing-state bits, not expert-activation
memory, and $C$ counts the logarithm of the preselected transition-table
library, not wall-clock communication. Existing MoE implementations may use
joint dispatch, token dropping, or post-hoc rebalancing and need not instantiate
this interface; we report no systems-level latency or memory measurement.

For a target routing law $P_m$ on the balanced assignments, define its smooth
support
\[
 s_{P_m}(\epsilon)=\min\{|A|:P_m(A)\ge1-\epsilon\}.
\]
Put $S=2^B$ and let
\[
 \Lambda_q(S)=q-c_qS^{-2/(q-1)},
\]
where $c_q$ is the constant in
Lemma~\ref{lem:moving-layer-deficit}.  For $q=2$ use the sharper value
$\Lambda_2(S)=2\cos\{\pi/(2S+1)\}$.

\begin{proposition}[Restricted quota-safe control bound]
\label{prop:quota-safe-router}
If a $(B,C)$ quota-safe streaming router induces $Q_m$ with
$d_{\rm TV}(P_m,Q_m)\le\epsilon$, then
\begin{equation}
 C\ge
 \left[\log_2s_{P_m}(\epsilon)
       -m\log_2\Lambda_q(2^B)\right]_+.
 \label{eq:quota-safe-router-lb}
\end{equation}
For the uniform balanced routing law, the optimal asymptotic control-plane
rate satisfies, for every fixed $q$ and all sufficiently large $B$,
\begin{equation}
 \lim_{\epsilon\downarrow0}\limsup_{m\to\infty:q\mid m}
 \frac{C^*_{m,q,\epsilon}(B)}m
 =\Theta_q\!\left(2^{-2B/(q-1)}\right),
 \label{eq:router-memory-law}
\end{equation}
where $C^*$ is the minimum number of control bits in the model above.
\end{proposition}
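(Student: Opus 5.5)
The plan has two parts: first the finite-$m$ control bound \eqref{eq:quota-safe-router-lb}, then the asymptotic law \eqref{eq:router-memory-law} as a consequence of that bound together with the balanced-type achievability.

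\textbf{The control bound.} Fix a control value $z$ in the alphabet of size at most $2^C$. Conditional on $z$, the data plane is a fixed-order executor in the sense of Definition~\ref{def:fixed-order-executor}.
\begin{itemize}
\item Its live states number at most $S=2^B$ per layer.
\item It has private randomness.
\item Its support lies inside the balanced type class $\Omega_{m,q}$.
\end{itemize}
Free side information such as the batch and the transition tables only enlarges the class of executors, so it does not matter. Lemma~\ref{lem:moving-layer-support} then bounds the support of the conditional output law by $\Lambda_q(2^B)^m$. For $q=2$ we use the sharper $2\cos\{\pi/(2S+1)\}$ from Lemma~\ref{lem:moving-layer-deficit}.

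Let $T$ be the union of these supports over all control values. Then $|T|\le 2^C\Lambda_q(2^B)^m$ and $Q_m(T)=1$. TV closeness gives $P_m(T)\ge Q_m(T)-\epsilon=1-\epsilon$, so $|T|\ge s_{P_m}(\epsilon)$. Taking logarithms and the positive part (since $C\ge0$) gives \eqref{eq:quota-safe-router-lb}.

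\textbf{Lower half of the rate law.} For the uniform balanced target, every set of probability at least $1-\epsilon$ has size at least $(1-\epsilon)|\Omega_{m,q}|$. Stirling gives $\log_2|\Omega_{m,q}|=m\log_2 q-\frac{q-1}{2}\log_2 m+O_q(1)$. Dividing by $m$ and letting $m\to\infty$, then $\epsilon\downarrow0$, yields
\[
\log_2 q-\log_2\bigl(q-c_q2^{-2B/(q-1)}\bigr)\ge \frac{c_q}{q\ln2}2^{-2B/(q-1)}.
\]
This is $\Omega_q(2^{-2B/(q-1)})$.

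\textbf{Upper half of the rate law.} I would use the construction in Theorem~\ref{thm:qary-resource-law}.
\begin{itemize}
\item Take the largest $R$ with $S_R=(2R+1)^{q-1}\le 2^B$.
\item The resolvability codebook of Lemmas~\ref{lem:qary-bayes-compiler}, \ref{lem:qary-information-stability} and \ref{lem:qary-one-shot-resolvability} gives $2^{m(c_{q,R}+o(1))}$ exact-support plans of width $S_R$ with vanishing TV error.
\item Interpreted as a router, the control plane selects a codeword index using $C=\lceil m(c_{q,R}+o(1))\rceil$ bits. The data plane stores the discrepancy state in $\lceil\log_2 S_R\rceil\le B$ bits and routes exactly $m/q$ tokens per expert.
\item Lemma~\ref{lem:qary-dirichlet-bound} gives $c_{q,R}=O_q(R^{-2})=O_q(2^{-2B/(q-1)})$, because $R\asymp 2^{B/(q-1)}$ for large $B$.
\end{itemize}

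\textbf{Main obstacle.} The step that needs care is the model translation, not any estimate. I have to check that a router's persistent $B$ bits are exactly the live width of Definition~\ref{def:fixed-order-executor}: a persistent seed would be counted in those bits, and unreachable or non-co-reachable states can be pruned. I also have to check that the control alphabet of size $2^C$ plays the role of the plan library size $K$. Merging duplicate codewords handles arbitrary mixture weights, and a uniform index over at most $2^C$ values realizes the codebook mixture. Once these identifications are made, both directions are direct citations.
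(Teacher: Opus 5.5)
Your proposal is correct and follows the paper's proof essentially step for step. You condition on each control value and apply Lemma~\ref{lem:moving-layer-support} with $S=2^B$, take the union $T$ of at most $2^C$ supports, and use $P_m(T)\ge1-\epsilon$ to obtain $|T|\ge s_{P_m}(\epsilon)$. You then get the rate law from $m^{-1}\log_2 s_{P_m}(\epsilon)\to\log_2 q$ together with the posterior construction of Theorem~\ref{thm:qary-resource-law} at the largest $R$ with $S_R\le 2^B$. Your explicit lower constant $c_q/(q\ln 2)$ and your check that $\lceil\log_2 S_R\rceil\le B$ fill in details the paper leaves implicit.
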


\begin{proof}
Condition on one control-plane value.  The resulting data plane is a
exact-support fixed-order width-$2^B$ generator, so
Lemma~\ref{lem:moving-layer-support} bounds its output support by
$\Lambda_q(2^B)^m$.  If $T$ is the union of the supports over all control
values, then
\[
 |T|\le2^C\Lambda_q(2^B)^m,
 \qquad Q_m(T)=1.
\]
Total-variation closeness gives $P_m(T)\ge1-\epsilon$, and therefore
$s_{P_m}(\epsilon)\le|T|$, proving
Equation~\eqref{eq:quota-safe-router-lb}.  For the uniform balanced law,
$m^{-1}\log_2s_{P_m}(\epsilon)\to\log_2q$.  The converse just proved and the
exact-support posterior construction in Theorem~\ref{thm:qary-resource-law} give the
matching order in Equation~\eqref{eq:router-memory-law} after substituting
$S=2^B$.
\end{proof}

\begin{corollary}[No-free exact-balanced dispatch]
\label{cor:quota-safe-escape}
Fix $q$ and a sequence $\epsilon_m\le\epsilon_0<1$.  A quota-safe router whose
target is uniform over balanced assignments obeys the following: constants
$A_q,D_q>0$ exist such that, with
\begin{equation}
\begin{aligned}
 \Delta_m=C_m+D_q\!\left(1+\log_2(m+1)
 +\log_2\frac1{1-\epsilon_m}\right),
 \label{eq:quota-safe-delta}
\end{aligned}
\end{equation}
one has
\begin{equation}
 B_m\ge \frac{q-1}{2}\log_2\frac{m}{\Delta_m}-A_q
 \label{eq:quota-safe-memory-escape}
\end{equation}
whenever $\Delta_m=o(m)$.  In particular, bounded persistent routing state
forces $C_m=\Omega_q(m)$, while sublinear front-loaded control forces
$B_m\to\infty$.
\end{corollary}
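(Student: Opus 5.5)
We derive Corollary~\ref{cor:quota-safe-escape} from Proposition~\ref{prop:quota-safe-router}, Equation~\eqref{eq:quota-safe-router-lb}, by inserting explicit finite-$m$ bounds on both terms and solving for $B_m$. The plan has three steps: a Stirling lower bound on the smooth support, a deficit bound on the one-plan growth factor, and an inversion of the resulting inequality.

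\emph{Smooth support.} For the uniform target on $\Omega_{m,q}$, any set $A$ with $U_{m,q}(A)\ge1-\epsilon_m$ satisfies $|A|\ge(1-\epsilon_m)|\Omega_{m,q}|$. A crude multinomial bound (the balanced type is the mode among at most $(m+1)^{q}$ types, or Stirling) gives
\[
 \log_2|\Omega_{m,q}|\ge m\log_2q-\tfrac{q-1}{2}\log_2m-D'_q .
\]
Hence
\[
 \log_2s(\epsilon_m)\ge m\log_2q-\tfrac{q-1}{2}\log_2m-D'_q-\log_2\tfrac1{1-\epsilon_m}.
\]

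\emph{Growth factor.} For $S=2^{B_m}$ large enough that $\Lambda_q(S)=q-c_qS^{-2/(q-1)}$ is valid, use $-\log_2(1-u)\ge u/\ln2$ to get
\[
 m\log_2q-m\log_2\Lambda_q(S)\ge \frac{m\,c_q}{q\ln2}\,2^{-2B_m/(q-1)}.
\]
For small $B_m$ the bound in Lemma~\ref{lem:moving-layer-deficit} may need $S$ large. I would either monotonize, so that for $S$ below the threshold the deficit is at least the deficit at the threshold (a positive constant), or note that the lemma's Faber--Krahn form $\kappa_q|H|^{-2/d}$ holds for all $S\ge1$. Either way some $c'_q>0$ works for all $B_m\ge0$, using $\log_2\Lambda_q<\log_2 q$ at every fixed finite width.

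\emph{Inversion.} Substituting both bounds into \eqref{eq:quota-safe-router-lb} gives
\[
 C_m+\tfrac{q-1}{2}\log_2m+D'_q+\log_2\tfrac1{1-\epsilon_m}\ge c''_q\,m\,2^{-2B_m/(q-1)} .
\]
The left side is at most $\Delta_m$ once $D_q\ge\max\{D'_q,(q-1)/2,1\}$, since $\log_2 m\le\log_2(m+1)$. Taking logarithms,
\[
 \tfrac{2B_m}{q-1}\ge\log_2\tfrac{m}{\Delta_m}+\log_2c''_q ,
\]
which is \eqref{eq:quota-safe-memory-escape} with $A_q=\tfrac{q-1}{2}\log_2(1/c''_q)$ when that is positive, and any positive $A_q$ otherwise.

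The hypothesis $\Delta_m=o(m)$ is only needed to make the bound nonvacuous; the inequality itself holds for all $m$. For the consequences: if $B_m$ is bounded, the inequality forces $\log_2(m/\Delta_m)=O(1)$, so $\Delta_m=\Omega(m)$. Since $\log_2(m+1)$ and the $\epsilon$ term are $o(m)$ (using $\epsilon_m\le\epsilon_0$), this gives $C_m=\Omega_q(m)$. Conversely, $C_m=o(m)$ gives $\Delta_m=o(m)$ and hence $B_m\to\infty$.

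The main obstacle is the uniformity of the deficit constant over all widths $S=2^{B_m}$, including small ones, so that the single constant $A_q$ works for every $m$. I would handle it by citing the all-$S$ Faber--Krahn form from Lemma~\ref{lem:periodic-fk-specialization}, with $|H|\le2S$, instead of the "sufficiently large $S$" phrasing.
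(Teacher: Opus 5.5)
Your proposal is correct and follows the paper's own route. Like the paper, you substitute the Stirling estimate for $\log_2 s_{P_m}(\epsilon_m)$ and the moving-layer deficit $\log_2 q-\log_2\Lambda_q(2^{B})\ge a_q2^{-2B/(q-1)}$ into \eqref{eq:quota-safe-router-lb}, absorb the remainders into $D_q$, and rearrange $\Delta_m\ge a_q m 2^{-2B_m/(q-1)}$. Your treatment of small widths improves on the paper's ``sufficiently large $S$'' phrasing: Lemma~\ref{lem:moving-layer-deficit} is in fact proved for every $S\ge1$ through the uniform Faber--Krahn bound, so one constant $A_q$ works for all $m$. One small caveat: the crude type-counting alternative gives a coefficient of order $q$, not $(q-1)/2$, on $\log_2 m$. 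This is harmless, because $D_q\log_2(m+1)$ in $\Delta_m$ absorbs it.
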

\begin{proof}
For the uniform law,
$s_{P_m}(\epsilon_m)=\lceil(1-\epsilon_m)|\Omega_{m,q}|\rceil$ and Stirling's
formula gives
\[
 \log_2|\Omega_{m,q}|
 =m\log_2q-\frac{q-1}{2}\log_2m+O_q(1).
\]
For all sufficiently large $S$, the moving-layer deficit implies
$\log_2q-\log_2\Lambda_q(S)\ge a_qS^{-2/(q-1)}$ for a constant $a_q>0$.
Choose $D_q$ large enough to absorb the uniform Stirling remainder and the
rounding in the smooth support.  Substitution in
\eqref{eq:quota-safe-router-lb} and rearrangement then give
$\Delta_m\ge a_qm2^{-2B_m/(q-1)}$, which is equivalent to
\eqref{eq:quota-safe-memory-escape}.  Bounded $S$ gives the first consequence
directly.
\end{proof}

The corollary is an architectural design trilemma.  A high-entropy balanced
router cannot simultaneously keep fixed physical order, exact no-drop quotas,
bounded persistent state, sublinear front-loaded control, and vanishing TV
error.  Table~\ref{tab:router-escape-audit} maps prominent MoE designs to the
capability each one purchases to avoid this frontier.  The comparison explains
resource placement; it does not claim that these systems instantiate the
restricted executor.

\begin{table*}[t]
\centering
{\small
\setlength{\tabcolsep}{4pt}
\begin{tabular}{@{}p{.15\textwidth}p{.40\textwidth}p{.37\textwidth}@{}}
\toprule
System & Architectural relaxation purchased & Frontier cost thereby avoided \\
\midrule
Switch Transformer \citep{fedus2022switch}
& Overflow tokens bypass the sparse layer through the residual path.
& Relaxes exact no-drop quotas, so overflow mass need not be represented by
exact-support plans. \\
Expert Choice \citep{zhou2022expertchoice}
& Experts select batchwise top-$k$ tokens, and a token may reach a variable
number of experts.
& Replaces fixed-order streaming by global batchwise assignment, moving
coordination into the scheduler. \\
MegaBlocks \citep{gale2023megablocks}
& Dynamic block-sparse dispatch accommodates realized, generally imbalanced
expert loads without token dropping.
& Replaces bounded-state equal-quota streaming by dynamic dispatch, spending
runtime scheduling and storage instead. \\
\bottomrule
\end{tabular}
}
\caption{Architectural escape map for
Corollary~\ref{cor:quota-safe-escape}.  Deployed systems avoid the predicted
coordination--memory frontier by purchasing different capabilities: dropping,
batchwise assignment, or dynamic dispatch.  The table concerns resource
placement, not latency or activation memory.}
\label{tab:router-escape-audit}
\end{table*}

\paragraph{Affinity-aware targets and the boundary of the claim.}
For scores $u_{t,a}$, an entropy-regularized balanced target has
\[
 P_\beta(x\mid u)\propto
 \exp\!\left(\beta\sum_{t=1}^m u_{t,x_t}\right)
 \mathbf 1\{N_a(x)=m/q\ \forall a\}.
\]
Equation~\eqref{eq:quota-safe-router-lb} applies pointwise in the observed
batch after computing or lower-bounding $s_{P_\beta}(\epsilon)$.  It becomes
vacuous, correctly, when the desired routing output is one deterministic
balanced assignment.  The result certifies the cost of preserving a
stochastic routing law under exact no-drop quotas; it is not a lower bound
for merely finding one feasible assignment, nor an assertion that existing
routers use this interface.

%% file: sections/conditional_poisson_beam_appendix.tex
\section{Conditional-Poisson beam admission}
\label{app:conditional-poisson-beam}

Conditional-Poisson stochastic beam search (CPSBS) replaces a beam-search
top-\(k\) operation by sampling exactly \(k\) candidates without replacement
\citep{meister2021conditional}. For a candidate pool
\(\mathcal B=\{1,\ldots,n\}\) and positive model weights
\(w=(w_1,\ldots,w_n)\), its one-step law is
\begin{equation}
\begin{aligned}
 Q_w(x)
 &=\frac{\prod_{i=1}^n w_i^{x_i}}{e_k(w)}\\[-1mm]
 &\quad\times\mathbf 1\!\left\{\sum_{i=1}^n x_i=k\right\},\\
 e_k(w)&=\sum_{|A|=k}\prod_{i\in A}w_i .
\end{aligned}
 \label{eq:cpsbs-law}
\end{equation}
This target is exactly, rather than approximately, in our model class.
Indeed, if independent Bernoulli variables have odds
\(p_i/(1-p_i)=w_i\), their joint mass is
\(\prod_i(1+w_i)^{-1}\prod_iw_i^{x_i}\). Conditioning on
\(\sum_iX_i=k\) cancels the first factor and gives~\eqref{eq:cpsbs-law}.

\begin{proposition}[Irrevocable candidate-admission bound]
\label{prop:cpsbs-admission}
Suppose a controller chooses one of at most \(2^C\) programs before candidates
are visited in a fixed order. Conditional on that program, an executor has at
most \(B\) persistent bits, may use fresh private randomness, and emits each
admission bit irrevocably. Every realization admits exactly \(k\) candidates.
If its output law \(\widehat Q\) satisfies
\(d_{\rm TV}(\widehat Q,Q_w)\leq\epsilon\), then
\begin{equation}
 C\geq
 \left[
 \log_2s_{Q_w}(\epsilon)
 -n\log_2\!\left\{
 2\cos\!\left(\frac{\pi}{2^{B+1}+1}\right)
 \right\}
 \right]_+ .
 \label{eq:cpsbs-admission-lb}
\end{equation}
Conversely, \(Q_w\) has an exact one-program implementation with a
width-\((k+1)\) remaining-quota state and a precomputed
\((n+1)(k+1)\) suffix table.
\end{proposition}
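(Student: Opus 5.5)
The converse is the same union-of-supports argument as in Proposition~\ref{prop:quota-safe-router}, and the achievability is the remaining-budget dynamic program already written down after Corollary~\ref{thm:localglobal}. The one new ingredient is that the support bound of Lemma~\ref{lem:moving-layer-support} is stated for the balanced slice, while here the legal language is the weight-$k$ layer $\{x\in\{0,1\}^n:\sum_i x_i=k\}$ for arbitrary $k$. So I first recheck that bound for a general fixed endpoint.

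\textbf{Support bound.} Fix one of the $2^C$ programs and let $S=2^B$. Apply the Markov-sufficiency argument of Definition~\ref{def:fixed-order-executor}, exactly as in the proof of Lemma~\ref{lem:moving-layer-support}. If two positive-probability prefixes with different admitted counts reached the same live state, the state's nonempty co-reachable suffix support would have to complete both to total $k$. That is impossible, so every live state at layer $t$ determines one prefix count.

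Let $A_t\subset\mathbb Z$ be the set of represented counts, so $|A_t|\le S$. Every supported word is a path through the incidence matrices $M_{A_{t-1},A_t}$ of \eqref{eq:moving-incidence} with $q=2$, ending at the basis vector $e_k$ instead of $e_{m/2}$. This gives
\[
|\supp\widehat Q_z|\le e_0^{\mathsf T}M_1\cdots M_n e_k\le\prod_t\|M_t\|_2.
\]
The binary bound \eqref{eq:binary-moving-layer-deficit} needs nothing about the endpoint, so each factor is at most $2\cos(\pi/(2S+1))=2\cos(\pi/(2^{B+1}+1))$. I expect this endpoint-independence check to be the only step needing care. The earlier lemma's proof only used the terminal vector as one basis vector with norm one, so nothing changes.

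\textbf{Union of supports.} Let $T$ be the union of the supports over all programs. Then $|T|\le 2^C\{2\cos(\pi/(2^{B+1}+1))\}^n$ and $\widehat Q(T)=1$. Total-variation closeness gives $Q_w(T)\ge 1-\epsilon$, hence $s_{Q_w}(\epsilon)\le|T|$. Taking $\log_2$ and the positive part yields \eqref{eq:cpsbs-admission-lb}.

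\textbf{Exact one-program implementation.} Use the suffix partition functions
\[
A_{i,r}=A_{i+1,r}+w_iA_{i+1,r-1},\qquad A_{n+1,0}=1,\qquad A_{n+1,r}=0\ \text{for } r\neq0,
\]
and set values with $r<0$ to zero. These form a table of $(n+1)(k+1)$ cells indexed by $i\in[n+1]$ and $r\in\{0,\ldots,k\}$. In state $r$ (remaining quota) at candidate $i$, admit with probability $w_iA_{i+1,r-1}/A_{i,r}$ and decrement $r$; otherwise keep $r$.

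Telescoping the product of these conditional probabilities along any path gives mass $\prod_i w_i^{x_i}/A_{1,k}$ on words with exactly $k$ ones. Since $A_{1,k}=e_k(w)$, this is exactly \eqref{eq:cpsbs-law}. Support is legal because $A_{n+1,r}=0$ for $r\ne0$ forces the quota to reach zero. The live state is $r\in\{0,\ldots,k\}$, giving width $k+1$.
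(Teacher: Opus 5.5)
Your proposal is correct and follows the paper's proof: the converse applies the binary moving-layer support bound to each program and takes the union over the at most $2^C$ programs. The achievability is the same suffix elementary-symmetric recursion $E_{i,r}=E_{i+1,r}+w_iE_{i+1,r-1}$, admitting with probability $w_iE_{i+1,r-1}/E_{i,r}$. Your check that replacing the terminal count $m/2$ by $k$ changes nothing is a detail the paper leaves implicit, and your converse rightly needs only exact support on the weight-$k$ layer rather than the odds identification the paper cites first.
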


\begin{proof}
The odds calculation above identifies \(Q_w\) with a heterogeneous binary
product conditioned on its count. A \(B\)-bit irrevocable admission program is
therefore a plan of state width at most \(2^B\). Applying the binary
moving-layer support bound
\(\Lambda_2(S)=2\cos\{\pi/(2S+1)\}\) and taking the union over at most \(2^C\)
programs proves~\eqref{eq:cpsbs-admission-lb}.

For achievability, define suffix elementary symmetric polynomials
\[
 E_{i,r}=e_r(w_i,\ldots,w_n),\qquad
 E_{i,r}=E_{i+1,r}+w_iE_{i+1,r-1}.
\]
At candidate \(i\), with \(r\) admissions remaining, admit it with probability
\(w_iE_{i+1,r-1}/E_{i,r}\). The recurrence makes this a valid probability;
boundary states force exactly \(k\) admissions, and multiplying transition
probabilities telescopes to~\eqref{eq:cpsbs-law}. The online state is
\(r\in\{0,\ldots,k\}\). The table contains \((n+1)(k+1)\) entries.
\end{proof}

The proposition does not lower-bound the runtime of standard CPSBS. Its global
elementary-symmetric dynamic program is precisely the table-based endpoint
above. Nor does it cover reservoir samplers that retain and replace earlier
choices: those do not emit irrevocable admission bits and generally implement
a different without-replacement law. The result prices the narrower design
question of reducing persistent admission state while preserving exact
fixed-size support.

\paragraph{Exact sharded decoding.}
The same law has a second operational consequence when a vocabulary-parallel
LM head leaves candidate weights on different devices. Let
\(\mathcal B=I_1\mathbin{\dot\cup}\cdots\mathbin{\dot\cup}I_g\), put
\(e_{j,r}=e_r((w_i)_{i\in I_j})\), and write
\(R_j=\sum_{i\in I_j}X_i\) for the number of accepted candidates on shard \(j\).

\begin{proposition}[Shardwise factorization and minimum coordination]
\label{prop:cpsbs-sharded}
Under \(Q_w\),
\begin{equation}
 \Pr\{R=\mathbf r\}
 =\frac{\mathbf 1\{\sum_jr_j=k\}}{e_k(w)}
   \prod_{j=1}^g e_{j,r_j}.
 \label{eq:cpsbs-quota-profile}
\end{equation}
Conditional on \(R=\mathbf r\), the shard subsets are independent
conditional-Poisson samples of sizes \(r_1,\ldots,r_g\).
Among exact mixtures of shard-product component laws, the minimum common-plan
entropy and support size are respectively
\[
 \min H(Z)=H(R),
 \qquad
 \min|\supp Z|=|\supp R|.
\]
Consequently, \(Q_w\) can be sampled without gathering the vocabulary logits:
each shard sends the nonconstant coefficients \(e_{j,1},\ldots,e_{j,k}\) (with \(e_{j,0}=1\) known), a coordinator samples
\(\mathbf r\) from~\eqref{eq:cpsbs-quota-profile}, and shard \(j\) runs its
local suffix dynamic program with quota \(r_j\).
\end{proposition}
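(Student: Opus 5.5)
The plan has three parts: derive the quota-profile law (\ref{eq:cpsbs-quota-profile}) directly from (\ref{eq:cpsbs-law}), read off conditional independence given the profile, and then transfer the block-profile results of App.~C, namely Corollary~\ref{thm:commonentropy} and Proposition~\ref{thm:robustcc} at $\varepsilon=0$, to the shard partition $\Pi=\{I_1,\ldots,I_g\}$.

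\emph{Profile law and conditional independence.} For a fixed profile $\mathbf r$ with $\sum_jr_j=k$, I sum $Q_w(x)$ over all $x$ with $\sum_{i\in I_j}x_i=r_j$ for every $j$. Since $\prod_iw_i^{x_i}=\prod_j\prod_{i\in I_j}w_i^{x_i}$, the sum factorizes over shards. The shard-$j$ factor is $\sum_{|A_j|=r_j,\,A_j\subseteq I_j}\prod_{i\in A_j}w_i=e_{j,r_j}$, which gives (\ref{eq:cpsbs-quota-profile}). Dividing $Q_w(x)$ by this probability on the event $R=\mathbf r$ gives $\prod_j\bigl(\prod_{i\in I_j}w_i^{x_i}\mathbf 1\{R_j=r_j\}/e_{j,r_j}\bigr)$. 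This is a product of conditional-Poisson laws of sizes $r_j$, which proves the conditional independence claim.

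\emph{Minimum plan entropy and support.} I use the identification already made in the appendix: $Q_w$ is the product of independent Bernoulli variables with odds $w_i$, conditioned on $\sum_iX_i=k$. Hence $Q_w=P_k$ in the notation of Sec.~\ref{sec:block-profile}, with block totals $T_j=R_j$ and $\nu_k=\mathcal L(R)$.
\begin{itemize}
\item \textbf{Lower bound, entropy.} Take any exact mixture of shard-product components. Each component is supported on $\{\sum_jR_j=k\}$ and has independent block totals. By the variance argument in the proof of Corollary~\ref{thm:commonentropy}, each $R_j$ is then a.s.\ constant given $Z$, so $R$ is a function of $Z$. This gives $H(Z)\ge H(R)$.
\item \textbf{Lower bound, support.} The same fact gives $|\supp Z|\ge|\supp R|$, since every profile of positive probability must be the image of some plan. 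Equivalently, take $\varepsilon=0$ in Proposition~\ref{thm:robustcc}, where $s_{\nu_k}(0)=|\supp\nu_k|$.
\item \textbf{Attainment.} Choose $Z=R$ and let the components be the conditional product laws from the first step. Both bounds are then met with equality.
\end{itemize}

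\emph{Sampling protocol.} Each shard sends $e_{j,1},\ldots,e_{j,k}$. The coordinator then knows every factor in (\ref{eq:cpsbs-quota-profile}), and the normalizer $e_k(w)$ is unnecessary. The coordinator samples $\mathbf r$ by a sequential dynamic program over shards: store prefix convolutions $\sum_{r_1+\dots+r_j=s}\prod_{l\le j}e_{l,r_l}$ for $s\le k$, then sample backward. Coefficients of degree above $k$ are irrelevant because $r_j\le k$. Shard $j$ then samples its conditional-Poisson subset of size $r_j$ with the suffix recursion $E_{i,r}$ of Proposition~\ref{prop:cpsbs-admission}, restricted to $I_j$. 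Its validity follows from the factorization above. No step is a serious obstacle; the only care needed is to state that exact support means each component's support lies inside $\{\sum_jR_j=k\}$, which is what forces the profile to be deterministic given $Z$.
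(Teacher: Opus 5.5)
Your proposal is correct and follows the paper's proof essentially step for step. You factor the profile-summed mass into $\prod_j e_{j,r_j}$, divide to get independent conditional-Poisson shard laws, and use $\sum_j\operatorname{Var}(R_j\mid Z=z)=0$ to make $R$ a function of $Z$, which gives $H(Z)\ge H(R)$ and $|\supp Z|\ge|\supp R|$; you then attain both with $Z=R$ and implement this via coefficient messages, coordinator convolution and local suffix recursions. Your remark that the support bound is also the $\varepsilon=0$ case of Proposition~\ref{thm:robustcc} is a valid equivalent view.
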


\begin{proof}
Summing~\eqref{eq:cpsbs-law} over subsets with profile \(\mathbf r\) factors
the unnormalized mass into \(\prod_je_{j,r_j}\), proving
\eqref{eq:cpsbs-quota-profile}. Dividing each factor by \(e_{j,r_j}\) gives
the conditional product law.

For the converse, every positive-weight component in an exact mixture must
itself assign zero mass outside \(\sum_jR_j=k\). Within a shard-product
component the \(R_j\) are independent, so
\(\sum_j\operatorname{Var}(R_j\mid Z=z)=0\); hence every \(R_j\) is
deterministic given \(z\). Thus \(R\) is a function of \(Z\), which gives
\(H(Z)\ge H(R)\) and \(|\supp Z|\ge|\supp R|\). Choosing \(Z=R\) and using
the conditional factorization attains both bounds. The stated protocol
implements this construction. Each coefficient vector takes
\(O(|I_j|k)\) arithmetic and \(O(k)\) working memory; coordinator convolution
and the local suffix recurrences are exact in real arithmetic.
\end{proof}

\begin{theorem}[Minimal deterministic shard interface]
\label{thm:cpsbs-interface}
Fix a shard \(I_j\) such that both it and its complement contain at least
\(k\) candidates. Before any sampling, let the shard send a deterministic
continuous summary \(M_j(w_{I_j})\in\mathbb R^d\). If a one-round coordinator
must reproduce the exact quota law for every positive weight vector, then
\(d\ge k\), even if all complementary weights are revealed to it. Sending
\((e_{j,1},\ldots,e_{j,k})\) attains equality.

For every fixed \(k\), there is also a compact regular weight family on which
the optimal deterministic message length at quota-law TV error \(\eta\) is
\begin{equation}
 B_k^\star(\eta)=k\log_2(1/\eta)+O_k(1),
 \qquad \eta\downarrow0.
 \label{eq:cpsbs-interface-bits}
\end{equation}
Thus the coefficient interface is locally precision-optimal as well as
exact-dimension-optimal.
\end{theorem}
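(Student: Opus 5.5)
The plan has three parts: an exact recovery identity, a topological dimension argument, and a metric packing/quantization argument on a bi-Lipschitz cube.

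\textbf{Recovery identity.} Apply Proposition~\ref{prop:cpsbs-sharded} with the two-block partition $I_j$ versus $\mathcal B\setminus I_j$. Write $a_r=e_r(w_{I_j})$, $b_r=e_r(w_{\mathcal B\setminus I_j})$ and $p_r=\Pr(R_j=r)$. Equation~\eqref{eq:cpsbs-quota-profile} gives
\[
 p_r=\frac{a_rb_{k-r}}{e_k(w)},\qquad 0\le r\le k .
\]
The complement holds at least $k$ positive weights, so every $b_{k-r}>0$. Since $a_0=1$, this yields $a_r=(p_r/p_0)(b_k/b_{k-r})$. Hence, once the complementary weights are revealed, the exact quota law determines $(a_1,\ldots,a_k)$.

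\textbf{Lower bound $d\ge k$.} Suppose a coordinator reproduces the quota law from $M_j(w_{I_j})$ and the complement. Then $(a_1,\ldots,a_k)$ must be a function of $M_j(w_{I_j})$ once the complement is fixed. Next, fix all but $k$ shard weights at distinct values and vary an ordered $k$-tuple $w_1<\cdots<w_k$ in an open set $U\subset\mathbb R^k$. The map from this tuple to $(e_1,\ldots,e_k)$ of the whole shard is injective on ordered tuples, because the remaining weights are fixed and the first $k$ coefficients of $\prod_{i\le k}(1+w_ix)$ determine its roots. So $M_j$ restricted to $U$ is a continuous injection $U\to\mathbb R^d$.

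If $d<k$, compose with the inclusion $\mathbb R^d\subset\mathbb R^k$. This gives a continuous injection of $U$ into $\mathbb R^k$ whose image lies in a proper subspace and so has empty interior. That contradicts invariance of domain. Sending $(e_{j,1},\ldots,e_{j,k})$ attains $d=k$, since the coordinator can then sample $\mathbf r$ exactly from \eqref{eq:cpsbs-quota-profile}.

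\textbf{Bit bound: setup.} Take the compact cube $\mathcal K$ of ordered tuples with $w_i\in[c_i,c_i+\tau]$, with gaps bounded away from zero and the complement fixed. On $\mathcal K$ the map $w\mapsto(e_1,\ldots,e_k)$ has Jacobian determinant $\prod_{i<\ell}|w_i-w_\ell|$, which is bounded below, so the map is bi-Lipschitz onto its image. The map $a\mapsto(p_r)_r$ is smooth on a compact positive domain, with inverse given by the recovery identity. Also $\|p-p'\|_1$ is comparable to $\|p-p'\|_\infty$ in fixed dimension $k+1$. Together these make $w\mapsto$ quota law bi-Lipschitz from $\mathcal K$ into TV distance, with constants depending on $k$ and the family only.

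\textbf{Bit bound: the two directions.}
\begin{itemize}
\item \emph{Upper bound.} Quantize $a$ on a grid of mesh $\asymp\eta$ in each coordinate. The Lipschitz property in the forward direction gives quota-law TV error at most $\eta$ using $k\log_2(1/\eta)+O_k(1)$ bits.
\item \emph{Lower bound.} Choose a $3\eta$-separated packing of $\mathcal K$ in quota-law TV. It has $\Omega_k(\eta^{-k})$ points by the inverse-Lipschitz property and the volume of $\mathcal K$. A deterministic message that is $\eta$-accurate for every point cannot map two packing points to the same message, since both laws would lie within $\eta$ of one output law, giving distance at most $2\eta$. Thus $2^B\ge c_k\eta^{-k}$.
\end{itemize}

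\textbf{Main obstacle.} I expect the delicate step to be the lower-bound injectivity. One must be careful that the coordinator's output depends only on $(M_j,\text{complement})$, so that quota-law identifiability can be turned into injectivity of $M_j$ itself on a genuinely $k$-dimensional open set. This needs the shard to hold at least $k$ free weights and $b_{k-r}>0$ for all $r$, which is exactly where the two ``at least $k$ candidates'' hypotheses enter.
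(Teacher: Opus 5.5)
Your proposal is correct and follows the paper's proof essentially step for step. It uses the recovery identity $a_r=(p_r/p_0)(b_k/b_{k-r})$, injectivity on the ordered chamber with invariance of domain to get $d\ge k$, and a bi-Lipschitz compact cube for the $2\eta$-collision packing lower bound and the coefficient-quantization upper bound. The one step you compress, namely that the whole shard's first $k$ elementary-symmetric coefficients determine those of the varying factor, is exactly the unit lower-triangular map the paper makes explicit: power-series division by the fixed weights' generating polynomial modulo degree $k+1$.
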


\begin{proof}
Write \(a_r=e_r(w_{I_j})\), \(b_r=e_r(w_{\mathcal B\setminus I_j})\), and
\(a_0=b_0=1\). The marginal quota law is
\[
 p_r:=\Pr\{R_j=r\}
 =\frac{a_rb_{k-r}}{\sum_{\ell=0}^ka_\ell b_{k-\ell}},
 \qquad r=0,\ldots,k.
\]
Fix any positive complementary weights, so every \(b_r>0\). If two local
weight vectors have the same message, the coordinator receives the same input
and must produce the same \(p\). But
\begin{equation}
 a_r=\frac{p_r}{p_0}\frac{b_k}{b_{k-r}},
 \qquad r=1,\ldots,k,
 \label{eq:cpsbs-coeff-recovery}
\end{equation}
so their first \(k\) elementary-symmetric coefficients coincide.

Fix all but \(k\) local weights and vary the rest in the open chamber
\(0<w_1<\cdots<w_k\). Multiplication by the fixed weights' generating
polynomial applies a unit lower-triangular map to the first \(k\) coefficients,
so those of the varying factor remain recoverable. They determine
\(\prod_i(t+w_i)\), hence the ordered weights. Therefore \(M_j\) restricts to
a continuous injection from an open subset of \(\mathbb R^k\) into
\(\mathbb R^d\). For \(d<k\), padding its image with zeros contradicts
Brouwer's invariance-of-domain theorem. Hence \(d\ge k\); the protocol in
Proposition~\ref{prop:cpsbs-sharded} gives equality.

For~\eqref{eq:cpsbs-interface-bits}, the elementary-symmetric Jacobian on the
ordered chamber has absolute determinant
\(\prod_{i<\ell}|w_i-w_\ell|>0\). Together with
\eqref{eq:cpsbs-coeff-recovery}, the quota map is a local diffeomorphism.
Choose a small compact cube where it is bi-Lipschitz. Two parameters sharing
one \(B\)-bit deterministic message have quota laws within \(2\eta\) TV, so a
separated grid of \(\Omega_k(\eta^{-k})\) parameters requires distinct
messages. This gives the lower bound. Quantizing the \(k\) coefficient
coordinates on the same compact image gives \(O_k(\eta^{-k})\) messages and
the upper bound. Conditional shard laws still use the original weights, so the
final subset-law TV error equals the quota-law error.
\end{proof}

This extends the hierarchical \(k=1\) categorical decomposition used by
distributed exact samplers such as FlashSampling
\citep{ruiz2026flashsampling}: one local mass is then necessary and sufficient.
Gumbel-top-\(k\) sampling implements a different without-replacement law.
The theorem concerns deterministic one-round summaries; it is not an
interactive, randomized, or global floating-point lower bound. Log-coefficients
and log-sum-exp give the stable implementation. In the released local-family
audit (4,000 points for each \(k\in\{2,4,6,8\}\)), log worst-case TV has slope
between \(-1.018\) and \(-1.013\) per coefficient bit; 14 bits per coefficient
gives worst TV at most \(1.45\times10^{-6}\). The entropy statement above
separately prices common random coordination after the weight-dependent
profile is available.

\paragraph{Finite real-logit and sharding audits.}
On 100 pinned UltraFeedback prompts, a cached DistilGPT2 model supplies the top
22 logits and all $\binom{22}{11}=705{,}432$ candidate sets are enumerated.
At temperature one, the median $95\%$-smooth support is 202,149, the two-state
plan lower bound is six, and that lower bound is nonvacuous on 98 prompts.
For exact 16-of-50,257 sharded sampling, direct polynomial convolution gives
maximum factorization error $1.95\times10^{-16}$; with deterministic balanced
hash shards, the median quota entropies for $g=2,4,8$ are respectively
$2.86$, $7.53$, and $14.56$ bits.  Equal-precision compiled post-logit timing
shows an overhead crossover rather than monotone scaling: median speedups are
$1.13$, $1.25$, and $0.97$.  These values audit the target-side quantities and
interface implementation, not decoding quality or multi-GPU latency.